\documentclass{article} 
\usepackage{iclr2022_conference,times}

\usepackage[utf8]{inputenc} 
\usepackage[T1]{fontenc}    
\usepackage{hyperref}       
\usepackage{url}            
\usepackage{booktabs}       
\usepackage{amsfonts,amsthm}       
\usepackage{nicefrac}       
\usepackage{microtype}      
\usepackage{xcolor}         
\usepackage{graphicx}
\usepackage{subfig}
\usepackage{algorithm}
\usepackage{algorithmic}
\usepackage{wrapfig}
\usepackage{afterpage}

\usepackage{enumitem}
\usepackage{Definitions} 
\newcommand{\YL}[1]{}
\newcommand{\JW}[1]{}
\newcommand{\ZS}[1]{}
\newcommand{\toreq}[1]{}
\newcommand{\mypara}[1]{\noindent\textbf{#1}}

\newtheorem{claim}{Claim}

\newcommand{\E}{\mathbb{E}} 
\newcommand{\Id}{\mathbb{I}} 
\newcommand{\Ddist}{\mathcal{D}} 
\newcommand{\Dfamily}{\mathcal{F}} 
\newcommand{\X}{\mathcal{X}} 
\newcommand{\Y}{\mathcal{Y}} 
\newcommand{\loss}{\ell} 
\newcommand{\risk}{L} 
\newcommand{\riskb}{L} 
\newcommand{\Dict}{M} 
\newcommand{\mDict}{D} 
\newcommand{\hro}{\Tilde{\phi}} 
\newcommand{\hr}{\phi} 
\newcommand{\xo}{\Tilde{x}} 
\newcommand{\x}{x}  
\newcommand{\A}{A} 
\newcommand{\kA}{k} 
\newcommand{\SP}{P} 
\newcommand{\pn}{p_{\textrm{o}}} 
\newcommand{\pp}{\gamma} 
\newcommand{\wsp}{q} 
\newcommand{\sigmax}{\Tilde{\sigma}} 
\newcommand{\nsi}{\xi} 
\newcommand{\sigmansi}{\sigma_{\nsi}} 
\newcommand{\relu}{\mathrm{\sigma_{r}}} 

\newcommand{\g}{g} 
\newcommand{\aw}{a} 
\newcommand{\w}{w} 
\newcommand{\bw}{b} 
\newcommand{\nw}{m} 
\newcommand{\act}{\sigma} 
\newcommand{\reg}{R} 
\newcommand{\para}{\theta} 
\newcommand{\wy}{\alpha} 
\newcommand{\sigmaw}{\sigma_\w} 
\newcommand{\sigmahr}{\sigma_\hr} 
\newcommand{\lambdaa}{\lambda_{\aw}} 
\newcommand{\lambdaw}{\lambda_{\w}} 
\newcommand{\lambdab}{\lambda_{\bw}} 

\newcommand{\peak}{\delta} 
\newcommand{\epsdev}{\epsilon_{\textrm{e}}} 
\newcommand{\epsdevs}{\epsilon_{\textrm{e2}}} 

\newcommand{\ncopy}{n_a} 

\newcommand{\N}{N} 

\title{A Theoretical Analysis on Feature Learning in Neural Networks: Emergence from Inputs and Advantage over Fixed Features}


\author{Zhenmei Shi\textsuperscript{*}, Junyi Wei\textsuperscript{*}, Yingyu Liang \\ 
University of Wisconsin-Madison\\
\texttt{zhmeishi@cs.wisc.edu,jwei53@wisc.edu,yliang@cs.wisc.edu} \\
}

%

\iclrfinalcopy 
\begin{document}

\maketitle

\begingroup\renewcommand\thefootnote{*}
\footnotetext{Equal contribution}
\endgroup

\begin{abstract}
An important characteristic of neural networks is their ability to learn representations of the input data with effective features for prediction, which is believed to be a key factor to their superior empirical performance. To better understand the source and benefit of feature learning in neural networks, we consider learning problems motivated by practical data, where the labels are determined by a set of class relevant patterns and the inputs are generated from these along with some background patterns. We prove that neural networks trained by gradient descent can succeed on these problems. The success relies on the emergence and improvement of effective features, which are learned among exponentially many candidates efficiently by exploiting the data (in particular, the structure of the input distribution). In contrast, no linear models on data-independent features of polynomial sizes can learn to as good errors. Furthermore, if the specific input structure is removed, then no polynomial algorithm in the Statistical Query model can learn even weakly. These results provide theoretical evidence showing that feature learning in neural networks depends strongly on the input structure and leads to the superior performance. Our preliminary experimental results on synthetic and real data also provide positive support. 
\end{abstract}


\section{Introduction} \label{sec:intro}

Various empirical studies have shown that an important characteristic of neural networks is their feature learning ability, i.e., to learn a feature mapping for the inputs which allow accurate prediction (e.g.,~\cite{zeiler2014visualizing,girshick2014rich,zhang2019all,manning2020emergent}).
This is widely believed to be a key factor to their remarkable success in many applications, in particular, an advantage over traditional machine learning methods. 
To understand their success, it is then crucial to understand the source and benefit of feature learning in neural networks. 
Empirical observations show that networks can learn neurons that correspond to different semantic patterns in the inputs (e.g.,  eyes, bird shapes, tires, etc.\ in images~\citep{zeiler2014visualizing,girshick2014rich}). 
Moreover, recent progress (e.g.,~\cite{caron2018deep,chen2020simple,he2020momentum,jing2020self}) shows that one can even learn a feature mapping using only unlabeled inputs and then learn an accurate predictor (usually a linear function) on it using labeled data. This further demonstrates the feature learning ability of neural networks and that these input distributions contain important information for learning useful features. 
These empirical observations strongly suggest that the structure of the input distribution is crucial for feature learning and feature learning is crucial for the strong performance.
However, it is largely unclear how practical training methods (gradient descent or its variants) learn important patterns from the inputs and whether this is necessary for obtaining the superior performance, since the empirical studies do not exclude the possibility that some other training methods can achieve similar performance without feature learning or with feature learning that does not exploit the input structure. Rigorous theoretical investigations are thus needed for answering these fundamental questions: 
\emph{How can effective features emerge from inputs in the training dynamics of gradient descent? Is learning features from inputs necessary for the superior performance?} 
 


Compared to the abundant empirical evidence, the theoretical understanding still remains largely open. One line of work (e.g.~\cite{jacot2018neural,li2018learning,du2019gradient,allenzhu2019convergence,zou2020gradient,chizat2020lazy} and many others) shows in certain regime, sufficiently overparameterized networks are approximately linear models, i.e.,  a linear function on the Neural Tangent Kernel (NTK). This falls into the traditional approach of linear models on fixed features, which also includes random features~\citep{rahimi2007random} and other kernel methods~\citep{kamath2020approximate}. The kernel viewpoint thus does not explain feature learning in networks nor the advantage over fixed features.
A recent line of work (e.g.~\cite{daniely2020learning,  bai2020linearization, ghorbani2020neural, yehudai2020power,allen2019canb, allen2020backward, li2020learning, malach2021quantifying} and others) shows examples where networks provably enjoy advantages over fixed features, under different settings and assumptions.
While providing insightful results separating the two approaches, most studies have not investigated if the input structure is crucial for feature learning and thus the advantage. 
Also, most studies have not analyzed how gradient descent can learn important input patterns as effective features, or rely on strong assumptions like models or data atypical in practice (e.g., special networks, Gaussian data, etc).

Towards a more thorough understanding, we propose to analyze learning problems motivated by practical data, where the labels are determined by a set of class relevant patterns and the inputs are generated from these along with some background patterns. We use comparison for our study: (1) by comparing network learning approaches with fixed feature approaches on these problems, we analyze the emergence of effective features and demonstrate feature learning leads to the advantage over fixed features; (2) by comparing these problems to those with the input structure removed, we demonstrate that the input structure is crucial for feature learning and prediction performance.  

More precisely, we obtain the following results. We first prove that two-layer networks trained by gradient descent can efficiently learn to small errors on these problems, and then prove that no linear models on fixed features of polynomial sizes can learn to as good errors. These two results thus establish the provable advantage of networks and implies that feature learning leads to this advantage. More importantly, our analysis reveals the dynamics of feature learning: the network first learns a rough approximation of the effective features, then improves them to get a set of good features, and finally learns an accurate classifier on these features. Notably, the improvement of the effective features in the second stage is needed for obtaining the provable advantage. 
The analysis also reveals the emergence and improvement of the effective features are by exploiting the data, and in particular, they rely on the input structure. 
To formalize this, we further prove the third result: if the specific input structure is removed and replaced by a uniform distribution, then no polynomial algorithm can even weakly learn in the Statistical Query (SQ) learning model, not to mention the advantage over fixed features. Since SQ learning includes essentially all known algorithms (in particular, mini-batch stochastic gradient descent used in practice), this implies that feature learning depends strongly on the input structure.
Finally, we perform simulations on synthetic data to verify our results. We also perform experiments on real data and observe similar phenomena, which show that our analysis provides useful insights for the practical network learning.  

Our analysis then provides theoretical support for the following principle: \emph{feature learning in neural networks depends strongly on the input structure and leads to the superior performance.}  
In particular, our results make it explicit that learning features from the input structure is crucial for the superior performance.
This suggests that input-distribution-free analysis (e.g., traditional PAC learning) may not be able to explain the practical success, and advocates an emphasis of the input structure in the analysis. 
While these results are for our proposed problem setting and network learning in practice can be more complicated, the insights obtained match existing empirical observations and are supported by our experiments. The compelling evidence hopefully can attract more attention to further studies on modeling the input structure and analyzing feature learning.

\section{Related Work} \label{sec:related}

This section provides an overview while more technical discussions can be found in Appendix~\ref{app:complete_related}.

\mypara{Neural Tangent Kernel (NTK) and Linearization of Neural Networks.}
One line of work (e.g.~\cite{jacot2018neural,li2018learning,matthews2018gaussian,lee2019wide,novak2019bayesian,yang2019scaling,du2019gradient,allenzhu2019convergence,zou2020gradient, ji2020polylogarithmic, cao2020understanding, Geiger_2020,chizat2020lazy} and more) explains the success of sufficiently over-parameterized neural network by connecting them to linear methods like NTK. Though their approaches are different, they all base on the observation that when the network is sufficiently large, the weights stay close to the initialization during the training, and training is similar to solving a kernel method problem. This is typically referred to as the NTK regime, or lazy training, or linearization. However, networks used in practice are usually not large enough to enter this regime, and the weights are frequently observed to traverse away from the initialization.
Furthermore, in this regime, network learning is essentially the traditional approach of linear methods over fixed features, which cannot establish or explain feature learning and the advantage of network learning. 

\mypara{Advantage of Neural Networks over Linear Models on Fixed Features.} 
Since the superior network learning results via gradient descent are not well explained by the NTK view, a recent line of work has turned to learning settings where neural networks provably have advantage over linear models on fixed features
(e.g.~\cite{daniely2020learning,refinetti2021classifying, malach2021quantifying,Dou_2020,bai2020linearization,ghorbani2020neural,allen2019canb}; see the great summary in~\cite{malach2021quantifying}). 
While formally establishing the advantage, they have not thoroughly answered the two fundamental questions this work focuses on; in particular, most existing work has not studied whether the input structure is a crucial factor for feature learning and thus the advantage, and/or has not considered how the features are learned in more practical training scenarios.
For example, \cite{ghorbani2020neural} show the advantage of networks in approximation power and \cite{Dou_2020} show their statistical advantage, but they do not consider the learning dynamics (i.e., how the training method obtains the good network). 
\cite{allen2019canb} prove the advantage of the networks for PAC learning with labels given by a depth-2 ResNet and \cite{allen2020backward} prove for Gaussian inputs with labels given by a multiple-layer network, while neither considers the influence of the input structure on feature learning or the advantage.
\cite{daniely2020learning} prove the advantage of the networks for learning sparse parities on specific input distributions that help  gradient descent learn effective features for prediction, and \cite{malach2021quantifying} consider similar learning problems but with specifically designed differentiable models, while our work analyzes data distributions and models closer to those in practice and also explicitly focuses on whether the input structure is needed for the learning.
There are also other theoretical studies on feature learning in networks (e.g.~\cite{yehudai2020learning,zhou2021local,diakonikolas2020approximation,frei2020agnostic}), 
which however do not directly relate feature learning to the input structure or the advantage of network learning.

\section{Problem Setup}\label{sec:problem}
 
To motivate our setup, consider images with various kinds of patterns like lines and rectangles. Some patterns are relevant for the labels (e.g., rectangles for distinguishing indoor or outdoor images), while the others are not. If the image contains a sufficient number of the former, then we are confident that the image belongs to a certain class. Dictionary learning or sparse coding is a classic model of such data (e.g.,~\cite{Olshausen1997SparseCW,vinje2000sparse,blei2003latent}). We thus model the patterns as a dictionary, generate a hidden vector indicating the presence of the patterns, and generate the input and label from this vector.  

Let $\X = \mathbb{R}^d$ be the input space, and $\Y = \{\pm 1\}$ be the label space. Suppose $\Dict \in \mathbb{R}^{d \times \mDict}$ is an unknown dictionary with $\mDict$ columns that can be regarded as patterns. For simplicity, assume $\Dict$ is orthonormal.  
Let $\hro \in \{0, 1\}^\mDict$ be a hidden vector that indicates the presence of each pattern. Let $\A \subseteq [\mDict]$ be a subset of size $\kA$ corresponding to the class relevant patterns. Then the input is generated by $\Dict \hro$, and the label can be any binary function on the number of class relevant patterns. 
More precisely, let $\SP \subseteq [\kA]$. Given $\A$ and $\SP$, we first sample $\hro$ from a distribution $\Ddist_{\hro}$, and then generate the input $\xo$ and the class label $y$ from $\hro$:
\begin{align} \label{eq:model}
    \hro \sim \Ddist_{\hro}, \quad
    \xo = \Dict \hro, \quad 
    y =  
    \begin{cases}
    +1, & \textrm{~if~} \sum_{i \in \A} \hro_i \in \SP,
    \\
    -1, & \textrm{~otherwise}.
    \end{cases}
\end{align}

\mypara{Learning with Input Structure.}
We allow quite general $\Ddist_{\hro}$ with the following assumptions: 
\begin{itemize}[noitemsep,topsep=0pt]
    \item[\textbf{(A0)}] The class probabilities are balanced: $\Pr[\sum_{i \in \A} \hro_i \in \SP] = 1/2$. 
    \item[\textbf{(A1)}] The patterns in $\A$ are correlated with the labels with the same correlation: for any $i\in A$, $\pp = \E[y \hro_i] - \E[y] \E[\hro_i]  > 0$. 
    \item[\textbf{(A2)}] Each pattern outside $\A$ is identically distributed and independent of all other patterns. Let $\pn := \Pr[\hro_i = 1]$ and without loss of generality assume $\pn \le 1/2$.
\end{itemize}

Let $\Ddist(\A, \SP, \Ddist_{\hro})$ denote the distribution on $(\xo, y)$ for some $\A, \SP,$ and $\Ddist_{\hro}$.
Given parameters $\Xi = (d, \mDict, \kA, \pp, \pn)$, 
the family $\Dfamily_\Xi$ of distributions include all $\Ddist(\A, \SP, \Ddist_{\hro})$ with $\A \subseteq [\mDict]$, $\SP \subseteq [\kA]$, and $\Ddist_{\hro}$ satisfying the above assumptions.
The labeling function includes some interesting special cases: 

\textit{Example 1.} Suppose $\SP=\{i \in [\kA]: i > \kA/2\}$ for some threshold, i.e.,
we will set the label $y=+1$ when more than a half of the relevant patterns are presented in the input.  

\textit{Example 2.} Suppose $k$ is odd, and let $\SP=\{i \in [\kA]: i \textrm{~is odd}\}$, i.e., the labels are given by the parity function on $\hro_j (j\in\A)$. This is useful to prove our lower bounds via the properties of parities.

Appendix~\ref{app:general_analysis} presents results for more general settings (e.g., incoherent dictionary, unbalanced classes, etc.).
On the other hand, our problem setup does not include some important data models. In particular, one would like to model hierarchical representations often observed in practical data and believed to be important for deep learning. We leave such more general cases for future work.

\mypara{Learning Without Input Structure.}
For comparison,  we also consider learning problems without input structure. The data are generated as above but with different distributions $\Ddist_{\hro}$:  
\begin{itemize}[noitemsep,topsep=0pt]
    \item[\textbf{(A1')}] The patterns are uniform over $\{0,1\}^\mDict$: for any $i \in [\mDict], \Pr[\hro_i = 1] = 1/2$ independently. 
\end{itemize}
Given parameters $\Xi_0 = (d, \mDict, \kA)$, the family $\Dfamily_{\Xi_0}$ of  distributions without input structure is the set of all the distributions with $\A \subseteq [\mDict]$, $\SP \subseteq [\kA]$ and $\Ddist_{\hro}$ satisfying the above assumptions.

\subsection{Neural Network Learning} 

\mypara{Networks.} 
We consider training a two-layer network via gradient descent on the data distribution:%
\begin{align}
    \g(\x) = \sum_{i=1}^{2\nw} \aw_i \act(\langle \w_i, \x \rangle + \bw_i)
\end{align}
where $\w_i \in \mathbb{R}^d, \bw_i, \aw_i \in \mathbb{R}$, and $\act(z) = \min(1, \max(z, 0)) $ is the truncated rectified linear unit (ReLU) activation function.
Let $\para = \{\w_i, \bw_i, \aw_i\}_{i=1}^{2\nw}$ denote all the parameters, and let superscript $(t)$ denote the time step, e.g., $\g^{(t)}$ denote the network at time step $t$ with $\para^{(t)} = \{\w_i^{(t)}, \bw_i^{(t)}, \aw_i^{(t)}\}$.

\mypara{Loss Function.} 
Similar to typical practice, we will normalize the data for learning: first compute $\x = (\xo - \E[\xo])/\sigmax$ where $\sigmax^2 = \E \sum_{i=1}^d (\xo_i - \E[\xo_i])^2$ is the variance of the data, and then train on $(\x, y)$.
This is equivalent to setting $\hr = (\hro - \E[\hro])/\sigmax$ and generating $\x = \Dict \hr$. For $(\xo, y)$ from $\Ddist$ and the normalized $(\x,y)$, we will simply say $(\x, y) \sim \Ddist$.

For the training, we consider the hinge-loss $\loss(y, \hat{y}) = \max\{1 -y\hat{y}, 0\}$. 
We will inject some noise $\nsi$ to the neurons for the convenience of the analysis. (This can be viewed as using a smoothed version of the activation $\Tilde{\act}(z) = \mathbb{E}_{\nsi} \act(z + \nsi)$ similar to those in existing studies like~\cite{allen2020feature,malach2021quantifying}. See Section~\ref{sec:proof_sketch} for more explanations.)
Formally, the loss is:
\begin{align}
    \riskb_\Ddist(\g; \sigmansi) = \mathbb{E}_{(\x, y)} [  \loss(y, \g(\x; \nsi))], \mathrm{~where~} \g(\x; \nsi) =  \sum_{i=1}^{2\nw} \aw_i \mathbb{E}_{\nsi}[ \act(\langle \w_i, \x \rangle + \bw_i + \nsi_i)]
\end{align} 
where $\nsi \sim \mathcal{N}(0, \sigmansi^2 I_{\nw \times \nw})$ are independent Gaussian noise. Let $\riskb_\Ddist(\g)$ denote the typical hinge-loss without noise. We also consider $\ell_2$ regularization: 
$\reg(\g; \lambdaa, \lambdaw) = \sum_{i=1}^{2\nw} \lambdaa |\aw_i|^2 + \lambdaw \|\w_i\|_2^2 $ with regularization coefficients $ \lambdaa, \lambdaw$. 

\mypara{Training Process.} 
We first perform an unbiased initialization: for every $i \in [\nw]$, initialize $\w_i^{(0)} \sim \mathcal{N}(0, \sigmaw^2 I_{d \times d})$ with $\sigmaw = 1/\kA$, $\bw_i^{(0)} \sim  \mathcal{N}\left(0, \sigma_\bw^2 \right)$ with $\sigma_\bw = 1/\kA^2$,  
$\aw_i^{(0)} \sim \mathcal{N}(0, \sigma_\aw^2)$ with $\sigma_\aw = \sigmax^2/(\pp\kA^2)$,  
and then set $\w_{\nw+i}^{(0)} = \w_i^{(0)}$, $\bw_{\nw+i}^{(0)} = \bw_i^{(0)}$, $\aw_{\nw+i}^{(0)} = -\aw_i^{(0)}$. We then do gradient updates: 
\begin{align}
    \para^{(t)} & = \para^{(t-1)} - \eta^{(t)} \nabla_\para \left( \riskb_\Ddist(\g^{(t-1)}; \sigmansi^{(t)}) +  \reg(\g^{(t-1)}; \lambdaa^{(t)}, \lambdaw^{(t)})\right), \textrm{~for~} t=1, 2, \ldots, T,
\end{align} 
for some choice of the hyperparameters $\eta^{(t)}, \lambdaa^{(t)}, \lambdaw^{(t)}$, $\sigmansi^{(t)}$, and $T$.  

\section{Main Results} \label{sec:result}
 
\mypara{Provable Guarantee for Neural Networks.} The network learning has the following guarantee: 
\begin{theorem} \label{thm:main}      
For any $\delta, \epsilon \in (0,1)$, if $\kA = \Omega\left(\log^2 \left(\mDict/(\delta\pp)\right)\right)$, $\pn = \Omega(\kA^2/\mDict)$, and $ \max\{\Omega(\kA^{12}/\epsilon^{3/2}), \mDict \} \le \nw \le \text{poly}(D)$, 
then with properly set hyperparameters, for any $\Ddist \in \Dfamily_\Xi$, with probability at least $1- \delta$, there exists $t \in [T]$ such that 
$
\Pr[\sgn(\g^{(t)}(\x)) \neq y] \le \riskb_\Ddist(\g^{(t)}) \le \epsilon.
$ 
\end{theorem}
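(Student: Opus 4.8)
The plan is to analyze the (population) gradient dynamics in the dictionary basis. Since $\Dict$ is orthonormal, $\langle\w_i,\x\rangle=\langle v_i,\hr\rangle$ with $v_i:=\Dict^\top\w_i$, so each neuron depends on the hidden vector only through $v_i$, and the natural target feature is the normalized count $s:=\sum_{j\in\A}\hr_j$, because the label depends on the input only through $\sum_{i\in\A}\hro_i$. For every neuron I track the decomposition of $v_i$ into a \emph{signal} part along $\mathbf{1}_\A$ (the vector that is $1$ on $\A$ and $0$ elsewhere), the part orthogonal to it inside $\mathrm{span}\{e_j:j\in\A\}$, and the \emph{noise} part supported on $[\mDict]\setminus\A$. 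The only randomness is the initialization; the antisymmetric choice of $\para^{(0)}$ makes $\g^{(0)}\equiv0$, so at $t=1$ the hinge loss sits in its linear regime at every $(\x,y)$ and $\nabla_\para\riskb_\Ddist$ has a clean closed form. I then realize the three-phase picture announced in the introduction by choosing the hyperparameter schedule $(\eta^{(t)},\lambdaa^{(t)},\lambdaw^{(t)},\sigmansi^{(t)},T)$ accordingly.

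\emph{Phase 1 (emergence).} From (A0)--(A2) one has $\E[y\,\hr]=(\pp/\sigmax)\,\mathbf{1}_\A$ and $\E[y\,\hr_j]=0$ for $j\notin\A$. Since $\g^{(0)}\equiv0$, the first step moves $v_i$ by $\approx\eta\,\aw_i^{(0)}c_0(\pp/\sigmax)\,\mathbf{1}_\A$ plus mean-zero, strictly order-smaller perturbations on the noise coordinates, where $c_0$ is the typical value of $\Tilde{\act}'$ at initialization (a constant, because the injected Gaussian $\nsi$ has $\Theta(1)$ variance while the pre-activations start at scale $1/\kA$). A concentration argument over the $\nw$ neurons and the initialization --- this is where $\kA=\Omega(\log^2(\mDict/(\delta\pp)))$ and $\nw\ge\mDict$ are used --- shows that, with probability $\ge1-\delta$, a constant fraction of the neurons become \emph{aligned}: their pre-activation is $c\,s+\bw_i+(\text{small noise})$, the offsets $\bw_i$ are spread across the range of $s$, and the top weights $\aw_i$ occur with both signs.

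\emph{Phase 2 (improvement/denoising).} I expect this to be the main obstacle. Continuing the updates with the regularization turned up, every coordinate of $v_i$ is contracted by a factor $1-\eta\lambdaw$, whereas the data-term gradient keeps feeding the $\mathbf{1}_\A$-component --- the label correlation $\pp$ lives only there --- and does not, in expectation, feed the noise coordinates; hence the signal-to-noise ratio of each aligned neuron grows geometrically until it computes a good truncated-linear function of $s$ alone. The difficulty is that once $\g\not\equiv0$ the region where the hinge loss is active depends on $\x$, coupling all neurons and all coordinates, so one must propagate an invariant through this phase --- bounded $\|\w_i\|_2$ and $|\aw_i|$, together with control of the distribution of $\Tilde{\act}'$ via $\nsi$ (the injected noise keeps $\Tilde{\act}'$ strictly positive, so the signal is never erased) --- strong enough to keep the residual drift on the noise coordinates dominated by the contraction and to sharpen the ``kink'' of the aligned neurons enough that consecutive values of $s$ (which are $1/\sigmax$ apart) get separated. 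The hypotheses $\pn=\Omega(\kA^2/\mDict)$ (which pins down $\sigmax=\Theta(\sqrt{\mDict\pn})$ and hence all the scales) and $\nw=\Omega(\kA^{12}/\epsilon^{3/2})$ are used here and in Phase 3.

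\emph{Phase 3 (classifier).} After Phase 2 the aligned neurons behave like $\{\act(c(s-\theta_i)+\text{small})\}$ with thresholds $\theta_i$ densely covering the $\kA+1$ attainable values of $s$. A short approximation lemma then expresses the target --- a function of the integer $\sum_{i\in\A}\hro_i\in\{0,\dots,\kA\}$, equal to $+1$ exactly on $\SP$ --- as a linear combination $\sum_i\aw_i^\star\act(\cdot)$ of these features with $\sum_i(\aw_i^\star)^2=\mathrm{poly}(\kA,\sigmax,1/\epsilon)$ and population hinge loss $\le\epsilon/2$. With the features essentially frozen, the remaining dynamics on the top layer $\aw$ under hinge loss plus $\ell_2$ regularization is convex, so the updates reach population hinge loss $\le\epsilon$ after $\mathrm{poly}(1/\epsilon)$ more steps. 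Letting $\sigmansi^{(t)}\to0$ at the end makes the noiseless loss $\riskb_\Ddist(\g^{(t)})$ equally small (the activation is Lipschitz), and $\Pr[\sgn(\g^{(t)}(\x))\neq y]\le\riskb_\Ddist(\g^{(t)})$ since hinge upper-bounds the $0$--$1$ loss. Chaining the three phases on the $1-\delta$ event from Phase 1, with $T$ polynomial in $\mDict$, $1/\epsilon$, and $1/\delta$, yields the theorem.
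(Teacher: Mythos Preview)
Your three-phase outline matches the paper, and your descriptions of Phase~1 (the first gradient at $\g^{(0)}\equiv0$ picks up $(\pp/\sigmax)\mathbf{1}_\A$ on the relevant coordinates) and Phase~3 (nearly-frozen features, convex learning of the top layer finished by an online-gradient-descent regret bound) are essentially what the paper does. The gap is Phase~2, which you correctly flag as the main obstacle but do not carry out, and where your proposed mechanism is different from the paper's.

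You propose an \emph{iterative} denoising: regularization contracts every coordinate by $(1-\eta\lambdaw)$ while the data gradient keeps feeding only $\mathbf{1}_\A$, so the SNR grows geometrically over many steps under an invariant you would have to maintain through the $\Id[y\g(\x)\le1]$ coupling. The paper sidesteps this entirely. It takes exactly \emph{two} feature steps, both with $\lambdaw^{(t)}=1/(2\eta^{(t)})$, so that $\w_i^{(t)}$ equals $\eta^{(t)}$ times the current gradient---the old weight is \emph{erased}, not contracted. The improvement is then a one-shot argument: choose $\eta^{(1)}$ polynomially small in $1/\nw$. This makes (i)~$|\g^{(1)}(\x;\nsi)|<1$ with probability $1-\exp(-\Omega(\kA))$, so the hinge loss is still linear at step~2 and the same gradient computation as in Phase~1 applies; and (ii)~every component $\wsp_{i,\ell}^{(1)}=\langle\w_i^{(1)},\Dict_\ell\rangle$ is tiny, so removing any single $\hr_j\wsp_{i,j}^{(1)}$ from the pre-activation perturbs it negligibly relative to the injected noise $\nsi^{(2)}$, which drives the step-2 noise terms $T_j$ ($j\notin\A$) down to $O(1/\mathrm{poly}(\nw))$ rather than merely $O(\epsdev)$. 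After a second erasure with $\eta^{(2)}=1$, the weights $\w_i^{(2)}$ are aligned with $\mathbf{1}_\A$ to the required accuracy; the randomness of the initial $(\aw_i^{(0)},\bw_i^{(0)})$---which enter $\w_i^{(2)}$ as a scale factor (via $\aw_i^{(1)}\approx\aw_i^{(0)}$, since $\lambdaa^{(1)}=0$) and survive in $\bw_i^{(2)}\approx\bw_i^{(0)}$---is then used to find, for each of the $3(\kA+1)$ neurons of the target $\g^*$, a $\Theta(1/\kA^2)$ fraction of the $\nw$ neurons whose $(\aw_i^{(0)},\bw_i^{(0)})$ land in the right window. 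No multi-step invariant is ever maintained; the missing idea in your plan is precisely this ``complete replacement with tiny first step'' trick that reduces Phase~2 to two clean gradient computations.
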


The theorem shows that for a wide range of the background pattern probability $\pn$ and the number of class relevant patterns $\kA$, 
the network trained by gradient descent can obtain a small classification error.  
More importantly, the analysis shows the success comes from feature learning. In the early stages, the network learns and improves the neuron weights such that on the features (i.e., the neurons' outputs) there is an accurate classifier; afterwards it learns such a classifier. The next section will provide a detailed discussion on the feature learning.
 
\mypara{Lower Bound for Fixed Features.} Empirical observations and Theorem~\ref{thm:main} do not exclude the possibility that some methods without feature learning can achieve similar performance. We thus prove a lower bound for the fixed feature approach, i.e., linear models on data-independent features.
 
\begin{theorem} \label{thm:lower_fixed}
Suppose $\Psi$ is a data-independent feature mapping of dimension $\N$ with bounded features, i.e., $\Psi: \X \rightarrow [-1, 1]^\N$.  For $B > 0$, the family of linear models on $\Psi$ with bounded norm $B$ is 
$ 
    \mathcal{H}_B = \{h(\xo): h(\xo) = \langle \Psi(\xo), w \rangle, \|w\|_2 \le B \}.
$ 
If $3 < \kA \le \mDict/16$ and $\kA$ is odd, then there exists $\Ddist \in \Dfamily_\Xi$ such that all $h\in \mathcal{H}_B$ have hinge-loss at least $\pn \left(1 - \frac{\sqrt{2\N} B}{2^\kA}\right)$.
\end{theorem}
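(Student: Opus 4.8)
The plan is to exploit the parity construction (Example 2): take $\SP = \{i \in [\kA]: i \textrm{ odd}\}$, so that the label $y$ equals the parity of $\sum_{i\in\A}\hro_i$, i.e.\ $y = \prod_{i\in\A}(-1)^{\hro_i}$ up to sign conventions. The key structural fact I want is that over a suitable choice of $\Ddist_\hro$ — e.g.\ making the coordinates in $\A$ i.i.d.\ unbiased (so that the parity is balanced, giving (A0)) and making coordinates outside $\A$ i.i.d.\ with $\Pr[\hro_i=1]=\pn$ (giving (A2)) — the function $\xo \mapsto y$ behaves like a parity on a random subset $\A$ of a large coordinate set. One must check (A1): for $i\in\A$, $\E[y\hro_i]-\E[y]\E[\hro_i]$ is a positive constant depending only on $\kA$, so $\pp$ is determined and this instance lies in $\Dfamily_\Xi$ for the appropriate parameter vector; this is a routine computation since conditioning on $\hro_i$ flips the parity of the remaining $\kA-1$ unbiased bits.

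Next I would set up the hard-instance / averaging argument. Rather than fixing $\A$, consider $\A$ drawn uniformly from the $\binom{\mDict}{\kA}$ size-$\kA$ subsets of $[\mDict]$ (the condition $\kA \le \mDict/16$ guarantees there are many of them). For a fixed feature map $\Psi$ and a fixed linear weight $w$, the hinge loss of $h=\langle\Psi,w\rangle$ is lower-bounded by a linear functional of the margin $y\,h(\xo)$; since hinge-loss $\ge 1 - y h(\xo)$ pointwise and also hinge-loss $\ge 0$, I get $\risk \ge \pn(1 - \E[y h(\xo)]/\text{something})$ after restricting attention to the inputs where the relevant event has probability $\pn$ — this is where the factor $\pn$ out front comes from (the event that a particular background coordinate, or the relevant correlation, contributes). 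The crux is to bound the correlation $\E_{(\xo,y)\sim\Ddist_\A}[y\,\langle \Psi(\xo), w\rangle]$ uniformly over $\|w\|_2 \le B$. By Cauchy–Schwarz this is at most $B \cdot \| \E[y\,\Psi(\xo)] \|_2$, so it suffices to show that the vector of correlations $\E[y\,\Psi_j(\xo)]$, $j\in[\N]$, has small $\ell_2$ norm for a typical (hence some) choice of $\A$.

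The main obstacle — and the heart of the proof — is precisely this correlation bound, and it is the standard "parities are hard to approximate by few fixed functions" phenomenon: for each fixed coordinate function $\Psi_j$, averaging $\bigl(\E_{\xo}[y\,\Psi_j(\xo)]\bigr)^2$ over the random choice of $\A$ gives something exponentially small in $\kA$, because distinct parities are orthogonal and $\Psi_j$ (bounded by $1$) has unit-bounded $\ell_2$ mass in the parity basis. Concretely, $\E_\A\bigl[(\E_\xo[y\Psi_j(\xo)])^2\bigr] \le 2^{-\kA}\|\Psi_j\|_\infty^2 \le 2^{-\kA}$ (up to the mild distortion from the non-uniform background bits and the normalization $\sigmax$, which I would absorb by first proving it for the uniform-background marginal and then controlling the difference, or by noting the parity structure on $\A$ is unaffected by the background coordinates). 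Summing over $j\in[\N]$ and applying Markov's inequality, there exists a subset $\A$ for which $\|\E[y\Psi(\xo)]\|_2^2 \le 2\N\cdot 2^{-\kA}$, hence $\|\E[y\Psi(\xo)]\|_2 \le \sqrt{2\N}\,2^{-\kA/2}$... and matching the target bound $\frac{\sqrt{2\N}B}{2^\kA}$ requires the sharper estimate $\|\E[y\Psi(\xo)]\|_2 \le \sqrt{2\N}\,2^{-\kA}$, which follows if one counts correctly: over the uniform distribution on $\kA$ unbiased bits the parity has Fourier weight $1$ on a single character, so $\E_\A[(\widehat{y\Psi_j})]^2$ picks up $\binom{\mDict}{\kA}^{-1}$-type factors rather than $2^{-\kA}$; reconciling these constants, and making sure the $\sgn$/hinge reduction delivers exactly the claimed $\pn(1 - \sqrt{2\N}B/2^\kA)$, is the delicate bookkeeping I expect to spend the most effort on. Once the correlation bound is in hand, the theorem follows by combining it with the pointwise hinge inequality and the Cauchy–Schwarz step above.
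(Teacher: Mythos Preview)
Your construction has a real gap: the distribution you propose does not lie in $\Dfamily_\Xi$. If the coordinates $\hro_i$ for $i\in\A$ are i.i.d.\ unbiased and $y$ is the parity of $\sum_{i\in\A}\hro_i$, then for each $i\in\A$ we have $\E[y\hro_i]=0$ (a single unbiased bit is uncorrelated with a parity over $\kA\ge 2$ bits) and $\E[y]=0$, so $\pp=\E[y\hro_i]-\E[y]\E[\hro_i]=0$, violating assumption \textbf{(A1)}. This is not a bookkeeping issue; it is the reason the paper uses a different distribution.

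The paper's fix is a mixture: take $\Ddist_\A^{\textrm{mix}}=\pn\,\Ddist^{(0)}+(1-\pn)\,\Ddist^{(1)}$, where $\Ddist^{(0)}$ has all $\hro_j$ i.i.d.\ $\mathrm{Ber}(1/2)$ (the hard uniform-parity instance), and $\Ddist^{(1)}$ has $\hro_\A$ equal to the all-zeros or all-ones vector with probability $1/2$ each (and $\hro_j$ for $j\notin\A$ i.i.d.\ with $\Pr[\hro_j=1]=\pn/(2-2\pn)$, so that \textbf{(A2)} holds with the stated $\pn$). The $\Ddist^{(1)}$ component supplies the correlation: on it $\E[y\hro_i]=1/2$, so on the mixture $\pp=(1-\pn)/2>0$ and \textbf{(A1)} holds. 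This also explains cleanly where the factor $\pn$ in the bound comes from, which in your sketch is left vague: since hinge loss is nonnegative, any $h$ with loss $<\pn\bigl(1-\sqrt{2\N}B/2^{\kA}\bigr)$ on the mixture must have loss $<1-\sqrt{2\N}B/2^{\kA}$ on the $\Ddist^{(0)}$ component alone. But $\Ddist^{(0)}$ is exactly the uniform $\kA$-parity problem (after the change of variables $z_j=2\hro_j-1$ and $\Psi'(z)=\Psi(\Dict(z+1)/2)$), and the paper then invokes the Daniely--Malach lower bound (stated here as Theorem~\ref{thm:parity_lower}) as a black box to obtain a contradiction for some choice of $\A$. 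You do not need to re-derive the correlation estimate or chase the $2^{-\kA}$ versus $2^{-\kA/2}$ constants yourself.
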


So using \emph{fixed} features independent of the data cannot get loss nontrivially smaller than $\pn$ unless with exponentially large models. In contrast, viewing the neurons $\act(\langle \w_i, \x \rangle + \bw_i)$ as \emph{learned} features, network learning can achieve any loss $\epsilon \in (0,1)$ with models of polynomial sizes. 
We emphasize the lower bound is because the feature map $\Psi$ is independent of the data. 
Indeed, there exists a small linear model on a small dimensional feature map allowing 0 loss for each data distribution in our problem set $\mathcal{F}_\Xi$ (Lemma \ref{lem:approx}). 
However, this feature map $\Psi^*$ is different for different data distribution in $\mathcal{F}_\Xi$, i.e., depends on the data. 
On the other hand, the feature map $\Psi$ in the lower bound is data-independent, i.e., fixed before seeing the data. For $\Psi$ to work simultaneously for all distributions in $\mathcal{F}_\Xi$, it needs to have exponential dimensions. 
Intuitively, it needs a large number of features, so that there are some features to approximate each $\Psi^*_i$. There are exponentially many data distributions in $\mathcal{F}_\Xi$, and thus exponentially many data-dependent features $\Psi^*_i$, which requires $\Psi$ to have an exponentially large dimension. Network learning updates the hidden neurons using the data and can learn to move the features to the right positions to approximate the ground-truth data-dependent features $\Psi^*$, so it does not need an exponentially large dimension feature map.

The theorem directly applies to linear models on fixed finite-dimensional feature maps, e.g., linear models on the input or random feature approaches~\citep{rahimi2007random}. It also implies lower bounds to infinite dimensional feature maps (e.g., some kernels) that can be approximated by  feature maps of polynomial dimensions. 
For example, Claim 1 in~\cite{rahimi2007random} implies that a function $f$ using shift-invariant kernels (e.g., RBF) can be approximated by a model $\langle \Psi(\xo), w \rangle$ with the dimension $N$ and weight norm $B$ bounded by polynomials of the related parameters of $f$ like its RKHS norm and the input dimension. Then our theorem implies some related parameter of $f$ needs to be exponential in $k$ for $f$ to get nontrivial loss, formalized in Corollary \ref{cor:lower_fixed}. \cite{kamath2020approximate} has more discussions on approximating kernels with finite dimensional maps. 
 
\begin{corollary}
\label{cor:lower_fixed}
For any function $f$ using a shift-invariant kernel $K$ with RKHS norm bounded by $L$, or $f(x) = \sum_i \alpha_i K(z_i, x)$ for some data points $z_i$ and $||\alpha||_2 \le L$. If $3 < \kA \le \mDict/16$ and $\kA$ is odd, then there exists $\Ddist \in \Dfamily_\Xi$ such that $f$ has hinge-loss at least $\pn \left(1 - {\textup{poly}(d, L) \over 2^\kA} \right) - {1 \over \textup{poly}(d, L)}$.
\end{corollary}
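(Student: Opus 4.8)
The plan is to reduce Corollary~\ref{cor:lower_fixed} to Theorem~\ref{thm:lower_fixed} by approximating the kernel predictor $f$ by a linear model on a \emph{data-independent} random-feature map of polynomial dimension. First I would invoke Claim~1 of~\cite{rahimi2007random}: a shift-invariant kernel can be written as $K(z,x) = \E_\omega[\phi_\omega(z)\phi_\omega(x)]$ with bounded random cosine features $\phi_\omega(x) = \cos(\langle\omega,x\rangle+b) \in [-1,1]$, and any $f$ expressible as an RKHS element of norm $\mathrm{poly}(d,L)$ — which covers both the case $\|f\|_{\mathrm{RKHS}}\le L$ and the representer form $f=\sum_i\alpha_i K(z_i,\cdot)$ with $\|\alpha\|_2\le L$ (bounding $\|f\|_{\mathrm{RKHS}}^2=\alpha^\top G\alpha$) — admits an integral representation $f(x)=\E_\omega[c(\omega)\phi_\omega(x)]$ with $|c(\omega)|\le \mathrm{poly}(d,L)$. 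Its Monte Carlo estimate over $N$ i.i.d.\ frequencies, $g(x)=\frac1N\sum_{j=1}^N c(\omega_j)\phi_{\omega_j}(x)$, has coefficients of magnitude at most $\mathrm{poly}(d,L)/N$, and with high probability over the frequencies it approximates $f$ on the bounded input domain: $\E_\mu|f(x)-g(x)| \le \mathrm{poly}(d,L)/\sqrt N$ for \emph{every} distribution $\mu$ supported there.

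The key structural point is that $g$ and its coefficients are determined by the integral representation of $f$ and the sampled frequencies alone, and do not depend on the data distribution. So I would fix one good draw $\omega_1,\dots,\omega_N$ and set $\Psi:=(\phi_{\omega_1},\dots,\phi_{\omega_N}):\X\to[-1,1]^N$, a data-independent feature map of dimension $N$. Taking $N=\mathrm{poly}(d,L)$ large enough makes $\tau:=\E_\mu|f-g|\le 1/\mathrm{poly}(d,L)$ uniformly over $\mu$, while the weight vector $w$ of $g$ on $\Psi$ satisfies $\|w\|_2\le B$ with $B:=\sqrt N\cdot\mathrm{poly}(d,L)/N=\mathrm{poly}(d,L)/\sqrt N$, so that $\sqrt{2N}\,B\le\mathrm{poly}(d,L)$, a quantity independent of $N$.

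Now apply Theorem~\ref{thm:lower_fixed} to $\Psi$ and this $B$: since $3<\kA\le\mDict/16$ and $\kA$ is odd, there is $\Ddist\in\Dfamily_\Xi$ such that every $h\in\mathcal H_B$ has hinge-loss at least $\pn(1-\sqrt{2N}B/2^\kA)\ge\pn(1-\mathrm{poly}(d,L)/2^\kA)$; in particular this holds for $g\in\mathcal H_B$. The inputs of $\Ddist$ satisfy $\|\xo\|_2\le\sqrt\mDict=\mathrm{poly}(d)$, so the approximation bound applies on $\Ddist$, and since the hinge loss $\loss(y,\cdot)$ is $1$-Lipschitz, $|\loss(y,f(x))-\loss(y,g(x))|\le|f(x)-g(x)|$. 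Hence $\riskb_\Ddist(f)\ge\riskb_\Ddist(g)-\E_\Ddist|f(x)-g(x)|\ge\pn(1-\mathrm{poly}(d,L)/2^\kA)-1/\mathrm{poly}(d,L)$, which is exactly the claimed bound.

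The main obstacle is the order of quantifiers: in Theorem~\ref{thm:lower_fixed} the adversarial $\Ddist$ is chosen only after the feature map is fixed, so the random-feature approximation of $f$ must hold \emph{uniformly over all admissible data distributions}, not for a single prescribed one. This is precisely why it matters that Rahimi--Recht's estimator $g$ is a distribution-free Monte Carlo approximation of the integral representation rather than a data-fitted model, and that the input domain is bounded so that a single $N=\mathrm{poly}(d,L)$ (through a uniform, e.g.\ sup-norm, bound over the domain) suffices for every $\mu$. The remaining points — passing between the RKHS-norm form and the representer form of $f$, pinning down the normalization of the cosine features so their range is exactly $[-1,1]$, and tracking how the input dimension enters the constants of the approximation bound — are routine.
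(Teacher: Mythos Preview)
Your proposal is correct and follows essentially the same route as the paper: approximate $f$ by a linear model on Rahimi--Recht random Fourier features of polynomial dimension, invoke Theorem~\ref{thm:lower_fixed} on that fixed feature map, and transfer the lower bound back to $f$ via the $1$-Lipschitzness of the hinge loss, losing the additive $1/\mathrm{poly}(d,L)$ approximation term. The paper's version phrases the approximation slightly differently---it first approximates the kernel $K$ uniformly to error $\nu=O(\epsilon/L)$ and then sets $w=\sum_i\alpha_i\Psi(z_i)$, rather than Monte-Carlo-sampling an integral representation of $f$ directly---but this is the same mechanism, and your explicit attention to the order of quantifiers (the feature map must be fixed before the adversarial $\Ddist$ is chosen, hence the need for a distribution-free sup-norm approximation on the bounded input domain) is a point the paper leaves implicit.
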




\mypara{Lower Bound for Without Input Structure.} 
Existing results do not exclude the possibility that some learning methods without exploiting the input structure can achieve strong performance.
To show the necessity of the input structure, we consider learning $\Dfamily_{\Xi_0}$ with input structure removed.
We obtain a lower bound for such learning problems in the classic Statistical Query (SQ) model~\citep{kearns1998efficient}. In this model, the algorithm can only receive information about the data through statistical queries. A statistical query is specified by some polynomially-computable property predicate $Q$ of labeled instances and a tolerance parameter $\tau \in [0,1]$. For a query $(Q, \tau)$, the algorithm receives a response $\hat{P}_Q \in [P_Q - \tau, P_Q + \tau]$, where $P_Q = \Pr[Q(x, y) \textrm{~is true}]$. 
Notice that a query can be simulated using the average of roughly $O(1/\tau^2)$ random data samples with high probability. The SQ model captures almost all common learning algorithms (except Gaussian elimination) including the commonly used mini-batch SGD, and thus is suitable for our purpose.

\begin{theorem}\label{thm:lower_sq}
For any algorithm in the Statistical Query model that can learn over $\Dfamily_{\Xi_0}$ to classification error less than $\frac{1}{2}- \frac{1}{{\mDict \choose \kA}^3}$,
either the number of queries or $1/\tau$ must be at least $\frac{1}{2} {\mDict \choose \kA}^{1/3}$.
\end{theorem}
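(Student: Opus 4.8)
\emph{Overall plan and the hard family.} I would reduce learning $\Dfamily_{\Xi_0}$ to learning parities under the uniform distribution, run the classical Fourier/orthogonality argument against a carefully chosen hard sub-family, and track the interplay of the number of queries $q$, the tolerance $\tau$, and the target accuracy so the bound comes out as $\frac{1}{2}{\mDict\choose\kA}^{1/3}$. Concretely, first fix the dictionary $\Dict$ (say its columns are $\mDict$ standard basis vectors, padded with zeros if $d>\mDict$); since $\Dfamily_{\Xi_0}$ contains the distributions obtained from every orthonormal $\Dict$, an SQ lower bound for this single choice implies the theorem. Under \textbf{(A1')}, $\hro$ — and hence, after the fixed invertible normalization, $\x$ — is uniform on a cube, so a statistical query on $(\x,y)$ is equivalent to one on $(\hro,y)$; pass to the $\pm1$ coordinates $u_i:=(-1)^{\hro_i}$. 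For each $\A\subseteq[\mDict]$ with $|\A|=\kA$, take the two labelings realized by $\SP=\{j:j\textrm{ odd}\}$ and its complement $\SP'=\{j:j\textrm{ even}\}$, both satisfying \textbf{(A0)}: these make $y$ equal to $-\chi_\A(\x)$ and $+\chi_\A(\x)$ respectively, with $\chi_\A(\x):=\prod_{i\in\A}u_i$. This gives a family $\mathcal{G}\subseteq\Dfamily_{\Xi_0}$ of $2{\mDict\choose\kA}$ distributions in which the two members attached to each $\A$ carry exactly opposite labels — precisely what will prevent a trivial weak learner.

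\emph{Reducing a general SQ to a correlational query, and the adversary.} For a predicate query $Q(\x,y)\in\{0,1\}$ I would write $Q(\x,y)=Q_0(\x)+yQ_1(\x)$ with $Q_0=\frac{1}{2}(Q(\x,1)+Q(\x,-1))$ and $Q_1=\frac{1}{2}(Q(\x,1)-Q(\x,-1))$, so $|Q_1|\le\frac{1}{2}$ pointwise. On the $\A$-parity distribution, $\Pr[Q]=\E_\x[Q_0(\x)]\pm\widehat{Q_1}(\A)$, where $\widehat{Q_1}(\A)=\E_\x[\chi_\A(\x)Q_1(\x)]$ is a degree-$\kA$ Fourier coefficient and the term $\E_\x[Q_0(\x)]$ is the same for \emph{every} member of $\mathcal{G}$; by Parseval $\sum_{|\A|=\kA}\widehat{Q_1}(\A)^2\le\|Q_1\|_2^2\le\frac{1}{4}$. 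I then run any SQ learner against the adversary that answers every query $(Q,\tau)$ with the data-independent value $\E_\x[Q_0(\x)]$. This reply lies within tolerance $\tau$ for the members attached to $\A$ iff $|\widehat{Q_1}(\A)|\le\tau$, so, by Parseval, each query is inconsistent with at most $1/(4\tau^2)$ of the sets $\A$. Since the replies are fixed, the learner's $q$ queries form a fixed list, so the set $B$ of sets $\A$ eliminated by some query satisfies $|B|\le q/(4\tau^2)$.

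\emph{Counting and conclusion.} Suppose toward a contradiction that $q<\frac{1}{2}\N^{1/3}$ and $1/\tau<\frac{1}{2}\N^{1/3}$ where $\N={\mDict\choose\kA}$. Then $|B|\le q/(4\tau^2)<\frac{1}{2}\N^{1/3}\cdot\frac{1}{16}\N^{2/3}<\N$, so some $\A^\ast\notin B$ exists; both labelings of $\A^\ast$ are consistent with the same fixed all-reference transcript, hence the learner outputs the same hypothesis $h_0$ on both. But the two classification errors of $h_0$ on the $\A^\ast$-members sum to $1$ (for each $\x$, $\sgn(h_0(\x))$ disagrees with exactly one of $\pm\chi_{\A^\ast}(\x)$), so one of them is $\ge\frac{1}{2}>\frac{1}{2}-\N^{-3}$, contradicting that the learner succeeds on every $\Ddist\in\Dfamily_{\Xi_0}$. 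Hence $\max\{q,1/\tau\}\ge\frac{1}{2}{\mDict\choose\kA}^{1/3}$.

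\emph{Expected main obstacle.} The individual ingredients — orthogonality of parities, the $Q=Q_0+yQ_1$ split, answering the data-independent part — are standard; the substantive points are (i) isolating a genuinely SQ-hard sub-family inside $\Dfamily_{\Xi_0}$: the bare parity family is \emph{not} enough, because $\sgn$ of a degree-$\kA$ elementary symmetric polynomial already correlates $\Theta(\N^{-1/2})$ with every $\chi_\A$ and hence weakly learns it with zero queries, which is why the \emph{flipped} labeling is needed and why one must check $\SP$ (read as an arbitrary subset of the possible counts $\{0,1,\dots,\kA\}$) can realize it; and (ii) balancing the three-way trade-off among $q$, $1/\tau$, and the accuracy gap so the final quantity is exactly $\frac{1}{2}{\mDict\choose\kA}^{1/3}$. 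I expect (i) to be the real content.
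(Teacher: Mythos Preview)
Your argument is correct and more self-contained than the paper's. The paper's own proof is a two-line reduction: it notes that the $\kA$-sparse parities $\{\chi_\A:|\A|=\kA\}$ are pairwise uncorrelated under the uniform distribution on $\hro$, so $\mathrm{SQ\text{-}DIM}\ge\binom{\mDict}{\kA}$, and then invokes Theorem~12 of Blum et al.\ (1994) as a black box to obtain the $\tfrac12\binom{\mDict}{\kA}^{1/3}$ bound. Everything you do---the $Q=Q_0+yQ_1$ split, the data-independent adversary answering $\E[Q_0]$, and the Parseval count---is essentially the proof of that black box specialized to exact parities; the one place you add something is the output-hypothesis step, where instead of whatever bookkeeping Blum et al.\ use you double the hard family to $\{\pm\chi_\A\}$ via $\SP=\{\text{odd}\}$ and $\SP'=\{\text{even}\}$, so that any single $h_0$ has error $\ge\tfrac12$ on one of the two surviving complementary labelings. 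This is a cleaner way to get the $\tfrac12-\binom{\mDict}{\kA}^{-3}$ threshold and makes explicit why the bare one-sign parity family is not by itself enough (your point~(i)).

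One small thing to pin down: the paper writes $\SP\subseteq[\kA]$, which under the usual convention excludes $0$, and then your ``even'' labeling would differ from $+\chi_\A$ on the event $\hro_\A=0$; the errors-sum-to-one step would be off by $2^{-\kA}$, which is too large compared to $\binom{\mDict}{\kA}^{-3}$. In fact the paper's own proof uses a labeling that, for odd $\kA$, corresponds to $\SP=\{0,2,4,\dots\}$, so its $\SP\subseteq[\kA]$ is evidently informal and your reading $\SP\subseteq\{0,1,\dots,\kA\}$ is the intended one. Just make that explicit when you write it up.
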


The theorem shows that without the input structure, polynomial algorithms in the SQ model cannot get a classification error nontrivially smaller than random guessing. 
The comparison to the result for with input structure then shows that the input structure is crucial for network learning, in particular, for achieving the advantage over fixed feature models.

\section{Proof Sketches} \label{sec:proof_sketch}

Here we provide the sketch of our analysis, focusing on the key intuition and discussing some interesting implications. The complete proofs are included in Appendix~\ref{app:network_proof}-\ref{app:proof_sq}. 

\subsection{Provable Guarantees of Neural Networks}

\mypara{Overall Intuition.} 
We first show that there is a two-layer network that can represent the target labeling function, whose neurons can be viewed as the ``ground-truth'' features to be learned.  
We then show that after the first gradient step, the hidden neurons of the trained network become close to the ground-truth: their weights contain large components along the class relevant patterns but small along the background patterns.  
We further show that in the second gradient step, these features get improved: the ``signal-noise'' ratio between the components for class relevant patterns and those for the background ones becomes larger, giving a set of good features. Finally, we show that the remaining steps learn an accurate classifier on these features. 

\mypara{Existence of A Good Network.} 
We show that there is a two-layer network that can fit the labels.
\begin{lemma} \label{lem:approx}
For any $\Ddist \in \Dfamily_{\Xi}$, there exists a network $\g^*(\x) = \sum_{i=1}^{n} \aw_i^* \act(\langle \w^*_i, \x\rangle + \bw_i^*) $ with $y=\g^*(x)$ for any $(\x, y) \sim \Ddist$. Furthermore, the number of neurons $n = 3(\kA+1)$, $|\aw_i^*| \le 32 \kA, 1/(32\kA) \le |\bw_i^*| \le 1/2$, $\w^*_i = \sigmax \sum_{j \in \A} \Dict_j/(4\kA)$, and $|\langle \w^*_i, \x\rangle + \bw_i^*| \le 1$ for any $i \in [n]$ and $(\x, y) \sim \Ddist$.
\end{lemma}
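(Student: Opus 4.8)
The whole lemma rests on one structural observation: since $\Dict$ is orthonormal and every neuron of $\g^*$ is forced to use the \emph{same} weight direction $\w^*_i = \tfrac{\sigmax}{4\kA}\sum_{j\in\A}\Dict_j$, the pre-activation collapses to a one–dimensional statistic. Writing $\x = \Dict\hr$ with $\hr = (\hro-\E[\hro])/\sigmax$, orthonormality gives $\langle\Dict_j,\x\rangle = \hr_j$, hence $\langle\w^*_i,\x\rangle = \tfrac1{4\kA}\sum_{j\in\A}(\hro_j - \E[\hro_j]) = \tfrac{s-c}{4\kA}$, where $s := \sum_{j\in\A}\hro_j \in \{0,1,\dots,\kA\}$ and $c := \sum_{j\in\A}\E[\hro_j]\in[0,\kA]$ (the inclusion because $0\le\E[\hro_j]\le1$; in fact $0<\E[\hro_j]<1$ since $\pp = \E[y\hro_j]>0$ forces $\hro_j$ non-constant). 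Thus $\langle\w^*_i,\x\rangle$ takes only the $\kA+1$ equally spaced values $v_s := \tfrac{s-c}{4\kA}$, all in $[-\tfrac14,\tfrac14]$, with consecutive gap $\Delta := \tfrac1{4\kA}$. As the label is $y=F(s)$ for the $\{\pm1\}$-valued $F$ on $\{0,\dots,\kA\}$ with $F(s)=+1$ iff $s\in\SP$ (so $F(0)=-1$), it suffices to build a univariate piecewise-linear function of $u:=\langle\w^*_i,\x\rangle$, written as a sum of truncated-ReLU units all with direction $\w^*_i$ and appropriate biases, equal to $F(s)$ at each node $u=v_s$.

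For the construction I would use the telescoping identity $F(s) = F(0) + \sum_{j=1}^{\kA}\big(F(j)-F(j-1)\big)\,\mathbf{1}\{s\ge j\}$. The constant $F(0)$ is produced by two units whose biases (e.g. $\tfrac14$ and $\tfrac12$) keep their arguments in $[0,1]$ on the whole support, so on the data they act affinely and their weighted difference is an exact constant; second-layer weights of size $4$ give value $\pm1$. Each increment $\mathbf{1}\{s\ge j\}$, as a function of $u$, is the step equal to $0$ for $u\le v_{j-1}$ and $1$ for $u\ge v_j$; I realize it with two truncated-ReLU units forming a ramp of width $w=\Delta/4$ placed strictly inside the gap $(v_{j-1},v_j)$, with second-layer weights $\pm\tfrac1w = \pm16\kA$ scaled by $(F(j)-F(j-1))\in\{0,\pm2\}$. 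Since all arguments then lie in $[-\tfrac12,\tfrac12]\subseteq[-1,1]$, the truncated ReLU equals the plain ReLU there, so the ramp/constant identities are exact and the readout is correct at every node; summing gives $\g^*(v_s) = F(0)+(F(s)-F(0)) = F(s) = y$. This uses at most $2\kA+2$ units, which we pad to exactly $n=3(\kA+1)$ with units carrying zero second-layer weight (and, say, bias $\tfrac14$); one may equivalently use a uniform three-units-per-level layout.

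It remains to check the stated bounds. Ramp units have $|\aw^*_i|\le 2\cdot16\kA = 32\kA$ and constant (or padding) units have $|\aw^*_i|\le 4$, so $|\aw^*_i|\le32\kA$ throughout. All biases are negatives of breakpoints in $[-\tfrac14,\tfrac14]$, or equal to $\tfrac14,\tfrac12$, so $|\bw^*_i|\le\tfrac12$; and $|\langle\w^*_i,\x\rangle+\bw^*_i| = |v_s - (\text{breakpoint})| \le \tfrac34 \le 1$. The delicate point is the lower bound $|\bw^*_i|\ge\tfrac1{32\kA}$: because $c$ is essentially arbitrary in $[0,\kA]$, some node $v_s$ can be arbitrarily close to $0$, so the ramps cannot be centered at the nodes. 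The fix is to place each width-$\Delta/4$ ramp flush against whichever endpoint of its gap $(v_{j-1},v_j)$ has the larger absolute value; since $v_j - v_{j-1} = \Delta$ forces $\max(|v_{j-1}|,|v_j|)\ge\Delta/2$, both breakpoints of that ramp sit at distance $\ge \Delta/2 - \Delta/4 = \tfrac1{16\kA}\ge\tfrac1{32\kA}$ from $0$, while a direct check confirms this placement still reads off $0$ at $v_{j-1}$ and $1$ at $v_j,v_{j+1},\dots$ .

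\textbf{Main obstacle.} Everything except the bias lower bound is routine: the reduction to one dimension is immediate from orthonormality, and interpolating $\kA+1$ prescribed values by $O(\kA)$ ReLUs is standard piecewise-linear approximation. The only part needing genuine care is choosing breakpoints that are simultaneously (i) inside the correct gaps so the step readouts are exact, (ii) at most $\tfrac12$ in magnitude, and (iii) at least $\tfrac1{32\kA}$ in magnitude despite $c$ being uncontrolled — which the ``flush against the larger endpoint'' rule resolves cleanly without any case analysis on $c$.
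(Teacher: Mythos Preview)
Your construction is correct, but it is genuinely different from the paper's. The paper does not telescope into step functions; instead it places a three-ReLU ``tent'' $\peak_{p,a,1/2}(z)=a\,\relu(z-p+\tfrac12)-2a\,\relu(z-p)+a\,\relu(z-p-\tfrac12)$ at each of the $\kA+1$ integer nodes $p\in\{0,\dots,\kA\}$ (with sign $\pm$ according to $p\in\SP$), so the $3(\kA+1)$ neuron count is native rather than obtained by padding. For the bias lower bound the paper uses a different device: it writes down \emph{two} such networks, one with tents centered at $p-\mu$ and one shifted by $\tfrac14$; since corresponding biases differ by $\tfrac14$, at least one of the two has every bias of magnitude $\ge\tfrac18$, and that one is selected and then rescaled by $1/(4\kA)$. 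Your ``flush against the larger-magnitude endpoint'' rule is a nice alternative that handles the lower bound gap by gap without a global shift, at the cost of a short case check; the paper's two-construction trick avoids any placement reasoning but spends all $3(\kA+1)$ neurons. Either route yields the stated bounds.
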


In particular, the weights of the neurons are proportional to $ \sum_{j \in \A} \Dict_j$, the sum of the class relevant patterns. We thus focus on analyzing how the network learns such neuron weights.

\mypara{Feature Emergence in the First Gradient Step. }
The gradient for $\w_i$ (ignoring the noise) is:%
\begin{align}
\frac{\partial  \riskb_{\Ddist}(\g)}{\partial \w_i}  
& = - \aw_i \E_{(\x,y) \sim \Ddist} \left\{  y \Id[y \g(\x) \le 1]  \act'[\langle \w_i, \x \rangle + \bw_i ] \x \right \} \nonumber
= - \aw_i \E_{(\x,y) \sim \Ddist} \left\{  y \x  \act'[\langle \w_i, \x \rangle + \bw_i ] \right \} \nonumber 
\end{align}
where the last step is due to $\g(\x) = 0$ by the unbiased initialization.
Let $\wsp_j = \langle \Dict_j, \w_i\rangle$ denote the component along the direction of the pattern $\Dict_j$.
Then the component of the gradient on $\Dict_j$ is:
\begin{align*}
\left\langle \Dict_j, \frac{\partial  }{\partial \w_i}  \riskb_{\Ddist}(\g) \right\rangle
= - \aw_i \E \left\{  y \hr_j \act'[\langle \w_i, \x \rangle + \bw_i ] \right \} = - \aw_i \E \left\{  y \hr_j \act' \left[\sum_{\ell \in [\mDict]} \phi_\ell \wsp_\ell  + \bw_i \right] \right \}.
\end{align*}
The key intuition is that with the randomness of $\hr_\ell$ (and potentially that of the injected noise $\xi$), the random variable under $\act'$ is not significantly affected by a small subset of $\phi_\ell \wsp_\ell$.
For example, for class relevant patterns $j \in \A$, let $\Id_{[\mDict]}:= \act' \left[\sum_{\ell \in [\mDict]} \phi_\ell \wsp_\ell  + \bw_i \right]$ and $\Id_{-\A}:= \act' \left[\sum_{\ell \not\in \A} \phi_\ell \wsp_\ell  + \bw_i \right]$. We have $\Id_{[\mDict]} \approx \Id_{-\A}$ and thus:
\begin{align*}
\left\langle \Dict_j, \frac{\partial  }{\partial \w_i}  \riskb_{\Ddist}(\g) \right\rangle
& \propto \E \left\{  y \hr_j \Id_{[\mDict]}\right \} \approx \E \left\{  y \hr_j \Id_{-\A}\right \}  = \E \left\{  y \hr_j \right \} \E [\Id_{-\A}] = \frac{\pp}{\sigmax} \E[\Id_{-\A}]
\end{align*}
since $y$ only depends on $\hr_j (j\in \A)$. Then the gradient has a nontrivial component along the pattern. Similarly, for background patterns $j \not\in \A$, the component of the gradient along $\Dict_j$ is close to $0$. 
\begin{lemma}[Informal] \label{lem:gradient-feature}
Assume $\pn, \kA$ as in Theorem~\ref{thm:main} and $\sigmansi^{(1)} < 1/\kA$, then with high probability
$
    \frac{\partial}{\partial \w_i}  \risk_\Ddist(\g^{(0)}; \sigmansi^{(1)}) = -\aw_i^{(0)} \sum_{j=1}^{\mDict} \Dict_j T_j
$
where for a small $\epsdev$:  
\begin{itemize}
    \item if $j \in A$, then $\left| T_j -  \beta \pp/\sigmax  \right| 
     \le   O(\epsdev/\sigmax)$ with $\beta \in [\Omega(1), 1]$;
    \item if $j \not\in \A$, then $|T_j| \le O( \sigmahr^2 \epsdev \sigmax)$.
\end{itemize}
\end{lemma}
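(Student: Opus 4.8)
The plan is to expand the gradient component $T_j = \E\{y \hr_j \act'[\sum_{\ell} \phi_\ell \wsp_\ell + \bw_i + \nsi_i]\}$ (with the expectation also over the injected noise $\nsi_i$) and control it by separating the ``own'' contribution of index $j$ from the rest. Write $Z_{-S} := \sum_{\ell \notin S} \phi_\ell \wsp_\ell + \bw_i + \nsi_i$ for a subset $S$, so the argument of $\act'$ is $Z_{-\{j\}} + \phi_j \wsp_j$ when $j\notin A$, and $Z_{-A} + \sum_{\ell\in A}\phi_\ell\wsp_\ell$ when $j\in A$. At initialization the components $\wsp_\ell = \langle \Dict_\ell, \w_i^{(0)}\rangle$ are i.i.d.\ $\mathcal{N}(0,\sigmaw^2)$ with $\sigmaw = 1/\kA$; the normalized features $\hr_\ell = (\hro_\ell - \E\hro_\ell)/\sigmax$ are bounded by $O(1/\sigmax)$ in magnitude with $\sigmahr^2 = \Theta(\pn/\sigmax^2)$ for background coordinates. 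The first step is therefore a concentration estimate: I would show that with high probability over $\w_i^{(0)}$ (union-bounding over the $\nw$ neurons and $\mDict$ patterns), the ``perturbation'' $\sum_{\ell \in S}\phi_\ell \wsp_\ell$ for a small set $S$ (a single background index, or the $\kA$-element set $A$) is at most $\epsdev$ in absolute value for a suitable $\epsdev = \tilde O(\sqrt{\kA}\,\sigmaw / \sigmax + \dots)$ — using a Bernstein/Hoeffding bound on the weighted sum and the Gaussian tail on individual $\wsp_\ell$, together with the injected-noise variance $\sigmansi^{(1)} < 1/\kA$ keeping the argument well inside where $\act'$ is informative.

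The second step is the smoothness exploitation. Because $\act$ is the truncated ReLU and we smooth by Gaussian noise $\nsi_i$, the function $z \mapsto \E_{\nsi_i}\act'(z+\nsi_i)$ is Lipschitz with constant $O(1/\sigmansi^{(1)})$; hence shifting the argument by a perturbation of size $\epsdev$ changes $\act'$ by $O(\epsdev/\sigmansi^{(1)})$ pointwise, and after taking expectations $|\,\Id_{[\mDict]} - \Id_{-\{j\}}\,| = O(\epsdev/\sigmansi^{(1)})$ in expectation (and similarly with $\Id_{-A}$). For $j\in A$: once we replace $\Id_{[\mDict]}$ by $\Id_{-A}$, the quantity $\E[y\hr_j \Id_{-A}]$ factors as $\E[y\hr_j]\,\E[\Id_{-A}]$ because $\Id_{-A}$ depends only on background coordinates and injected noise, all independent of $(y,\hr_j)$; by assumption (A1), $\E[y\hr_j] = \E[y\hro_j]/\sigmax$, and combined with (A0) ($\E[y]=0$) this equals $\pp/\sigmax$. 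So $T_j = (\pp/\sigmax)\,\E[\Id_{-A}] + (\text{error})$, and I set $\beta := \E[\Id_{-A}]$; the error is the replacement error times $|\E[y\hr_j]| = O(\pp/\sigmax) \cdot O(\epsdev/\sigmansi^{(1)})$, plus a $O(\epsdev/\sigmax)$ term from bounding $|\hr_j|\le O(1/\sigmax)$ against the replacement error directly — folding constants into $\epsdev$ gives the stated $O(\epsdev/\sigmax)$. For $j\notin A$: $\E[y\hr_j\Id_{-\{j\}}]$ factors as $\E[y\hr_j]\,\E[\Id_{-\{j\}}]$, but now by (A2) the background coordinate $\hro_j$ is independent of $y$, so $\E[y\hr_j]=0$; thus $T_j = \E[y\hr_j(\Id_{[\mDict]}-\Id_{-\{j\}})]$, which is bounded by $\E[|\hr_j|]\cdot O(\epsdev/\sigmansi^{(1)})$. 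Here $\E[|\hr_j|] = O(\sigmahr^2 \sigmax)$ using $\hr_j$ centered with $|\hr_j|\le O(1/\sigmax)$ and $\E[\hr_j^2]=\sigmahr^2$, and absorbing $1/\sigmansi^{(1)}$ into $\epsdev$ yields $|T_j| = O(\sigmahr^2 \epsdev \sigmax)$.

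The third step is the lower bound $\beta = \E[\Id_{-A}] \in [\Omega(1),1]$. The upper bound $\beta\le 1$ is immediate since $\act'\le 1$. For the lower bound I need to show $\E_{\nsi_i}\act'(Z_{-A}+\nsi_i)$ is bounded away from $0$ with non-negligible probability over $\w_i^{(0)}$: $Z_{-A}$ is a sum of $\mDict - \kA$ roughly independent mean-zero terms plus the bias $\bw_i^{(0)}$ of size $O(1/\kA^2)$, with total standard deviation $\Theta(\sqrt{\mDict}\,\sigmahr\,\sigmaw) = \Theta(\sqrt{\mDict\,\pn}/(\kA\sigmax))$; using $\pn = \Omega(\kA^2/\mDict)$ this is $\Omega(1/\sigmax) = \Omega(1)$ (recall $\sigmax = \Theta(1)$ up to the scaling), so $Z_{-A}$ puts constant mass on the ``active'' region $(0,1)$ of $\act'$, and the Gaussian smoothing only helps. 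Taking the event where the first-step concentration holds and this anti-concentration holds simultaneously gives $\beta\in[\Omega(1),1]$.

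I expect the main obstacle to be the first/third steps together: getting the perturbation bound $\epsdev$ small enough while \emph{simultaneously} keeping $\E[\Id_{-A}]$ bounded below by a constant. These pull in opposite directions — small perturbations want large $\kA$ (so each $\wsp_\ell = O(1/\kA)$), but a non-vanishing $\E[\act']$ wants the total argument $Z_{-A}$ to have spread of order one, which constrains how small $\pn$ and the weight scale can be. Threading this needle is exactly what forces the hypotheses $\kA = \Omega(\log^2(\mDict/(\delta\pp)))$ and $\pn = \Omega(\kA^2/\mDict)$, and carefully tracking the dependence of $\epsdev$ on these (which is suppressed in this informal statement) is the delicate part; the factoring arguments and the Lipschitz-smoothing bound are then routine.
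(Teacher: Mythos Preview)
Your overall architecture matches the paper's: expand the gradient as $T_j=\E[y\hr_j\,\act'(\cdot)]$, replace the full indicator by one with a small set of coordinates removed, then factor using independence. The main technical divergence is in how you bound the replacement error. You use the Lipschitz constant $O(1/\sigmansi^{(1)})$ of the Gaussian-smoothed $\act'$; the paper instead applies a Berry--Esseen estimate (its Lemma~\ref{lem:small-effect-berry-b}) to the sum $\sum_{\ell\notin B}\hr_\ell\wsp_\ell$, so the denominator in $\epsdev$ is the standard deviation $\sigmahr\|\wsp_{-B}\|_2$ of the background sum rather than $\sigmansi^{(1)}$. The paper's route exploits the many-summand randomness of $\hr_{-A}$ and does not need the injected noise for this step; your route is simpler and is in fact what the paper adopts in its more general setting (Lemma~\ref{lem:gradient-feature_gen}), at the cost of a somewhat larger $\epsdev$ (an extra factor from $1/\sigmansi^{(1)}$). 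For the lower bound $\beta\ge\Omega(1)$ both arguments rely on anti-concentration of $Z_{-A}$; the paper formalizes this via Berry--Esseen (Lemma~\ref{lem:probe_cor}) and also needs a separate Bernstein tail bound to control the probability that the argument exceeds $1$ (the upper truncation of $\act$), a point you only gesture at.

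One point to fix: for $j\notin A$ your factorization ``$\E[y\hr_j\Id_{-\{j\}}]=\E[y\hr_j]\,\E[\Id_{-\{j\}}]$'' is not valid as written, because $y$ is determined by $\hr_A$ and $\Id_{-\{j\}}$ still involves $\hr_A$. The correct split is $\E[\hr_j]\cdot\E[y\,\Id_{-\{j\}}]=0$, using only that $\hr_j$ is independent of everything else and centered; your conclusion survives but for a different reason. The paper sidesteps this by removing the larger set $B=A\cup\{j\}$, after which $y$, $\hr_j$, and $I'_b$ are mutually independent and the triple product factors cleanly as $\E_{\hr_A}[y]\cdot\E_{\hr_j}[\hr_j]\cdot\E[I'_b]=0$.
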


By setting $\lambdaw^{(1)} = 1/(2 \eta^{(1)})$, we have $\w^{(1)}_i = \eta^{(1)} \aw_i^{(0)} \sum_{j=1}^{\mDict} \Dict_j T_j \approx \eta^{(1)} \aw_i^{(0)} \frac{\beta\pp}{\sigmax} \sum_{j\in \A} \Dict_j$. For small $\pn$, e.g., $\pn = \tilde{O}(\kA^2/\mDict)$, these neurons can already allow accurate prediction. However, for such small $\pn$, we cannot show a provable advantage of networks over fixed features.
On the other hand, for larger $\pn$ meaning a significant number of background patterns in the input, the approximation error terms $T_j (j\not\in \A)$ together can overwhelm the signals $T_j (j \in \A)$ and lead to bad prediction, even though each term is small. 
Fortunately, we will show that the second gradient step can improve the weights by decreasing the ratio between $T_j (j\not\in \A)$ and $T_j (j\in \A)$.

\mypara{Feature Improvement in the Second Gradient Step.}
We note that by setting a small $\eta^{(1)}$, after the update we still have $y\g(\x; \xi)<1$ for most $(\x,y)\sim \Ddist$ and thus the gradient in the second step is:
\begin{align*}
\frac{\partial  }{\partial \w_i}  \riskb_{\Ddist}(\g; \sigmansi) 
& \approx - \aw_i \E_{(\x,y) \sim \Ddist} \left\{  y \x  \E_{\xi} \act'[\langle \w_i, \x \rangle + \bw_i + \xi_i] \right \}.
\end{align*} 
We can then follow the intuition for the first step again. For $j \in \A$, the component $\langle \Dict_j, \frac{\partial  }{\partial \w_i}  \riskb_{\Ddist}(\g) \rangle$ is roughly proportional to $\frac{\pp}{\sigmax} \E[\Id_{-A,\xi}]$ where $\Id_{-A,\xi} := \act' \left[\sum_{\ell \not\in \A} \phi_\ell \wsp_\ell  + \bw_i + \xi_i\right]$. 
While $ \phi_\ell \wsp_\ell$ may not have large enough variance, the injected noise $\xi_i$ makes sure that a nontrivial amount of data activate the neuron.%
\footnote{
Equivalently, the network uses $\tilde\act(z) = \E_\xi \act(z + \xi)$, a Gaussian smoothed version of $\act$, and the smoothing allows $z$ slightly outside the activated region of $\act$ to generate gradient for the learning.
Empirically it is not needed since typically sufficient data can activate the neurons. One potential reason is that the data have their own noise to achieve a similar effect (a remote analog being noisy gradients can help the optimization). Further analysis on such an effect is left for future work. 
}
Then $\Id_{-A,\xi} \neq 0$, leading to a nontrivial component along $\Dict_j$, similar to the first step. 
On the other hand, for $j \not\in \A$, the approximation error term $T_j$ depends on how well $\act' \left[\sum_{\ell \not\in \A, \ell \neq j} \phi_\ell \wsp_\ell  + \bw_i + \xi_i\right]$ approximates $\act' \left[\sum_{\ell \in [\mDict]} \phi_\ell \wsp_\ell  + \bw_i + \xi_i\right]$. Since the $\wsp_\ell$'s (the weight's component along $\Dict_\ell$) in the second step are small compared to those in the first step, we can then get a small error term $T_j$. 
So the ratio between $T_j(j\not\in \A)$ over $T_j (j \in \A)$ improves after the second step, giving better features allowing accurate prediction. 


\mypara{Classifier Learning Stage.}
Given the learned features, we are then ready to show the remaining gradient steps can learn accurate classifiers. Intuitively, with small hyperparameter values ($\eta^{(t)} = \frac{\kA^2}{T \nw^{1/3}},  \lambdaa^{(t)}  = \lambdaw^{(t)} \le \frac{\kA^3}{\sigmax \nw^{1/3}}, \sigmansi^{(t)} = 0$ for $2  < t \le T = \nw^{4/3}$), the first layer's weights do not change too much and thus the learning is similar to convex learning using the learned features. Formally, our proof uses the online convex optimization technique in~\cite{daniely2020learning}.

\subsection{Lower Bounds}
The lower bounds are based on the following observation: our problem setup is general enough to include learning sparse parity functions. Consider an odd $\kA$, and let $\SP = \{i \in [\kA]: i \textrm{~is odd}\}$.  
Then $y$ is given by $\Pi_\A(z) := \prod_{j \in \A} z_j$ for $z_j = 2\hro_j - 1$, i.e., the parity function on $z_j(j\in \A)$. 
Then known results for learning parity functions can be applied to prove our lower bounds. 

\mypara{Lower Bound for Fixed Features.}
We show that $\Dfamily_\Xi$ contains learning problems that consist of a mixture of two distributions with weights $\pn$ and $1-\pn$ respectively, where in the first distribution $\mathcal{D}_\A^{(1)}$, $\xo$ is given by the uniform distribution over $\hro$ and the label $y$ is given by the parity function on $\A$. On such $\mathcal{D}_\A^{(1)}$, \cite{daniely2020learning} shows that exponentially large models over fixed features is needed to get nontrivial loss. 
Intuitively, there are exponentially many labeling functions $\Pi_\A$ that are uncorrelated (i.e., ``orthogonal'' to each other): $\E[\Pi_{\A_1} \Pi_{\A_2}] = 0$ for any $\A_1$ and $\A_2$. Note that the best approximation of $\Pi_\A$ by a fixed set of features $\Psi_i$'s is its projection on the linear span of the features. Then with polynomial-size models, there always exists some $\Pi_\A$ far from the linear span. 

\textit{Remark.} 
It is instructive to compare to network learning, which finds the effective weights $\sum_{j\in \A}\Dict_j$ among the exponentially many candidates corresponding to different $\A$'s. 
This can be done efficiently by exploiting the data since the gradient is roughly proportional to $\E \left\{  y \x\right \} = \sum_{j\in \A}\Dict_j$. The network then learns \emph{data-dependent} features on which polynomial size linear models can achieve small loss. 

\mypara{Lower Bound for Learning without Input Structure.}
Clearly, $\Dfamily_{\Xi_0}$ contains the distributions $\mathcal{D}_\A^{(1)}$ described above. The lower bound then follows from classic SQ learning results~\citep{blum1994weakly}.

\textit{Remark.} The SQ lower bound analysis does not apply to $\Dfamily_\Xi$, because in $\Dfamily_\Xi$ the input distribution is related the labeling function. This allows networks to learn with polynomial time/sample. While both the labeling function and the input distribution affect the learning, few existing studies explicitly point out the importance of the input structure. We thus emphasize the input structure is crucial for networks to learn effective features and achieve superior performance.

\section{Experiments} \label{sec:experiment}

Our experiments mainly focus on feature learning  and the effect of the input structure. 
We first perform simulations on our learning problems to (1) verify our main theorems on the benefit of feature learning and the effect of input structure; 
(2) verify our analysis of feature learning in networks. We then check if our insights carry over to real data: (3) whether similar feature learning is presented in real network/data; (4) whether damaging the input structure lowers the performance. The results are consistent with our analysis and provide positive support for the theory. Below we present part of the results and include the complete experimental details and results in Appendix~\ref{app:complete_experiment}.

\begin{wrapfigure}{R}{0.45\textwidth}
    \begin{minipage}{0.45\textwidth}
    \vspace{-2em}
    \centering
    \includegraphics[width=\textwidth]{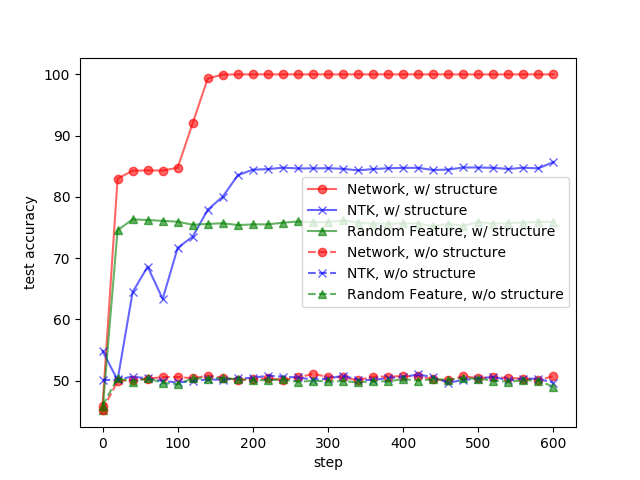}
    \caption{Test accuracy on simulated data with or without input structure.}
    \label{fig:curve}
    \vspace{-1em}
    \end{minipage}
\end{wrapfigure}

\mypara{Simulation: Verification of the Main Results.} We generate data according to our problem setup, with $d = 500, \mDict = 100, \kA = 5, \pn=1/2$, a randomly sampled $\A$, and labels given by the parity function. We then train a two-layer network with $\nw = 300$ following our learning process, and for comparison, we also use two fixed feature methods (the NTK and random feature methods based on the same network). Finally, we also use these three methods on the data distribution with the input structure removed (i.e., $\Dfamily_{\Xi_0}$ in Theorem~\ref{thm:lower_sq}). 

Figure~\ref{fig:curve} shows that the results are consistent with our results. Network learning gets high test accuracy while the two fixed feature methods get significantly lower accuracy. Furthermore, when the input structure is removed, all three methods get test accuracy similar to random guessing.

\mypara{Simulation: Feature Learning in Networks.}
We compute the cosine similarities between the weights $\w_i$'s and visualize them by Multidimensional Scaling. (Recall that our analysis is on the \emph{directions} of the weights without considering their \emph{scaling}, and thus it is important to choose cosine similarity rather than say the typical Euclidean distance.)  
Figure~\ref{fig:parity} shows that the results are as predicted by our analysis. After the first gradient step, some weights begin to cluster around the ground-truth $\sum_{j\in \A} \Dict_j$ (or $-\sum_{j\in \A} \Dict_j$ due to the $\aw_i$ in the gradient update which can be positive or negative). After the second step, the weights get improved and well-aligned with the ground-truth (with cosine similarities $>0.99$). Furthermore, if a classifier is trained on the features after the first step, the test accuracy is about $52\%$; if the same is done after the second step, the test accuracy is about $100\%$. This demonstrates while some effective features emerge in the first step, they need to be improved in the second step to get accurate prediction.  



\begin{figure}[!th]
    \centering
    \newcommand{\imagewidth}{0.3\linewidth}
    \includegraphics[width=\imagewidth]{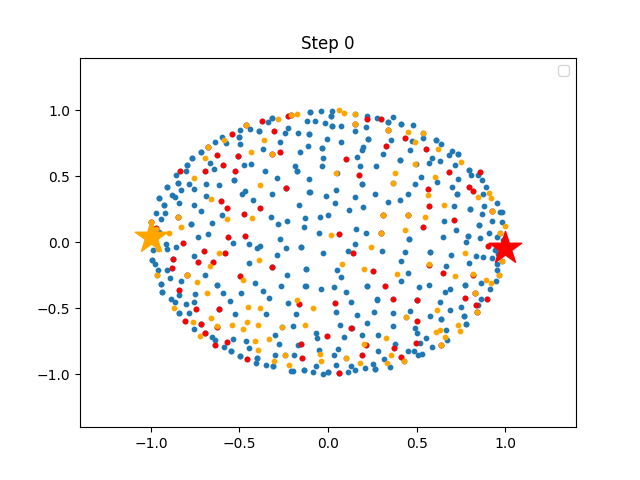}
    \includegraphics[width=\imagewidth]{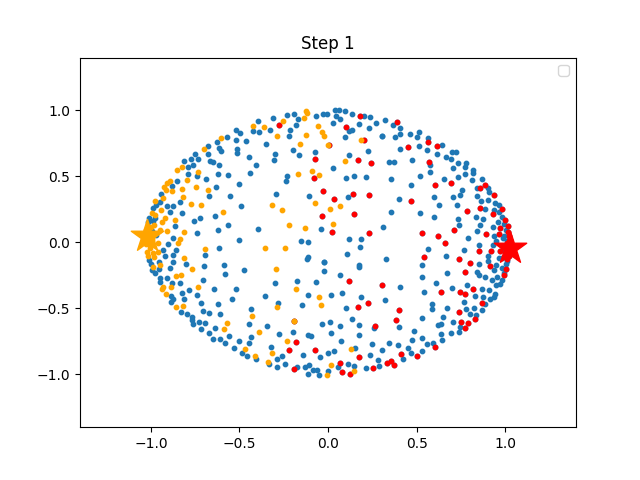}
    \includegraphics[width=\imagewidth]{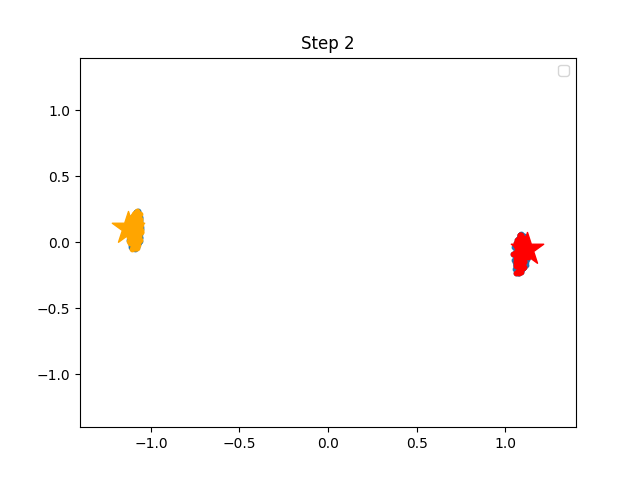}
    \caption{Visualization of the weights $\w_i$'s after initialization/one gradient step/two steps in network learning on the synthetic data. The red star denotes the ground-truth $\sum_{j \in \A} \Dict_j$; the orange star is $-\sum_{j \in \A} \Dict_j$. The red/orange dots are the weights closest to the red/orange star, respectively.}
    \label{fig:parity}
\end{figure}

\begin{figure}[!th]
    \centering
    \newcommand{\imagewidth}{0.3\linewidth}
    \includegraphics[width=\imagewidth]{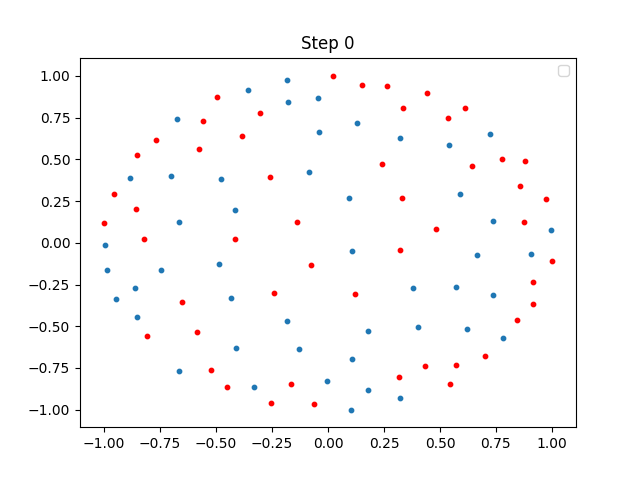}
    \includegraphics[width=\imagewidth]{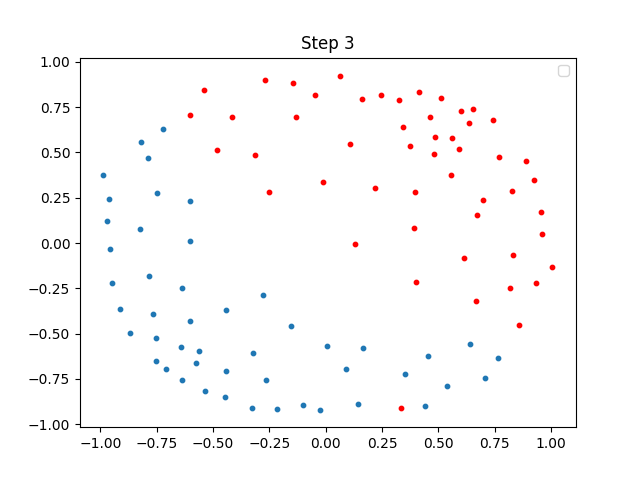}
    \includegraphics[width=\imagewidth]{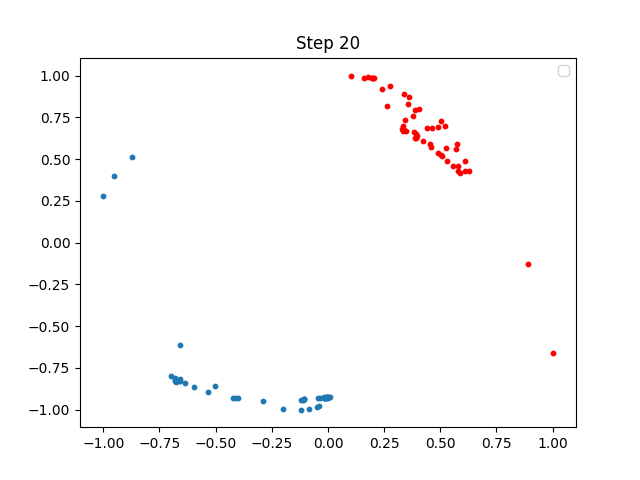}
    \caption{Visualization of the neurons' weights in a two-layer network trained on the subset of MNIST data with label 0/1. The weights gradually form two clusters.
    }
    \label{fig:mnist}
\end{figure}

\begin{figure}[!th]
    \vspace{-0.5em}
    \centering
    \newcommand{\imagewidth}{0.3\linewidth}
    \subfloat[]{\includegraphics[width=\imagewidth]{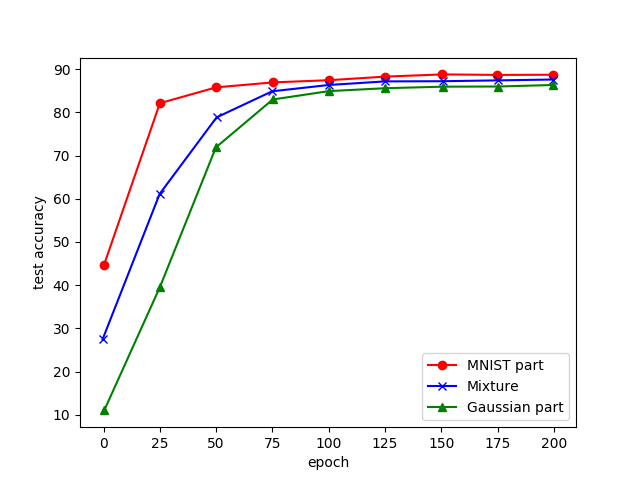}}
    \subfloat[]{\includegraphics[width=\imagewidth]{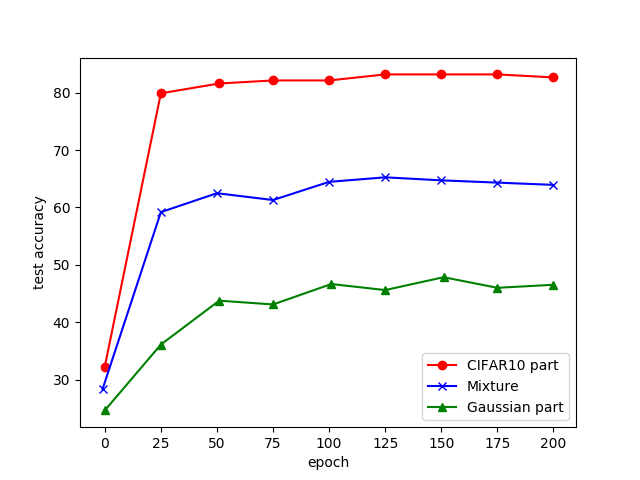}}
    \subfloat[]{\includegraphics[width=\imagewidth]{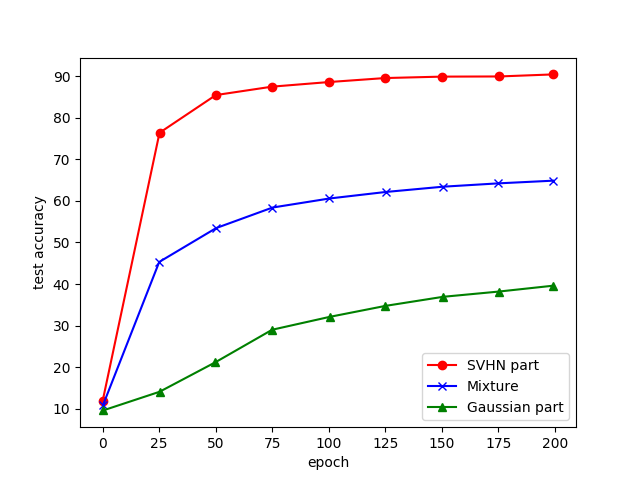}}
    \caption{Test accuracy at different steps for an equal mixture of Gaussian inputs with data: (a) MNIST, (b) CIFAR10, (c) SVHN.}
    \label{fig:mixture}
\end{figure}


\mypara{Real Data: Feature Learning in Networks.}
We perform experiments on MNIST~\citep{lecun1998gradient,lecun-mnisthandwrittendigit-2010}, CIFAR10~\citep{Krizhevsky-cifar-2012}, and SVHN~\citep{Netzer2011}.
On MNIST, we train a two-layer network with $\nw=50$ on the subset with labels 0/1 and visualize the neurons' weights as in the simulation. Figure~\ref{fig:mnist} shows a similar feature learning phenomenon: effective features emerge after a few steps and then get improved to form two clusters. Similar results are observed on other datasets. These suggest the insights obtained in our analysis are also applicable to the real data.

\mypara{Real Data: The Effect of Input Structure.}
Since we cannot directly manipulate the input distribution of real data, we perform controlled experiments by injecting different inputs. For labeled dataset $\mathcal{L}$ and injected input $\mathcal{U}$, we first train a teacher network fitting $\mathcal{L}$, then use the teacher network to give labels on a mixture of inputs from $\mathcal{L}$ and $\mathcal{U}$, and finally train a student network on this new dataset $\mathcal{M}$ consisting of the mixed inputs and the teacher network's labels. Checking the student' performance on different parts of $\mathcal{M}$ and comparing to those by directly training the student on the original data $\mathcal{L}$ can reveal the impact of changing the input structure. We use MNIST, CIFAR10, or SVHN as $\mathcal{L}$, and use Gaussian or images in Tiny ImageNet~\citep{le2015tiny} as $\mathcal{U}$. The networks for MNIST are two-layer with $\nw = 9$, and those for CIFAR10/SVHN are ResNet-18 convolutional neural networks~\citep{he2015deep}. 

Figure~\ref{fig:mixture} shows the results on an equal mixture of data and Gaussian. It presents the test accuracy of the student on the original data part, the Gaussian part, and the whole mixture. For example, on CIFAR10, the network learns well over the CIFAR10 part (with accuracy similar to directly training on the original data) but learns slower with worse accuracy on the Gaussian part. Furthermore, the accuracy on the whole mixture is lower than that of training on the original CIFAR10. This shows that the input structure indeed has a significant impact on the learning. 
While MNIST+Gaussian shows a less significant trend (possibly because the tasks are simpler), the other datasets show similar significant trends as CIFAR10+Gaussian (the results using Tiny ImageNet are in the appendix).

\section{Ethics Statement} \label{sec:ethics}
Our paper is mostly theoretical in nature and thus we foresee no immediate negative ethical impact. We are of the opinion that our theoretical framework may lead to better understanding and inspire development of improved network learning methods, which may have a positive impact in practice. In addition to the theoretical machine learning community, we perceive that our conceptual message that the input structure is crucial for the network learning's performance can be beneficial to engineering-inclined machine learning researchers.

\section{Reproducibility Statement} \label{sec:reproducibility}
For theoretical results in the Section~\ref{sec:result}, a complete proof is provided in the Appendix~\ref{app:network_proof}-\ref{app:proof_sq}. The theoretical results and complete proofs for a setting more general than that in the main text are provided in the Appendix~\ref{app:general_analysis}. For experiments in the Section~\ref{sec:experiment}, complete details and experimental results are provided in the Appendix Section~\ref{app:complete_experiment}. The source code with explanations and comments is provided in the supplementary material. 

\section{Acknowledgement} \label{sec:acknowledgement}
The work is partially supported by Air Force Grant FA9550-18-1-0166, the National Science Foundation (NSF) Grants 2008559-IIS and CCF-2046710. 



\bibliography{ref}
\bibliographystyle{iclr2022_conference}

\newpage
\appendix

\begin{center} 
    \textbf{\LARGE Appendix}
\end{center}


Section~\ref{app:complete_related} presents more technical discussion on related work. 
Section~\ref{app:network_proof}-\ref{app:proof_sq} provides the complete proofs for our results in the main text. 
Section~\ref{app:complete_experiment} provides the complete details and experimental results for our experiments. 

Finally, Section~\ref{app:general_analysis} provides the theoretical results and complete proofs for a setting more general than that in the main text, allowing incoherent dictionaries, unbalanced classes, and Gaussian noise in the data.  

\tableofcontents
\newpage

\section{More Technical Discussion on Related Work} \label{app:complete_related}

\mypara{Advantage of Neural Networks over Linear Models on Fixed Features.} 
A recent line of work has turned to show learning settings where network learning provably has advantage over linear models on fixed features; see the nice summary in~\cite{malach2021quantifying}. Here we highlight the results and focuses of the existing related studies and discuss the differences from ours.

\cite{yehudai2020power} shows that the random feature method fails to learn even a single ReLU neuron on Gaussian inputs unless its size is exponentially large in dimension. This points out the limitation of the random feature method (belonging to the fixed feature approach) but does not consider feature learning in networks.

Some studies show that a single ReLU neuron can be learnt by gradient descent~\citep{yehudai2020learning,diakonikolas2020approximation,frei2020agnostic}. The analysis typically involves feature learning. However, their focus is different: they do not show the advantage over fixed feature methods and do not consider the effect of the input structures.

\cite{zhou2021local} shows that in a special teacher-student setting, the student network will do exact local convergence in a surprising way that all student neurons will converge to one of the teacher neurons. The work does not consider the effect of the input structure nor the advantage over fixed features.

\cite{Dou_2020} explains the advantage of network learning by constructing adaptive Reproducing Kernel Hilbert Space (RKHS) indexed by the training process of the neural network, and shows that adaptive RKHS benefits from a smaller function space containing the residue comparing to RKHS. The work shows the statistical advantage of networks over data-independent kernels, but does not consider the optimization for learning the network. 

\cite{ghorbani2020neural} considers data generated from a hidden vector with two subsets of variables, each uniformly distributed in a high-dimensional sphere (with a different radius), while the label is determined by only the first subset of variables. It shows the existence of good neural networks that can overcome the curse of dimensionality by representing the best low-dimensional hidden structure. However, it studies the approximation power of neural networks rather than the learning, i.e., it does not show how to learn the good network.

\cite{fang2019parameterized} argues that in the infinite width limit, a two-layer neural network will learn a nearly optimal feature representation in the distribution sense, thanks to the convexity of the limit problem.
It is unclear how this result helps to understand the feature learning procedure for practical networks, which is usually a non-convex process.  

\cite{chen2020towards} considers a fixed, randomly initialized neural network as a representation function fed into another trainable network which is the quadratic Taylor model of a wide two-layer network. It shows that learning over the random representation can achieve improved sample complexities compared to learning over the raw data. However, the representation considered is not learned, which is different from our focus on feature learning. 

\cite{allen2020backward} considers Gaussian inputs with labels given by a multiple-layer network with quadratic activations and skip connections (with the assumption of information gap on the weights), and studies training a deep network with quadratic activation. It shows that the trained network can learn proper representations and obtain small errors while no polynomial fixed feature methods can. On the other hand, it does not focus on the influence of input structure on feature learning: note that its input distribution contains no information about the “ground-truth” features in the target network. It also points out that the learned features get improved during training: higher-level layers will help lower-level layers to improve by backpropagating correction signals. Our analysis also shows feature improvement which however is by signals from the input distribution.

\cite{allen2019canb} considers PAC learning with labels given by a depth-2 ResNet, and studies training an overparameterized depth-2 ResNet (using uniform inputs over Boolean hypercube as an example). It shows the trained network can obtain small errors while no polynomial kernel methods can obtain as good errors. Similar to~\cite{allen2020backward}, it does not focus on the influence of input structure on feature learning or the advantage of networks.

\cite{allen2020towards} studies how ensemble of deep learning models can improve test accuracy and how the ensemble can be distilled into a single model. It develops a theory which assumes the data has multi-view structure and shows that the ensemble of independently trained networks can provably improve test accuracy and the ensemble can also be provably distilled into a single model. The analysis also relies on showing that the data structure can help the ensemble and the distillation. On the other hand, their focus is on ensembles and is quite different from ours: the analysis is on showing the multi-view input structure allows the ensembles of networks to improve over single ones and ensembles of fixed feature mappings do not have improvement. While our focus is on supervisedly learning one single network that outperforms the fixed feature approaches.

\cite{daniely2020learning} considers the task of learning sparse parities with two-layer networks, and the analysis suggests that the ability to learn the label-correlated features also seems to be critical towards the success of neural networks, although the authors did not explore much in this direction. \cite{malach2021quantifying} also considers similar learning problems but with specifically designed models for the problems. 
The learning problems considered in~\cite{daniely2020learning,malach2021quantifying} have input distributions that leak information about the target labeling function, which is similar to our setting, and their analysis also shows that the first gradient descent can learn a set of good features and later steps can learn an accurate classifier on top. 
Our work is inspired by their studies, while there are some important differences. First, their focuses are different from ours. \cite{daniely2020learning} focuses on showing neural networks can learn targets (i.e., $k$-parity functions) that are inherently non-linear. 
Our analysis generalizes to more general distributions, including practically motivated ones. 
\cite{malach2021quantifying} focuses on strong separations between learning with gradient descent on differentiable models (including typical neural networks) and learning using the corresponding tangent kernels. The analysis is on specific differentiable models, while our work is on two-layer neural networks similar to practical ones.
Second, our analysis relies on the feature improvement in the second gradient step. This is not an artifact of the analysis but comes from our problem setup. While in \cite{daniely2020learning} the data distribution allows some neurons to be sufficiently good after the first gradient step and needs no feature improvement, our setup is more general where the data distribution may not have a similar strong benign effect and thus needs feature improvement in the second gradient step.   

Most related to our work is~\cite{daniely2020learning}. Therefore, we provide a detailed discussion to highlight the connections and differences.
 
\begin{enumerate}
\item Our problem setting is \emph{more general} than that in \cite{daniely2020learning}. To see this, let our dictionary be the identity matrix, the set $P$ to be the odd numbers (i.e., the labeling function is a sparse parity). Furthermore, let the distribution of the hidden representation be an equal mixture of the following two:
\begin{enumerate}
    \item $\mathcal{D}_1$: Uniform distribution over the hypercube.
    \item $\mathcal{D}_2$: Irrelevant patterns $\tilde\phi_j (j \not \in A)$ have appearance probability $p_0 = 1/2$. And the distribution of relevant patterns $\tilde\phi_j(j\in A)$ is: all 0’s with probability 1/2, and all 1’s with probability 1/2. 
\end{enumerate}
Then our problem setting reduces to their setting (up to scaling/translation of $\tilde\phi_j$'s). On the other hand, in general our setting allows for more choices for the labeling, the dictionary, and the distributions over $\tilde\phi$.

\item Upper bound: Because of the more general setting, our upper bound proof requires \emph{technical novelty}. Recall that in their work, the input distribution is essentially a mixture of $\mathcal{D}_1$ and $\mathcal{D}_2$ above. In $\mathcal{D}_2$, the relevant patterns $\tilde\phi_j (j\in A)$ have the specific structure of all 0's or all 1's with probability 1/2. This allows to show that neurons with weight $w$ satisfying $\sum_{j \in A} w_j = 0$ will have good gradients: small components from irrelevant patterns (their Lemma 7) and large components from relevant patterns (their Lemma 8). However, in our setting, the relevant patterns do not have this specific structure, and thus their proof technique is not applicable (or can be applied only when we have an exponentially large number of hidden neurons so that some hit the good positions at random initialization). What we showed is that the gradient has some correlation with the good feature direction. So after the first gradient step, the neuron weights are not good yet but are in a better position for further improvement (in particular, their setting corresponds to $p_0=1/2$ which means large noise in the weights after the first step; see discussion after our Lemma \ref{lem:gradient-feature} in Section~\ref{sec:proof_sketch}). Then the latter gradient steps are able to improve the weights to better ``signal-to-noise-ratio''. In summary, our proof does not rely on their specific input structure or an exponentially large number of hidden neurons for hitting some good positions. The key is that the good feature will emerge with the help of the input structure, and once in a better position, the neurons’ weights can be improved to the desired quality. 
\item Lower bound: On the other hand, our lower bound is proved by a reduction to the lower bound results in \cite{daniely2020learning}. They have shown that $\mathcal{D}_1$ above can lead to large errors for fixed feature models of polynomial size. Our proof is essentially constructing a mixture of $\mathcal{D}_1$ and $\mathcal{D}_2$ with mixture weights $p_0$ and ($1-p_0)$, and applying their lower bound for $\mathcal{D}_1$. See our proof in Appendix \ref{app:proof_fix}.
\item Conceptually, our work belongs to the same line of research as~\cite{daniely2020learning}, to analyze how feature learning leads to the superior performance of networks. While their analysis also relies on feature learning from good gradients induced by input structure, their focus is more on separating network learning and fixed feature models and has not explicitly explored the impact of input structures (while we agree that such an explicit study will not be difficult in their setting). More importantly, their input distribution is specific and atypical in practice, which allows a specific type of feature learning (as explained in the above discussion on upper bounds). Our work thus considers a more general setting that is motivated by practical problems. Our results then bring theoretical insights closer for explaining the feature learning in practice and provide some positive evidence for the importance of analysis under proper models of the input distributions. 
\end{enumerate}

\mypara{Sparse Coding and Subspace Data Models.}
To analyze neural networks' performance, various data models have been considered.
A practical way to model the underlying structure of data is by assuming that a set of hidden variables exists and the input data is a high dimensional projection of the hidden vector (possibly with noise).
Along this line, the classic sparse coding model has been used in existing works for analyzing networks. 
\cite{koehler2018the} considers such a data distribution where the label is given by a linear function on the hidden sparse vector, but studies the approximation power of networks and classic polynomial methods rather than the learning.
\cite{allen2020feature} considers similar data distributions,
but studies the performance of networks under adversarial perturbations.
Another type of related data models assumes that the label is determined by a subset of hidden variables. 
\cite{ghorbani2020neural} considers a hidden vector with two subsets of variables, each uniformly distributed in a high-dimensional sphere (with a different radius), while the label is determined by only the first subset of variables. However, \cite{ghorbani2020neural} studies the approximation power of neural networks rather than the learning.
Compared to these studies, our work assumes the input is given by a dictionary multiplied with a hidden vector (not necessarily sparse) while the label is determined by a subset of the hidden vector, as motivated by pattern recognition applications in practice. Furthermore, we focus on the learning ability of networks instead of approximation.  





\newpage


\section{Complete Proofs for Provable Guarantees of Neural Networks} \label{app:network_proof}

We first make a few remarks about the proof.

\textit{Remark.} The analysis can be carried out for more gradient steps following similar intuition, while we analyze two steps for simplicity.
 
\textit{Remark.} Readers may notice that the network can be overparameterized.  
With sufficient overparameterization and proper initialization and step sizes, network learning becomes approximately NTK.
However, here our learning scheme allows going beyond this kernel regime: we use aggressive gradient updates $\lambdaw^{(t)} = 1/(2\eta^{(t)})$ in the first two steps, completely forgetting the old weights to learn effective features. Using proper initialization and aggressive updates early to escape the kernel regime has been studied in existing work (e.g., \cite{woodworth2020kernel, li2020explaining}).
Our result thus adds another concrete example. 

\paragraph{Notations.}
For a vector $v$ and an index set $I$, let $v_I$ denote the vector containing the entries of $v$ indexed by $I$, and $v_{-I}$ denote the vector containing the entries of $v$ with indices outside $I$.
 
By initialization, $\w_i^{(0)}$ for $i \in [\nw]$ are i.i.d.\ copies of the same random variable $\w^{(0)} \sim \mathcal{N}(0, \sigmaw^2 I_{d \times d})$; similar for $\aw^{(0)}$ and $\bw^{(0)}$.
Let $\wsp_\ell := \langle \w^{(0)}, \Dict_\ell \rangle$, then $\langle \w^{(0)}, \x \rangle = \langle \hr, \wsp \rangle  $. Similarly, define $\wsp^{(t)}_{i,\ell} := \langle \w^{(t)}_i, \Dict_\ell \rangle$. Let $\sigmahr^2 := \pn (1-\pn)/\sigmax^2$ denote the variance of $\hr_\ell$ for $\ell \not \in \A$. 

We also define the following sets to denote typical initialization. For a fixed $\delta \in (0,1)$, define
\begin{align}
\mathcal{G}_\w(\delta)  := \Bigg\{\w  \in \mathbb{R}^d: \wsp_\ell = \langle \w, \Dict_\ell \rangle, & \frac{\sigmaw^2 (\mDict - \kA)}{2} \le \sum_{\ell \not \in \A} \wsp^2_\ell \le \frac{3\sigmaw^2 (\mDict - \kA)}{2}, \nonumber
\\
&  \max_\ell |\wsp_\ell| \le \sigmaw \sqrt{2\log(\mDict\nw/\delta)} \Bigg\}, 
\end{align}
\begin{align}
\mathcal{G}_\aw(\delta) := \{\aw \in \mathbb{R}: |\aw| \le \sigma_\aw \sqrt{2\log(\nw/\delta)} \}. 
\end{align}
\begin{align}
\mathcal{G}_\bw(\delta) := \{\bw \in \mathbb{R}: |\bw| \le \sigma_\bw \sqrt{2\log(\nw/\delta)} \}. 
\end{align}

\subsection{Existence of A Good Network}

we first show that there exists a network that can fit the data distribution.

\begin{lemma} \label{lem:peak}
For some $s, a, b \in \mathbb{R}$ with $a, b \ge 0$, define a function $\peak_{s, a, b}: \mathbb{R} \rightarrow \mathbb{R}$ as
\begin{align}
    \peak_{s, a, b}(z) = a \relu(z - s + b) - 2 a \relu(z - s) + a \relu(z - s - b).
\end{align}
where $\relu(z) = \max\{z, 0\} $ is the ReLU activation function.
Then 
\begin{align}
    \peak_{s, a, b}(z) =
    \begin{cases}
    0 & \textrm{~when~} z \le s - b, 
    \\
    a (z-s) + ab & \textrm{~when~}  s - b \le z \le s, 
    \\
    - a (z-s) + ab & \textrm{~when~}  s \le z \le s + b, 
    \\
    0 & \textrm{~when~} s + b \le z. 
    \end{cases}
\end{align}
That is, $\peak_{s, a, b}(z)$ linearly interpolates between $(s-b,0), (s, ab), (s+b, 0)$ when $z \in [s-b, s+b]$, and is 0 elsewhere. 
\end{lemma}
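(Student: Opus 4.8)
\textbf{Proof proposal for Lemma~\ref{lem:peak}.}

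The plan is to prove the piecewise formula by directly evaluating the three-term combination of ReLUs on each of the four regions, and then observe that the two interior pieces are the claimed linear interpolations. Since $a, b \ge 0$, the three breakpoints of the ReLUs, namely $z = s-b$, $z = s$, and $z = s+b$, are ordered as $s-b \le s \le s+b$, so these four regions partition $\mathbb{R}$ (with the degenerate case $b=0$ making $\peak_{s,a,b}\equiv 0$, which is consistent with all four formulas).

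First I would record the elementary identity $\relu(z-c) = z-c$ when $z \ge c$ and $\relu(z-c) = 0$ when $z \le c$, and then plug in. On $z \le s-b$: all three arguments $z-s+b$, $z-s$, $z-s-b$ are $\le 0$, so $\peak_{s,a,b}(z) = a\cdot 0 - 2a\cdot 0 + a\cdot 0 = 0$. On $s-b \le z \le s$: only the first ReLU is active, giving $\peak_{s,a,b}(z) = a(z-s+b) = a(z-s) + ab$. On $s \le z \le s+b$: the first two ReLUs are active, giving $a(z-s+b) - 2a(z-s) = -a(z-s) + ab$. On $z \ge s+b$: all three are active, and the coefficients $1 - 2 + 1 = 0$ cancel the $z$-dependence while the constants $b - 0 + (-b) = 0$ cancel too, so $\peak_{s,a,b}(z) = a\big[(z-s+b) - 2(z-s) + (z-s-b)\big] = 0$. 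This establishes the displayed piecewise formula.

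Finally I would check the interpolation claim: from the formula, at $z = s-b$ the value is $a(s-b-s)+ab = 0$; at $z = s$ both interior branches give $ab$; at $z = s+b$ the value is $-a(s+b-s)+ab = 0$; and each interior branch is affine in $z$. Hence on $[s-b,s+b]$ the function is the piecewise-linear curve through $(s-b,0)$, $(s,ab)$, $(s+b,0)$, and it is $0$ outside, as claimed. The argument is entirely a case check, so there is no real obstacle; the only point requiring a moment's care is confirming the ordering $s-b \le s \le s+b$ (using $b \ge 0$) so that the four cases are exhaustive and non-overlapping except at endpoints, where the formulas agree.
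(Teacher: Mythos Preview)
Your proposal is correct and matches the paper's approach exactly: the paper's proof simply states that the identity ``can be simply verified for the four cases of the value of $z$,'' which is precisely the case-by-case ReLU activation check you carried out.
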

\begin{proof}[Proof of Lemma~\ref{lem:peak}]
This can be simply verified for the four cases of the value of $z$.
\end{proof}

\begin{lemma}[Restatement of Lemma~\ref{lem:approx}]
For any $\Ddist \in \Dfamily_{\Xi}$, there exists a network $\g^*(\x) = \sum_{i=1}^{n} \aw_i^* \act(\langle \w^*_i, \x\rangle + \bw_i^*) $ with $y=\g^*(x)$ for any $(\x, y) \sim \Ddist$. Furthermore, the number of neurons $n = 3(\kA+1)$, $|\aw_i^*| \le 32 \kA, 1/(32\kA) \le |\bw_i^*| \le 1/2$, $\w^*_i = \sigmax \sum_{j \in \A} \Dict_j/(4\kA)$, and $|\langle \w^*_i, \x\rangle + \bw_i^*| \le 1$ for any $i \in [n]$ and $(\x, y) \sim \Ddist$.
\end{lemma}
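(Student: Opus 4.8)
The plan is to explicitly construct the network $\g^*$ as a sum of "bump" gadgets of the form given in Lemma~\ref{lem:peak}, one gadget for each value in $[\kA]$, composed with the single scalar statistic $s(\x) := \langle \sum_{j\in\A}\Dict_j, \x\rangle$. First I would observe that because $\Dict$ is orthonormal and $\x = \Dict\hr$, we have $\langle \Dict_j, \x\rangle = \hr_j = (\hro_j - \E[\hro_j])/\sigmax$, so $s(\x) = \sum_{j\in\A}\hr_j = \big(\sum_{j\in\A}\hro_j - \E[\sum_{j\in\A}\hro_j]\big)/\sigmax$. Thus $s(\x)$ takes only the $\kA+1$ discrete values $v_m := (m - \mu)/\sigmax$ for $m = 0,1,\dots,\kA$, where $\mu := \E[\sum_{j\in\A}\hro_j] = \sum_{j\in A}\E[\hro_i]$, and $y$ is a fixed $\pm1$ function of which $v_m$ is realized (namely $+1$ iff $m\in\SP$). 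Note the gaps between consecutive $v_m$ values equal $1/\sigmax$, and $\sigmax \le 1$ (it is a standard deviation of a sum of Bernoullis over $\A$... actually I should be careful: $\sigmax$ is the variance normalization of the \emph{full} $\xo$, so I would instead just record the bound $\sigmax = \Theta(\sqrt{\kA})$-ish that the paper's assumptions guarantee, or more safely derive the needed inequalities directly from the stated constants — the key facts I need are that the $v_m$ are separated by $1/\sigmax$ and that $|v_m| \le \kA/\sigmax$, which is immediate).

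Next I would define, for each $m\in\{0,\dots,\kA\}$, a scalar function $\delta_{v_m, a_m, b_m}$ via Lemma~\ref{lem:peak} with center $s = v_m$, half-width $b_m$ chosen to be half the gap to the nearest neighbor (so $b_m \le 1/(2\sigmax)$, ensuring the bumps have disjoint interiors and bump $m$ evaluates to $a_m b_m$ at $v_m$ and $0$ at all $v_{m'}$, $m'\ne m$), and height parameter $a_m$ chosen so that $a_m b_m = $ the desired label contribution. Concretely, set $\g^*(\x) = \sum_{m=0}^{\kA} \delta_{v_m,a_m,b_m}(s(\x))$ with $a_m b_m = +1$ if $m\in\SP$ and $=-1$ otherwise; then $\g^*(\x) = y$ for every $(\x,y)\sim\Ddist$. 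Now I expand each $\delta_{v_m,a_m,b_m}$ using its definition as three $\relu$ units: $\delta_{s,a,b}(z) = a\relu(z-s+b) - 2a\relu(z-s) + a\relu(z-s-b)$, giving $3(\kA+1)$ ReLU neurons total, which matches $n = 3(\kA+1)$. The input to each ReLU is $z - c = \langle \sum_{j\in\A}\Dict_j, \x\rangle - c$ for an appropriate shift $c$, which I rewrite as $\langle \w^*_i, \x\rangle + \bw^*_i$ by factoring: since I want $\w^*_i = \sigmax\sum_{j\in\A}\Dict_j/(4\kA)$ (a fixed direction, scaled down by $4\kA/\sigmax$), I absorb the scaling $4\kA/\sigmax$ into the outer coefficient $\aw^*_i$ and into $\bw^*_i$. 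That is, $a\relu(z - c) = a\cdot\frac{4\kA}{\sigmax}\cdot\relu\big(\frac{\sigmax}{4\kA}z - \frac{\sigmax}{4\kA}c\big) = \aw^*_i\relu(\langle\w^*_i,\x\rangle + \bw^*_i)$ with $\aw^*_i = \frac{4\kA a}{\sigmax}$, $\bw^*_i = -\frac{\sigmax c}{4\kA}$. I then need to replace $\relu$ by the truncated $\act = \min(1,\max(z,0))$: this is valid provided every argument $\langle\w^*_i,\x\rangle + \bw^*_i$ stays in $[0,1]$ (or more simply has absolute value $\le 1$ so that on the relevant range $\act$ and $\relu$ agree up to the constant truncation, which I choose the $b_m$ small enough to avoid) — this is exactly the claim $|\langle\w^*_i,\x\rangle + \bw^*_i|\le 1$, which I verify from the bound $|v_m|\le \kA/\sigmax$ and the definition of $\w^*_i$ scaling it down by $4\kA/\sigmax$, giving $|\langle\w^*_i,\x\rangle|\le 1/4$, plus the bound on $|\bw^*_i|$.

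Finally I would verify the quantitative bounds on the parameters: $|\bw^*_i| = \frac{\sigmax |c|}{4\kA} \le \frac{\sigmax}{4\kA}\cdot\frac{2\kA}{\sigmax} = \frac12$ using $|c| \le |v_m| + b_m \le 2\kA/\sigmax$; the lower bound $|\bw^*_i|\ge 1/(32\kA)$ comes from the center bump at $v_m$ closest to $0$ having $|c| \ge$ something like $b_m \gtrsim 1/\sigmax$ (here I need to be a little careful about the bump centered essentially at the origin and may need to shift all bumps by a fixed small amount, or argue the minimal $|c|$ is at least a gap-sized quantity; a clean fix is to note $\sigmax \le \Theta(\sqrt\kA)$ so $1/(32\kA)$ is comfortably below $b_m/(4\kA)\cdot\sigmax$-type quantities — I'll pin the exact constant in the writeup); and $|\aw^*_i| = \frac{4\kA|a|}{\sigmax} = \frac{4\kA}{\sigmax}\cdot\frac{|a_m b_m|}{b_m} = \frac{4\kA}{\sigmax b_m} \le \frac{4\kA}{\sigmax}\cdot 2\sigmax = 8\kA \le 32\kA$ using $b_m \ge \frac{1}{2\sigmax}$. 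The main obstacle I anticipate is purely bookkeeping: getting the constants $32\kA$, $1/(32\kA)$, $1/2$ to come out with exactly the stated values requires tracking the interaction between $\sigmax$ (whose size I must bound in terms of $\kA$ and $\pn$ using assumptions (A0)--(A2)), the gap $1/\sigmax$, and the choice of half-widths $b_m$ — in particular handling the degenerate case where two consecutive $v_m$'s force a very small $b_m$, and the bump whose center $c$ is near zero where the lower bound $|\bw^*_i|\ge 1/(32\kA)$ is tightest. None of this is conceptually hard, but it is the part where a naive construction could fail the stated inequalities and need a slight perturbation of the centers.
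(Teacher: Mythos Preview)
Your approach is essentially the same as the paper's: build $\g^*$ as a signed sum of the three-ReLU bumps from Lemma~\ref{lem:peak}, one per value $m\in\{0,\dots,\kA\}$, applied to the scalar $\sum_{j\in\A}\hro_j$, then rescale by $1/(4\kA)$ and replace $\relu$ by $\act$ once the arguments are confined to $[-1,1]$.

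Two places where the paper is cleaner than your writeup. First, your detour through $\sigmax$ is unnecessary and is the source of your uncertainty about its size: the paper sets $\w=\sigmax\sum_{j\in\A}\Dict_j$ so that $\langle\w,\x\rangle=\sum_{j\in\A}\hro_j-\mu$ is integer-valued with unit gaps, and then all bump half-widths are simply $b=1/2$ with $a\in\{2,4\}$; $\sigmax$ never appears again and no estimate on its magnitude is needed. Second, the lower bound $|\bw_i^*|\ge 1/(32\kA)$ --- the obstacle you correctly flag --- is handled not by perturbing a single construction but by building two candidate networks $\g_1^*,\g_2^*$ whose bias terms differ by exactly $1/4$ (one uses bumps centered at $p-\mu$, the other at $p-\mu+1/4$); since corresponding biases differ by $1/4$, at least one network has all $|\bw_i|\ge 1/8$, and after dividing by $4\kA$ this gives $|\bw_i^*|\ge 1/(32\kA)$. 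Also note the middle ReLU in each bump carries coefficient $-2a$, so your bound on $|\aw_i^*|$ should pick up an extra factor of $2$ (and another factor of $2$ from the shifted construction's $a=4$), which is why the stated bound is $32\kA$ rather than $8\kA$.
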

\begin{proof}[Proof of Lemma~\ref{lem:approx}]
Let $\w = \sigmax\sum_{j \in \A} \Dict_j$ and let $\mu = \sum_{j \in \A} \E[\hro_j]$. We have
\begin{align}
    \langle \w, \x \rangle = \sigmax\sum_{j \in \A} \langle \Dict_j, \Dict \hr \rangle = \sigmax\sum_{j \in \A} \hr_j = \sum_{j \in \A} \hro_j - \mu.
\end{align}  
Then by Lemma~\ref{lem:peak}, 
\begin{align}
    \g_1^*(x) & := \sum_{p \in \SP} \peak_{p-\mu, 2, 1/2}(\langle \w, \x \rangle) - \sum_{p \not\in \SP, 0 \le p \le \kA} \peak_{p - \mu, 2, 1/2}(\langle \w, \x \rangle) 
    \\
    & = \sum_{p \in \SP} \peak_{p, 2, 1/2}(\langle \w, \x \rangle + \mu) - \sum_{p \not\in \SP, 0 \le p \le \kA} \peak_{p, 2, 1/2}(\langle \w, \x \rangle + \mu) 
    \\
    & = \sum_{p \in \SP} \peak_{p, 2, 1/2}\left(\sum_{j \in \A} \hro_j \right) - \sum_{p \not\in \SP, 0 \le p \le \kA} \peak_{p, 2, 1/2}\left(\sum_{j \in \A} \hro_j \right)
    \\
    & = y
\end{align}
for any $(x, y) \sim \Ddist$. Similarly,
\begin{align}
    \g_2^*(x) & := \sum_{p \in \SP} \peak_{p-\mu+1/4, 4, 1/2}(\langle \w, \x \rangle) - \sum_{p \not\in \SP, 0 \le p \le \kA} \peak_{p - \mu + 1/4, 4, 1/2}(\langle \w, \x \rangle) 
    \\
    & = \sum_{p \in \SP} \peak_{p + 1/4, 4, 1/2}(\langle \w, \x \rangle + \mu) - \sum_{p \not\in \SP, 0 \le p \le \kA} \peak_{p + 1/4, 4, 1/2}(\langle \w, \x \rangle + \mu) 
    \\
    & = \sum_{p \in \SP} \peak_{p + 1/4, 4, 1/2}\left(\sum_{j \in \A} \hro_j \right) - \sum_{p \not\in \SP, 0 \le p \le \kA} \peak_{p + 1/4, 4, 1/2}\left(\sum_{j \in \A} \hro_j \right)
    \\
    & = y
\end{align}
for any $(x, y) \sim \Ddist$. Note that the bias terms in $\g_1^*$ and $\g_2^*$ have distance at least $1/4$, then at least one of them satisfies that all its bias terms have absolute value $\ge 1/8$. 
Pick that one and denote it as $\g(\x) = \sum_{i=1}^{n} \aw_i \relu(\langle \w_i, \x\rangle + \bw_i)$. 
By the positive homogeneity of $\relu$, we have
\begin{align}
    \g(\x) = \sum_{i=1}^{n} 4\kA\aw_i \relu(\langle \w_i, \x\rangle/(4\kA) + \bw_i/(4\kA)).
\end{align}
Since for any $(\x, y) \sim \Ddist$, $|\langle \w_i, \x\rangle/(4\kA) + \bw_i/(4\kA)| \le 1$, then 
\begin{align}
    \g(\x) = \sum_{i=1}^{n} 4\kA\aw_i \act(\langle \w_i, \x\rangle/(4\kA) + \bw_i/(4\kA))
\end{align}
where $\act$ is the truncated ReLU. Now we can set $\aw_i^* = 4\kA\aw_i, \w_i^* = \w_i/(4\kA), \bw_i^* = \bw_i/(4\kA)$, to get our final $\g^*$. 
\end{proof}

\subsection{Initialization}

We first show that with high probability, the initial weights are in typical positions.

\begin{lemma} \label{lem:probe}  
For any $\delta \in (0,1)$, with probability at least $1 - \delta- 2\exp\left( - \Theta( \mDict-\kA)\right)$ over $\w^{(0)}$, 
\[
\sigmaw^2 (\mDict - \kA)/2 \le \sum_{\ell \not \in \A} \wsp^2_\ell \le 3\sigmaw^2 (\mDict - \kA)/2,
\]
\[
\max_\ell |\wsp_\ell| \le \sigmaw \sqrt{2\log(\mDict/\delta)}.
\]
With probability at least $1 - \delta$ over $\bw^{(0)}$,  
\[
|\bw^{(0)}| \le \sigma_\bw \sqrt{2\log(1/\delta)}.
\]
With probability at least $1 - \delta$ over $\aw^{(0)}$,  
\[
|\aw^{(0)}| \le \sigma_\aw \sqrt{2\log(1/\delta)}.
\]
\end{lemma}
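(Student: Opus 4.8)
\textbf{Proof proposal for Lemma~\ref{lem:probe}.}

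The statement is a collection of standard Gaussian concentration bounds applied to the three independent initialization random variables, so the plan is to handle each of the four claimed inequalities separately and then (for the two $\w^{(0)}$ claims) combine via a union bound. First I would record the key fact: since $\Dict$ is orthonormal, the coordinates $\wsp_\ell = \langle \w^{(0)}, \Dict_\ell\rangle$ for $\w^{(0)}\sim\mathcal{N}(0,\sigmaw^2 I_d)$ are themselves i.i.d.\ $\mathcal{N}(0,\sigmaw^2)$ random variables. Thus $\sum_{\ell\notin\A}\wsp_\ell^2$ is $\sigmaw^2$ times a $\chi^2$ variable with $\mDict-\kA$ degrees of freedom.

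For the two-sided bound $\sigmaw^2(\mDict-\kA)/2 \le \sum_{\ell\notin\A}\wsp_\ell^2 \le 3\sigmaw^2(\mDict-\kA)/2$, I would invoke a standard $\chi^2$ tail inequality (e.g.\ Laurent–Massart, or a Bernstein-type bound for sub-exponential variables): for a $\chi^2_m$ variable $Z$, $\Pr[|Z - m| \ge m/2] \le 2\exp(-\Theta(m))$. Applying this with $m = \mDict-\kA$ yields exactly the claimed failure probability $2\exp(-\Theta(\mDict-\kA))$. For $\max_\ell|\wsp_\ell| \le \sigmaw\sqrt{2\log(\mDict/\delta)}$, each $|\wsp_\ell|$ is a half-normal with scale $\sigmaw$, so $\Pr[|\wsp_\ell| \ge \sigmaw\sqrt{2\log(\mDict/\delta)}] \le 2\exp(-\log(\mDict/\delta)) \le \delta/\mDict$ (absorbing the factor $2$ by adjusting constants, or more carefully using the Gaussian tail $\Pr[|g|\ge t]\le e^{-t^2/2}$ directly); a union bound over the at most $\mDict$ indices $\ell\notin\A$ gives failure probability at most $\delta$. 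Adding the two failure events gives the stated $1 - \delta - 2\exp(-\Theta(\mDict-\kA))$. The bounds for $\bw^{(0)}\sim\mathcal{N}(0,\sigma_\bw^2)$ and $\aw^{(0)}\sim\mathcal{N}(0,\sigma_\aw^2)$ are immediate single-variable Gaussian tail bounds: $\Pr[|\bw^{(0)}| \ge \sigma_\bw\sqrt{2\log(1/\delta)}] \le \exp(-\log(1/\delta)) = \delta$, and identically for $\aw^{(0)}$.

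There is essentially no main obstacle here — this is a routine concentration lemma — but the one point requiring mild care is getting the constants in the $\max_\ell$ bound and the $\chi^2$ bound to line up so that the combined failure probability is genuinely $\delta + 2\exp(-\Theta(\mDict-\kA))$ rather than something with an extra factor of $2$; I would address this by stating the Gaussian tail bound in the sharp form $\Pr[g \ge t] \le \tfrac12 e^{-t^2/2}$ so the half-normal bound $\Pr[|g|\ge t]\le e^{-t^2/2}$ has no leading constant, making the union bound over $\le\mDict$ terms clean. The $\Theta(\cdot)$ notation in the statement absorbs the absolute constant from the $\chi^2$ deviation bound, so no further bookkeeping is needed there.
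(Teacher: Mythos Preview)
Your proposal is correct and follows essentially the same approach as the paper's proof: both use that the orthonormality of $\Dict$ makes the $\wsp_\ell$ i.i.d.\ $\mathcal{N}(0,\sigmaw^2)$, then apply a $\chi^2$-type concentration for $\sum_{\ell\notin\A}\wsp_\ell^2$, a Gaussian tail plus union bound for $\max_\ell|\wsp_\ell|$, and single-variable Gaussian tails for $\bw^{(0)}$ and $\aw^{(0)}$. The paper's proof is terser but identical in substance; one small slip in your write-up is that the $\max_\ell$ is over all $\ell\in[\mDict]$, not just $\ell\notin\A$, but your union-bound count of $\mDict$ terms is the right one regardless.
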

\begin{proof}[Proof of Lemma~\ref{lem:probe}]
From $\wsp \sim \mathcal{N}(0, \sigmaw^2 I_{d \times d})$, we have:
\begin{itemize}
    \item With probability $\ge 1 - \delta/2$, $\max_{\ell} |\wsp_\ell| \le \sqrt{2 \sigmaw^2 \log \frac{\mDict}{\delta}}$, and
    \item For any subset $S \subseteq [\mDict]$, with probability $\ge 1 - 2\exp\left( - \Theta( |S|)\right)$, $\|\wsp_S\|^2_2 \in \left(\frac{|S| \sigmaw^2}{2}, \frac{3|S|\sigmaw^2}{2} \right)$. 
\end{itemize}
Similar for $\bw^{(0)}$ and $\aw^{(0)}$. 
The lemma then follows.
\end{proof}

\begin{lemma} \label{lem:probe_all}  
We have:
\begin{itemize}
    \item With probability $\ge 1-\delta-2\nw\exp(-\Theta(\mDict - \kA))$ over $\w^{(0)}_i$'s,  for all $i \in [2\nw]$, $\w_i^{(0)} \in \mathcal{G}_\w(\delta)$.
    \item With probability $\ge 1-\delta$ over $\bw^{(0)}_i$'s,  for all $i \in [2\nw]$, $\bw_i^{(0)} \in \mathcal{G}_\bw(\delta)$.
    \item With probability $\ge 1-\delta$ over $\aw^{(0)}_i$'s,  for all $i \in [2\nw]$, $\aw_i^{(0)} \in \mathcal{G}_\aw(\delta)$.
\end{itemize}

\end{lemma}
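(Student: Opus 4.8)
The plan is to derive Lemma~\ref{lem:probe_all} from Lemma~\ref{lem:probe} by a routine union bound over the $\nw$ independent initializations, while exploiting the tied structure of the mirrored half. Recall that for $i \in [\nw]$ the triples $(\w_i^{(0)}, \bw_i^{(0)}, \aw_i^{(0)})$ are i.i.d.\ copies of $(\w^{(0)}, \bw^{(0)}, \aw^{(0)})$, and that $\w_{\nw+i}^{(0)} = \w_i^{(0)}$, $\bw_{\nw+i}^{(0)} = \bw_i^{(0)}$, $\aw_{\nw+i}^{(0)} = -\aw_i^{(0)}$. Since $\mathcal{G}_\w(\delta)$, $\mathcal{G}_\bw(\delta)$, $\mathcal{G}_\aw(\delta)$ are all symmetric under sign change, once the event $\{\w_i^{(0)} \in \mathcal{G}_\w(\delta),\ \bw_i^{(0)} \in \mathcal{G}_\bw(\delta),\ \aw_i^{(0)} \in \mathcal{G}_\aw(\delta)\}$ holds for every $i \in [\nw]$, it automatically holds for every $i \in [2\nw]$; so it suffices to control the first $\nw$ copies.

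For the weight vectors I would apply Lemma~\ref{lem:probe} to each $\w_i^{(0)}$ with the confidence parameter $\delta/\nw$ in place of $\delta$. This gives, except with probability $\delta/\nw + 2\exp(-\Theta(\mDict-\kA))$, both the norm sandwich $\frac{\sigmaw^2(\mDict-\kA)}{2} \le \sum_{\ell \notin \A} (\wsp^{(0)}_{i,\ell})^2 \le \frac{3\sigmaw^2(\mDict-\kA)}{2}$ and the coordinate bound $\max_\ell |\wsp^{(0)}_{i,\ell}| \le \sigmaw\sqrt{2\log(\mDict\nw/\delta)}$, which is precisely $\w_i^{(0)} \in \mathcal{G}_\w(\delta)$. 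Taking a union bound over $i \in [\nw]$ yields total failure probability at most $\delta + 2\nw\exp(-\Theta(\mDict-\kA))$, matching the claim. The only point worth flagging is that the extra $\log\nw$ produced when union-bounding the Gaussian coordinate tails is exactly why $\mathcal{G}_\w(\delta)$ was defined with $\log(\mDict\nw/\delta)$ rather than $\log(\mDict/\delta)$, while the norm-concentration term carries its own factor $\nw$ into the stated bound.

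The bias and second-layer weights are handled the same way and are even simpler, as there is no norm-concentration event: applying Lemma~\ref{lem:probe} to each $\bw_i^{(0)}$ (resp.\ $\aw_i^{(0)}$) with confidence $\delta/\nw$ gives $|\bw_i^{(0)}| \le \sigma_\bw\sqrt{2\log(\nw/\delta)}$ (resp.\ $|\aw_i^{(0)}| \le \sigma_\aw\sqrt{2\log(\nw/\delta)}$) except with probability $\delta/\nw$, and union-bounding over $i \in [\nw]$ gives failure probability at most $\delta$; the sign flip on the mirrored half does not affect membership. I do not expect any genuine obstacle here — the statement is essentially a packaging of Lemma~\ref{lem:probe} — and the only care needed is bookkeeping the confidence parameters so that the union bound reproduces the constants appearing in the definitions of $\mathcal{G}_\w(\delta)$, $\mathcal{G}_\bw(\delta)$, $\mathcal{G}_\aw(\delta)$ exactly.
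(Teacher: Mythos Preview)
Your proposal is correct and matches the paper's approach: the paper's proof is simply ``This follows from Lemma~\ref{lem:probe} by union bound,'' and you have unpacked exactly that argument, including the observation that the mirrored half inherits membership by sign symmetry and the bookkeeping that replaces $\delta$ by $\delta/\nw$ so the set definitions line up.
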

\begin{proof}[Proof of Lemma~\ref{lem:probe_all}]
This follows from Lemma~\ref{lem:probe} by union bound.
\end{proof}

The following lemma about the typical $\w^{(0)}_i$'s will be useful for later analysis.
\begin{lemma}\label{lem:probe_cor}  
Fix $\delta \in (0,1)$.
For any $\w_i^{(0)} \in \mathcal{G}_\w(\delta)$, we have
\begin{align}
\Pr_{\hr}  \left[ \sum_{\ell \not \in \A} \hr_\ell  \wsp_{i,\ell}^{(0)} 
\ge \Theta\left(\sqrt{(\mDict - \kA) \sigmahr^2 \sigmaw^2} \right) \right] & = \Theta(1) - \frac{O(\log^{3/2}(\mDict\nw/\delta))}{\sqrt{(\mDict - \kA)\sigmahr^2 \sigmax^2}}. 
\end{align} 
Consequently, when $\pn = \Omega(\kA^2/\mDict)$ and $\kA = \Omega(\log^2(\mDict\nw/\delta))$, 
\begin{align}
\Pr_{\hr}  \left[ \sum_{\ell \not \in \A} \hr_\ell  \wsp_{i,\ell}^{(0)} 
\ge \Theta\left(  \sigmaw \right) \right] & = \Theta(1) - \frac{O(1)}{\kA^{1/4}}. 
\end{align} 
\end{lemma}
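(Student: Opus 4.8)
\textbf{Proof proposal for Lemma~\ref{lem:probe_cor}.}

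The plan is to analyze the random variable $Z := \sum_{\ell \notin \A} \hr_\ell \wsp_{i,\ell}^{(0)}$ as a weighted sum of independent, bounded, mean-zero random variables and apply a Berry--Esseen type normal approximation together with the anti-concentration of the limiting Gaussian. First I would fix a $\w_i^{(0)} \in \mathcal{G}_\w(\delta)$; by definition of $\mathcal{G}_\w(\delta)$ the coefficients satisfy $\sum_{\ell\notin\A} (\wsp_\ell^{(0)})^2 \in [\tfrac12, \tfrac32]\,\sigmaw^2(\mDict-\kA)$ and $\max_\ell |\wsp_\ell^{(0)}| \le \sigmaw\sqrt{2\log(\mDict\nw/\delta)}$. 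Recall from (A2) and the normalization that for $\ell \notin \A$ the $\hr_\ell$ are i.i.d.\ with mean $0$ (after centering) and variance $\sigmahr^2 = \pn(1-\pn)/\sigmax^2$, and they are bounded (entries of $\hr$ lie in an interval of length $1/\sigmax$). Hence $Z$ has mean $0$ and variance $s^2 := \sigmahr^2 \sum_{\ell\notin\A}(\wsp_\ell^{(0)})^2 = \Theta(\sigmahr^2 \sigmaw^2 (\mDict-\kA))$.

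The core step is a Berry--Esseen bound: the CDF of $Z/s$ differs from that of a standard Gaussian by at most $C\rho/s^3$, where $\rho := \sum_{\ell\notin\A} \E|\hr_\ell|^3 |\wsp_\ell^{(0)}|^3 \le (\max_\ell|\wsp_\ell^{(0)}|/\sigmax) \cdot s^2 / \sigmax \cdot O(1)$ using that $|\hr_\ell|$ is bounded by $O(1/\sigmax)$ a.s.; more precisely $\rho \le O\!\big(\sigmaw\sqrt{\log(\mDict\nw/\delta)}/\sigmax\big)\, s^2$, so $\rho/s^3 \le O\!\big(\sigmaw\sqrt{\log(\mDict\nw/\delta)}/(\sigmax s)\big) = O\!\big(\log^{3/2}(\mDict\nw/\delta)/\sqrt{(\mDict-\kA)\sigmahr^2\sigmax^2}\big)$ after substituting $s = \Theta(\sigmahr\sigmaw\sqrt{\mDict-\kA})$ and $\sigmaw = 1/\kA$. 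Combined with the fact that $\Pr[N(0,1) \ge c] = \Theta(1)$ for any fixed constant $c$, choosing the threshold to be $\Theta(s) = \Theta(\sqrt{(\mDict-\kA)\sigmahr^2\sigmaw^2})$ gives $\Pr[Z \ge \Theta(s)] = \Theta(1) - O(\rho/s^3)$, which is exactly the first claimed bound.

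For the consequence, I would substitute the hypotheses $\pn = \Omega(\kA^2/\mDict)$ and $\kA = \Omega(\log^2(\mDict\nw/\delta))$. The first gives $\sigmahr^2\sigmax^2 = \pn(1-\pn) = \Omega(\kA^2/\mDict)$, so $(\mDict-\kA)\sigmahr^2\sigmax^2 = \Omega(\kA^2)$ (using $\kA \le \mDict/2$, say, so $\mDict - \kA = \Theta(\mDict)$); hence the error term is $O(\log^{3/2}(\mDict\nw/\delta)/\kA) = O(1/\kA^{1/4})$ using $\log^{3/2}(\mDict\nw/\delta) = O(\kA^{3/4})$ from the second hypothesis. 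It remains to check the threshold: $\sqrt{(\mDict-\kA)\sigmahr^2\sigmaw^2} = \sigmaw\sqrt{(\mDict-\kA)\sigmahr^2} = \sigmaw\sqrt{(\mDict-\kA)\pn(1-\pn)/\sigmax^2}$; with $\pn = \Omega(\kA^2/\mDict)$ this is $\sigmaw \cdot \Omega(\kA/\sigmax)$, and since $\sigmax^2 = \E\sum_i(\xo_i - \E\xo_i)^2$ is of a controlled order (here $\Theta(\kA)$-ish via the relevant patterns, which one can bound so that $\kA/\sigmax = \Omega(1)$), the threshold is $\Omega(\sigmaw)$, matching the statement. The main obstacle I anticipate is the bookkeeping around $\sigmax$: one must verify that $\sigmax$ is small enough (polynomially bounded, in fact $O(\sqrt{\kA})$ or so) that $\Theta(s) = \Omega(\sigmaw)$ holds and that the Berry--Esseen error genuinely collapses to $O(\kA^{-1/4})$; this requires invoking the earlier normalization setup and the assumption $\pn \le 1/2$ carefully, and handling the boundary case distinctions (whether the lower bound on $s$ or the threshold dominates). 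A secondary technical point is ensuring the Berry--Esseen constant and the Gaussian anti-concentration constant are genuinely absolute and do not hide dependence on $\kA$ or $\mDict$, which is standard but should be stated.
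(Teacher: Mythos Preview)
Your approach is correct and is essentially the same as the paper's: compute the first three moments of the summands $\hr_\ell \wsp_{i,\ell}^{(0)}$ for $\ell\notin\A$ (using $\E\hr_\ell=0$, $\E\hr_\ell^2=\sigmahr^2$, $\E|\hr_\ell|^3=\Theta(\sigmahr^2/\sigmax)$) and invoke the Berry--Esseen theorem, then read off the Gaussian tail at a constant multiple of the standard deviation. The paper's proof is in fact a one-liner that just names Berry--Esseen, so your write-up is already more detailed than what appears there.

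One small arithmetic remark: your own computation of $\rho/s^3$ actually yields an error of order $\log^{1/2}(\mDict\nw/\delta)/\sqrt{(\mDict-\kA)\sigmahr^2\sigmax^2}$, not $\log^{3/2}$ (the $\sigmaw$ cancels between numerator and denominator, and substituting $\sigmaw=1/\kA$ plays no role). This is a \emph{stronger} bound than the lemma claims, so the statement still follows, but you should either carry the sharper exponent through or simply note that it implies the stated $\log^{3/2}$ bound. Your identified ``obstacle'' about $\sigmax$ is handled by observing that $\sigmax^2 = (\mDict-\kA)\pn(1-\pn) + O(\kA)$, so under $\pn=\Omega(\kA^2/\mDict)$ the background patterns dominate and $(\mDict-\kA)\sigmahr^2 = (\mDict-\kA)\pn(1-\pn)/\sigmax^2 = \Theta(1)$, giving both the $\Theta(\sigmaw)$ threshold and the $O(\kA^{-1/4})$ error directly.
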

\begin{proof}[Proof of Lemma~\ref{lem:probe_cor}]
Note that for $\ell \not\in \A$, $\E [\hr_\ell] = 0$, $\E [\hr_\ell^2] =  \sigma_\hr^2$, and $\E[|\hr_\ell|^3] = \Theta(\sigma_\hr^2/\sigmax)$.
Then the statement follows from Berry-Esseen Theorem.
\end{proof}

\subsection{Some Auxiliary Lemmas}

The expression of the gradients will be used frequently. 
\begin{lemma} \label{lem:graident}
\begin{align}
\frac{\partial  }{\partial \w_i}  \riskb_{\Ddist}(\g; \sigmansi)
& = - \aw_i \E_{(\x,y) \sim \Ddist} \left\{  y \Id[y \g(\x; \nsi) \le 1] \E_{\nsi_i} \Id[\langle \w_i, \x \rangle + \bw_i + \nsi_i \in (0,1) ] \x \right \},
\\
\frac{\partial  }{\partial \bw_i}  \riskb_{\Ddist}(\g; \sigmansi)
& = - \aw_i \E_{(\x,y) \sim \Ddist} \left\{   y \Id[y \g(\x; \nsi) \le 1] \E_{\nsi_i} \Id[\langle \w_i, \x \rangle + \bw_i \in (0,1) ] \right \},
\\
\frac{\partial}{\partial \aw_i} \riskb_\Ddist(\g; \sigmansi) & = - \E_{(\x,y) \sim \Ddist} \left\{   y \Id[y \g(\x; \nsi) \le 1] \E_{\nsi_i} \act(\langle \w_i, \x \rangle  + \bw_i + \nsi_i) \right\}.
\end{align}
\end{lemma}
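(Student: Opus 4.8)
The final statement is Lemma~\ref{lem:graident}, which computes the gradients of the noisy hinge-loss $\riskb_{\Ddist}(\g;\sigmansi)$ with respect to the first-layer weights $\w_i$, the biases $\bw_i$, and the second-layer weights $\aw_i$. This is a routine computation, so let me sketch how I'd approach it cleanly.

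\textbf{Proof proposal.} The plan is to differentiate under the expectation and through the Gaussian smoothing, handling the non-smooth points of the hinge loss and the truncated ReLU by the usual subgradient/a.e.\ argument. First I would recall that $\riskb_{\Ddist}(\g;\sigmansi) = \E_{(\x,y)}[\loss(y,\g(\x;\nsi))]$ with $\loss(y,\hat y) = \max\{1-y\hat y,0\}$ and $\g(\x;\nsi) = \sum_{i=1}^{2\nw}\aw_i\,\E_{\nsi_i}[\act(\langle\w_i,\x\rangle+\bw_i+\nsi_i)]$, where $\act(z)=\min(1,\max(z,0))$. The outer derivative of the hinge loss at $\hat y$ is $-y\,\Id[y\hat y \le 1]$ (for $y\in\{\pm1\}$ this is $-y\,\Id[y\hat y<1]$ off a measure-zero set; we write $\le$ as is conventional, since the event $\{y\g(\x;\nsi)=1\}$ has probability zero under $\Ddist$ for the configurations considered). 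Then by the chain rule $\frac{\partial}{\partial\theta}\riskb = -\E_{(\x,y)}\{y\,\Id[y\g(\x;\nsi)\le 1]\,\frac{\partial}{\partial\theta}\g(\x;\nsi)\}$, so everything reduces to computing $\frac{\partial}{\partial\theta}\g(\x;\nsi)$ for $\theta\in\{\w_i,\bw_i,\aw_i\}$.

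Next I would compute the inner derivatives. For the second-layer weight, $\frac{\partial}{\partial\aw_i}\g(\x;\nsi) = \E_{\nsi_i}[\act(\langle\w_i,\x\rangle+\bw_i+\nsi_i)]$, giving the third formula immediately. For $\w_i$ and $\bw_i$, I would use the key smoothing fact: since $\nsi_i\sim\mathcal N(0,\sigmansi^2)$ has a density, the map $z\mapsto\E_{\nsi_i}[\act(z+\nsi_i)]$ is differentiable with derivative $\E_{\nsi_i}[\act'(z+\nsi_i)] = \E_{\nsi_i}[\Id[z+\nsi_i\in(0,1)]] = \Pr_{\nsi_i}[z+\nsi_i\in(0,1)]$ — here $\act'(u)=\Id[u\in(0,1)]$ a.e., and the interchange of derivative and the $\nsi_i$-expectation is justified by dominated convergence (the difference quotients are uniformly bounded since $\act$ is $1$-Lipschitz). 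Then $\frac{\partial}{\partial\w_i}\g(\x;\nsi) = \aw_i\,\E_{\nsi_i}[\Id[\langle\w_i,\x\rangle+\bw_i+\nsi_i\in(0,1)]]\,\x$ and $\frac{\partial}{\partial\bw_i}\g(\x;\nsi) = \aw_i\,\E_{\nsi_i}[\Id[\langle\w_i,\x\rangle+\bw_i+\nsi_i\in(0,1)]]$, which after substituting into the chain-rule expression yield the first two formulas. (I note that in the stated bias formula the indicator reads $\langle\w_i,\x\rangle+\bw_i\in(0,1)$ without the $\nsi_i$; under the smoothed activation it should carry $\nsi_i$ as in the $\w_i$ case, and I would write it consistently; this is a harmless typo.)

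The only genuine care needed — and the closest thing to an obstacle — is justifying the two interchanges of differentiation with expectation: first, pushing $\partial_\theta$ inside $\E_{(\x,y)}$, and second, pushing $\partial_z$ inside $\E_{\nsi_i}$. Both follow from a dominated-convergence / Leibniz-rule argument: the integrand is Lipschitz in the parameters (the hinge loss is $1$-Lipschitz, $\act$ is $1$-Lipschitz, and $\|\x\|$ is bounded since $\x=\Dict\hr$ with $\hr$ bounded), so difference quotients are dominated by an integrable function uniformly in a neighborhood of the parameter. One should also remark that the hinge loss is non-differentiable on the set $\{y\g(\x;\nsi)=1\}$; since the Gaussian smoothing makes $\g(\x;\nsi)$ a non-degenerate (absolutely continuous) function whenever $\aw_i\neq 0$ for some $i$, this set has $\Ddist$-measure zero and may be ignored, so the subgradient expression $-y\,\Id[y\g(\x;\nsi)\le 1]$ is the actual derivative a.e. With these justifications the three displayed identities follow directly from the chain rule.
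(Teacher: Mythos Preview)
Your proposal is correct and follows the same approach as the paper, which simply states that the result ``follows from straightforward calculation'' without further detail; your write-up is in fact considerably more careful than the paper's own proof, including the justified interchange of differentiation and expectation and the observation that the missing $\nsi_i$ in the bias formula is a typo.
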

\begin{proof}[Proof of Lemma~\ref{lem:graident}]
It follows from straightforward calculation.
\end{proof}

We now show that a small subset of the entries in $\hr, \wsp$ does not affect the probability distribution of $\langle \hr, \wsp \rangle$ much. 
\begin{lemma} \label{lem:small-effect-berry-b}
Suppose $\nu \sim \mathcal{N}(0, \sigma^2)$. For any $B \supseteq \A$ and any $b$:
\begin{align}
    &  \quad \left|   \Pr_{\hr_{- B }, \nu} \left\{  \langle \hr, \wsp\rangle + \nu \ge b  \right \}   -  \Pr_{\hr_{- B }, \nu } \left\{  \langle \hr_{-B}, \wsp_{-B}\rangle  + \nu \ge b  \right \} \right|  
    \\
    & \le O\left(\frac{ |\langle \hr_B, \wsp_B \rangle|   }{  ( \sigma_\hr^2 \|\wsp_{-B}\|_2^2 + \sigma^2)^{1/2}  }  + \frac{ \sigma^3 + \sigma_\hr^2 \|\wsp_{-B}\|_3^3/\sigmax }{  ( \sigma^2 + \sigma_\hr^2 \|\wsp_{-B}\|^2_2)^{3/2} } \right).
\end{align}
Similarly,
\begin{align}
    &  \quad \left|   \Pr_{\hr_{- B }} \left\{  \langle \hr, \wsp\rangle \ge b  \right \}   -  \Pr_{\hr_{- B } } \left\{  \langle \hr_{-B}, \wsp_{-B}\rangle  \ge b  \right \} \right|  
    \\
    & \le O\left(\frac{ |\langle \hr_B, \wsp_B \rangle|   }{  \sigma_\hr \|\wsp_{-B}\|_2  }  + \frac{  \|\wsp_{-B}\|_3^3 }{    \sigmax \sigma_\hr \|\wsp_{-B}\|^3_2 } \right).
\end{align}
\end{lemma}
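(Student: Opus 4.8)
\textbf{Proof proposal for Lemma~\ref{lem:small-effect-berry-b}.}

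The plan is to view $\langle \hr,\wsp\rangle + \nu = \langle \hr_B,\wsp_B\rangle + \big(\langle \hr_{-B},\wsp_{-B}\rangle + \nu\big)$ and treat $Z := \langle \hr_{-B},\wsp_{-B}\rangle + \nu$ as a sum of independent mean-zero random variables (the $\hr_\ell$ for $\ell\notin B$ are i.i.d.\ by (A2), plus the Gaussian $\nu$), so that a Berry--Esseen-type estimate controls how close its CDF is to that of a Gaussian. The quantity $\langle \hr_B,\wsp_B\rangle$ is then just a shift of the threshold $b$, and the whole argument reduces to bounding how much the probability $\Pr[Z \ge b']$ can move when $b'$ changes by $|\langle \hr_B,\wsp_B\rangle|$ — which is bounded by the density of $Z$ times the shift. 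Concretely, first I would compute $\mu_2 := \E[Z^2] = \sigma_\hr^2\|\wsp_{-B}\|_2^2 + \sigma^2$ (using $\E[\hr_\ell]=0$, $\E[\hr_\ell^2] = \sigma_\hr^2$ for $\ell\notin B$) and $\mu_3 := \sum_{\ell\notin B}\E[|\hr_\ell|^3]|\wsp_\ell|^3 + \E[|\nu|^3] = \Theta(\sigma_\hr^2/\sigmax)\|\wsp_{-B}\|_3^3 + \Theta(\sigma^3)$, noting $\E[|\hr_\ell|^3] = \Theta(\sigma_\hr^2/\sigmax)$ as already used in Lemma~\ref{lem:probe_cor}.

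The key steps, in order: (1) Let $F_Z$ be the CDF of $Z$ and $\Phi_{\mu_2}$ that of $\mathcal{N}(0,\mu_2)$. By the (non-i.i.d.) Berry--Esseen theorem, $\sup_t |F_Z(t) - \Phi_{\mu_2}(t)| \le O(\mu_3/\mu_2^{3/2})$, which is exactly the second term in the claimed bound. (2) Write the left-hand side as $|F_Z(b - s) - F_Z(b)|$ where $s := \langle \hr_B,\wsp_B\rangle$ (here I should be careful with the conditioning: $\hr_B$ is fixed in both terms on the left-hand side since $B\supseteq\A$ and the expectations are only over $\hr_{-B}$ and $\nu$, so $s$ is a constant for the purpose of this estimate). (3) Bound $|F_Z(b-s) - F_Z(b)| \le |\Phi_{\mu_2}(b-s) - \Phi_{\mu_2}(b)| + 2\sup_t|F_Z(t) - \Phi_{\mu_2}(t)|$, then estimate $|\Phi_{\mu_2}(b-s)-\Phi_{\mu_2}(b)| \le |s|\cdot \sup_t \phi_{\mu_2}(t) = |s|/\sqrt{2\pi\mu_2} = O(|s|/\mu_2^{1/2})$, which gives the first term. (4) Combine and simplify: $O(|s|/\mu_2^{1/2} + \mu_3/\mu_2^{3/2})$ with $\mu_2 = \sigma_\hr^2\|\wsp_{-B}\|_2^2 + \sigma^2$ and $\mu_3 = \Theta(\sigma^3 + \sigma_\hr^2\|\wsp_{-B}\|_3^3/\sigmax)$ matches the stated inequality. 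The second (noise-free) inequality is the special case $\sigma = 0$, $\nu \equiv 0$: then $Z = \langle\hr_{-B},\wsp_{-B}\rangle$, $\mu_2 = \sigma_\hr^2\|\wsp_{-B}\|_2^2$, $\mu_3 = \Theta(\sigma_\hr^2\|\wsp_{-B}\|_3^3/\sigmax)$, and plugging in gives $O(|s|/(\sigma_\hr\|\wsp_{-B}\|_2) + \|\wsp_{-B}\|_3^3/(\sigmax \sigma_\hr\|\wsp_{-B}\|_2^3))$.

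The main obstacle is the applicability of Berry--Esseen when the summands are non-identically distributed and, in the noise-free case, possibly degenerate or heavy-tailed relative to their variance — one must use the Lyapunov/Esseen version for independent non-i.i.d.\ summands and verify that the third-moment ratio $\mu_3/\mu_2^{3/2}$ is the correct error term even when a few $\wsp_\ell$ dominate the sum (in which case the bound is simply weak but still valid). A secondary subtlety is keeping the conditioning straight: the left-hand side fixes $\hr_B$ while averaging only over $\hr_{-B}$ (and $\nu$), so $s = \langle\hr_B,\wsp_B\rangle$ genuinely is a constant shift and no further averaging over $s$ is needed inside this lemma; the $B\supseteq\A$ hypothesis is what makes this clean, since then $y$ (which depends only on $\hr_\A$) is also determined by the conditioning when this lemma is later invoked. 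Everything else is a routine density bound for the Gaussian.
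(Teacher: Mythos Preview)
Your proposal is correct and follows essentially the same route as the paper: both compute the second and third moments of $Z=\langle\hr_{-B},\wsp_{-B}\rangle+\nu$, invoke the non-i.i.d.\ Berry--Esseen theorem to approximate $F_Z$ by a Gaussian CDF, and then bound the effect of the constant shift $s=\langle\hr_B,\wsp_B\rangle$ via the Gaussian density. The paper phrases step~(2)--(3) slightly differently, bounding the left-hand side by $\Pr\{Z\in[b-|s|,b+|s|]\}$ and then applying Berry--Esseen to that interval probability, but this is the same argument up to constants absorbed in the $O(\cdot)$.
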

\begin{proof}[Proof of Lemma~\ref{lem:small-effect-berry-b}]   
Note that for $\ell \not\in \A$, $\E [\hr_\ell] = 0$, $\E [\hr_\ell^2] =  \sigma_\hr^2$, and $\E[|\hr_\ell|^3] = \Theta(\sigma_\hr^2/\sigmax)$.
Let $t = |\langle \hr_B, \wsp_B \rangle| $.
Then by the Berry-Esseen Theorem, 
\begin{align}
    &  \quad \left|   \Pr_{\hr_{- B } } \left\{  \langle \hr, \wsp\rangle + \nu  \ge b  \right \}   -  \Pr_{\hr_{- B } } \left\{  \langle \hr_{-B}, \wsp_{-B}\rangle + \nu   \ge b  \right \} \right| 
    \\
    \le & \Pr_{\hr_{- B} } \left\{  \langle \hr_{-B}, \wsp_{-B}\rangle  + \nu  \in \left[- t + b, t + b  \right]  \right \}
    \\
    \le & \frac{ 2t  }{  ( \sigma_\hr^2 \|\wsp_{-B}\|_2^2 + \sigma^2)^{1/2}  }  + \frac{ O(\sigma^3 + \sigma_\hr^2 \|\wsp_{-B}\|_3^3/\sigmax) }{  ( \sigma^2 + \sigma_\hr^2 \|\wsp_{-B}\|^2_2)^{3/2} }.
\end{align}
The second statement follows from a similar argument. 
\end{proof}

We also have the following auxiliary lemma for later calculations.

\begin{lemma}\label{lem:label}
\begin{align} 
    \E_{\hr_\A}  \left\{ y \right\}  & = 0,
    \\
    \E_{\hr_\A}  \left\{ \left| y \right|\right\}  & = 1,
    \\
    \E_{\hr_j} \left\{ \left|  \hr_j \right |\right\} & = 2 \sigmahr^2 \sigmax, \textrm{~for~} j \not\in \A,  
    \\
    \E_{\hr_\A}  \left\{  y  \hr_j \right \} & = \frac{\pp}{{\sigmax}},  
    \\
    \E_{\hr_\A}  \left\{ \left|  y \hr_j  \right| \right\} & \le \frac{ 1}{\sigmax}, \textrm{~for all~} j \in [\mDict]. 
\end{align} 
\end{lemma}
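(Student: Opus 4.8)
\textbf{Proof proposal for Lemma~\ref{lem:label}.}

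The plan is to verify each of the five identities by unwinding the definitions of $\hr$, $y$, and the assumptions (A0)--(A2). Recall $\hr_\ell = (\hro_\ell - \E[\hro_\ell])/\sigmax$, so for $\ell \in \A$ we have $\E[\hr_\ell] = 0$, and for $\ell \notin \A$ we have $\E[\hr_\ell] = 0$, $\E[\hr_\ell^2] = \pn(1-\pn)/\sigmax^2 = \sigmahr^2$.

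First, the identity $\E_{\hr_\A}\{y\} = 0$ is immediate from (A0): since $\Pr[\sum_{i\in\A}\hro_i \in \SP] = 1/2$, the label $y$ equals $+1$ and $-1$ each with probability $1/2$, so its mean is zero. The identity $\E_{\hr_\A}\{|y|\} = 1$ is trivial because $y \in \{\pm 1\}$ always, so $|y| \equiv 1$. For the third identity, $j \notin \A$: since $\hro_j$ is Bernoulli$(\pn)$, we compute $\E|\hr_j| = \E|\hro_j - \pn|/\sigmax = \big(\pn(1-\pn) + (1-\pn)\pn\big)/\sigmax = 2\pn(1-\pn)/\sigmax = 2\sigmahr^2\sigmax$, using $\sigmahr^2\sigmax^2 = \pn(1-\pn)$.

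Next, for the fourth identity with $j \in \A$: by definition of $\pp$ in (A1), $\pp = \E[y\hro_j] - \E[y]\E[\hro_j]$, and since $\E[y] = 0$ this gives $\E[y\hro_j] = \pp$; then $\E[y\hr_j] = \E[y(\hro_j - \E[\hro_j])]/\sigmax = (\E[y\hro_j] - \E[y]\E[\hro_j])/\sigmax = \pp/\sigmax$. The main subtlety is the fifth identity, $\E_{\hr_\A}\{|y\hr_j|\} \le 1/\sigmax$ for \emph{all} $j \in [\mDict]$. For $j \in \A$ this follows from $|y| = 1$ and $\E|\hr_j| = \E|\hro_j - \E[\hro_j]|/\sigmax$; since $\hro_j \in \{0,1\}$ and $\E[\hro_j] \in [0,1]$, we have $|\hro_j - \E[\hro_j]| \le 1$, hence $\E|\hr_j| \le 1/\sigmax$. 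For $j \notin \A$ the same bound $\E|\hr_j| = 2\sigmahr^2\sigmax = 2\pn(1-\pn)/\sigmax \le 1/\sigmax$ holds since $2\pn(1-\pn) \le 1/2 \le 1$ (indeed $\le 1/2$ since $\pn \le 1/2$). Combining, $\E|y\hr_j| = \E|\hr_j| \le 1/\sigmax$ in all cases. I do not expect any serious obstacle here — every step is an elementary computation with the Bernoulli structure of $\hro$ and the balancedness assumption; the only point requiring minor care is keeping track of the normalization by $\sigmax$ and noting that $\E_{\hr_\A}$ in the statement is really the expectation over all relevant coordinates (the label depends only on $\hr_\A$, so for $j \notin \A$ one takes the joint expectation over $\hr_\A$ and $\hr_j$, which factorizes by (A2)).
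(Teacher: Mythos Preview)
Your proposal is correct and follows essentially the same route as the paper: each identity is verified by a direct elementary computation using the balancedness assumption (A0), the definition of $\pp$ in (A1), and the Bernoulli structure of $\hro_j$ for $j\notin\A$. The paper's proof is slightly terser (e.g., it does not split the fifth bound into the cases $j\in\A$ and $j\notin\A$, simply writing $\E|y\hr_j|=\E|\hr_j|\le 1/\sigmax$), but the content is the same.
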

\begin{proof}[Proof of Lemma~\ref{lem:label}]
\begin{align} 
    \E_{\hr_\A}  \left\{  y \right\} & = \sum_{v \in \{\pm 1\}} \E_{\hr_\A}  \left\{  y  | y = v\right\} \Pr[y = v]  
    \\
    & = \frac{1}{2}\sum_{v \in \{\pm 1\}} \E_{\hr_\A}  \left\{  y  | y = v\right\} 
    \\
    & = 0.
    \\
    \E_{\hr_\A}  \left\{ \left| y \right|\right\} 
    & = \sum_{v \in \{\pm 1\}} \E_{\hr_\A}  \left\{ \left| y \right| | y = v\right\} \Pr[y = v] 
    \\
    & = \frac{1}{2} \sum_{v \in \{\pm 1\}} \E_{\hr_\A}  \left\{ \left| y \right| | y = v\right\}  
    \\
    & = 1.
    \\
    \E_{\hr_j} \left\{ \left|  \hr_j \right |\right\} & = \frac{|-\pn| (1-\pn) + |1-\pn| \pn}{\sigmax} = 2 \sigmahr^2 \sigmax.
    \\
    \E_{\hr_\A}  \left\{ y  \hr_j \right \} 
    & = \E_{\hr_\A}  \left\{ y  \frac{\hro_j - \mathbb{E}[\hro_j]}{\sigmax} \right \} 
    \\
    & = \frac{1}{{\sigmax}}\E_{\hr_\A}  \left\{ y \hro_j - y\mathbb{E}[\hro_j]  \right \} 
    \\
    & = \frac{\pp}{{\sigmax}}.
    \\
    \E_{\hr_\A}  \left\{ \left| y \hr_j  \right| \right\} 
    & = \E_{\hr_\A}  \left\{ \left| \hr_j  \right| \right\}  
    \\
    & \le \frac{1}{\sigmax}.
\end{align}
\end{proof} 

\subsection{Feature Emergence: First Gradient Step}
We will show that w.h.p.\ over the initialization, after the first gradient step, there are neurons that represent good features.

We begin with analyzing the gradients. 

\begin{lemma}[Full version of Lemma~\ref{lem:gradient-feature}] \label{lem:gradient-feature-full}
Fix $\delta \in (0, 1)$ and suppose $\w_i^{(0)} \in \mathcal{G}_\w(\delta), \bw_i^{(0)} \in \mathcal{G}_\bw(\delta)$ for all $i \in [2\nw]$.   
Let 
\[
\epsdev := \frac{\kA   \log^{1/2} (\mDict\nw/\delta) + \log^{3/2}(\mDict \nw/\delta)}{ \sqrt{ \sigmahr^2 \sigmax^2 (\mDict- \kA)  } }.
\]
If  $\pn = \Omega(\kA^2/\mDict)$, $\kA = \Omega(\log^2(\mDict\nw/\delta))$, and $\sigmansi^{(1)} < 1/\kA$, then
\begin{align}
    \frac{\partial}{\partial \w_i}  \risk_\Ddist(\g^{(0)}; \sigmansi^{(1)}) = -\aw_i^{(0)} \sum_{j=1}^{\mDict} \Dict_j T_j
\end{align}
where $T_j$ satisfies:
\begin{itemize}
    \item if $j \in A$, then $\left| T_j -  \beta \pp/\sigmax  \right| 
     \le   O(\epsdev/\sigmax)$, where $\beta \in [\Omega(1), 1]$ and depends only on $\w_i^{(0)}, \bw_i^{(0)}$; 
    \item if $j \not\in \A$, then $|T_j| \le O( \sigmahr^2 \epsdev \sigmax)$.  
\end{itemize}
\end{lemma}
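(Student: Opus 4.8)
\textbf{Proof proposal for Lemma~\ref{lem:gradient-feature-full}.}

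The plan is to compute the gradient component $T_j = \langle \Dict_j, \frac{\partial}{\partial \w_i} \risk_\Ddist(\g^{(0)}; \sigmansi^{(1)}) \rangle / (-\aw_i^{(0)})$ using the expression from Lemma~\ref{lem:graident}, then exploit two facts: (i) at initialization the network is identically zero by the unbiased (antisymmetric) initialization, so the indicator $\Id[y\g^{(0)}(\x;\nsi)\le 1]$ equals $1$ for \emph{every} sample, which removes one source of dependence; and (ii) $\act'(z) = \Id[z\in(0,1)]$, so the random quantity inside the expectation is $\E_{\nsi_i}\Id[\langle \w_i^{(0)}, \x\rangle + \bw_i^{(0)} + \nsi_i \in (0,1)]$, a smooth function of $\langle \w_i^{(0)},\x\rangle = \langle \hr, \wsp^{(0)}_i\rangle = \sum_{\ell\in[\mDict]} \hr_\ell \wsp^{(0)}_{i,\ell}$. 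Writing $\Id_{[\mDict]}$ for this smoothed activation derivative evaluated at the full sum and $\Id_{-\A}$ (resp.\ $\Id_{-(\A\cup\{j\})}$) for it evaluated with the coordinates in $\A$ (resp.\ $\A\cup\{j\}$) dropped, I would split
\begin{align}
T_j = \E\{y\hr_j \Id_{[\mDict]}\} = \E\{y\hr_j \Id_{-\A}\} + \E\{y\hr_j(\Id_{[\mDict]} - \Id_{-\A})\}
\end{align}
for $j\in\A$, and similarly with $\A$ replaced by $\A\cup\{j\}$ for $j\notin\A$.

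For the main term when $j\in\A$: since $\Id_{-\A}$ depends only on $\hr_\ell$ with $\ell\notin\A$ (and the noise), it is independent of $\hr_\A$, hence independent of both $y$ and $\hr_j$; so $\E\{y\hr_j\Id_{-\A}\} = \E[y\hr_j]\,\E[\Id_{-\A}] = (\pp/\sigmax)\,\E[\Id_{-\A}]$ by Lemma~\ref{lem:label}, and I would set $\beta := \E[\Id_{-\A}]$, which depends only on $\w_i^{(0)},\bw_i^{(0)}$ (and the fixed noise level). The lower bound $\beta = \Omega(1)$ is exactly where Lemma~\ref{lem:probe_cor} enters: for a typical $\w_i^{(0)}\in\mathcal{G}_\w(\delta)$, with constant probability the background sum $\sum_{\ell\notin\A}\hr_\ell\wsp^{(0)}_{i,\ell}$ lands at scale $\Theta(\sigmaw)$, and since $\bw_i^{(0)}$ is tiny ($\sigma_\bw = 1/\kA^2$) and $\sigmansi^{(1)}<1/\kA$, this puts $\sum_{\ell\notin\A}\hr_\ell\wsp^{(0)}_{i,\ell} + \bw_i^{(0)} + \nsi_i$ inside $(0,1)$ with $\Theta(1)$ probability; the upper bound $\beta\le 1$ is trivial since $\Id_{-\A}\in[0,1]$. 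For the main term when $j\notin\A$: $\Id_{-(\A\cup\{j\})}$ is independent of $\hr_j$ and of $y$, so $\E\{y\hr_j\Id_{-(\A\cup\{j\})}\} = \E[y]\E[\hr_j]\E[\Id_{-(\A\cup\{j\})}] = 0$ since $\E[y]=0$ (A0) and $\E[\hr_j]=0$.

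The error terms are controlled by Lemma~\ref{lem:small-effect-berry-b}: $|\Id_{[\mDict]} - \Id_{-\A}|$ is bounded by a Berry--Esseen-type displacement probability with $B=\A$, and $|\langle\hr_\A,\wsp^{(0)}_{i,\A}\rangle| \le \kA\max_\ell|\wsp^{(0)}_{i,\ell}| \le \kA\sigmaw\sqrt{2\log(\mDict\nw/\delta)}$ since $\w_i^{(0)}\in\mathcal{G}_\w(\delta)$, while $\sigma_\hr^2\|\wsp^{(0)}_{i,-\A}\|_2^2 \ge \sigma_\hr^2\sigmaw^2(\mDict-\kA)/2$ again by membership in $\mathcal{G}_\w(\delta)$; plugging in and using $\|\wsp_{-\A}\|_3^3 \le \max_\ell|\wsp_\ell|\cdot\|\wsp_{-\A}\|_2^2$ gives the displacement bound $O(\epsdev)$ (with $\epsdev$ exactly as defined in the statement), and multiplying by $\E|y\hr_j|\le 1/\sigmax$ from Lemma~\ref{lem:label} yields $|\E\{y\hr_j(\Id_{[\mDict]}-\Id_{-\A})\}| \le O(\epsdev/\sigmax)$ for $j\in\A$. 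For $j\notin\A$ the same displacement bound applies with $B = \A\cup\{j\}$, but now the prefactor is $\E|y\hr_j| = \E|\hr_j| = 2\sigma_\hr^2\sigmax$ by Lemma~\ref{lem:label}, giving $|T_j|\le O(\sigma_\hr^2\epsdev\sigmax)$.

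The main obstacle I anticipate is twofold. First, justifying that the per-sample indicator $\Id[y\g^{(0)}(\x;\nsi)\le 1]$ is genuinely $\equiv 1$: this needs the antisymmetric initialization $\aw_{\nw+i}^{(0)} = -\aw_i^{(0)}$ etc.\ to force $\g^{(0)}\equiv 0$ pointwise (not just in expectation), which is clean but must be stated carefully. Second, and more delicate, is the quantitative lower bound $\beta = \Omega(1)$: one must verify that the mass Lemma~\ref{lem:probe_cor} places near scale $\sigmaw$ is not pushed out of the window $(0,1)$ by the bias and noise, i.e.\ that $\sigmaw = 1/\kA$ together with $\sigma_\bw, \sigmansi^{(1)} = o(1)$ keeps a constant fraction of the probability strictly inside $(0,1)$ — and also that not \emph{all} the mass concentrates at one endpoint. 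The bookkeeping to get the constants and the $\log$ powers in $\epsdev$ to match exactly (tracking the $\sigma^3$ versus $\sigma_\hr^2\|\wsp_{-B}\|_3^3/\sigmax$ terms and absorbing the noise scale $\sigmansi^{(1)}<1/\kA$) is routine but error-prone, and the condition $\kA = \Omega(\log^2(\mDict\nw/\delta))$ is presumably what makes $\epsdev$ small enough and the probability in Lemma~\ref{lem:probe_cor} bounded away from its lower correction term.
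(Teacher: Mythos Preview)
Your proposal is correct and follows essentially the same route as the paper: unbiased initialization kills the hinge indicator, then you split $T_j$ into a main term (handled by independence, Lemma~\ref{lem:label}, and Lemma~\ref{lem:probe_cor} for $\beta=\Omega(1)$) plus a Berry--Esseen displacement error via Lemma~\ref{lem:small-effect-berry-b}, exactly as the paper does. The only point you leave slightly implicit is the upper tail $\Pr[\langle\hr_{-\A},\wsp_{-\A}\rangle + \bw^{(0)} + \nsi \ge 1]$, which the paper dispatches with a separate Bernstein bound showing it is $\exp(-\Omega(\kA)) = O(\epsdev)$; this is precisely the ``not pushed out of the window'' concern you flagged, and it goes through cleanly under the stated conditions on $\sigmaw,\sigma_\bw,\sigmansi^{(1)}$.
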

\begin{proof}[Proof of Lemma~\ref{lem:gradient-feature-full}]
Consider one neuron index $i$ and omit the subscript $i$ in the parameters.
Since the unbiased initialization leads to $\g^{(0)}(\x; \nsi^{(1)})=0$, 
we have 
\begin{align}
    & \quad \frac{\partial}{\partial \w}  \riskb_\Ddist(\g^{(0)}; \sigmansi^{(1)})  
    \\ 
    & = - \aw^{(0)} \E_{(\x,y) \sim \Ddist} \left\{ y \Id[y \g^{(0)}(\x; \nsi^{(1)}) \le 1] \E_{\nsi^{(1)}} \Id[\langle \w^{(0)}, \x \rangle + \bw^{(0)} + \nsi^{(1)} \in (0,1)] \x \right \} 
    \\
    & = - \aw^{(0)} \E_{(\x,y) \sim \Ddist, \nsi^{(1)}} \left\{ y  \Id[\langle \w^{(0)}, \x \rangle + \bw^{(0)} + \nsi^{(1)} \in (0,1)] \x \right \} 
    \\
    & = - \aw^{(0)} \sum_{j=1}^{\mDict} \Dict_j \underbrace{ \E_{(\x,y) \sim \Ddist, \nsi^{(1)}} \left\{ y  \Id[\langle \w^{(0)}, \x \rangle + \bw^{(0)} + \nsi^{(1)} \in (0,1)] \hr_j \right \}  }_{:= T_j}. 
\end{align} 
First, consider $j \in \A$.  
\begin{align}
    T_j
    & = \E_{(\x,y) \sim \Ddist, \nsi^{(1)}} \left\{ y  \Id[\langle \w^{(0)}, \x \rangle + \bw^{(0)} + \nsi^{(1)} \in (0,1)] \hr_j \right \}
    \\
    & = \E_{\hr_\A, \nsi^{(1)}} \left\{ y \hr_j \Pr_{\hr_{-\A}} \left[   \langle \hr, \wsp \rangle + \bw^{(0)} + \nsi^{(1)} \in (0,1)  \right] \right\}.
\end{align}
Let 
\begin{align}
    I_a  & := \Pr_{\hr_{-\A}} \left[   \langle \hr, \wsp \rangle + \bw^{(0)} + \nsi^{(1)}\in (0,1) \right],
    \\
    I'_a & := \Pr_{\hr_{-\A}} \left[   \langle \hr_{-\A}, \wsp_{-\A} \rangle + \bw^{(0)} + \nsi^{(1)} \in (0,1)  \right].
\end{align} 
We have
\begin{align}
    & \quad |\E_{ \nsi^{(1)}} (I_a - I'_a) | 
    \\
    & \le  \E_{ \nsi^{(1)}} \left| \Pr_{\hr_{-\A}} \left[   \langle \hr, \wsp \rangle + \bw^{(0)} + \nsi^{(1)} \ge 0 \right] - \Pr_{\hr_{-\A}} \left[   \langle \hr_{-\A}, \wsp_{-\A} \rangle + \bw^{(0)} + \nsi^{(1)} \ge 0  \right] \right|
    \\
    & + \Pr_{\hr_{-\A}, \nsi^{(1)}} \left[   \langle \hr, \wsp \rangle + \bw^{(0)} + \nsi^{(1)}\ge 1 \right]  + \Pr_{\hr_{-\A}, \nsi^{(1)}} \left[   \langle \hr_{-\A}, \wsp_{-\A} \rangle + \bw^{(0)} + \nsi^{(1)} \ge 1  \right].
\end{align}
Then by Lemma~\ref{lem:small-effect-berry-b},
\begin{align} 
     \left| \Pr_{\hr_{-\A}} \left[   \langle \hr, \wsp \rangle + \bw^{(0)} + \nsi^{(1)} \ge 0 \right] - \Pr_{\hr_{-\A}} \left[   \langle \hr_{-\A}, \wsp_{-\A} \rangle + \bw^{(0)} + \nsi^{(1)} \ge 0  \right] \right| = O(\epsdev ).
\end{align}
Note that $\sum_{\ell \not\in \A} \textrm{Var}(\hr_\ell \wsp_\ell) = \Theta( \sigmahr^2 \sigmaw^2 (\mDict- \kA)) = \Theta(\sigmaw^2)$, and $|\hr_\ell| \le \frac{1}{\sigmax} $, $\max_\ell |\wsp_\ell| \le \sigmaw \sqrt{2\log(\mDict\nw/\delta)}$. Applying Bernstein’s inequality for bounded distributions, we have:
\begin{align}
    & \quad \Pr_{\hr_{-\A} } \left[   \langle \hr_{-\A}, \wsp_{-\A} \rangle  \ge 1/4 \right]  = \exp(-\Omega(k)) = O(\epsdev). \label{eq:outsideA_bennett}
\end{align}
We also have:
\begin{align}
    & \quad \Pr_{\nsi^{(1)} } \left[ \bw^{(0)} + \nsi^{(1)}  \ge 1/4 \right]  = \exp(-\Omega(k)) = O(\epsdev).
\end{align}
Therefore,
\begin{align}
    & \quad \Pr_{\hr_{-\A}, \nsi^{(1)}} \left[   \langle \hr, \wsp \rangle + \bw^{(0)} + \nsi^{(1)}\ge 1 \right]   = \exp(-\Omega(\kA)) = O(\epsdev)
\end{align}
where the last step follows from the assumption on $\sigmaw$ and $\kA$.
A similar argument gives:
\begin{align}
    & \quad \Pr_{\hr_{-\A}, \nsi^{(1)}} \left[   \langle \hr_{-\A}, \wsp_{-\A} \rangle + \bw^{(0)} + \nsi^{(1)} \ge 1  \right] 
     = \exp(-\Omega(\kA)) = O(\epsdev).
\end{align}
Then we have
\begin{align}
    & \quad \left| T_j - \E_{\hr_\A, \nsi^{(1)}}  \left\{y  \hr_j I'_a \right \} \right| 
    \\
    & \le  \E_{\hr_\A}  \left\{ \left| y  \hr_j  \right| \left| \E_{ \nsi^{(1)}} (I_a - I'_a ) \right| \right \}
    \\
    & \le O(\epsdev)  \E_{\hr_\A}  \left\{ \left| y  \hr_j   \right| \right\}
    \\
    & \le O(\epsdev/\sigmax)
\end{align}
where the last step is from Lemma~\ref{lem:label}. 
Furthermore,
\begin{align}
    & \quad \E_{\hr_\A, \nsi^{(1)}}  \left\{  y  \hr_j I'_a  \right \}
    \\
    & = \E_{\hr_\A}  \left\{  y  \hr_j \right \} \E_{\nsi^{(1)}}[I'_a]
    \\
    & = \E_{\hr_\A}  \left\{  y  \hr_j \right \} \Pr_{\hr_{-\A}, \nsi^{(1)}} \left[   \langle \hr_{-\A}, \wsp_{-\A} \rangle + \bw^{(0)} + \nsi^{(1)} \in (0,1)  \right]
\end{align}  
By Lemma~\ref{lem:probe_cor}, the assumption on $\pn$, and (\ref{eq:outsideA_bennett}), we have
\begin{align}
    & \Pr_{\hr_{-\A}} \left[   \langle \hr_{-\A}, \wsp_{-\A} \rangle + \bw^{(0)} \in (0,1/2)  \right]  \ge \Omega(1) - O(1/\kA^{1/4}),
    \\
    &  \Pr_{\nsi^{(1)}} \left[  \nsi^{(1)} \in (0,1/2) \right] = 1/2 - \exp(-\Omega(\kA)), 
\end{align}


This leads to 
\begin{align}
    & \beta := \E_{\nsi^{(1)}}[I'_a]  = \Pr_{\hr_{-\A}, \nsi^{(1)}} \left[   \langle \hr_{-\A}, \wsp_{-\A} \rangle + \bw^{(0)} + \nsi^{(1)} \in (0,1)  \right] \ge \Omega(1).
\end{align}
By Lemma~\ref{lem:label}, $ \E_{\hr_\A}  \left\{ y  \hr_j \right \} = \pp/\sigmax$. Therefore, 
\begin{align}
    \left| T_j -  \beta \pp /\sigmax \right| 
    & \le O(\epsdev /\sigmax).
\end{align}

Now, consider $j \not \in \A$. Let $B$ denote $\A \cup \{j\}$.
\begin{align}
    T_j & = \E_{(\x,y) \sim \Ddist, \nsi^{(1)}} \left\{  y  \hr_j \Id \left[ \langle \hr , \wsp \rangle + \bw^{(0)} + \nsi^{(1)} \in (0,1) \right]  \right \}
    \\
    & = \E_{\hr_B} \E_{\hr_{-B}, \nsi^{(1)}} \left\{  y  \hr_j \Id \left[ \langle \hr , \wsp \rangle + \bw^{(0)} + \nsi^{(1)}  \in (0,1) \right]  \right \}
    \\
    & = \E_{\hr_B, \nsi^{(1)}} \left\{ y   \hr_j  \Pr_{\hr_{-B}} \left[ \langle \hr, \wsp \rangle + \bw^{(0)} + \nsi^{(1)}  \in (0,1) \right]  \right \}.
\end{align}
Let 
\begin{align}
    I_b  & := \Pr_{\hr_{-B}} \left[ \langle \hr, \wsp \rangle + \bw^{(0)} + \nsi^{(1)} \in (0,1) \right],
    \\
    I'_b & := \Pr_{\hr_{-B}} \left[ \langle \hr_{-B}, \wsp_{-B} \rangle + \bw^{(0)} + \nsi^{(1)} \in (0,1) \right].
\end{align} 
Similar as above, we have $ |\E_{\nsi^{(1)}} (I_b - I'_b)| \le O(\epsdev)$ by Lemma~\ref{lem:small-effect-berry-b}. Then by Lemma~\ref{lem:label},
\begin{align}
     & \quad \left| T_j - \E_{\hr_B, \nsi^{(1)}}  \left\{ y \hr_j  I'_b  \right\} \right| 
     \\
     & \le   \E_{\hr_B}  \left\{ | y \hr_j| |\E_{ \nsi^{(1)}} (I_b - I'_b)|  \right \}   
     \\
     & \le  O(\epsdev) \E_{\hr_\A}  \left\{ | y |   \right\}   \E_{\hr_j} \left\{ |\hr_j |   \right \}
     \\
     & \le O(\epsdev)  \times O(\sigmahr^2 \sigmax )
     \\
     & = O(\sigmahr^2 \epsdev \sigmax).
\end{align}
Furthermore, 
\begin{align}
     \E_{\hr_B, \nsi^{(1)}}  \left\{ y \hr_j  I'_b  \right\}  
     = \E_{\hr_\A}  \left\{ y   \right\} \E_{\hr_j}  \left\{\hr_j\right\} \E_{\nsi^{(1)}}[I'_b]
     = 0.
\end{align} 
Therefore, 
\begin{align}
    \left| T_{j}  \right| 
    & \le O(\sigmahr^2 \epsdev \sigmax).
\end{align}
\end{proof}

\begin{lemma} \label{lem:gradient-feature-b}
Under the same assumptions as in Lemma~\ref{lem:gradient-feature-full},
\begin{align}
    \frac{\partial}{\partial \bw_i}  \riskb_\Ddist(\g^{(0)}; \sigmansi^{(1)}) = -\aw_i^{(0)} T_b
\end{align}
where $|T_b| \le O(\epsdev)$.
\end{lemma}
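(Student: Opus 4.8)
\textbf{Proof proposal for Lemma~\ref{lem:gradient-feature-b}.}
The plan is to mirror, almost verbatim, the argument for the $\A$-components in the proof of Lemma~\ref{lem:gradient-feature-full}, but without the extra factor $\hr_j$, which makes the bias case strictly simpler. First I would invoke the gradient formula in Lemma~\ref{lem:graident} for $\partial/\partial\bw_i$ and use that the unbiased (symmetric) initialization gives $\g^{(0)}(\x;\nsi^{(1)})=0$, so $\Id[y\g^{(0)}(\x;\nsi^{(1)})\le 1]=1$. This yields
\begin{align}
\frac{\partial}{\partial \bw_i}\riskb_\Ddist(\g^{(0)};\sigmansi^{(1)})
= -\aw_i^{(0)}\,\underbrace{\E_{(\x,y)\sim\Ddist,\,\nsi^{(1)}}\left\{ y\,\Id[\langle\w_i^{(0)},\x\rangle+\bw_i^{(0)}+\nsi_i^{(1)}\in(0,1)]\right\}}_{=:\,T_b},
\end{align}
which identifies $T_b$ and reduces the lemma to showing $|T_b|\le O(\epsdev)$.

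Next I would exploit that $y$ depends only on $\hr_\A$. Writing $\langle\w_i^{(0)},\x\rangle=\langle\hr,\wsp\rangle$ and conditioning on $(\hr_\A,\nsi^{(1)})$, set
\begin{align}
I_a := \Pr_{\hr_{-\A}}\!\left[\langle\hr,\wsp\rangle+\bw^{(0)}+\nsi^{(1)}\in(0,1)\right],\qquad
I'_a := \Pr_{\hr_{-\A}}\!\left[\langle\hr_{-\A},\wsp_{-\A}\rangle+\bw^{(0)}+\nsi^{(1)}\in(0,1)\right],
\end{align}
so that $T_b=\E_{\hr_\A,\nsi^{(1)}}\{y\,I_a\}$. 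By exactly the estimate used in Lemma~\ref{lem:gradient-feature-full} — splitting the interval event into a "$\ge 0$" part controlled by Lemma~\ref{lem:small-effect-berry-b} (Berry–Esseen, using $\sum_{\ell\notin\A}\mathrm{Var}(\hr_\ell\wsp_\ell)=\Theta(\sigmaw^2)$ and $\max_\ell|\wsp_\ell|\le\sigmaw\sqrt{2\log(\mDict\nw/\delta)}$) and a "$\ge 1$" tail part controlled by Bernstein's inequality together with $\bw^{(0)}\in\mathcal{G}_\bw(\delta)$ and $\sigmansi^{(1)}<1/\kA$ — one gets $|\E_{\nsi^{(1)}}(I_a-I'_a)|\le O(\epsdev)$. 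Then, using $\E_{\hr_\A}|y|=1$ from Lemma~\ref{lem:label},
\begin{align}
\left|T_b-\E_{\hr_\A,\nsi^{(1)}}\{y\,I'_a\}\right|
\le \E_{\hr_\A}\left\{|y|\,\left|\E_{\nsi^{(1)}}(I_a-I'_a)\right|\right\}\le O(\epsdev).
\end{align}

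Finally, since $I'_a$ is independent of $\hr_\A$, I would factor $\E_{\hr_\A,\nsi^{(1)}}\{y\,I'_a\}=\E_{\hr_\A}\{y\}\,\E_{\nsi^{(1)}}[I'_a]=0$, using $\E_{\hr_\A}\{y\}=0$ from Lemma~\ref{lem:label} (the balanced-class assumption (A0)). Combining, $|T_b|\le O(\epsdev)$, which is the claim. There is essentially no real obstacle here: the only nontrivial ingredient is the Berry–Esseen/Bernstein bound $|\E_{\nsi^{(1)}}(I_a-I'_a)|\le O(\epsdev)$, and that has already been established inside the proof of Lemma~\ref{lem:gradient-feature-full}, so the main "work" is just citing it and noting that dropping the $\hr_j$ factor only replaces $\E|y\hr_j|$ by $\E|y|$.
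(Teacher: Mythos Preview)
The proposal is correct and follows essentially the same approach as the paper's own proof: identify $T_b$ via the gradient formula and the unbiased initialization, replace $I_a$ by $I'_a$ using the same Berry--Esseen/Bernstein estimate as in Lemma~\ref{lem:gradient-feature-full}, bound the resulting error via $\E_{\hr_\A}|y|=1$, and then use $\E_{\hr_\A}\{y\}=0$ to kill the main term. Your observation that this is just the $j\in\A$ case of Lemma~\ref{lem:gradient-feature-full} with the $\hr_j$ factor dropped is exactly how the paper proceeds.
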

\begin{proof}[Proof of Lemma~\ref{lem:gradient-feature-b}]
Consider one neuron index $i$ and omit the subscript $i$ in the parameters.
Since the unbiased initialization leads to $\g^{(0)}(\x; \nsi^{(1)})=0$, 
we have 
\begin{align}
    & \quad \frac{\partial}{\partial \bw}  \riskb_\Ddist(\g^{(0)}; \sigmansi^{(1)})  
    \\ 
    & = - \aw^{(0)} \E_{(\x,y) \sim \Ddist} \left\{ y \Id[y \g^{(0)}(\x; \nsi) \le 1] \E_{\nsi^{(1)}} \Id[\langle \w^{(0)}, \x \rangle + \bw^{(0)} + \nsi^{(1)} \in (0,1)] \right \} 
    \\
    & = - \aw^{(0)} \underbrace{ \E_{(\x,y) \sim \Ddist, \nsi^{(1)}} \left\{ y  \Id[\langle \w^{(0)}, \x \rangle + \bw^{(0)} + \nsi^{(1)}  \in (0,1)] \right \}  }_{:= T_b}.
\end{align} 
Similar to the proof in Lemma~\ref{lem:gradient-feature},  
\begin{align}
    \left| \Pr_{\hr_{-\A}} [ \langle \hr , \wsp \rangle  + \bw^{(0)} + \nsi^{(1)}  \in (0,1)]   -   \Pr_{\hr_{-\A}} [  \langle \hr_{-\A}, \wsp_{-\A} \rangle  +\bw^{(0)} + \nsi^{(1)}  \in (0,1)]  \right| = O(\epsdev ).
\end{align}
Then
\begin{align}
& \quad \left| T_b - \E_{\hr_\A, \nsi^{(1)}} \left\{  y \Pr_{\hr_{-\A}} [  \langle \hr_{-\A}, \wsp_{-\A} \rangle  + \bw^{(0)} + \nsi^{(1)} \in (0,1) ]  \right \} \right|
\\
& =  \E_{\hr_\A, \nsi^{(1)}} \left\{ \left | y \right | \left| \Pr_{\hr_{-\A}} [ \langle \hr , \wsp \rangle  + \bw^{(0)} + \nsi^{(1)}  \in (0,1)]   -   \Pr_{\hr_{-\A}} [  \langle \hr_{-\A}, \wsp_{-\A} \rangle  +\bw^{(0)} + \nsi^{(1)}  \in (0,1)]  \right| \right \}  
\\
& \le O(\epsdev) \E_{\hr_\A} \left\{ \left | y \right | \right\}
\\
& \le O(\epsdev).
\end{align}
Also,  
\begin{align} 
    & \quad \E_{\hr_\A, \nsi^{(1)}} \left\{  y \Pr_{\hr_{-\A}} [  \langle \hr_{-\A}, \wsp_{-\A} \rangle  + \bw^{(0)} + \nsi^{(1)}  \in (0,1)]  \right \} 
    \\
    & =   \E_{\hr_\A} \left\{  y \right\} \Pr_{\hr_{-\A}, \nsi^{(1)}} [  \langle \hr_{-\A}, \wsp_{-\A} \rangle  + \bw^{(0)} + \nsi^{(1)}  \in (0,1)]  
    \\
    & = 0.
\end{align}
Therefore, $|T_b | \le O(\epsdev)$.
\end{proof}

\begin{lemma} \label{lem:gradient-feature-a}
We have
\begin{align}
    \frac{\partial}{\partial \aw_i}  \riskb_\Ddist(\g^{(0)}; \sigmansi^{(1)}) = -  T_a
\end{align}
where $|T_a| \le O(\max_\ell \wsp^{(0)}_{i,\ell})$. So if $w_i^{(0)} \in \mathcal{G}(\delta)$, $|T_a| \le O(\sigmaw\sqrt{\log(\mDict\nw/\delta)})$.
\end{lemma}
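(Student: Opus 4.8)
The plan is to follow the same template as the proofs of Lemma~\ref{lem:gradient-feature-full} and Lemma~\ref{lem:gradient-feature-b}, though the argument here is cleaner because the relevant independence is \emph{exact} rather than approximate. Fixing a neuron index $i$ and omitting it from the parameters, I would first invoke the expression for $\partial \riskb_\Ddist(\g;\sigmansi)/\partial \aw$ from Lemma~\ref{lem:graident}, and use that the unbiased initialization gives $\g^{(0)}(\x;\nsi^{(1)})=0$, so the factor $\Id[y\g^{(0)}(\x;\nsi^{(1)})\le 1]$ equals $1$. This yields $\partial \riskb_\Ddist(\g^{(0)};\sigmansi^{(1)})/\partial \aw = -T_a$ with
\[
T_a = \E_{(\x,y)\sim\Ddist,\,\nsi^{(1)}}\left\{ y\,\act\left(\langle \w^{(0)},\x\rangle + \bw^{(0)} + \nsi^{(1)}\right)\right\}.
\]

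Next I would split $\langle \w^{(0)},\x\rangle = \langle \hr,\wsp\rangle = \langle \hr_\A,\wsp_\A\rangle + \langle \hr_{-\A},\wsp_{-\A}\rangle$ and compare the integrand with the ``background-only'' quantity $Z := \act(\langle \hr_{-\A},\wsp_{-\A}\rangle + \bw^{(0)} + \nsi^{(1)})$. Since $y$ is a function of $\hr_\A$ only, while $\hr_{-\A}$ and $\nsi^{(1)}$ are independent of $\hr_\A$ by (A2) and the independence of the injected noise, $Z$ is independent of $y$, so $\E[yZ]=\E[y]\,\E[Z]=0$ by (A0). Writing $W := \act(\langle \hr,\wsp\rangle + \bw^{(0)} + \nsi^{(1)})$ and using $|y|\le 1$ together with the $1$-Lipschitzness of the truncated ReLU $\act$,
\[
|T_a| = \left|\E\left[y(W-Z)\right]\right| \le \E\left|W-Z\right| \le \E\left|\langle \hr_\A,\wsp_\A\rangle\right| \le \sum_{j\in\A}|\wsp_j|\,\E|\hr_j|.
\]

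Then I would bound $\E|\hr_j|$ for $j\in\A$: since $\hro_j\in\{0,1\}$ we have $\E|\hro_j-\E[\hro_j]| = 2\Pr[\hro_j=1]\Pr[\hro_j=0]\le 1/2$, hence $\E|\hr_j|\le 1/(2\sigmax)$, giving $|T_a|\le \kA\max_\ell|\wsp_\ell|/(2\sigmax)$. To absorb the $\kA$ factor I would note $\sigmax^2 = \sum_{\ell=1}^{\mDict}\textrm{Var}(\hro_\ell) \ge (\mDict-\kA)\pn(1-\pn)$, so the assumption $\pn=\Omega(\kA^2/\mDict)$ (with $\pn\le 1/2$) forces $\sigmax=\Omega(\kA)$, and therefore $|T_a| = O(\max_\ell|\wsp_\ell|)$; for $\w^{(0)}\in\mathcal{G}_\w(\delta)$ the definition of $\mathcal{G}_\w(\delta)$ then gives $\max_\ell|\wsp_\ell|\le\sigmaw\sqrt{2\log(\mDict\nw/\delta)}$, yielding the stated consequence. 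I do not anticipate a real obstacle here; the only two points needing a bit of care are making the exact identity $\E[yZ]=0$ rigorous (which rests on $Z$ depending on the input only through $\hr_{-\A}$, and $\hr_{-\A}\perp\hr_\A$), and checking that $\kA/\sigmax=O(1)$, which is where the parameter constraints of Theorem~\ref{thm:main} quietly enter.
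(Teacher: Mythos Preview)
Your proof is correct and, in fact, slightly more streamlined than the paper's. Both arguments begin identically (unbiased initialization kills the indicator, leaving $T_a = \E[y\,\act(\langle\hr,\wsp\rangle + \bw^{(0)}+\nsi^{(1)})]$), but then diverge in how they exploit the independence structure.

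The paper uses a coupling/symmetrization argument: it writes $T_a = \tfrac12(\E[\,\cdot\mid y=1]-\E[\,\cdot\mid y=-1])$ via (A0), introduces an independent copy $\hr'_\A$ with label $y'=-1$, and uses $1$-Lipschitzness of $\act$ to arrive at $|T_a|\le \E_{\x}|\langle \w^{(0)},\x\rangle|$ (involving \emph{all} of $\hr$), then applies Cauchy--Schwarz and bounds the full second moment $\E\langle\w^{(0)},\x\rangle^2$. Your approach instead subtracts the background-only quantity $Z=\act(\langle\hr_{-\A},\wsp_{-\A}\rangle+\bw^{(0)}+\nsi^{(1)})$ and uses the exact identity $\E[yZ]=0$ (from $Z\perp y$ and $\E[y]=0$), so that Lipschitzness gives the tighter intermediate bound $|T_a|\le \E|\langle\hr_\A,\wsp_\A\rangle|$ depending only on the $\A$-coordinates. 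This avoids both the coupling step and Cauchy--Schwarz.

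Both routes ultimately need $\kA/\sigmax=O(1)$ to close: the paper needs it to control the cross terms $\sum_{j\neq\ell:\,j,\ell\in\A}\E|\hr_j\hr_\ell|$ in the second-moment bound, while you need it to absorb the explicit factor $\kA/(2\sigmax)$. Your observation that $\pn=\Omega(\kA^2/\mDict)$ with $\pn\le 1/2$ forces $\sigmax^2\ge (\mDict-\kA)\pn(1-\pn)=\Omega(\kA^2)$ is exactly the right way to justify this, and you are right that this is where the ambient parameter assumptions of Theorem~\ref{thm:main} enter (the paper's proof leaves this implicit in its ``$1+O(1)$'' step).
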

\begin{proof}[Proof of Lemma~\ref{lem:gradient-feature-a}]
Consider one neuron index $i$ and omit the subscript $i$ in the parameters.
Since the unbiased initialization leads to $\g^{(0)}(\x; \nsi^{(1)})=0$, 
we have 
\begin{align}
    & \quad \frac{\partial}{\partial \aw}  \riskb_\Ddist(\g^{(0)}; \sigmansi^{(1)})  
    \\ 
    & = - \E_{(\x,y) \sim \Ddist} \left\{   y \Id[y \g^{(0)}(\x; \nsi^{(1)}) \le 1] \E_{\nsi^{(1)}}\act(\langle \w^{(0)}, \x \rangle  + \bw^{(0)} + \nsi^{(1)}) \right\}
    \\
    & = - \underbrace{ \E_{(\x,y) \sim \Ddist, \nsi^{(1)}} \left\{ y  \act(\langle \w^{(0)}, \x \rangle + \bw^{(0)} + \nsi^{(1)}) \right \}  }_{:= T_a}.
\end{align} 
Let $\hr'_\A$ be an independent copy of $\hr_\A$, $\hr'$ be the vector obtained by replacing in $\hr$ the entries $\hr_\A$ with $\hr'_\A$, and let $x' = \Dict \hr'$ and its label is $y'$. Then
\begin{align}
    |T_a| & = \left| \E_{\hr_\A} \left\{ y  \E_{\hr_{-\A}, \nsi^{(1)}} \act(\langle \w^{(0)}, \x \rangle + \bw^{(0)} + \nsi^{(1)}) \right \} \right |
    \\
    & \le \frac{1}{2} \Bigg| \E_{\hr_\A} \left\{  \E_{\hr_{-\A}, \nsi^{(1)}} \act(\langle \w^{(0)}, \x \rangle + \bw^{(0)} + \nsi^{(1)}) | y = 1\right \} 
    \\
    & \quad - \E_{\hr_\A} \left\{  \E_{\hr_{-\A}, \nsi^{(1)}} \act(\langle \w^{(0)}, \x \rangle + \bw^{(0)} + \nsi^{(1)}) | y = -1 \right \} \Bigg |
    \\
    & \le \frac{1}{2} \Bigg| \E_{\hr_\A} \left\{  \E_{\hr_{-\A}, \nsi^{(1)}} \act(\langle \w^{(0)}, \x \rangle + \bw^{(0)} + \nsi^{(1)}) | y = 1\right \} 
    \\
    & \quad - \E_{\hr'_\A} \left\{  \E_{\hr_{-\A}, \nsi^{(1)}} \act(\langle \w^{(0)}, \x' \rangle + \bw^{(0)} + \nsi^{(1)}) | y' = -1 \right \} \Bigg |.
\end{align}
Since $\act$ is 1-Lipschitz,
\begin{align}
    |T_a| & \le \frac{1}{2}  \E_{\hr_\A, \hr'_\A} \left\{ \E_{\hr_{-\A} } \left| \langle \w^{(0)}, \x \rangle  - \langle \w^{(0)}, \x' \rangle  \right | | y=1, y'=-1 \right\}
    \\
    & \le \frac{1}{2}   \E_{\hr_{-\A} } \left( \E_{\hr_\A} \left\{ \left| \langle \w^{(0)}, \x \rangle\right| | y=1 \right\} + \E_{\hr'_\A} \left\{ \left| \langle \w^{(0)}, \x' \rangle  \right | | y'=-1 \right\} \right)
    \\
    & =  \E_{\hr_{-\A}, \hr_\A}  \left| \langle \w^{(0)}, \x \rangle\right|  
    \\
    & =  \E_{\x} \left |\langle \w^{(0)}, \x \rangle    \right |
    \\
    & \le \sqrt{ \E_{\x} \langle \w^{(0)}, \x \rangle^2  } 
    \\
    & \le \max_\ell \wsp^{(0)}_{i,\ell} \sqrt{ \E_{\x} \left( \sum_{\ell \in [\mDict]} \hr_\ell^2 + \sum_{j \neq \ell: j, \ell \in \A } |\hr_j \hr_\ell| \right) } 
    \\
    & \le \max_\ell \wsp^{(0)}_{i,\ell} \sqrt{ \E_{\x} \left( 1 + O(1) \right) } 
    \\
    & = \Theta(\max_\ell \wsp^{(0)}_{i,\ell}).
\end{align}
\end{proof}

With the bounds on the gradient, we now summarize the results for the weights after the first gradient step. 
\begin{lemma} \label{lem:first_step}
Set 
\[
\lambdaw^{(1)} = 1/(2\eta^{(1)}), \lambdaa^{(1)} = \lambdab^{(1)} = 0, \sigmansi^{(1)} = 1/\kA^{3/2}.
\]
Fix $\delta \in (0,1)$ and suppose $\w^{(0)}_i \in \mathcal{G}_\w(\delta), \bw^{(0)}_i \in \mathcal{G}_\bw(\delta)$ for all $i \in [2\nw]$. If $\pn = \Omega(\kA^2/\mDict)$, $\kA = \Omega(\log^2(\mDict\nw/\delta))$,   
then for all $i \in [m]$,  $\w^{(1)}_i = \sum_{\ell=1}^\mDict \wsp_{i,\ell}^{(1)} \Dict_\ell$ satisfying
\begin{itemize}
    \item if $\ell \in A$, then  
    $|\wsp_{i,\ell}^{(1)} - \eta^{(1)} \aw^{(0)}_i  \beta \pp/\sigmax| \le O\left( \frac{|\eta^{(1)} \aw^{(0)}_i | \epsdev}{\sigmax} \right)$,
    where $\beta \in [\Omega(1), 1]$ and depends only on $\w_i^{(0)}, \bw_i^{(0)}$; 
    \item if $\ell \not\in \A$, then  
    $|\wsp_{i,\ell}^{(1)}| \le O\left( \sigmahr^2 |\eta^{(1)} \aw^{(0)}_i | \epsdev \sigmax   \right)$;  
\end{itemize}
and  
\begin{itemize}
    \item $\bw^{(1)}_i = \bw^{(0)}_i + \eta^{(1)} \aw^{(0)}_i  T_b$ where 
    $|T_b| = O\left(\epsdev   \right)$;   
    \item $\aw^{(1)}_i = \aw^{(0)}_i + \eta^{(1)} T_a$ where $|T_a| = O(\sigmaw\sqrt{\log(\mDict\nw/\delta)})$.
\end{itemize}
\end{lemma}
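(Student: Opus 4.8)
This statement is essentially a bookkeeping step: it combines the three gradient computations already established (Lemma~\ref{lem:gradient-feature-full}, Lemma~\ref{lem:gradient-feature-b}, Lemma~\ref{lem:gradient-feature-a}) with the explicit choice of first-step hyperparameters, so the plan is to substitute and read off the components. First I would check that the hypotheses of the gradient lemmas are met: the conditions on $\pn$ and $\kA$ are assumed verbatim here, the initialization assumptions $\w_i^{(0)}\in\mathcal{G}_\w(\delta)$, $\bw_i^{(0)}\in\mathcal{G}_\bw(\delta)$ are assumed, and $\sigmansi^{(1)} = 1/\kA^{3/2} < 1/\kA$ since $\kA\ge 1$, so Lemma~\ref{lem:gradient-feature-full} applies (and the others have no extra requirements).

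Next I would handle the first-layer weights. Writing out the gradient update at $t=1$, $\w_i^{(1)} = \w_i^{(0)} - \eta^{(1)}\big(\tfrac{\partial}{\partial\w_i}\riskb_\Ddist(\g^{(0)};\sigmansi^{(1)}) + 2\lambdaw^{(1)}\w_i^{(0)}\big)$, and since $\lambdaw^{(1)} = 1/(2\eta^{(1)})$ the regularization term equals $\tfrac{1}{\eta^{(1)}}\w_i^{(0)}$, which exactly cancels $\w_i^{(0)}$. Hence $\w_i^{(1)} = -\eta^{(1)}\tfrac{\partial}{\partial\w_i}\riskb_\Ddist(\g^{(0)};\sigmansi^{(1)})$, and plugging in Lemma~\ref{lem:gradient-feature-full} gives $\w_i^{(1)} = \eta^{(1)}\aw_i^{(0)}\sum_{j=1}^\mDict \Dict_j T_j$. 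Because $\Dict$ is orthonormal, $\wsp_{i,\ell}^{(1)} = \langle \Dict_\ell, \w_i^{(1)}\rangle = \eta^{(1)}\aw_i^{(0)} T_\ell$; the two bullet points on $\wsp_{i,\ell}^{(1)}$ then follow immediately from the two cases ($\ell\in\A$ and $\ell\notin\A$) of the bound on $T_\ell$ in Lemma~\ref{lem:gradient-feature-full}, carrying through the factor $|\eta^{(1)}\aw_i^{(0)}|$.

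For the bias and second-layer weight, the regularization coefficients $\lambdab^{(1)}$ and $\lambdaa^{(1)}$ are zero, so the updates are simply $\bw_i^{(1)} = \bw_i^{(0)} - \eta^{(1)}\tfrac{\partial}{\partial\bw_i}\riskb_\Ddist(\g^{(0)};\sigmansi^{(1)})$ and $\aw_i^{(1)} = \aw_i^{(0)} - \eta^{(1)}\tfrac{\partial}{\partial\aw_i}\riskb_\Ddist(\g^{(0)};\sigmansi^{(1)})$. Substituting Lemma~\ref{lem:gradient-feature-b} ($\tfrac{\partial}{\partial\bw_i}\riskb_\Ddist = -\aw_i^{(0)}T_b$ with $|T_b|\le O(\epsdev)$) and Lemma~\ref{lem:gradient-feature-a} ($\tfrac{\partial}{\partial\aw_i}\riskb_\Ddist = -T_a$ with $|T_a|\le O(\sigmaw\sqrt{\log(\mDict\nw/\delta)})$, using $\w_i^{(0)}\in\mathcal{G}_\w(\delta)$) yields the last two bullet points directly. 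The same computation applies to the mirrored neurons $i\in\{\nw+1,\dots,2\nw\}$ since they differ only in the sign of $\aw_i^{(0)}$. There is no real obstacle here — the only things to be careful about are the exact cancellation in the $\w_i$ update produced by the choice $\lambdaw^{(1)}=1/(2\eta^{(1)})$, and verifying that $\sigmansi^{(1)}=1/\kA^{3/2}$ satisfies the $<1/\kA$ requirement of Lemma~\ref{lem:gradient-feature-full}.
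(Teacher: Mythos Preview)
Your proposal is correct and follows essentially the same approach as the paper: the paper's proof is a single line citing Lemmas~\ref{lem:gradient-feature-full}--\ref{lem:gradient-feature-a} (and Lemma~\ref{lem:probe_all}, which is redundant here since the good-initialization events are already assumed), and you have simply spelled out the substitution in detail, including the key cancellation $\w_i^{(0)} - \eta^{(1)}\cdot 2\lambdaw^{(1)}\w_i^{(0)} = 0$ that the paper leaves implicit.
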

\begin{proof}[Proof of Lemma~\ref{lem:first_step}]
This follows from Lemma~\ref{lem:probe_all} and Lemma~\ref{lem:gradient-feature-full}-\ref{lem:gradient-feature-a}.
\end{proof}

\subsection{Feature Improvement: Second Gradient Step}

We first show that with properly set $\eta^{(1)}$, for most $\x$, $|\g^{(1)}(\x; \sigmansi^{(2)})| < 1$ and thus $y\g^{(1)}(\x; \sigmansi^{(2)}) < 1$.

\begin{lemma} \label{lem:small_output} 
Fix $\delta \in (0,1)$ and suppose $\w^{(0)}_i \in \mathcal{G}_\w(\delta), \bw^{(0)}_i \in \mathcal{G}_\bw(\delta), \aw^{(0)}_i \in \mathcal{G}_\aw(\delta)$ for all $i \in [2\nw]$.
If $\pn = \Omega(\kA^2/\mDict)$, $ \kA = \Omega(\log^2 
(\mDict\nw/\delta))$,  
$\sigma_\aw \le \sigmax^2/(\pp k^2),$
$\eta^{(1)} = O\left( \frac{\pp}{\kA \nw \sigma_\aw}\right)$, and $\sigmansi^{(2)} \le 1/\kA$, then with probability $\ge 1-\exp(-\Theta( \kA))$ over $(\x, y)$, we have 
$
    y\g^{(1)}(\x; \sigmansi^{(2)}) < 1.
$
Furthermore, for any $i \in [2\nw]$, 
$\left| \langle \w_i^{(1)}, \x \rangle \right|  = \left| \langle \wsp_i^{(1)}, \hr \rangle \right| = O(\eta^{(1)} \sigmax/\pp) $, 
$\left| \langle (\wsp_i^{(1)})_{-\A}, \hr_{-\A} \rangle \right| = O(\eta^{(1)} \sigmax/\pp) $, and $\left| \bw_{i}^{(1)} - \bw_{m+i}^{(1)} \right|  = O(|\eta^{(1)} \aw^{(0)}_i | \epsdev ).$   
\end{lemma}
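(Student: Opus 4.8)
The plan is to exploit the exact antisymmetry of the paired neurons together with the smallness enforced by the aggressive first step. With $\lambdaw^{(1)}=1/(2\eta^{(1)})$ and $\lambdaa^{(1)}=\lambdab^{(1)}=0$, the first update erases $\w_i^{(0)}$ entirely, so $\w_i^{(1)}=\eta^{(1)}\aw_i^{(0)}\sum_{j}\Dict_jT_j$, $\bw_i^{(1)}=\bw_i^{(0)}+\eta^{(1)}\aw_i^{(0)}T_b$, $\aw_i^{(1)}=\aw_i^{(0)}+\eta^{(1)}T_a$, where (since $\g^{(0)}\equiv 0$) the quantities $T_j,T_b,T_a$ depend only on $\w_i^{(0)},\bw_i^{(0)}$ and are therefore shared by neurons $i$ and $m+i$. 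Because $\w_i^{(0)}=\w_{m+i}^{(0)}$, $\bw_i^{(0)}=\bw_{m+i}^{(0)}$ and $\aw_{m+i}^{(0)}=-\aw_i^{(0)}$, this yields the exact relations $\w_{m+i}^{(1)}=-\w_i^{(1)}$, $\bw_i^{(1)}-\bw_{m+i}^{(1)}=2\eta^{(1)}\aw_i^{(0)}T_b$, and $\aw_i^{(1)}+\aw_{m+i}^{(1)}=2\eta^{(1)}T_a$. The bias bound $|\bw_i^{(1)}-\bw_{m+i}^{(1)}|=O(|\eta^{(1)}\aw_i^{(0)}|\epsdev)$ is then immediate from $|T_b|=O(\epsdev)$ (Lemma~\ref{lem:gradient-feature-b}).

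Next I would bound $|\langle\w_i^{(1)},\x\rangle|=|\langle\wsp_i^{(1)},\hr\rangle|$ by splitting coordinates into $\A$ and $-\A$ and invoking the componentwise bounds of Lemma~\ref{lem:first_step}. On $\A$, $|\wsp_{i,\ell}^{(1)}|=O(|\eta^{(1)}\aw_i^{(0)}|\pp/\sigmax)$ and $|\sum_{\ell\in\A}\hr_\ell|\le\kA/\sigmax$ (since $\hro_\ell\in\{0,1\}$), giving the deterministic bound $O(|\eta^{(1)}\aw_i^{(0)}|\pp\kA/\sigmax^2)$. On $-\A$, $|\wsp_{i,\ell}^{(1)}|=O(\sigmahr^2|\eta^{(1)}\aw_i^{(0)}|\epsdev\sigmax)$, so using $\|\hr_{-\A}\|_\infty\le 1/\sigmax$ together with $(\mDict-\kA)\sigmahr^2\le 1$ (from $\sigmax^2\ge(\mDict-\kA)\pn(1-\pn)$) one already gets $|\langle(\wsp_i^{(1)})_{-\A},\hr_{-\A}\rangle|=O(|\eta^{(1)}\aw_i^{(0)}|\epsdev)$, and a Bernstein estimate over $\hr_{-\A}$ (as in the proof of Lemma~\ref{lem:gradient-feature-full}) sharpens this with probability $\ge 1-\exp(-\Omega(\kA))$; a union bound over $i\in[2\nw]$ is harmless since $\kA=\Omega(\log^2(\mDict\nw/\delta))$. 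Substituting $|\aw_i^{(0)}|\le\sigma_\aw\sqrt{2\log(\nw/\delta)}$, $\sigma_\aw\le\sigmax^2/(\pp\kA^2)$, and $\sigmax^2=\Omega(\kA^2)$ (from $\pn=\Omega(\kA^2/\mDict)$) collapses both pieces to $O(\eta^{(1)}\sigmax/\pp)$, which is the claimed bound for both $|\langle\w_i^{(1)},\x\rangle|$ and $|\langle(\wsp_i^{(1)})_{-\A},\hr_{-\A}\rangle|$.

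Finally, to get $y\g^{(1)}(\x;\sigmansi^{(2)})<1$ it suffices to show $|\g^{(1)}(\x;\sigmansi^{(2)})|<1$. Writing $h_i=\E_{\nsi}\act(\langle\w_i^{(1)},\x\rangle+\bw_i^{(1)}+\nsi_i)$ and $h_{m+i}=\E_{\nsi}\act(-\langle\w_i^{(1)},\x\rangle+\bw_{m+i}^{(1)}+\nsi_i)$, the pair $(i,m+i)$ contributes $\aw_i^{(0)}(h_i-h_{m+i})+\eta^{(1)}T_a(h_i+h_{m+i})$ to $\g^{(1)}$. Since $\act$ is $1$-Lipschitz and $\act\in[0,1]$, $|h_i-h_{m+i}|\le 2|\langle\w_i^{(1)},\x\rangle|+|\bw_i^{(1)}-\bw_{m+i}^{(1)}|$ and $|h_i+h_{m+i}|\le 2$. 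Summing over the $\nw$ pairs and plugging in the bounds above, $|T_a|=O(\sigmaw\sqrt{\log(\mDict\nw/\delta)})$ with $\sigmaw=1/\kA$, and $\eta^{(1)}=O(\pp/(\kA\nw\sigma_\aw))$ — so that in particular $\eta^{(1)}|\aw_i^{(0)}|=O(\pp\sqrt{\log(\mDict\nw/\delta)}/(\kA\nw))$ independently of $\sigma_\aw$ — every resulting term carries a $1/\nw$ that cancels the number of pairs, leaving a total of order $\mathrm{polylog}(\mDict\nw/\delta)/\kA^{\Omega(1)}$, which is $o(1)$ under $\kA=\Omega(\log^2(\mDict\nw/\delta))$.

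The main obstacle is exactly this last bookkeeping: one must check that the $\nw$-fold sum of the near-cancelling pair contributions stays below $1$, which pins down the precise scalings of $\eta^{(1)},\sigma_\aw,\sigma_\bw,\sigma_\w$ against $\pp,\kA,\nw,\sigmax$ and uses $\kA=\Omega(\log^2(\mDict\nw/\delta))$ and $\sigmax^2=\Omega(\kA^2)$ to absorb the logarithmic factors coming from the Gaussian initialization tails (through the typical sets $\mathcal{G}_\w,\mathcal{G}_\bw,\mathcal{G}_\aw$) and from $\epsdev$. The structural facts (exact antisymmetry $\w_{m+i}^{(1)}=-\w_i^{(1)}$, near-antisymmetry of $\aw_i^{(1)},\aw_{m+i}^{(1)}$, near-equality of $\bw_i^{(1)},\bw_{m+i}^{(1)}$) and the coordinate splitting of $\langle\w_i^{(1)},\x\rangle$ are routine once Lemma~\ref{lem:first_step} is in hand.
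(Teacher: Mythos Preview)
Your proposal is correct and follows essentially the same approach as the paper: exploit the exact antisymmetry of the paired neurons from initialization to write $\w_{m+i}^{(1)}=-\w_i^{(1)}$ and $\aw_i^{(1)}+\aw_{m+i}^{(1)}=2\eta^{(1)}T_a$, bound $|\langle\w_i^{(1)},\x\rangle|$ by splitting into $\A$ and $-\A$ coordinates via Lemma~\ref{lem:first_step}, and then control $|\g^{(1)}|$ through the $1$-Lipschitz property of $\act$. Two minor differences worth noting: your pair decomposition $\aw_i^{(0)}(h_i-h_{m+i})+\eta^{(1)}T_a(h_i+h_{m+i})$ together with $|h_i+h_{m+i}|\le 2$ is slightly cleaner than the paper's (which instead bounds $|h_i|$ via $\act(z)\le z_+$), and your deterministic bound $|\langle(\wsp_i^{(1)})_{-\A},\hr_{-\A}\rangle|\le O(|\eta^{(1)}\aw_i^{(0)}|\epsdev)$ from $(\mDict-\kA)\sigmahr^2\le 1$ actually suffices and is simpler than the paper's route, which conditions on the high-probability event that at most $2\pn(\mDict-\kA)+\kA$ of the $\hro_j$ ($j\not\in\A$) equal~$1$ before bounding the $-\A$ contribution.
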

\begin{proof}[Proof of Lemma~\ref{lem:small_output}]
Note that $\w^{(0)}_{i} = \w^{(0)}_{m+i}$, $\bw^{(0)}_{i} = \bw^{(0)}_{m+i}$, and $\aw^{(0)}_{i} = -\aw^{(0)}_{m+i}$. 
Then the gradient for $\w_{i}$ is the negation of that for $\w_{m+i}$,
the gradient for $\bw_{i}$ is the negation of that for $\bw_{m+i}$,
and the gradient for $\aw_{i}$ is the same as that for $\aw_{m+i}$.
With probability $ \ge 1- \exp(-\Theta(\max\{2\pn(\mDict-\kA), \kA\}))$, among all $j \not\in \A$, we have that at most $2\pn(\mDict-\kA) + \kA$ of $\hr_j$ are $(1-\pn)/\sigmax$, while the others are $-\pn/\sigmax$. For data points with $\hr$ satisfying this, we have:

\begin{align}
    & \quad \left| \g^{(1)}(\x; \sigmansi^{(2)})  \right|
    \\
    & = \left| \sum_{i=1}^{2\nw} \aw_i^{(1)} \E_{\nsi^{(2)}} \act(\langle \w_i^{(1)}, \x \rangle + \bw_i^{(1)} + \nsi^{(2)}_i) \right|
    \\
    & = \left| \sum_{i=1}^{\nw} \left( \aw_i^{(1)} \E_{\nsi^{(2)}} \act(\langle \w_i^{(1)}, \x \rangle + \bw_i^{(1)} + \nsi^{(2)}_i) + \aw_{m+i}^{(1)} \E_{\nsi^{(2)}} \act(\langle \w_{m+i}^{(1)}, \x \rangle + \bw_{m+i}^{(1)} + \nsi^{(2)}_{m+i}) \right) \right|
    \\
    & \le \left|\sum_{i=1}^{\nw} \left( \aw_i^{(1)} \E_{\nsi^{(2)}} \act(\langle \w_i^{(1)}, \x \rangle + \bw_i^{(1)} + \nsi^{(2)}_i) + \aw_{m+i}^{(1)} \E_{\nsi^{(2)}} \act(\langle \w_i^{(1)}, \x \rangle + \bw_i^{(1)} + \nsi^{(2)}_i) \right) \right| 
    \\
    & + \left|\sum_{i=1}^{\nw} \left( - \aw_{m+i}^{(1)} \E_{\nsi^{(2)}} \act(\langle \w_i^{(1)}, \x \rangle + \bw_i^{(1)} + \nsi^{(2)}_i) + \aw_{m+i}^{(1)} \E_{\nsi^{(2)}} \act(\langle \w_{m+i}^{(1)}, \x \rangle + \bw_{m+i}^{(1)} + \nsi^{(2)}_{i}) \right) \right|.
\end{align}    
Then we have
\begin{align}
   \left| \g^{(1)}(\x; \sigmansi^{(2)})  \right| 
    & \le \sum_{i=1}^{\nw} \left| 2\eta^{(1)} T_a \E_{\nsi^{(2)}} \act(\langle \w_i^{(1)}, \x \rangle + \bw_i^{(1)} + \nsi^{(2)}_i)\right|
    \\
    & + 
    \sum_{i=1}^{\nw} \left| \aw_{m+i}^{(1)} \right| \left( \left| \langle \w_{i}^{(1)} - \w_{m+i}^{(1)}, \x\rangle \right| + \left| \bw_{i}^{(1)} - \bw_{m+i}^{(1)} \right|\right)
    \\
    & \le \sum_{i=1}^{\nw} \left| 2\eta^{(1)} T_a \right| \left(\left| \langle \w_i^{(1)}, \x \rangle + \bw_i^{(1)} \right| + \E_{\nsi^{(2)}} \left| \nsi^{(2)}_i \right| \right) 
    \\
    & + \sum_{i=1}^{\nw} \left| \aw_{m+i}^{(1)} \right| \left( \left| \langle \w_{i}^{(1)} - \w_{m+i}^{(1)}, \x \rangle \right| + \left| \bw_{i}^{(1)} - \bw_{m+i}^{(1)} \right|\right).
\end{align}
We have $|T_a| = O(\sigmaw\sqrt{\log(\mDict\nw/\delta)})$, and 
\begin{align}
    \left| \langle \w_i^{(1)}, \x \rangle \right| 
    & \le O(|\eta^{(1)} \aw^{(0)}_i|) \left(\beta \pp /\sigmax + \epsdev/\sigmax \right) \frac{k}{\sigmax} 
    \\
    & + O(|\eta^{(1)} \aw^{(0)}_i| \sigmahr^2\epsdev \sigmax) \left( (2\pn (\mDict - \kA) + \kA) (1-\pn)/\sigmax + \pn \mDict/\sigmax \right)
    \\
    & \le O(|\eta^{(1)} \aw^{(0)}_i|) \left( \kA\pp/\sigmax^2  + \epsdev\kA/\sigmax^2 + (\kA + \pn \mDict) \sigmahr^2 \epsdev  \right)
    \\
    & \le O(\eta^{(1)}(1+ \pn\sigmax)/\pp).    \label{eq:max_inner}
    \\
    \left| \bw_i^{(1)} \right| 
    & \le \left| \bw_i^{(0)} \right| + \left| \eta^{(1)} \aw_i^{(0)} T_b \right| 
    \\
    & \le \frac{\sqrt{\log(\nw/\delta)}}{k^2} + \left| \eta^{(1)} \aw_i^{(0)} \frac{\epsdev}{\sigmax}\right|. 
    \\
    \E_{\nsi^{(2)}} \left| \nsi^{(2)}_i \right|  & \le O(\sigmansi^{(2)}).
\end{align}
\begin{align} 
    |\aw^{(1)}_{m+i}| & \le |\aw^{(0)}_i| + |\eta^{(1)} T_a| \le |\aw^{(0)}_i| + O(\eta^{(1)} \sigmaw \sqrt{\log(\mDict\nw/\delta)}).
    \\
    \left| \langle \w_{i}^{(1)} - \w_{m+i}^{(1)}, \x \rangle \right| 
    & = 2 \left| \langle \w_{i}^{(1)} , \x \rangle \right| = O(\eta^{(1)}(1+\pn\sigmax)/\pp). 
    \\
    \left| \bw_{i}^{(1)} - \bw_{m+i}^{(1)} \right| 
    & = 2 |\eta^{(1)} \aw^{(0)}_i T_b| = O(|\eta^{(1)} \aw^{(0)}_i | \epsdev ). 
\end{align}
Then we have
\begin{align}
    \left| \g^{(1)}(\x; \sigmansi^{(2)})  \right| 
    & \le O\left(\nw \eta^{(1)} \sigmaw\sqrt{\log(\mDict\nw/\delta)} \right) \left( \frac{\eta^{(1)}}{\pp} + \frac{\sqrt{\log(\nw/\delta)}}{k^2} + \left| \eta^{(1)} \aw_i^{(0)} \frac{\epsdev}{\sigmax}\right| + \sigmansi^{(2)}\right)
    \\
    & + O\left( \nw (|\aw^{(0)}_{i}| + \eta^{(1)} \sigmaw\sqrt{\log(\mDict\nw/\delta)}) \right) \left( \frac{\eta^{(1)}}{\pp} + \left| \eta^{(1)} \aw_i^{(0)} \frac{\epsdev}{\sigmax}\right| \right) 
    \\
    & = O\left( \nw \eta^{(1)} \sigmaw   \frac{\log(\mDict\nw/\delta)}{k} + \nw |\aw^{(0)}_{i}| \left( \frac{\eta^{(1)}}{\pp} + \left| \eta^{(1)} \aw_i^{(0)} \frac{\epsdev}{\sigmax}\right| \right) \right) 
    \\
    & = O\left( \nw \eta^{(1)} \sigmaw   \frac{\log(\mDict\nw/\delta)}{k} + \nw |\aw^{(0)}_{i}|  \frac{\eta^{(1)}}{\pp} + \nw \sigma_\aw  \eta^{(1)} \frac{\kA}{\pp}  \right)
    \\
    &  < 1.
\end{align}
Then $\left| y\g^{(1)}(\x; \sigmansi^{(2)})  \right| < 1$.
Finally, the statement on $\left| \langle (\wsp_i^{(1)})_{-\A}, \hr_{-\A} \rangle \right|$ follows from a similar calculation on $ \left| \langle \w_i^{(1)}, \x \rangle \right|  = \left| \langle \wsp_i^{(1)}, \hr \rangle \right|$.
\end{proof}

We are now ready to analyze the gradients in the second gradient step.

\begin{lemma} \label{lem:gradient-feature-2-full}
Fix $\delta \in (0,1)$ and suppose $\w^{(0)}_i \in \mathcal{G}_\w(\delta), \bw^{(0)}_i \in \mathcal{G}_\bw(\delta), \aw^{(0)}_i \in \mathcal{G}_\aw(\delta)$ for all $i \in [2\nw]$.
Let $\epsdevs := O\left( \frac{\eta^{(1)} |\aw_i^{(0)}| k (\pp+\epsdev)}{\sigmax^2 \sigmansi^{(2)}} \right) + \exp(-\Theta(k))$. 
If $ \kA = \Omega(\log^2 
(\mDict\nw/\delta))$ and $\kA = O(\mDict)$, 
$\sigma_\aw \le \sigmax^2/(\pp k^2),$
$\eta^{(1)} = O\left( \frac{\pp}{\kA \nw \sigma_\aw}\right)$, and $\sigmansi^{(2)} = 1/k^{3/2}$, then
\begin{align}
    \frac{\partial}{\partial \w_i}  \risk_\Ddist(\g^{(1)}; \sigmansi^{(2)}) = -\aw_i^{(1)} \sum_{j=1}^{\mDict} \Dict_j T_j
\end{align}
where $T_j$ satisfies:
\begin{itemize}
    \item if $j \in A$, then $\left| T_j -  \beta \pp/\sigmax  \right| 
     \le  O(\epsdevs /\sigmax+ \eta^{(1)}/\sigmansi^{(2)}  + \eta^{(1)} |\aw_i^{(0)}| \epsdev/(\sigmax \sigmansi^{(2)}) )$, where $\beta \in [\Omega(1), 1]$ and depends only on $\w_i^{(0)}, \bw_i^{(0)}$; 
    \item if $j \not\in \A$, then $|T_j| \le \frac{1}{\sigmax} \exp(-\Theta(\kA)) + O( \sigmahr^2 \epsdevs \sigmax)$.  
\end{itemize}
\end{lemma}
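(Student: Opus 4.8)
The plan is to follow the structure of the proof of Lemma~\ref{lem:gradient-feature-full}, with one essential change: the anticoncentration that there came from the spread $\|\wsp_{-\A}\|_2 = \Theta(\sigmaw)$ of the initial background weights must now come from the injected Gaussian noise $\nsi^{(2)}_i$, because after the aggressive update $\lambdaw^{(1)}=1/(2\eta^{(1)})$ the components $\wsp_{i,\ell}^{(1)}$ for $\ell\notin\A$ have been shrunk to $O(\sigmahr^2|\eta^{(1)}\aw_i^{(0)}|\epsdev\sigmax)$ (Lemma~\ref{lem:first_step}) and no longer carry enough variance. First I would use Lemma~\ref{lem:graident} to write the $\w_i$-gradient at $\g^{(1)}$, then invoke Lemma~\ref{lem:small_output}: with probability $\ge 1-\exp(-\Theta(\kA))$ over $(\x,y)$ we have $y\g^{(1)}(\x;\nsi^{(2)})<1$, so the margin indicator $\Id[y\g^{(1)}(\x;\nsi^{(2)})\le 1]$ may be dropped at a cost of $\exp(-\Theta(\kA))/\sigmax$ (the rest of the integrand is bounded by $1/\sigmax$), which I fold into $\epsdevs$. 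Projecting onto $\Dict_j$ leaves $T_j=\E_{(\x,y),\nsi^{(2)}}\{y\hr_j\,\Id[\langle\hr,\wsp_i^{(1)}\rangle+\bw_i^{(1)}+\nsi_i^{(2)}\in(0,1)]\}$ up to that error.

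For $j\in\A$, I would condition on $\hr_\A$ and remove its influence on the activation probability using Gaussian anticoncentration $\Pr[\nsi_i^{(2)}\in I]\le O(|I|/\sigmansi^{(2)})$ applied to an interval of length $2|\langle\hr_\A,\wsp_{i,\A}^{(1)}\rangle|$; by Lemma~\ref{lem:first_step}, $|\langle\hr_\A,\wsp_{i,\A}^{(1)}\rangle|\le \kA\cdot(1/\sigmax)\cdot O(|\eta^{(1)}\aw_i^{(0)}|(\pp+\epsdev)/\sigmax)$, so dividing by $\sigmansi^{(2)}$ gives precisely the first term of $\epsdevs$. The tail $\langle\hr_{-\A},\wsp_{i,-\A}^{(1)}\rangle+\bw_i^{(1)}+\nsi_i^{(2)}\ge 1$ contributes only $\exp(-\Theta(\kA))$ (Bernstein on the tiny-variance sum plus a Gaussian tail bound, using $\sigmansi^{(2)}=1/\kA^{3/2}$ and $\bw_i^{(1)}=O(\sqrt{\log(\mDict\nw/\delta)}/\kA^2)+O(\eta^{(1)}|\aw_i^{(0)}|\epsdev)$). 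This yields $T_j=\E_{\hr_\A}[y\hr_j]\cdot\beta+(\text{error})=(\pp/\sigmax)\beta+(\text{error})$ with $\beta$ the activation probability with $\hr_\A$ removed; a further replacement $\wsp_{i,-\A}^{(1)}\mapsto 0,\ \bw_i^{(1)}\mapsto\bw_i^{(0)}$ — again by Gaussian anticoncentration against the shifts bounded in Lemmas~\ref{lem:small_output} and~\ref{lem:first_step}, costing $O(\eta^{(1)}/\sigmansi^{(2)}+\eta^{(1)}|\aw_i^{(0)}|\epsdev/(\sigmax\sigmansi^{(2)}))$ — makes $\beta$ depend only on $\w_i^{(0)},\bw_i^{(0)}$, and since $\bw_i^{(0)}=o(\sigmansi^{(2)})$ under $\kA=\Omega(\log^2(\mDict\nw/\delta))$, $\beta=\Pr[\bw_i^{(0)}+\nsi_i^{(2)}\in(0,1)]=\Omega(1)$.

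For $j\notin\A$, set $B=\A\cup\{j\}$ and repeat the argument conditioning on $\hr_B$: the activation probability is made independent of $\hr_B$ by Gaussian anticoncentration against $|\langle\hr_B,\wsp_{i,B}^{(1)}\rangle|\le |\langle\hr_\A,\wsp_{i,\A}^{(1)}\rangle|+|\hr_j\wsp_{i,j}^{(1)}|$, where the first term is as above and the second is $O(\sigmahr^2|\eta^{(1)}\aw_i^{(0)}|\epsdev)$, so the total error is $O(\epsdevs)$. Then $\E_{\hr_B}[y\hr_j]=\E_{\hr_\A}[y]\,\E_{\hr_j}[\hr_j]=0$ kills the main term, and bounding the residual by $\E_{\hr_B}|y\hr_j|\cdot O(\epsdevs)=O(\sigmahr^2\sigmax)\cdot O(\epsdevs)$ (using $\E|y\hr_j|=\E|y|\,\E|\hr_j|=2\sigmahr^2\sigmax$ from Lemma~\ref{lem:label}) together with the $\exp(-\Theta(\kA))/\sigmax$ from the margin indicator gives $|T_j|\le (1/\sigmax)\exp(-\Theta(\kA))+O(\sigmahr^2\epsdevs\sigmax)$, as claimed.

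I expect the main obstacle — conceptual rather than technical — to be the two-sided tension in the choice of $\sigmansi^{(2)}$: it must be large enough to dominate $|\langle\hr_\A,\wsp_{i,\A}^{(1)}\rangle|$ so that $\epsdevs$ and the various anticoncentration errors stay $o(\pp)$, yet small enough that $\bw_i^{(0)}+\nsi_i^{(2)}\in(0,1)$ still holds with probability $\Omega(1)$ and that $|\g^{(1)}(\x;\nsi^{(2)})|<1$ as required by Lemma~\ref{lem:small_output}. Verifying that the choice $\sigmansi^{(2)}=1/\kA^{3/2}$ (with the prescribed bound on $\eta^{(1)}$) meets all three, and that the accumulated errors regroup exactly into $\epsdevs/\sigmax$, $\eta^{(1)}/\sigmansi^{(2)}$, and $\eta^{(1)}|\aw_i^{(0)}|\epsdev/(\sigmax\sigmansi^{(2)})$, is the bulk of the (otherwise routine) bookkeeping.
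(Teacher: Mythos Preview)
Your proposal is correct and follows essentially the same argument as the paper: drop the margin indicator via Lemma~\ref{lem:small_output} at cost $\exp(-\Theta(\kA))/\sigmax$, then use Gaussian anticoncentration of $\nsi_i^{(2)}$ (in place of the Berry--Esseen argument from the first step) to peel off $\langle\hr_\A,\wsp_{i,\A}^{(1)}\rangle$ (respectively $\langle\hr_B,\wsp_{i,B}^{(1)}\rangle$), and finally reduce $\beta$ to $\Pr[\bw_i^{(0)}+\nsi_i^{(2)}\in(0,1)]$ by two more anticoncentration shifts bounded via Lemmas~\ref{lem:first_step} and~\ref{lem:small_output}. The only cosmetic difference is that the paper's $\beta$ ends up depending on $\bw_i^{(0)}$ alone (which is of course still ``only on $\w_i^{(0)},\bw_i^{(0)}$'').
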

\begin{proof}[Proof of Lemma~\ref{lem:gradient-feature-2-full}]
Consider one neuron index $i$ and omit the subscript $i$ in the parameters.
By Lemma~\ref{lem:small_output}, $\Pr[y \g^{(1)}(\x; \nsi^{(2)}) > 1] \le \exp(-\Theta(\kA))$. Let $\Id_\x = \Id[y \g^{(1)}(\x; \nsi^{(2)}) \le 1]$.
\begin{align}
    & \quad \frac{\partial}{\partial \w}  \riskb_\Ddist(\g^{(1)}; \sigmansi^{(2)})  
    \\ 
    & = - \aw^{(1)} \E_{(\x,y) \sim \Ddist} \left\{ y \Id_\x \E_{\nsi^{(2)}} \Id[\langle \w^{(1)}, \x \rangle + \bw^{(1)} + \nsi^{(2)} \in (0,1)] \x \right \}  
    \\
    & = - \aw^{(1)} \sum_{j=1}^{\mDict} \Dict_j \underbrace{ \E_{(\x,y) \sim \Ddist, \nsi^{(2)}} \left\{ y  \Id_\x \Id[\langle \w^{(1)}, \x \rangle + \bw^{(1)} + \nsi^{(2)} \in (0,1)] \hr_j \right \}  }_{:= T_j}. 
\end{align} 
Let $T_{j1} := \E_{(\x,y) \sim \Ddist, \nsi^{(2)}} \left\{ y \Id[\langle \w^{(1)}, \x \rangle + \bw^{(1)} + \nsi^{(2)} \in (0,1)] \hr_j \right \}$. We have 
\begin{align}
    & \quad \left|T_j -  T_{j1}\right|
    \\
    & =  \left|\E_{(\x,y) \sim \Ddist, \nsi^{(2)}} \left\{ y  (1-\Id_\x) \Id[\langle \w^{(1)}, \x \rangle + \bw^{(1)} + \nsi^{(2)} \in (0,1)] \hr_j \right \} \right|
    \\
    & \le \frac{1}{\sigmax} \E_{(\x,y) \sim \Ddist, \nsi^{(2)}}   \left| 1-\Id_\x\right| 
    \\
    & \le \frac{1}{\sigmax} \exp(-\Theta(\kA)).
\end{align}
So it is sufficient to bound $T_{j1}$. For simplicity, we use $\wsp$ as a shorthand for $\wsp^{(1)}_i$.

First, consider $j \in \A$.  
\begin{align}
    T_{j1}
    & = \E_{(\x,y) \sim \Ddist, \nsi^{(2)}} \left\{ y  \Id[\langle \w^{(1)}, \x \rangle + \bw^{(1)} + \nsi^{(2)} \in (0,1)] \hr_j \right \}
    \\
    & = \E_{\hr_\A} \left\{ y \hr_j \Pr_{\hr_{-\A}, \nsi^{(2)}} \left[   \langle \hr, \wsp \rangle + \bw^{(1)} + \nsi^{(2)} \in (0,1)  \right] \right\}.
\end{align}
Let 
\begin{align}
    I_a  & := \Pr_{\nsi^{(2)}} \left[   \langle \hr, \wsp \rangle + \bw^{(1)} + \nsi^{(2)}\in (0,1) \right],
    \\
    I'_a & := \Pr_{\nsi^{(2)}} \left[   \langle \hr_{-\A}, \wsp_{-\A} \rangle + \bw^{(1)} + \nsi^{(2)} \in (0,1)  \right].
\end{align} 
By the property of the Gaussian $\nsi^{(2)}$, that $|\langle \hr_\A, \wsp_\A \rangle| = O(\frac{\eta^{(1)} |\aw_i^{(0)}| \kA (\pp+\epsdev)}{\sigmax^2 } ) $, and that $|\langle \hr, \wsp \rangle| = |\langle \w^{(1)}_i, \x\rangle| = O(\eta^{(1)}/\pp) < O(1/\kA)$ and $|\langle \hr_{-\A}, \wsp_{-\A} \rangle| = O(\eta^{(1)}/\pp) < O(1/\kA)$, we have
\begin{align}
    |I_a - I'_a |  
    & \le \left| \Pr_{\nsi^{(2)}} \left[   \langle \hr, \wsp \rangle + \bw^{(1)} + \nsi^{(2)} \ge 0 \right] - \Pr_{\nsi^{(2)}} \left[   \langle \hr_{-\A}, \wsp_{-\A} \rangle + \bw^{(1)} + \nsi^{(2)} \ge 0  \right] \right| 
    \\
    & + \Pr_{\nsi^{(2)}} \left[   \langle \hr, \wsp \rangle + \bw^{(1)} + \nsi^{(2)} \ge 1 \right]  + \Pr_{\nsi^{(2)}} \left[   \langle \hr_{-\A}, \wsp_{-\A} \rangle + \bw^{(1)} + \nsi^{(2)} \ge 1  \right]
    \\
    & = O\left( \frac{\eta^{(1)} |\aw_i^{(0)}| \kA (\pp+\epsdev)}{\sigmax^2 \sigmansi^{(2)}} \right) + \exp(-\Theta(k)) = O(\epsdevs).
\end{align} 
This leads to
\begin{align}
    & \quad \left| T_{j1} - \E_{\hr_\A, \hr_{-\A}}  \left\{y  \hr_j I'_a \right \} \right| 
    \\
    & \le  \E_{\hr_\A}  \left\{ \left| y  \hr_j  \right| \left| \E_{ \hr_{-\A}} (I_a - I'_a ) \right| \right \}
    \\
    & \le O(\epsdevs)  \E_{\hr_\A}  \left\{ \left| y  \hr_j   \right| \right\}
    \\
    & \le O(\epsdevs/\sigmax)
\end{align}
where the last step is from Lemma~\ref{lem:label}. 
Furthermore,
\begin{align}
    & \quad \E_{\hr_\A, \hr_{-\A}}  \left\{  y  \hr_j I'_a  \right \}
    \\
    & = \E_{\hr_\A}  \left\{  y  \hr_j \right \} \E_{ \hr_{-\A} }[I'_a]
    \\
    & = \E_{\hr_\A}  \left\{  y  \hr_j \right \} \Pr_{\hr_{-\A}, \nsi^{(2)}} \left[   \langle \hr_{-\A}, \wsp_{-\A} \rangle + \bw^{(1)} + \nsi^{(2)} \in (0,1)  \right].
\end{align} 
By Lemma~\ref{lem:small_output}, we have $|\langle \hr_{-\A}, \wsp_{-\A} \rangle| \le  O(\eta^{(1)} \sigmax/\pp)$.  Also, $|b^{(1)} - b^{(0)}| \le O(\eta^{(1)} |\aw_i^{(0)}| \epsdev)$. By the property of $\nsi^{(2)}$, 
\begin{align}
    & \quad \left| \Pr_{\nsi^{(2)}} \left[   \langle \hr_{-\A}, \wsp_{-\A} \rangle + \bw^{(1)} + \nsi^{(2)} \in (0,1)  \right] - \Pr_{ \nsi^{(2)}} \left[   \bw^{(0)} + \nsi^{(2)} \in (0,1)  \right] \right|
    \\ 
    & \le O(\eta^{(1)}\sigmax/(\pp \sigmansi^{(2)})) + O(\eta^{(1)} |\aw_i^{(0)}| \epsdev/\sigmansi^{(2)} ). 
\end{align}
On the other hand,
\begin{align}
    \beta := \Pr_{\hr_{-\A}, \nsi^{(2)}} \left[   \bw^{(0)} + \nsi^{(2)} \in (0,1)  \right] 
    & = \Pr_{\nsi^{(2)}} \left[  \nsi^{(2)} \in (-\bw^{(0)}, 1-\bw^{(0)}) \right]
    \\
    & = \Omega(1)
\end{align}
and $\beta$ only depends on $\bw^{(0)}$. 
By Lemma~\ref{lem:label}, $ \E_{\hr_\A}  \left\{ y  \hr_j \right \} = \pp/\sigmax$. Therefore, 
\begin{align}
    \left| T_{j1} -  \beta \pp /\sigmax \right| 
    & \le O(\epsdevs /\sigmax) +O(\eta^{(1)}/ \sigmansi^{(2)})  + O(\eta^{(1)} |\aw_i^{(0)}| \epsdev/(\sigmax \sigmansi^{(2)})).
\end{align}

Now, consider $j \not \in \A$. Let $B$ denote $\A \cup \{j\}$.
\begin{align}
    T_{j1} & = \E_{(\x,y) \sim \Ddist, \nsi^{(2)}} \left\{  y  \hr_j \Id \left[ \langle \hr , \wsp \rangle + \bw^{(1)} + \nsi^{(2)}  \in (0,1) \right]  \right \}
    \\
    & = \E_{\hr_B} \E_{\hr_{-B}, \nsi^{(2)}} \left\{  y  \hr_j \Id \left[ \langle \hr , \wsp \rangle + \bw^{(1)} + \nsi^{(2)}  \in (0,1) \right]  \right \}
    \\
    & = \E_{\hr_B} \left\{ y   \hr_j  \Pr_{\hr_{-B}, \nsi^{(2)}} \left[ \langle \hr, \wsp \rangle + \bw^{(1)} + \nsi^{(2)}  \in (0,1) \right]  \right \}.
\end{align}
Let 
\begin{align}
    I_b  & := \Pr_{\nsi^{(2)}} \left[ \langle \hr, \wsp \rangle + \bw^{(1)} + \nsi^{(2)} \in (0,1) \right],
    \\
    I'_b & := \Pr_{\nsi^{(2)}} \left[ \langle \hr_{-B}, \wsp_{-B} \rangle + \bw^{(1)} + \nsi^{(2)} \in (0,1) \right].
\end{align} 
Similar as above, we have $ | I_b - I'_b| \le \epsdevs$. Then by Lemma~\ref{lem:label},
\begin{align}
     & \quad \left| T_{j1} - \E_{\hr_B, \hr_{-B}}  \left\{ y \hr_j  I'_b  \right\} \right| 
     \\
     & \le   \E_{\hr_B}  \left\{ | y \hr_j| |\E_{ \hr_{-B}} (I_b - I'_b)|  \right \}   
     \\
     & \le  O(\epsdevs)    \E_{\hr_j} \left\{ |\hr_j |   \right \}
     \\
     & \le O(\epsdev) \times   O(\sigmahr^2 \sigmax )
     \\
     & = O(\sigmahr^2 \epsdevs \sigmax).
\end{align}
Furthermore, 
\begin{align}
     \E_{\hr_B, \hr_{-B}}  \left\{ y \hr_j  I'_b  \right\}  
     = \E_{\hr_\A}  \left\{ y   \right\} \E_{\hr_j}  \left\{\hr_j\right\} \E_{ \hr_{-B} }[I'_b]
     = 0.
\end{align} 
Therefore, 
\begin{align}
    \left| T_{j1}  \right| 
    & \le O(\sigmahr^2 \epsdevs \sigmax).
\end{align}
\end{proof}

\begin{lemma} \label{lem:gradient-feature-b-2}
Under the same assumptions as in Lemma~\ref{lem:gradient-feature-2-full}, 
\begin{align}
    \frac{\partial}{\partial \bw}  \riskb_\Ddist(\g^{(1)}; \sigmansi^{(2)}) = -\aw_i^{(1)} T_b
\end{align}
where $|T_b| \le   \exp(-\Omega(\kA)) + O(\epsdevs)$.
\end{lemma}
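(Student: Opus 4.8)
The plan is to reuse, almost verbatim, the argument for the pattern components in the proof of Lemma~\ref{lem:gradient-feature-2-full}, now tracking the bias coordinate rather than the components along the $\Dict_\ell$'s; the bias case turns out to be strictly simpler. First I would apply Lemma~\ref{lem:graident} to write the bias gradient at $\g^{(1)}$, and use Lemma~\ref{lem:small_output} (which gives $\Pr[y\g^{(1)}(\x;\nsi^{(2)})>1]\le\exp(-\Theta(\kA))$) together with $|y|\le 1$ to replace the clipping indicator $\Id_\x:=\Id[y\g^{(1)}(\x;\nsi^{(2)})\le1]$ by $1$ at a cost of $\exp(-\Theta(\kA))$. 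This yields $\frac{\partial}{\partial\bw_i}\riskb_\Ddist(\g^{(1)};\sigmansi^{(2)})=-\aw_i^{(1)}T_b$ and reduces the task to bounding $T_{b1}:=\E_{(\x,y),\nsi^{(2)}}\{y\,\Id[\langle\w_i^{(1)},\x\rangle+\bw_i^{(1)}+\nsi^{(2)}\in(0,1)]\}$, with $|T_b-T_{b1}|\le\exp(-\Theta(\kA))$.

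Next I would condition on $\hr_\A$, writing $T_{b1}=\E_{\hr_\A}\{y\cdot\Pr_{\hr_{-\A},\nsi^{(2)}}[\langle\hr,\wsp_i^{(1)}\rangle+\bw_i^{(1)}+\nsi^{(2)}\in(0,1)]\}$. Invoking the bounds from Lemma~\ref{lem:small_output} --- namely that $|\langle\hr_\A,(\wsp_i^{(1)})_\A\rangle|$ is $O(\eta^{(1)}|\aw_i^{(0)}|\kA(\pp+\epsdev)/\sigmax^2)$, that $|\langle\hr,\wsp_i^{(1)}\rangle|,|\langle\hr_{-\A},(\wsp_i^{(1)})_{-\A}\rangle|=O(\eta^{(1)}/\pp)=O(1/\kA)$, and that $|\bw_i^{(1)}-\bw_i^{(0)}|=O(\eta^{(1)}|\aw_i^{(0)}|\epsdev)$ --- together with the anti-concentration of the Gaussian $\nsi^{(2)}$ at scale $\sigmansi^{(2)}=1/\kA^{3/2}$, exactly as in Lemma~\ref{lem:gradient-feature-2-full}, I would show that dropping the $\hr_\A$ contribution inside the probability perturbs it by only $O(\epsdevs)$ and that the tail event $\{\,\cdot\,\ge1\}$ has probability $\exp(-\Theta(\kA))$. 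Hence $|T_{b1}-\E_{\hr_\A}\{y\}\cdot\Pr_{\hr_{-\A},\nsi^{(2)}}[\langle\hr_{-\A},(\wsp_i^{(1)})_{-\A}\rangle+\bw_i^{(1)}+\nsi^{(2)}\in(0,1)]|\le O(\epsdevs)\,\E_{\hr_\A}|y|=O(\epsdevs)$, and the displayed product is exactly $0$ because $\E_{\hr_\A}\{y\}=0$ by Lemma~\ref{lem:label}. Combining the two error contributions gives $|T_b|\le\exp(-\Omega(\kA))+O(\epsdevs)$, as claimed.

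I do not anticipate a genuine obstacle. Unlike the $j\in\A$ case of the weight gradient in Lemma~\ref{lem:gradient-feature-2-full}, there is no nonzero signal $\beta\pp/\sigmax$ to extract and no need for a case split between $j\in\A$ and $j\notin\A$: the factor $\E_{\hr_\A}\{y\}=0$ annihilates the leading term outright, so the whole statement is an error bound. The only mild bookkeeping point is to confirm that the Gaussian-smoothing error quantity arising here coincides with the $\epsdevs$ defined in Lemma~\ref{lem:gradient-feature-2-full}; this is immediate, since the inputs to that estimate (the sizes of $\langle\hr_\A,(\wsp_i^{(1)})_\A\rangle$ and of $\bw_i^{(1)}-\bw_i^{(0)}$, and the range of $\langle\hr_{-\A},(\wsp_i^{(1)})_{-\A}\rangle$) are precisely those furnished by Lemma~\ref{lem:small_output} under the stated hyperparameter settings, just as in the first-step analogue Lemma~\ref{lem:gradient-feature-b}.
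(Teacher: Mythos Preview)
Your proposal is correct and follows essentially the same approach as the paper's own proof: reduce to $T_{b1}$ by peeling off the indicator $\Id_\x$ via Lemma~\ref{lem:small_output} at cost $\exp(-\Omega(\kA))$, then condition on $\hr_\A$, use the Gaussian smoothing bound from Lemma~\ref{lem:gradient-feature-2-full} to replace the inner probability by one depending only on $\hr_{-\A}$ up to $O(\epsdevs)$, and finally kill the main term with $\E_{\hr_\A}\{y\}=0$ from Lemma~\ref{lem:label}. The paper's proof is essentially this outline verbatim, so there is nothing to add.
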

\begin{proof}[Proof of Lemma~\ref{lem:gradient-feature-b-2}]
Consider one neuron index $i$ and omit the subscript $i$ in the parameters.
By Lemma~\ref{lem:small_output}, $\Pr[y \g^{(1)}(\x; \nsi^{(2)}) > 1] \le \exp(-\Omega(\kA))$. Let $\Id_\x = \Id[y \g^{(1)}(\x; \nsi^{(2)}) \le 1]$.
\begin{align}
    & \quad \frac{\partial}{\partial \bw}  \riskb_\Ddist(\g^{(1)}; \sigmansi^{(2)})  
    \\ 
    & = - \aw^{(1)} \underbrace{  \E_{(\x,y) \sim \Ddist} \left\{ y \Id_\x \E_{\nsi^{(2)}} \Id[\langle \w^{(1)}, \x \rangle + \bw^{(1)} + \nsi^{(2)} \in (0,1)]  \right \}  }_{:= T_b}. 
\end{align} 
Let $T_{b1} := \E_{(\x,y) \sim \Ddist, \nsi^{(2)}} \left\{ y \Id[\langle \w^{(1)}, \x \rangle + \bw^{(1)} + \nsi^{(2)} \in (0,1)] \right \}$. We have
\begin{align}
    & \quad \left|T_b -  T_{b1}\right|
    \\
    & =  \left|\E_{(\x,y) \sim \Ddist, \nsi^{(2)}} \left\{ y  (1-\Id_\x) \Id[\langle \w^{(1)}, \x \rangle + \bw^{(1)} + \nsi^{(2)} \in (0,1)] \right \} \right|
    \\
    & \le \E_{(\x,y) \sim \Ddist, \nsi^{(2)}}  \left| 1-\Id_\x\right|  
    \\
    & \le  \exp(-\Omega(\kA)).
\end{align}
So it is sufficient to bound $T_{b1}$. For simplicity, we use $\wsp$ as a shorthand for $\wsp^{(1)}_i$.
\begin{align}
    T_{b1} & = \E_{(\x,y) \sim \Ddist, \nsi^{(2)}} \left\{  y  \Id \left[ \langle \hr , \wsp \rangle + \bw^{(1)} + \nsi^{(2)}  \in (0,1) \right]  \right \}
    \\
    & = \E_{\hr_A} \E_{\hr_{-A}, \nsi^{(2)}} \left\{  y  \Id \left[ \langle \hr , \wsp \rangle + \bw^{(1)} + \nsi^{(2)}  \in (0,1) \right]  \right \}
    \\
    & = \E_{\hr_A} \left\{ y     \Pr_{\hr_{-A}, \nsi^{(2)}} \left[ \langle \hr, \wsp \rangle + \bw^{(1)} + \nsi^{(2)}  \in (0,1) \right]  \right \}.
\end{align} 
Let 
\begin{align}
    I_b  & := \Pr_{\nsi^{(2)}} \left[ \langle \hr, \wsp \rangle + \bw^{(1)} + \nsi^{(2)} \in (0,1) \right],
    \\
    I'_b & := \Pr_{\nsi^{(2)}} \left[ \langle \hr_{-\A}, \wsp_{-\A} \rangle + \bw^{(1)} + \nsi^{(2)} \in (0,1) \right].
\end{align} 
Similar as in Lemma~\ref{lem:gradient-feature-2-full}, we have $ | I_b - I'_b| \le \epsdevs$. Then by Lemma~\ref{lem:label},
\begin{align}
     & \quad \left| T_{b1} - \E_{\hr_\A, \hr_{-\A}}  \left\{ y   I'_b  \right\} \right| 
     \\
     & \le   \E_{\hr_\A}  \left\{ |\E_{ \hr_{-\A}} (I_b - I'_b)|  \right \}   
     \\
     & \le  O(\epsdevs).
\end{align}
Furthermore, 
\begin{align}
     \E_{\hr_\A, \hr_{-\A}}  \left\{ y  I'_b  \right\}  
     = \E_{\hr_\A}  \left\{ y   \right\}   \E_{ \hr_{-\A} }[I'_b]
     = 0.
\end{align} 
Therefore, 
$
    \left| T_{b1}  \right| 
    \le O( \epsdevs)
$
and the statement follows. 
\end{proof}

\begin{lemma} \label{lem:gradient-feature-a-2}
Under the same assumptions as in Lemma~\ref{lem:gradient-feature-2-full},
\begin{align}
    \frac{\partial}{\partial \aw_i}  \riskb_\Ddist(\g^{(1)}; \sigmansi^{(2)}) = -  T_a
\end{align}
where $|T_a| =    O(\eta^{(1)}\sigmax/\pp) +  \exp(-\Omega(\kA)) \mathrm{poly}(\mDict\nw) $.  
\end{lemma}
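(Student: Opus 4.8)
The plan is to follow the template of Lemma~\ref{lem:gradient-feature-a}, adapted to the post-update weights. By Lemma~\ref{lem:graident},
\[
T_a = \E_{(\x,y)\sim\Ddist}\Bigl\{ y\,\Id_\x\,\E_{\nsi^{(2)}_i}\act\bigl(\langle\w^{(1)}_i,\x\rangle + \bw^{(1)}_i + \nsi^{(2)}_i\bigr)\Bigr\},\qquad \Id_\x := \Id\bigl[y\g^{(1)}(\x;\nsi^{(2)})\le1\bigr],
\]
where $\Id_\x$ is a function of $\x$ alone (the noise is integrated out inside $\g^{(1)}$). First I would peel off the indicator: by Lemma~\ref{lem:small_output}, $\Pr_{(\x,y)}[\Id_\x=0]\le\exp(-\Theta(\kA))$, and since $\act$ is the truncated ReLU, $|y\,\E_{\nsi^{(2)}_i}\act(\cdots)|\le1$; hence $T_a = T_{a1} + \exp(-\Omega(\kA))$, where $T_{a1} := \E_{(\x,y),\nsi^{(2)}_i}\{ y\,\act(\langle\w^{(1)}_i,\x\rangle + \bw^{(1)}_i + \nsi^{(2)}_i)\}$, and it remains to bound $|T_{a1}|$.

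For $T_{a1}$ I would run the same exchangeable-coupling argument as in Lemma~\ref{lem:gradient-feature-a}. By the class balance (A0), $T_{a1} = \tfrac12\bigl(\E[\E_{\nsi^{(2)}_i}\act(\cdots)\mid y=1] - \E[\E_{\nsi^{(2)}_i}\act(\cdots)\mid y=-1]\bigr)$. By (A1)--(A2) the coordinates $\hr_{-\A}$ form a product distribution independent of $y$, so conditioning on $y=1$ versus $y=-1$ changes only the law of $\hr_\A$; coupling the common factor $\hr_{-\A}$ and drawing an independent copy $\hr'_\A\sim\Ddist_{\hr_\A\mid y=-1}$ (with companion point $\x'=\Dict(\hr'_\A,\hr_{-\A})$), the bias $\bw^{(1)}_i$ and the noise $\nsi^{(2)}_i$ cancel inside the Lipschitz difference, so that, by $1$-Lipschitzness of $\act$ (preserved under $\E_{\nsi^{(2)}_i}$),
\[
|T_{a1}| \;\le\; \tfrac12\,\E\bigl|\langle\w^{(1)}_i,\x\rangle - \langle\w^{(1)}_i,\x'\rangle\bigr| \;\le\; \E_{\x}\bigl|\langle\w^{(1)}_i,\x\rangle\bigr|
\]
exactly as in the first-step proof. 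Now I would split this expectation over the typical event $E$ of Lemma~\ref{lem:small_output} (which has $\Pr[E]\ge1-\exp(-\Theta(\kA))$ and on which $|\langle\w^{(1)}_i,\x\rangle| = O(\eta^{(1)}\sigmax/\pp)$) and its complement, on which one reverts to the deterministic bound $|\langle\w^{(1)}_i,\x\rangle|\le\|\wsp^{(1)}_i\|_1/\sigmax\le\mathrm{poly}(\mDict\nw)$ obtained by summing the per-coordinate estimates of Lemma~\ref{lem:first_step}. This gives $|T_{a1}| = O(\eta^{(1)}\sigmax/\pp) + \exp(-\Omega(\kA))\mathrm{poly}(\mDict\nw)$, and combining with the indicator-removal step proves the claim.

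There is no serious obstacle here --- this is the most routine of the gradient lemmas --- but two points deserve care. First, unlike in Lemma~\ref{lem:gradient-feature-a}, where $\w^{(0)}$ is a small Gaussian and $\E_\x|\langle\w^{(0)},\x\rangle|$ is automatically tiny, after one update $\w^{(1)}_i$ carries $\Theta(\eta^{(1)}\aw^{(0)}_i\pp/\sigmax)$-sized components along every $\Dict_j$ with $j\in\A$, so the bound on $\E_\x|\langle\w^{(1)}_i,\x\rangle|$ genuinely needs the concentration statement of Lemma~\ref{lem:small_output} rather than a crude norm bound. Second, because $\Id_\x$ depends on $\x$, a fully clean symmetrization is unavailable and the indicator must be removed first; that step, together with the crude worst-case estimate on the low-probability complement of $E$, is exactly where the $\exp(-\Omega(\kA))\mathrm{poly}(\mDict\nw)$ error enters, which is negligible under the standing hypothesis $\kA=\Omega(\log^2(\mDict\nw/\delta))$.
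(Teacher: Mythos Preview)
Your proposal is correct and follows essentially the same approach as the paper: remove the indicator $\Id_\x$ at cost $\exp(-\Omega(\kA))$ via Lemma~\ref{lem:small_output}, apply the same coupling/symmetrization as in Lemma~\ref{lem:gradient-feature-a} to reduce to $\E_\x|\langle\w^{(1)}_i,\x\rangle|$, and then split that expectation over the high-probability event of Lemma~\ref{lem:small_output} (yielding $O(\eta^{(1)}\sigmax/\pp)$) versus its complement, using the coordinate bounds from Lemma~\ref{lem:first_step} for the crude $\mathrm{poly}(\mDict\nw)$ estimate there. The only cosmetic difference is that the paper writes the crude bound as $\mDict\cdot\|\wsp^{(1)}\|_\infty\|\hr\|_\infty$ rather than $\|\wsp^{(1)}\|_1/\sigmax$, which is equivalent for this purpose.
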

\begin{proof}[Proof of Lemma~\ref{lem:gradient-feature-a-2}]
Consider one neuron index $i$ and omit the subscript $i$ in the parameters.
By Lemma~\ref{lem:small_output}, $\Pr[y \g^{(1)}(\x; \nsi^{(2)}) > 1] \le \exp(-\Omega(\kA))$. Let $\Id_\x = \Id[y \g^{(1)}(\x; \nsi^{(2)}) \le 1]$.
\begin{align}
    & \quad \frac{\partial}{\partial \aw}  \riskb_\Ddist(\g^{(1)}; \sigmansi^{(2)})   
    \\
    & = - \underbrace{ \E_{(\x,y) \sim \Ddist} \left\{   y \Id_\x \E_{\nsi^{(2)}}\act(\langle \w^{(1)}, \x \rangle  + \bw^{(1)} + \nsi^{(2)}) \right\}  }_{:= T_a}.
\end{align} 
Let $T_{a1} := \E_{(\x,y) \sim \Ddist} \left\{   y   \E_{\nsi^{(2)}}\act(\langle \w^{(1)}, \x \rangle  + \bw^{(1)} + \nsi^{(2)}) \right\}$. We have
\begin{align}
    & \quad \left|T_a -  T_{a1}\right|
    \\
    & =  \left| \E_{(\x,y) \sim \Ddist} \left\{   y (1-\Id_\x) \E_{\nsi^{(2)}}\act(\langle \w^{(1)}, \x \rangle  + \bw^{(1)} + \nsi^{(2)}) \right\} \right|
    \\
    & \le \E_{(\x,y) \sim \Ddist, \nsi^{(2)}}  \left| 1-\Id_\x\right|  
    \\
    & \le  \exp(-\Omega(\kA)).
\end{align}
So it is sufficient to bound $T_{a1}$. For simplicity, we use $\wsp$ as a shorthand for $\wsp^{(1)}_i$.

Let $\hr'_\A$ be an independent copy of $\hr_\A$, $\hr'$ be the vector obtained by replacing in $\hr$ the entries $\hr_\A$ with $\hr'_\A$, and let $x' = \Dict \hr'$ and its label is $y'$. Then
\begin{align}
    |T_{a1}| & := \left| \E_{\hr_\A} \left\{ y  \E_{\hr_{-\A}, \nsi^{(2)}} \act(\langle \w^{(1)}, \x \rangle + \bw^{(1)} + \nsi^{(2)}) \right \} \right |
    \\
    & \le \frac{1}{2} \Bigg| \E_{\hr_\A} \left\{  \E_{\hr_{-\A}, \nsi^{(2)}} \act(\langle \w^{(1)}, \x \rangle + \bw^{(1)} + \nsi^{(1)}) | y = 1\right \} 
    \\
    & \quad - \E_{\hr_\A} \left\{  \E_{\hr_{-\A}, \nsi^{(2)}} \act(\langle \w^{(1)}, \x \rangle + \bw^{(1)} + \nsi^{(2)}) | y = -1 \right \} \Bigg |
    \\
    & \le \frac{1}{2} \Bigg| \E_{\hr_\A} \left\{  \E_{\hr_{-\A}, \nsi^{(2)}} \act(\langle \w^{(1)}, \x \rangle + \bw^{(1)} + \nsi^{(2)}) | y = 1\right \} 
    \\
    & \quad - \E_{\hr'_\A} \left\{  \E_{\hr_{-\A}, \nsi^{(2)}} \act(\langle \w^{(1)}, \x' \rangle + \bw^{(1)} + \nsi^{(2)}) | y' = -1 \right \} \Bigg |
    \\
    & \le \frac{1}{2}  \E_{\hr_\A, \hr'_\A} \left\{ \E_{\hr_{-\A} } \left| \langle \w^{(1)}, \x \rangle  - \langle \w^{(1)}, \x' \rangle  \right | | y=1, y'=-1 \right\}
    \\
    & \le \frac{1}{2}   \E_{\hr_{-\A} } \left( \E_{\hr_\A} \left\{ \left| \langle \w^{(1)}, \x \rangle\right| | y=1 \right\} + \E_{\hr'_\A} \left\{ \left| \langle \w^{(1)}, \x' \rangle  \right | | y'=-1 \right\} \right)
    \\
    & \le  \E_{\hr_{-\A}, \hr_\A}  \left| \langle \w^{(1)}, \x \rangle\right|  
    \\
    & =  \E_{\x} \left |\langle \w^{(1)}, \x \rangle    \right |
    \\
    & = O(\eta^{(1)} \sigmax/\pp) +  \exp(-\Omega(\kA)) \times \mDict \times \|\wsp^{(1)}\|_\infty \|\hr\|_\infty
    \\
    & =  O(\eta^{(1)} \sigmax/\pp) +  \exp(-\Omega(\kA)) \frac{\mDict  |\eta^{(1)} \aw^{(0)}| (\pp + \epsdev)}{\sigmax^2} 
    \\
    & =  O(\eta^{(1)} \sigmax/\pp) +  \exp(-\Omega(\kA)) \mathrm{poly}(\mDict\nw)
\end{align}
where the fourth step follows from that $\act$ is 1-Lipschitz, the third to the last step from Lemma~\ref{lem:small_output}, and the second to the last step from Lemma~\ref{lem:first_step}.
\end{proof}

With the above lemmas about the gradients, we are now ready to show that at the end of the second step, we get a good set of features for accurate prediction. 
\begin{lemma} 
\label{lem:feature-full} \label{lem:feature}
Set  
\begin{align}
    & \eta^{(1)} = \frac{\pp^2 \sigmax}{\kA \nw^3}, \lambdaa^{(1)} = 0, \lambdaw^{(1)} = 1/(2\eta^{(1)}), \sigmansi^{(1)} = 1/\kA^{3/2},
    \\
    & \eta^{(2)} = 1,  \lambdaa^{(2)} = \lambdaw^{(2)} = 1/(2\eta^{(2)}), \sigmansi^{(2)} = 1/\kA^{3/2}.
\end{align}     
Fix $\delta \in (0, O(1/\kA^3))$. If $\pn = \Omega(\kA^2/\mDict)$, $ \kA = \Omega\left(\log^2 \left(\frac{\mDict\nw}{\delta\pp}\right)\right)$,  
and $\nw \ge \max\{\Omega(\kA^4), \mDict\}$,
then with probability at least $1- \delta$ over the initialization,  there exist $\Tilde{\aw}_i$'s such that $\Tilde{\g}(\x) := \sum_{i=1}^{2\nw} \Tilde{\aw}_i \act(\langle \w_i^{(2)}, \x \rangle + \bw_i^{(2)})$ satisfies $\risk_\Ddist(\Tilde{\g}) = 0$. Furthermore, $\|\Tilde{\aw}\|_0 = O(\nw/\kA)$, $\|\Tilde{\aw}\|_\infty = O(\kA^5/\nw )$, and $\|\Tilde{\aw}\|^2_2 = O(\kA^9/ \nw)$. Finally, $\|\aw^{(2)}\|_\infty = O\left(\frac{1}{\kA \nw^2} \right)$, $\|\w^{(2)}_i\|_2 = O(\sigmax/\kA)$, and $|\bw^{(2)}_i| = O(1/\kA^2)$ for all $i \in [2\nw]$. 
\end{lemma}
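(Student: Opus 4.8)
The plan is to (i) make the trained parameters $\w_i^{(2)},\bw_i^{(2)},\aw_i^{(2)}$ explicit enough to read off the last three bounds, (ii) reduce the existence of $\Tilde{\g}$ to a one-dimensional interpolation problem over the $\kA+1$ possible values of $\langle\sum_{j\in\A}\Dict_j,\x\rangle$, and (iii) solve that problem with the learned neurons and count. For (i), condition on the event of Lemma~\ref{lem:probe_all} that $\w_i^{(0)}\in\mathcal{G}_\w(\delta/3)$, $\bw_i^{(0)}\in\mathcal{G}_\bw(\delta/3)$, $\aw_i^{(0)}\in\mathcal{G}_\aw(\delta/3)$ for all $i\in[2\nw]$; since $\nw\le\mathrm{poly}(\mDict)$ and $\kA=\Omega(\log^2(\mDict\nw/(\delta\pp)))$ this fails with probability $\le\delta$. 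Because $\lambdaw^{(1)}=1/(2\eta^{(1)})$ and $\lambdaw^{(2)}=\lambdaa^{(2)}=1/(2\eta^{(2)})$, the regularized updates erase the previous weights, so $\w_i^{(1)}=-\eta^{(1)}\nabla_{\w_i}\risk_\Ddist(\g^{(0)};\sigmansi^{(1)})$, $\w_i^{(2)}=-\eta^{(2)}\nabla_{\w_i}\risk_\Ddist(\g^{(1)};\sigmansi^{(2)})$, $\aw_i^{(2)}=-\eta^{(2)}\nabla_{\aw_i}\risk_\Ddist(\g^{(1)};\sigmansi^{(2)})$, while $\bw_i^{(2)}=\bw_i^{(0)}$ plus two small corrections. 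Substituting the gradient expressions of Lemmas~\ref{lem:gradient-feature-full}--\ref{lem:gradient-feature-a}, \ref{lem:small_output}, \ref{lem:gradient-feature-2-full}--\ref{lem:gradient-feature-a-2} gives $\w_i^{(2)}=\aw_i^{(1)}\sum_j\Dict_j T_j$ with $T_j=\beta_i\pp/\sigmax$ up to a negligible error for $j\in\A$ and negligible $|T_j|$ for $j\notin\A$, where $\beta_i=\Theta(1)$ depends only on $(\w_i^{(0)},\bw_i^{(0)})$; plugging in the hyperparameters together with $\sigmaw=1/\kA$, $\sigma_\bw=1/\kA^2$, $\sigma_\aw=\sigmax^2/(\pp\kA^2)$ and the consequences $\sigmax^2\le\mDict\le\nw$, $\log(\nw/\delta)\le\kA$, $\epsdev=\epsdevs=o(1)$ of the hypotheses yields $\|\aw^{(2)}\|_\infty=O(1/(\kA\nw^2))$, $\|\w_i^{(2)}\|_2=O(\sigmax/\kA)$ and $|\bw_i^{(2)}|=O(1/\kA^2)$ (logarithmic factors absorbed through $\log(\nw/\delta)\le\kA$).

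For (ii)--(iii), put $v:=\sum_{j\in\A}\Dict_j$ so that $s(\x):=\langle v,\x\rangle=\sum_{j\in\A}\hr_j$ ranges over the $\kA+1$ equally spaced values $s_k=(k-\mu)/\sigmax$, $k=0,\dots,\kA$ ($\mu:=\sum_{j\in\A}\E[\hro_j]$), and the label $y$ is a function of $k$ alone. Write $\langle\w_i^{(2)},\x\rangle=c_i\,s(\x)+\zeta_i(\x)$ with $c_i:=\aw_i^{(1)}\beta_i\pp/\sigmax$; the residual $\zeta_i(\x)$ (collecting the $j\in\A$ approximation error and the background-pattern contribution) is bounded, using the $T_j$ bounds and $\|\x\|_2=O(\pn^{-1/2})$, by $o(1)$ uniformly over the support. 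Since $|c_i|\le\sigma_\aw\sqrt{2\log(\nw/\delta)}\,\pp/\sigmax<\sigmax/(4\kA)$ and $|\bw_i^{(2)}|$ is small, the argument $c_i s+\zeta_i(\x)+\bw_i^{(2)}$ stays in $(-1,1)$, so $\act$ acts there as plain $\relu$ and each feature is a ReLU in $s$ of slope $c_i$ with kink $\kappa_i:=-\bw_i^{(2)}/c_i$. Take as target the function $4y(\cdot)$ on $\{s_0,\dots,s_\kA\}$ (so $4y=4\g^*$ for $\g^*$ from Lemma~\ref{lem:approx}); any piecewise-linear function on these $\kA+1$ points has at most $\kA$ slope changes, each of size $O(\sigmax)$, so it is a combination of at most $\kA$ shifted ramps $\relu(s-\tau_j)$ plus an affine piece (the latter built from two left-kink neurons of equal slope). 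For each gap between consecutive grid points I select $M=\Theta(\nw/\kA^2)$ learned neurons whose kink $\kappa_i$ lies in that gap and whose slope satisfies $|c_i|=\Theta(\sigmax/\kA^3)$ with the required sign; because all of them agree on which grid points are active, their average reproduces a clean ramp on the grid, so assigning to each such neuron the coefficient $\Tilde{\aw}_i=\Theta(\sigmax/(M|c_i|))=O(\kA^5/\nw)$ and summing over the $\le\kA+1$ gaps produces $\Tilde{\g}$ equal to $4y$ on the grid up to the uniform error $o(1)<1$ from the $\zeta_i$. Hence $y\Tilde{\g}(\x)\ge 4-1\ge 1$ for all $\x$ in the support, i.e.\ $\risk_\Ddist(\Tilde{\g})=0$, and counting gives $\|\Tilde{\aw}\|_0=O(\nw/\kA)$, $\|\Tilde{\aw}\|_\infty=O(\kA^5/\nw)$, $\|\Tilde{\aw}\|_2^2=\|\Tilde{\aw}\|_0\|\Tilde{\aw}\|_\infty^2=O(\kA^9/\nw)$.

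The main obstacle is the selection step: with probability $\ge1-\delta$ over the Gaussian initialization, every one of the $\Theta(\kA)$ target gaps must contain $\Omega(\nw/\kA^2)$ neurons with kink inside it and slope of the correct order and sign. Since $\aw_i^{(1)}=\aw_i^{(0)}$ and $\bw_i^{(2)}=\bw_i^{(0)}$ up to lower-order terms, $\kappa_i=-\bw_i^{(0)}/(\aw_i^{(0)}\beta_i\pp/\sigmax)$, so the relevant event is that the ratio $\bw_i^{(0)}/\aw_i^{(0)}$ of two independent centered Gaussians --- a scaled Cauchy variable --- lies in a prescribed interval while $|\aw_i^{(0)}|$ lies in a prescribed $\Theta(\sigma_\aw/\kA)$ band; a Cauchy anti-concentration bound shows each such event has probability $\Omega(1/\kA^2)$, after which a Chernoff bound over the $2\nw\ge\Omega(\kA^4)$ independent neurons yields all the counts simultaneously, the $O(\kA^3)$ union bound being absorbed by $\delta<O(1/\kA^3)$. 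The remaining work is quantitative bookkeeping: one must bound the accumulated approximation error --- $1$-Lipschitzness of $\act$, the uniform $\zeta_i$ estimate (where $\epsdevs$ and the $T_j$ error terms enter), the within-gap kink spread, and $\|\Tilde{\aw}\|_1\le\kA^4$ --- so that it stays strictly below the margin $3$ for \emph{every} $\x$ in the support, which is what the exact statement $\risk_\Ddist(\Tilde{\g})=0$ (rather than merely small loss) demands; these estimates are also what pin down the exponents $\kA^5/\nw$, $\kA^9/\nw$ and the requirement $\nw\ge\Omega(\kA^4)$.
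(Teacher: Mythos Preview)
Your derivation of the parameter bounds $\|\aw^{(2)}\|_\infty$, $\|\w_i^{(2)}\|_2$, $|\bw_i^{(2)}|$ from the gradient lemmas is fine, and the high-level plan (select a $\Theta(1/\kA^2)$-fraction of neurons per ``target'' and average them) is the same as the paper's. The gap is in the construction of $\Tilde{\g}$.

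You take one group $G_p$ per gap $(s_{p-1},s_p)$ and set $\Tilde\aw_i=\Theta(\sigmax/(M|c_i|))$, arguing that ``their average reproduces a clean ramp on the grid''. But with these weights the group contributes, at each grid point $s_q$ with $q\ge p$, the value $A_p(s_q-\bar\kappa_p)$ where $A_p=\sum_{i\in G_p}\Tilde\aw_i c_i$ and $\bar\kappa_p=(1/A_p)\sum_i\Tilde\aw_i c_i\kappa_i$; the effective kink $\bar\kappa_p$ is an (uncontrolled) point in the interior of the gap, not a grid point. When you then try to write $4y$ on the grid as an affine piece plus $\sum_p A_p(s-\bar\kappa_p)_+$, the offset $A_p(s_p-\bar\kappa_p)$ from each gap does not cancel: solving for the $A_p$'s recursively gives $|A_q|\lesssim C\cdot|A_{q-1}|+O(\sigmax)$ with $C$ a constant larger than $1$, so the coefficients can blow up geometrically in $q$, destroying the $\|\Tilde\aw\|_\infty=O(\kA^5/\nw)$ bound and the margin estimate. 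In short, one scalar degree of freedom per gap fixes the slope but not the intercept; you would need a second degree of freedom per gap (e.g.\ two sub-groups with distinct mean kinks) and a more careful accounting to control both, and this is not in your proposal.

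The paper sidesteps this entirely. It takes the explicit target $\g^*=\sum_\ell \aw_\ell^*\act(\langle\w_\ell^*,\x\rangle+\bw_\ell^*)$ from Lemma~\ref{lem:approx} and, for each $\ell$, selects neurons with $\sgn(\bw_i^{(0)})=\sgn(\bw_\ell^*)$, $|\bw_i^{(0)}|\in[1/(2\kA^2),2/\kA^2]$ (probability $\Omega(1)$), and $\aw_i^{(0)}$ in a band of width $\epsilon_\aw=\Theta(1/\kA^2)$ chosen so that $\aw_i^{(0)}\beta_i\pp/\sigmax\approx\tfrac{|\bw_i^{(0)}|}{|\bw_\ell^*|}\cdot\tfrac{\sigmax}{4\kA}$. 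The point is that then \emph{both} $\w_i^{(2)}$ and $\bw_i^{(2)}$ are approximately $\tfrac{|\bw_i^{(0)}|}{|\bw_\ell^*|}$ times $\w_\ell^*$ and $\bw_\ell^*$ respectively, so by positive homogeneity $\act(\langle\w_i^{(2)},\x\rangle+\bw_i^{(2)})\approx\tfrac{|\bw_i^{(0)}|}{|\bw_\ell^*|}\act(\langle\w_\ell^*,\x\rangle+\bw_\ell^*)$ exactly on the support. Setting $\Tilde\aw_i=\tfrac{2\aw_\ell^*|\bw_\ell^*|}{|\bw_i^{(0)}|\,\ncopy}$ then gives $\Tilde\g\approx 2\g^*$ with only three error sources (the $T_j$ residuals, $|\bw_i^{(2)}-\bw_i^{(0)}|$, and the bandwidth $\epsilon_\aw$), none of which accumulates across $\ell$. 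This is why the paper gets $|\Tilde\g-2\g^*|=O(\kA^4/\nw+\kA^2\epsilon_\aw)\le 1$ directly, with no interpolation bookkeeping.

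Two smaller remarks: (i) your ``Cauchy anti-concentration'' is really a conditional-Gaussian calculation (condition on $|\aw_i^{(0)}|$ in its band, then $\kappa_i$ is Gaussian in $\bw_i^{(0)}$), and $\beta_i$ depends mildly on $\bw_i^{(0)}$, so the ratio is not literally Cauchy; (ii) your normalization claim $\|\x\|_2=O(\pn^{-1/2})$ is off --- after normalization $\|\x\|_2=O(1)$ --- though the subsequent error bounds survive.
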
 
\begin{proof}[Proof of Lemma~\ref{lem:feature-full}]
By Lemma~\ref{lem:approx}, there exists a network $\g^*(\x) = \sum_{\ell = 1}^{3(k+1)} \aw^*_\ell \act(\langle \w^*_\ell, \x \rangle + \bw^*_\ell)$ satisfying $\g^*(\x)$ for all $(\x, y) \sim \Ddist$. Furthermore, $|\aw_i^*| \le 32 \kA, 1/(32\kA) \le |\bw_i^*| \le 1/2$, $\w^*_i = \sigmax \sum_{j \in \A} \Dict_j/(4\kA)$, and $|\langle \w^*_i, \x\rangle + \bw_i^*| \le 1$ for any $i \in [n]$ and $(\x, y) \sim \Ddist$. 
Now we fix an $\ell$, and show that with high probability there is a neuron in $\g^{(2)}$ that can approximate the $\ell$-th neuron in $\g^*$. 

By Lemma~\ref{lem:probe_all}, with probability $1-2\delta$ over $\w^{(0)}_i$'s, they are all in $\mathcal{G}_\w(\delta)$; with probability $1- \delta$ over $\aw^{(0)}_i$'s, they are all in $\mathcal{G}_\aw(\delta)$; with probability $1- \delta$ over $\bw^{(0)}_i$'s, they are all in $\mathcal{G}_\bw(\delta)$. Under these  events, by Lemma~\ref{lem:first_step}, Lemma~\ref{lem:gradient-feature-2-full} and~\ref{lem:gradient-feature-b-2}, 
for any neuron $i \in [2\nw]$, we have
\begin{align}
    \w^{(2)}_i & = \aw^{(1)}_i \sum_{j=1}^{\mDict} \Dict_j T_j, \label{eq:condition1}
    \\
    \bw^{(2)}_i & = \bw^{(1)}_i + \aw^{(1)}_i T_b. \label{eq:condition2}
\end{align}
where
\begin{itemize}
    \item if $j \in A$, then $\left| T_j -  \beta \pp/\sigmax  \right| 
     \le \epsilon_{\w1} :=  O(\epsdevs /\sigmax+ \eta^{(1)}/\sigmansi^{(2)}  + \eta^{(1)} |\aw_i^{(0)}| \epsdev/(\sigmax\sigmansi^{(2)}) )$, where $\beta \in [\Omega(1), 1]$ and depends only on $\w_i^{(0)}, \bw_i^{(0)}$; 
    \item if $j \not\in \A$, then $|T_j| \le \epsilon_{\w2} := \frac{1}{\sigmax} \exp(-\Theta(\kA)) + O( \sigmahr^2 \epsdevs \sigmax)$.
    \item $|T_b| \le \epsilon_{b} := \frac{1}{\sigmax} \exp(-\Theta(\kA)) + O(\epsdevs)$.
\end{itemize}
Given the initialization, with probability $\Omega(1)$ over $\bw^{(0)}_i$, we have 
\begin{align} 
   |\bw^{(0)}_i| \in \left[\frac{1}{2\kA^2}, \frac{2}{\kA^2} \right], \sgn(\bw^{(0)}_i) = \sgn(\bw^*_\ell). \label{eq:condition3}
\end{align}
Finally, since $\frac{4 \kA|\bw^*_\ell| \beta \pp}{|\bw^{(0)}_i| \sigmax^2}  \in [\Omega(\kA^2\pp/\sigmax^2), O(\kA^3\pp/\sigmax^2)]$ and depends only on $\w_i^{(0)}, \bw^{(0)}_i$, we have that for $\epsilon_\aw = \Theta(1/ \kA^2)$, with probability $\Omega(\epsilon_\aw) > \delta$ over $\aw^{(0)}_i$, 
\begin{align}
    \left| \frac{4 \kA|\bw^*_\ell| \beta \pp}{|\bw^{(0)}_i| \sigmax^2}  \aw_i^{(0)}  - 1 \right| \le \epsilon_\aw, \quad |\aw_{i}^{(0)}| = O\left(\frac{\sigmax^2 }{\kA^2\pp}\right). \label{eq:condition4}
\end{align}

Let $\ncopy = \epsilon_\aw \nw/4$.
For the given value of $\nw$, by (\ref{eq:condition1})-(\ref{eq:condition4}) we have with probability $\ge 1 - 5\delta$ over the initialization, for each $\ell$ there is a different set of neurons $I_\ell \subseteq [\nw]$ with $|I_\ell| = \ncopy$ and such that for each $i_\ell \in I_\ell$, 
\begin{align}
    & |\bw^{(0)}_{i_\ell}| \in \left[\frac{1}{2\kA^2}, \frac{2}{\kA^2} \right], \quad \sgn(\bw_{i_\ell}^{(0)}) = \sgn(\bw^*_\ell),  
    \\
    & \left| \frac{4\kA |\bw^*_\ell| \beta\pp }{|b_{i_\ell}^{(0)}| \sigmax^2}  \aw_{i_\ell}^{(0)}  - 1 \right| \le \epsilon_\aw, \quad |\aw_{i_\ell}^{(0)}| = O\left(\frac{\sigmax^2 }{\kA^2\pp}\right).
\end{align} 

We also have 
\begin{align}
 & \quad \left| \langle \w^{(2)}_{i_\ell}, \x \rangle -   \frac{\aw_{i_\ell}^{(0)} \beta\pp}{\sigmax} \sum_{j \in \A}\langle \Dict_j , \x \rangle \right| 
 \\
 & \le \left| \langle \w^{(2)}_{i_\ell}, \x \rangle -   \frac{\aw_{i_\ell}^{(1)} \beta\pp}{\sigmax} \sum_{j \in \A} \langle\Dict_j , \x \rangle \right| 
 + \left| \frac{\aw_{i_\ell}^{(1)} \beta\pp}{\sigmax} \sum_{j \in \A}  \langle \Dict_j , \x \rangle  -   \frac{\aw_{i_\ell}^{(0)} \beta\pp}{\sigmax} \sum_{j \in \A} \langle \Dict_j , \x \rangle \right| 
 \\
 & = \left| \aw_{i_\ell}^{(1)} \sum_{j = 1}^\mDict T_j \hr_j   -   \frac{\aw_{i_\ell}^{(1)} \beta\pp}{\sigmax} \sum_{j \in \A} \hr_j   \right|  + \left| \aw_{i_\ell}^{(1)} - \aw_{i_\ell}^{(0)}\right| \left| \frac{ \beta\pp}{\sigmax} \sum_{j \in \A} \hr_j    \right| 
 \\
 & = \left| \aw_{i_\ell}^{(1)}  \right|    \left| \sum_{j = 1}^\mDict T_j \hr_j -    \frac{  \beta\pp}{\sigmax} \sum_{j \in \A} \hr_j  \right|  + \left| \aw_{i_\ell}^{(1)} - \aw_{i_\ell}^{(0)}\right| \left| \frac{ \beta\pp}{\sigmax} \sum_{j \in \A} \hr_j    \right|. 
\end{align}  
We have $ \left| \aw_{i_\ell}^{(1)} - \aw_{i_\ell}^{(0)}\right| = O(\eta^{(1)} \sigmaw \sqrt{\log(\mDict\nw/\delta)})$, and
\begin{align}
    \left| \sum_{j = 1}^\mDict T_j \hr_j -    \frac{  \beta\pp}{\sigmax} \sum_{j \in \A} \hr_j  \right|
    & \le \left|   \sum_{j \in \A} (T_j - \frac{  \beta\pp}{\sigmax}) \hr_j  \right| + \left| \sum_{j \not \in \A} T_j \hr_j   \right|
    \\
    & \le O(\kA \epsilon_{\w1} /\sigmax )  + O(\mDict \epsilon_{\w2}/\sigmax)     =: \epsilon_\hr.
\end{align} 
For the given values of parameters, we have 
\begin{align}
    \epsdevs & = O\left(\frac{\pp}{\nw^2}\right),
    \\
    \epsilon_{\w1} & = O\left(\frac{\kA\pp}{\nw^2 \sigmax} + \frac{\pp \epsdev}{\kA\nw^2}\right),
    \\
    \epsilon_{\w2} & =  O\left(\frac{\pp}{\nw^2 \sigmax} \right),
    \\
    \epsilon_{b} & = O\left(\frac{\pp}{\nw^2}\right),
    \\
    \epsilon_\hr & = O\left(\frac{\kA^2\pp}{\nw^2 \sigmax^2} + \frac{\pp \epsdev}{\nw^2 \sigmax} + \frac{\pp}{\nw \sigmax^2} \right).
\end{align}
Therefore,
\begin{align}
     & \quad \left| \langle \w^{(2)}_{i_\ell}, \x \rangle -   \frac{\aw_{i_\ell}^{(0)} \beta\pp}{\sigmax} \sum_{j \in \A}\langle \Dict_j , \x \rangle \right|  
     \\
     & \le \left| \aw_{i_\ell}^{(1)}  \right| \epsilon_\hr + \left| \aw_{i_\ell}^{(1)} - \aw_{i_\ell}^{(0)}\right| \frac{\kA\pp}{\sigmax^2}
     \\
     & \le O\left( \frac{\sigmax^2}{\kA^2 \pp} + \eta^{(1)} \sigmaw \sqrt{\log(\mDict\nw/\delta)}\right) \left( \frac{\kA^2\pp}{\nw^2 \sigmax^2} + \frac{\pp \epsdev}{\nw^2 \sigmax} + \frac{\pp}{\nw \sigmax^2}\right)   
     \\
     & \quad
     + O\left(   \eta^{(1)} \sigmaw \sqrt{\log(\mDict\nw/\delta)}\right) \frac{\kA\pp}{\sigmax^2} 
     \\
     & \le O\left(\frac{1}{\nw }\right).
\end{align}
We also have by Lemma~\ref{lem:first_step} and \ref{lem:gradient-feature-b-2}:
\begin{align}
    |\bw^{(2)}_{i_\ell} - \bw^{(0)}_{i_\ell}| \le O\left( \eta^{(1)} |\aw_{i_\ell}^{(0)}| \epsdev  + |\aw_{i_\ell}^{(1)}| \left(\frac{1}{\sigmax} \exp(-\Theta(\kA)) + \epsdevs \right) \right) \le O\left(\frac{1}{\nw }\right).
\end{align}

Now, construct $\Tilde{\aw}$ such that $\Tilde{\aw}_{i_\ell} =  \frac{2\aw^*_\ell |\bw^*_\ell|}{|b_{i_\ell}^{(0)}| \ncopy}$ for each $\ell$ and each $i_\ell \in I_\ell$, and $\Tilde{\aw}_{i} = 0$ elsewhere.  
Then
\begin{align}
    & \quad \left| \Tilde{\g}(\x) - 2\g^*(\x)\right| 
    \\
    & = \left| \sum_{\ell = 1}^{3(k+1)} \sum_{i_\ell \in I_\ell } \Tilde{\aw}_{i_\ell} \act\left(\langle \w_{i_\ell}^{(2)}, \x \rangle + \bw_{i_\ell}^{(2)}\right) - \sum_{\ell = 1}^{3(k+1)} 2\aw^*_\ell \act\left(\langle \w^*_\ell, \x \rangle + \bw^*_\ell \right)\right| 
    \\
    & = \left| \sum_{\ell = 1}^{3(k+1)} \sum_{i_\ell \in I_\ell } \frac{2\aw^*_\ell |\bw^*_\ell|}{|b_{i_\ell}^{(0)}| \ncopy} \act\left(\langle \w_{i_\ell}^{(2)}, \x \rangle + \bw_{i_\ell}^{(2)} \right) - \sum_{\ell = 1}^{3(k+1)} \sum_{i_\ell \in I_\ell } \frac{2\aw^*_\ell |\bw^*_\ell|}{|b_{i_\ell}^{(0)}| \ncopy} \act\left( \frac{|b_{i_\ell}^{(0)}| }{ |\bw^*_\ell|} \langle \w^*_\ell, \x \rangle + b_{i_\ell}^{(0)} \right)\right| 
    \\
    & \le \left| \sum_{\ell = 1}^{3(k+1)} \sum_{i_\ell \in I_\ell } \frac{1}{\ncopy}\left( \frac{2\aw^*_\ell |\bw^*_\ell|}{|b_{i_\ell}^{(0)}| } \act\left(\langle \w_{i_\ell}^{(2)}, \x \rangle + \bw_{i_\ell}^{(2)} \right) -    \frac{2\aw^*_\ell |\bw^*_\ell|}{|b_{i_\ell}^{(0)}|} \act\left( \frac{\aw_{i_\ell}^{(0)} \beta\pp }{\sigmax} \sum_{j \in \A} \langle \Dict_j, \x \rangle + \bw_{i_\ell}^{(0)} \right) \right) \right| 
    \\
    & \quad + \left| \sum_{\ell = 1}^{3(k+1)} \sum_{i_\ell \in I_\ell } \frac{1}{\ncopy}\left(   \frac{2\aw^*_\ell |\bw^*_\ell|}{|b_{i_\ell}^{(0)}|} \act\left(  \frac{\aw_{i_\ell}^{(0)} \beta\pp }{\sigmax } \sum_{j \in \A} \langle \Dict_j, \x \rangle + \bw_{i_\ell}^{(0)} \right) -   \frac{2\aw^*_\ell |\bw^*_\ell|}{|b_{i_\ell}^{(0)}| } \act\left( \frac{|b_{i_\ell}^{(0)}| }{ |\bw^*_\ell|} \langle \w^*_\ell, \x \rangle + b_{i_\ell}^{(0)} \right) \right) \right|  
    \\
    & \le 3(\kA + 1) \max_{\ell} \frac{2 \aw^*_\ell |\bw_\ell^*|}{|\bw^{(0)}_{i_\ell}|} O\left(\frac{1}{\nw}\right) +
    \\
    & \quad 3(\kA+ 1) \max_{\ell} \frac{2 \aw^*_\ell |\bw_\ell^*|}{|\bw^{(0)}_{i_\ell}|} \frac{\sigmax |\bw^{(0)}_{i_\ell}|}{4\kA|\bw^*_{\ell}|} \left| \frac{4\kA \aw^{(0)}_{i_\ell }  \beta \pp |\bw^*_{\ell}| }{\sigmax^2 |\bw^{(0)}_{i_\ell}|} - 1\right| \frac{\kA}{\sigmax}
    \\
    & = O\left( \frac{\kA^4}{\nw} + \kA^2 \epsilon_\aw\right) 
    \\
    & \le 1.
\end{align}
Here the second equation follows from that $\act$ is positive-homogeneous in $[0,1]$, $|\langle \w^*_\ell, \x \rangle + \bw^*_\ell| \le 1$, $ |\bw^{(0)}_{i_\ell}|/|\bw^*_{\ell}|\le 1$. 
This guarantees $y \Tilde{\g}(\x)\ge 1$. Changing the scaling of $\delta$ leads to the statement. 

Finally, the bounds on $ \tilde{\aw}$ follow from the above calculation. The bound on $\|\aw^{(2)}\|_2$ follows from Lemma~\ref{lem:gradient-feature-a-2}, and those on $\|\w^{(2)}_i\|_2$ and $\|\bw^{(2)}_i\|_2$ follow from (\ref{eq:condition1})(\ref{eq:condition2}) and the bounds on $\aw^{(1)}_i$ and $\bw^{(1)}_i$ in Lemma~\ref{lem:first_step}.
\end{proof}

\subsection{Classifier Learning Stage} \label{app:classifier}

Once we have a set of good features, we are now ready to prove that the later steps will learn an accurate classifier. The intuition is that the first layer's weights do not change much and the second layer's weights get updated till achieving good accuracy. In particular, we will employ the online optimization technique from~\cite{daniely2020learning}.

We begin by showing that the first layer's weights do not change too much.
\begin{lemma}\label{lem:small_change}
Assume the same conditions as in Lemma~\ref{lem:feature}. Suppose for $t > 2$, $\lambdaa^{(t)} =\lambdaw^{(t)}=\lambda$, $\eta^{(t)}=\eta$ for some $\lambda, \eta \in (0,1)$, and $\sigmansi^{(t)} = 0$.
Then for any $t > 2$ and $i \in [2\nw]$,
\begin{align}
    |\aw^{(t)}_i| & \le \eta t + O\left(\frac{1}{\kA\nw^2} \right),
    \\
    \|\w^{(t)}_i - \w^{(2)}_i\|_2 & \le   O\left(\frac{t\eta\lambda\sigmax}{\kA} \right) + \eta^2 t^2 + O\left(\frac{t}{\kA \nw^2} \right),
    \\
    |\bw^{(t)}_i - \bw^{(2)}_i| & \le   O\left(\frac{t\eta\lambda}{\kA^2}\right)  + \eta^2 t^2 +  O\left(\frac{t}{\kA \nw^2} \right).
\end{align}
\end{lemma}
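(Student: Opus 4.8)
The plan is to treat the steps $t>2$ as (regularized) gradient descent with a small step size and to show that every parameter drifts slowly, by bounding the per-step gradient and summing over the $t-2$ updates. The one structural point to exploit is that the $\ell_2$ regularization makes each update \emph{contractive}: since $\eta,\lambda\in(0,1)$ we have $|1-2\eta\lambda|<1$, so the regularization term never amplifies a parameter. This is exactly what rules out an exponential (Grönwall-type) growth and leaves bounds that are linear or quadratic in $t$.

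First I would record the update rules. As $\sigmansi^{(t)}=0$ for $t>2$, the gradient step $\para^{(t)}=\para^{(t-1)}-\eta\nabla_\para(\riskb_\Ddist(\g^{(t-1)})+\reg(\g^{(t-1)};\lambda,\lambda))$ becomes, coordinate-wise, $\aw_i^{(t)}=(1-2\eta\lambda)\aw_i^{(t-1)}-\eta\,\partial_{\aw_i}\riskb_\Ddist(\g^{(t-1)})$, $\w_i^{(t)}=(1-2\eta\lambda)\w_i^{(t-1)}-\eta\,\partial_{\w_i}\riskb_\Ddist(\g^{(t-1)})$, and an analogous update for $\bw_i^{(t)}$ (whose regularization term, if present, carries the coefficient $\lambda$ as well, consistent with the use of $\lambdab$ in the first two steps). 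Next I would bound the per-step gradients uniformly using Lemma~\ref{lem:graident}: since $|y|=1$, $\act$ takes values in $[0,1]$, and all the indicator factors lie in $[0,1]$, we get $|\partial_{\aw_i}\riskb_\Ddist(\g)|\le 1$ and $|\partial_{\bw_i}\riskb_\Ddist(\g)|\le|\aw_i|$, while $\|\partial_{\w_i}\riskb_\Ddist(\g)\|_2\le|\aw_i|\,\E_\x\|\x\|_2\le|\aw_i|(\E_\x\|\x\|_2^2)^{1/2}=|\aw_i|$, where the last equality uses that $\Dict$ is orthonormal and the data are normalized, so $\E_\x\|\x\|_2^2=\E_{\hr}\|\hr\|_2^2=1$.

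Then I would carry out the three accumulations in order. For $\aw_i$, contraction gives $|\aw_i^{(t)}|\le|\aw_i^{(t-1)}|+\eta$, so telescoping from $t=2$ together with $\|\aw^{(2)}\|_\infty=O(1/(\kA\nw^2))$ from Lemma~\ref{lem:feature} yields $|\aw_i^{(t)}|\le\eta t+O(1/(\kA\nw^2))$. For $\w_i$, set $u_s:=\w_i^{(s)}-\w_i^{(2)}$; the update rewrites as $u_s=(1-2\eta\lambda)u_{s-1}-2\eta\lambda\,\w_i^{(2)}-\eta\,\partial_{\w_i}\riskb_\Ddist(\g^{(s-1)})$, hence $\|u_s\|_2\le\|u_{s-1}\|_2+2\eta\lambda\|\w_i^{(2)}\|_2+\eta|\aw_i^{(s-1)}|$; summing from $s=3$ to $t$ with $u_2=0$, using $\|\w_i^{(2)}\|_2=O(\sigmax/\kA)$ from Lemma~\ref{lem:feature} and the bound on $|\aw_i^{(s)}|$ just proved, gives $\|u_t\|_2\le O(t\eta\lambda\sigmax/\kA)+\eta^2t^2+O(t/(\kA\nw^2))$. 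For $\bw_i$, the same computation with $|\bw_i^{(2)}|=O(1/\kA^2)$ gives $|\bw_i^{(t)}-\bw_i^{(2)}|\le O(t\eta\lambda/\kA^2)+\eta^2t^2+O(t/(\kA\nw^2))$.

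The only step that needs care is the $\w_i$ recursion, since it couples to the growing quantity $|\aw_i^{(s)}|$ and, through the regularization, to $\|\w_i^{(2)}\|_2$; one must apply the contraction factor $|1-2\eta\lambda|<1$ so the recursion stays additive in $s$ (rather than multiplicative), and then check that $\eta\sum_{s\le t}|\aw_i^{(s)}|\le\eta^2t^2+O(t/(\kA\nw^2))$ absorbs into the stated form. Everything else is routine bookkeeping.
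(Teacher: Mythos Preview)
Your proposal is correct and follows essentially the same route as the paper: bound $|\aw_i^{(t)}|$ first by the one-step recursion $|\aw_i^{(t)}|\le|\aw_i^{(t-1)}|+\eta$, then feed that bound into the recursion $\|\w_i^{(t)}-\w_i^{(2)}\|_2\le(1-2\eta\lambda)\|\w_i^{(t-1)}-\w_i^{(2)}\|_2+2\eta\lambda\|\w_i^{(2)}\|_2+\eta|\aw_i^{(t-1)}|$ (and the analogous one for $\bw_i$), using the size estimates on $\aw^{(2)},\w^{(2)},\bw^{(2)}$ from Lemma~\ref{lem:feature}. You are a bit more explicit than the paper in justifying $\|\partial_{\w_i}\riskb_\Ddist\|_2\le|\aw_i|$ via $\E\|\x\|_2^2=1$; note also that in the paper $\bw$ is \emph{not} regularized (the regularizer is $\reg(\g;\lambdaa,\lambdaw)=\sum_i\lambdaa|\aw_i|^2+\lambdaw\|\w_i\|_2^2$), so the $O(t\eta\lambda/\kA^2)$ term in the $\bw$ bound of the statement is in fact superfluous, as the paper's own proof shows.
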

\begin{proof}[Proof of Lemma~\ref{lem:small_change}]
First, we bound the size of $ |\aw^{(t)}_i|$: 
\begin{align}
    |\aw^{(t)}_i| 
    & = \left|  (1-2\eta\lambda)\aw^{(t-1)}_i - \eta \frac{\partial  }{\partial \aw_i}  \riskb_{\Ddist}(\g^{(t-1)})\right|
    \\
    & \le \left|  (1-2\eta\lambda)\aw^{(t-1)}_i - \eta \E_{(\x,y) \sim \Ddist} \left\{   y \Id[y \g^{(t-1)}(\x) \le 1]   \act(\langle \w^{(t-1)}_i, \x \rangle  + \bw^{(t-1)}_i ) \right\} \right|
    \\
    & \le |\aw^{(t-1)}_i| + \eta
\end{align}
which leads to
\begin{align}
    |\aw^{(t)}_i| \le \eta t + |\aw^{(2)}_i| 
\end{align}
where $|\aw^{(2)}_i| = O\left(\frac{1}{\kA\nw^2} \right)$.
We are now to bound the change of $\w^{(t)}_i$ and $\bw^{(t)}_i$.
\begin{align}
    & \quad \|\w^{(t)}_i - \w^{(2)}_i\|_2 
    \\
    & = \left\|  (1-2\eta\lambda)\w^{(t-1)}_i - \eta \frac{\partial  }{\partial \w_i}  \riskb_{\Ddist}(\g^{(t-1)}) - \w^{(2)}_i \right\|_2
    \\
    & \le \Bigg\|  (1-2\eta\lambda)\w^{(t-1)}_i 
    \\
    & \quad + \eta \aw^{(t-1)}_i \E_{(\x,y) \sim \Ddist} \left\{   y \Id[y \g^{(t-1)}(\x) \le 1] \Id[\langle \w^{(t-1)}_i, \x \rangle + \bw^{(t-1)}_i \in (0,1) ] \x \right \} - \w^{(2)}_i  \Bigg\|_2
    \\
    & \le \left\|  (1-2\eta\lambda)\w^{(t-1)}_i - \w^{(2)}_i  \right\|_2 
    \\
    & \quad + \eta \left\| \aw^{(t-1)}_i \E_{(\x,y) \sim \Ddist} \left\{   y \Id[y \g^{(t-1)}(\x) \le 1] \Id[\langle \w^{(t-1)}_i, \x \rangle + \bw^{(t-1)}_i \in (0,1) ] \x\right \}  \right\|_2
    \\
    & \le (1-2\eta\lambda) \left\|  \w^{(t-1)}_i - \w^{(2)}_i  \right\|_2 + 2\eta\lambda \left\| \w^{(2)}_i  \right\|_2 + \eta \left| \aw^{(t-1)}_i  \right| 
\end{align}
leading to
\begin{align}
    \|\w^{(t)}_i - \w^{(2)}_i\|_2 & \le   2t\eta\lambda \left\| \w^{(2)}_i  \right\|_2 + \eta^2 t^2 + t |\aw^{(2)}_i|.
\end{align}
Note that $\|\w^{(2)}_i\|_2 = O(\sigmax/\kA)$.
\begin{align}
    |\bw^{(t)}_i - \bw^{(2)}_i| 
    & = \left|  \bw^{(t-1)}_i - \eta \frac{\partial  }{\partial \bw_i}  \riskb_{\Ddist}(\g^{(t-1)}) - \bw^{(2)}_i \right|
    \\
    & \le \left| \bw^{(t-1)}_i - \bw^{(2)}_i  \right| 
    \\
    & \quad + \eta \left| \aw^{(t-1)}_i \E_{(\x,y) \sim \Ddist} \left\{   y \Id[y \g^{(t-1)}(\x) \le 1] \Id[\langle \w^{(t-1)}_i, \x \rangle + \bw^{(t-1)}_i \in (0,1) ] \right \}  \right|
    \\
    & \le \left|  \bw^{(t-1)}_i - \bw^{(2)}_i  \right|  + \eta \left| \aw^{(t-1)}_i  \right| 
\end{align}
leading to
\begin{align}
    |\bw^{(t)}_i - \bw^{(2)}_i| & \le   \eta^2 t^2 + t |\aw^{(2)}_i|.
\end{align}
Note that $\left| \bw^{(2)}_i  \right| = O(1/\kA^2)$.
\end{proof}

\begin{lemma} \label{lem:loss_change}
Assume the same conditions as in Lemma~\ref{lem:small_change}.
Let $\g^{(t)}_{\Tilde{\aw}}(\x) = \sum_{i=1}^\nw \Tilde{\aw}_i \act(\langle \w^{(t)}_i, \x\rangle + \bw^{(t)}_i)$. Then
\begin{align}
    |\ell(\g^{(t)}_{\Tilde{\aw}}(\x), y) - \ell(\g^{(2)}_{\Tilde{\aw}}(\x), y)| \le \|\Tilde{\aw}\|_2 \sqrt{\|\Tilde{\aw}\|_0} \left( O\left(\frac{t\eta\lambda\sigmax}{\kA} \right) + \eta^2 t^2 + O\left(\frac{t}{\kA \nw^2} \right)  \right). 
\end{align}
\end{lemma}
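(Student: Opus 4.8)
The plan is to prove Lemma~\ref{lem:loss_change} by bounding the change in the network output $|\g^{(t)}_{\Tilde{\aw}}(\x) - \g^{(2)}_{\Tilde{\aw}}(\x)|$ and then invoking the $1$-Lipschitz property of the hinge loss $\ell(\cdot, y)$. Concretely, since $\ell(z, y) = \max\{1 - yz, 0\}$ is $1$-Lipschitz in $z$ (as $|y| = 1$), we have $|\ell(\g^{(t)}_{\Tilde{\aw}}(\x), y) - \ell(\g^{(2)}_{\Tilde{\aw}}(\x), y)| \le |\g^{(t)}_{\Tilde{\aw}}(\x) - \g^{(2)}_{\Tilde{\aw}}(\x)|$, so it suffices to control the latter quantity uniformly over $(\x, y) \sim \Ddist$.

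First I would expand the difference using the definition $\g^{(t)}_{\Tilde{\aw}}(\x) = \sum_{i=1}^{\nw} \Tilde{\aw}_i \act(\langle \w^{(t)}_i, \x\rangle + \bw^{(t)}_i)$ and the triangle inequality, restricting attention to the indices $i$ in the support of $\Tilde{\aw}$ (there are $\|\Tilde{\aw}\|_0$ of them). For each such $i$,
\begin{align*}
\left| \act(\langle \w^{(t)}_i, \x\rangle + \bw^{(t)}_i) - \act(\langle \w^{(2)}_i, \x\rangle + \bw^{(2)}_i) \right| \le \left| \langle \w^{(t)}_i - \w^{(2)}_i, \x\rangle \right| + \left| \bw^{(t)}_i - \bw^{(2)}_i \right|
\end{align*}
because $\act$ is $1$-Lipschitz. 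Then I would bound $|\langle \w^{(t)}_i - \w^{(2)}_i, \x\rangle| \le \|\w^{(t)}_i - \w^{(2)}_i\|_2 \|\x\|_2$ via Cauchy–Schwarz, noting that $\|\x\|_2 = \|\hr\|_2 \le O(1)$ with high probability (or in expectation of the relevant norm), so each per-neuron difference is at most $O(\|\w^{(t)}_i - \w^{(2)}_i\|_2 + |\bw^{(t)}_i - \bw^{(2)}_i|)$. Plugging in the bounds from Lemma~\ref{lem:small_change}, each such term is at most $O\left(\frac{t\eta\lambda\sigmax}{\kA}\right) + \eta^2 t^2 + O\left(\frac{t}{\kA \nw^2}\right)$.

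To assemble the final bound, I would write $|\g^{(t)}_{\Tilde{\aw}}(\x) - \g^{(2)}_{\Tilde{\aw}}(\x)| \le \sum_{i: \Tilde{\aw}_i \neq 0} |\Tilde{\aw}_i| \cdot (\text{per-neuron difference})$ and apply Cauchy–Schwarz over the support: $\sum_{i \in \mathrm{supp}(\Tilde{\aw})} |\Tilde{\aw}_i| \le \|\Tilde{\aw}\|_2 \sqrt{\|\Tilde{\aw}\|_0}$. Since the per-neuron difference bound is uniform in $i$, this yields exactly $\|\Tilde{\aw}\|_2 \sqrt{\|\Tilde{\aw}\|_0} \left( O\left(\frac{t\eta\lambda\sigmax}{\kA}\right) + \eta^2 t^2 + O\left(\frac{t}{\kA \nw^2}\right) \right)$, as claimed.

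I do not expect any serious obstacle here — this is a routine Lipschitz-and-Cauchy–Schwarz argument. The only point requiring mild care is the treatment of $\|\x\|_2$: one should confirm it is $O(1)$ (which holds since $\x = \Dict\hr$ with $\Dict$ orthonormal and $\hr = (\hro - \E[\hro])/\sigmax$ has $\E\|\hr\|_2^2 = 1$, and a high-probability bound on $\|\hr\|_2$ follows from concentration as used in earlier lemmas), so that the factor $\|\x\|_2$ can be absorbed into the $O(\cdot)$ constants. The statement as written implicitly treats this norm as bounded, consistent with the conditioning on typical events already in force throughout this section.
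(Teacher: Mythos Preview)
Your proposal is correct and follows essentially the same route as the paper: use the $1$-Lipschitz property of the hinge loss, then the $1$-Lipschitz property of $\act$, then Cauchy--Schwarz on the support of $\Tilde{\aw}$ to produce the factor $\|\Tilde{\aw}\|_2\sqrt{\|\Tilde{\aw}\|_0}$, and finally plug in Lemma~\ref{lem:small_change}. The paper's proof is terser (it writes the chain of inequalities and defers to Lemma~\ref{lem:small_change} without explicitly isolating the $\|\x\|_2$ factor), but your added remark about $\|\x\|_2 = O(1)$ is exactly the implicit step needed and is handled the same way.
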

\begin{proof}[Proof of Lemma~\ref{lem:loss_change}]
It follows from that
\begin{align}
    & \quad |\ell(\g^{(t)}_{\Tilde{\aw}}(\x), y) - \ell(\g^{(2)}_{\Tilde{\aw}}(\x))| 
    \\
    & \le |\g^{(t)}_{\Tilde{\aw}}(\x) - \g^{(2)}_{\Tilde{\aw}}(\x)|
    \\
    & \le \|\Tilde{\aw}\|_2 \sqrt{\|\Tilde{\aw}\|_0} \max_{i \in [2\nw]}\left| \act(\langle \w^{(t)}_i, \x\rangle + \bw^{(t)}_i) - \act(\langle \w^{(2)}_i, \x\rangle + \bw^{(2)}_i)\right|
    \\
    & \le \|\Tilde{\aw}\|_2 \sqrt{\|\Tilde{\aw}\|_0} \max_{i \in [2\nw]}\left( \left| \langle \w^{(t)}_i - \w^{(2)}_i, \x\rangle \right| + \left| \bw^{(t)}_i  -  \bw^{(2)}_i \right| \right).
\end{align}
and Lemma~\ref{lem:small_change}.
\end{proof}

\subsection{Proof of Theorem~\ref{thm:main}}\label{app:main_theorem_proof}

Based on the above lemmas, following the same argument as in the proof of Theorem 2 in~\cite{daniely2020learning}, we get our main theorem.
\begin{theorem}[Full version of Theorem~\ref{thm:main}]
Set 
\begin{align}
    & \eta^{(1)} = \frac{\pp^2 \sigmax^2 }{ \kA\nw^3}, \lambdaa^{(1)} =  0, \lambdaw^{(1)} = 1/(2\eta^{(1)}), \sigmansi^{(1)} = 1/\kA^2,
    \\
    & \eta^{(2)} = 1,  \lambdaa^{(2)} = \lambdaw^{(2)} = 1/(2\eta^{(2)}), \sigmansi^{(2)} = 1/\kA^2,
    \\
    & \eta^{(t)} = \eta = \frac{\kA^2}{T \nw^{1/3}},  \lambdaa^{(t)}  = \lambdaw^{(t)} = \lambda  \le \frac{\kA^3}{\sigmax \nw^{1/3}}, \sigmansi^{(t)} = 0, \mathrm{~for~} 2 < t \le T.
\end{align}      
For any $\delta \in (0, 1)$, if $\pn = \Omega(\kA^2/\mDict)$,  $\kA = \Omega\left(\log^2 \left(\frac{\mDict}{\delta\pp}\right)\right)$, $ \max\{\Omega(\kA^4), \mDict \} \le \nw \le \text{poly}(D)$, 
then we have for any $\Ddist \in \Dfamily_\Xi$, with probability at least $1- \delta$, there exists $t \in [T]$ such that
\begin{align}
    \Pr[\sgn(\g^{(t)}(\x)) \neq y] \le \riskb_\Ddist(\g^{(t)}) = O\left(\frac{\kA^8}{\nw^{2/3}} + \frac{\kA^3 T}{\nw^2} + \frac{\kA^2 \nw^{2/3}}{T}   \right).
\end{align}
Consequently, for any $\epsilon \in (0,1)$, if $T = \nw^{4/3}$, and $ \max\{\Omega(\kA^{12}/\epsilon^{3/2}), \mDict \} \le \nw \le \text{poly}(D)$, then 
\begin{align}
    \Pr[\sgn(\g^{(t)}(\x)) \neq y] \le \riskb_\Ddist(\g^{(t)}) \le \epsilon.
\end{align}
\end{theorem}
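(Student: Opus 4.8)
The plan is to split the $T$ gradient steps into a short \emph{feature-learning phase} (steps $1$ and $2$) and a longer \emph{classifier-learning phase} (steps $3,\dots,T$): the first phase is handled entirely by the lemmas already proved, and the second is an online convex optimization over the second layer, analyzed exactly as in the proof of Theorem~2 of~\cite{daniely2020learning}. Throughout I would condition on the initialization event $\mathcal{E}$ of Lemma~\ref{lem:feature} (of probability at least $1-\delta$), after first checking that the hyperparameter values in the statement satisfy the hypotheses of Lemmas~\ref{lem:first_step}, \ref{lem:feature}, \ref{lem:small_change} and~\ref{lem:loss_change} (in particular $\sigmansi^{(1)},\sigmansi^{(2)}<1/\kA$ and the step sizes lie in the allowed ranges); note also $\sigmansi^{(t)}=0$ for $t>2$, so $\riskb_\Ddist(\g^{(t)};\sigmansi^{(t)})=\riskb_\Ddist(\g^{(t)})$ in the second phase.

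On $\mathcal{E}$, Lemma~\ref{lem:feature} supplies a sparse comparator $\Tilde{\aw}$ with $\|\Tilde{\aw}\|_0=O(\nw/\kA)$, $\|\Tilde{\aw}\|_\infty=O(\kA^5/\nw)$ and $\|\Tilde{\aw}\|_2^2=O(\kA^9/\nw)$ such that $\Tilde{\g}(\x):=\sum_i\Tilde{\aw}_i\act(\langle\w^{(2)}_i,\x\rangle+\bw^{(2)}_i)$ has $\riskb_\Ddist(\Tilde{\g})=0$, together with $\|\w^{(2)}_i\|_2=O(\sigmax/\kA)$, $|\bw^{(2)}_i|=O(1/\kA^2)$ and $\|\aw^{(2)}\|_\infty=O(1/(\kA\nw^2))$. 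For $t>2$, Lemma~\ref{lem:small_change} shows the first layer barely moves: $\|\w^{(t)}_i-\w^{(2)}_i\|_2$ and $|\bw^{(t)}_i-\bw^{(2)}_i|$ are $O\!\big(t\eta\lambda\sigmax/\kA+\eta^2 t^2+t/(\kA\nw^2)\big)$, and Lemma~\ref{lem:loss_change} converts this into a loss perturbation of at most $\|\Tilde{\aw}\|_2\sqrt{\|\Tilde{\aw}\|_0}$ times that quantity.

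For the classifier-learning phase I would observe that, for fixed first-layer weights, $\aw\mapsto\loss(y,\sum_i\aw_i\act(\langle\w_i,\x\rangle+\bw_i))$ is convex, and that the update of $\aw$ together with the penalty $\lambdaa^{(t)}\|\aw\|_2^2$ is precisely regularized online gradient descent on these convex losses over the (almost) frozen feature map $\x\mapsto(\act(\langle\w^{(2)}_i,\x\rangle+\bw^{(2)}_i))_i\in[0,1]^{2\nw}$. The standard regret bound against $\Tilde{\aw}$, using $\|\nabla_\aw\loss\|_2=O(\sqrt\nw)$ (each feature lies in $[0,1]$) and $\riskb_\Ddist(\Tilde{\g})=0$, together with the drift bounds above, would give
\begin{align}
\frac{1}{T-2}\sum_{t=3}^{T}\riskb_\Ddist(\g^{(t)})=O\!\left(\frac{\|\Tilde{\aw}\|_2^2}{\eta T}+\eta\nw+\lambda\|\Tilde{\aw}\|_2^2+\|\Tilde{\aw}\|_2\sqrt{\|\Tilde{\aw}\|_0}\Big(\tfrac{T\eta\lambda\sigmax}{\kA}+\eta^2T^2+\tfrac{T}{\kA\nw^2}\Big)\right).
\end{align}
Substituting $\eta=\kA^2/(T\nw^{1/3})$, $\lambda\le\kA^3/(\sigmax\nw^{1/3})$ and the norm bounds from Lemma~\ref{lem:feature}, and discarding dominated contributions, the right-hand side collapses to $O\!\big(\kA^8/\nw^{2/3}+\kA^3 T/\nw^2+\kA^2\nw^{2/3}/T\big)$ — the routine arithmetic I would not grind through here; schematically, the drift pieces $\eta^2T^2$ and $T\eta\lambda\sigmax/\kA$ (scaled by $\|\Tilde{\aw}\|_2\sqrt{\|\Tilde{\aw}\|_0}=O(\kA^4)$) give the $\kA^8/\nw^{2/3}$ term, the $t/(\kA\nw^2)$ drift gives $\kA^3T/\nw^2$, and the online-gradient term $\eta\nw$ gives $\kA^2\nw^{2/3}/T$. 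By averaging, some $t\in\{3,\dots,T\}\subseteq[T]$ attains $\riskb_\Ddist(\g^{(t)})$ at most this bound, and $\Pr[\sgn(\g^{(t)}(\x))\ne y]\le\riskb_\Ddist(\g^{(t)})$ since $\Id[\sgn(\hat y)\ne y]\le\loss(y,\hat y)$ for the hinge loss. Finally, choosing $T=\nw^{4/3}$ equalizes the last two terms at $O(\kA^3/\nw^{2/3})$, leaving $O(\kA^8/\nw^{2/3})$, which is $\le\epsilon$ once $\nw\ge\Omega(\kA^{12}/\epsilon^{3/2})$; the requirement $\nw\ge\mDict$ is inherited from Lemma~\ref{lem:feature}.

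The conceptually hard work — that the good features ($\propto\sum_{j\in\A}\Dict_j$) emerge in step~1 and are sharpened in step~2 — is already packaged in Lemmas~\ref{lem:approx}, \ref{lem:first_step} and~\ref{lem:feature}, so the remaining obstacle is the careful bookkeeping of the online-convex-optimization reduction while the first layer is only \emph{approximately} frozen: one must verify that the displacement bounds of Lemma~\ref{lem:small_change} enter the regret analysis purely additively and stay dominated, and simultaneously that $\eta$ and $\lambda$ are small enough that the drift and regularization contributions are lower order yet large enough that $\|\Tilde{\aw}\|_2^2/(\eta T)$ decays — i.e.\ that all three error terms can be balanced by the single free parameter $T$. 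Tracking the polynomial exponents through this balance is exactly what forces the final width/sample requirement $\nw\ge\Omega(\kA^{12}/\epsilon^{3/2})$.
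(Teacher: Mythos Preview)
Your proposal is correct and follows essentially the same approach as the paper: after invoking Lemma~\ref{lem:feature} for the comparator $\Tilde{\aw}$ and Lemmas~\ref{lem:small_change}--\ref{lem:loss_change} for first-layer drift, the paper applies the online-gradient-descent regret bound (Theorem~13 of~\cite{daniely2020learning}) to the regularized loss $\Tilde{\riskb}_\Ddist(\g^{(t)})=\riskb_\Ddist(\g^{(t)})+\lambdaa^{(t)}\|\aw^{(t)}\|_2^2$ and substitutes the hyperparameters. The only small omission in your sketch is the $\|\aw^{(2)}\|_2\sqrt{\nw}$ term arising from the starting point in the regret bound, but this is $O(1/(\kA\nw))$ and dominated.
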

\begin{proof}[Proof of Theorem~\ref{thm:main}]
Consider $\Tilde{\riskb}_\Ddist(\g^{(t)}) = \E[\ell(\g^{(t)},y)] + \lambdaa^{(t)} \|\aw^{(t)}\|^2_2$. Note that the gradient update using $\Tilde{\riskb}_\Ddist(\g^{(t)})$ is the same as the update in our learning algorithm. Then by Theorem~\ref{thm:online_gradient_descent}, Lemma~\ref{lem:feature}, and Lemma~\ref{lem:loss_change},
\begin{align}
    \frac{1}{T} \sum_{t=3}^T \Tilde{\riskb}_\Ddist(\g^{(t)}) 
    & \le \frac{\|\Tilde{\aw}\|_2^2}{2} 
    + \|\Tilde{\aw}\|_2 \sqrt{\|\Tilde{\aw}\|_0} \left( O\left(\frac{T\eta\lambda\sigmax}{\kA} \right) + \eta^2 T^2 + O\left(\frac{T}{\kA  \nw^2 } \right)  \right)
    \\
    & + \frac{\|\Tilde{\aw}\|_2^2}{2 \eta T} + \|\aw^{(2)}\|_2 \sqrt{\nw} + \eta \nw
    \\
    & \le O\left(\frac{\kA^9}{\nw} + \kA^4 \eta^2 T^2 + \frac{\kA^3 T}{\nw^2} + \frac{k^9}{\eta T \nw} + \eta \nw \right).
    \\
    & \le O\left(\frac{\kA^8}{\nw^{2/3}} + \frac{\kA^3 T}{\nw^2} + \frac{\kA^2 \nw^{2/3}}{T}   \right).
\end{align}
The statement follows from that 0-1 classification error is bounded by the hinge-loss.
\end{proof}

\begin{theorem}[Theorem 13 in~\cite{daniely2020learning}] \label{thm:online_gradient_descent}
Fix some $\eta$, and let $f_1, \ldots, f_T$ be some sequence of convex functions. Fix some $\theta_1$, and assume we update $\theta_{t+1} = \theta_t - \eta \nabla f_t(\theta_t) $. Then for every $\theta^*$ the following holds:
\begin{align}
    \frac{1}{T} \sum_{t=1}^T f_t(\theta_t) \le \frac{1}{T} \sum_{t=1}^T f_t(\theta^*) + \frac{1}{2\eta T} \|\theta^*\|_2^2 + \|\theta_1\|_2 \frac{1}{T}  \sum_{t=1}^T \|\nabla f_t(\theta_t)\|_2 + \eta \frac{1}{T}  \sum_{t=1}^T \|\nabla f_t(\theta_t)\|_2^2.
\end{align}
\end{theorem}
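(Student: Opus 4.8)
The plan is to reduce this to the textbook regret analysis of online gradient descent, with one small twist to obtain the clean dependence on $\|\theta^*\|_2$ and $\|\theta_1\|_2$ exactly as stated. First I would write $g_t := \nabla f_t(\theta_t)$ and introduce the shifted iterates $\phi_t := \theta_t - \theta_1$, so that $\phi_1 = 0$ and the update becomes $\phi_{t+1} = \phi_t - \eta g_t$. By convexity (and differentiability) of $f_t$ we have $f_t(\theta_t) - f_t(\theta^*) \le \langle g_t, \theta_t - \theta^* \rangle$, and I would split this as $\langle g_t, \theta_t - \theta^* \rangle = \langle g_t, \phi_t - \theta^* \rangle + \langle g_t, \theta_1 \rangle$, i.e. treat $\theta^*$ itself (rather than $\theta^* - \theta_1$) as the comparator for the shifted sequence.

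For the first piece I would run the standard potential argument: expanding $\|\phi_{t+1} - \theta^*\|_2^2 = \|\phi_t - \theta^*\|_2^2 - 2\eta \langle g_t, \phi_t - \theta^* \rangle + \eta^2 \|g_t\|_2^2$ and rearranging gives $\langle g_t, \phi_t - \theta^* \rangle = \tfrac{1}{2\eta}\left( \|\phi_t - \theta^*\|_2^2 - \|\phi_{t+1} - \theta^*\|_2^2 \right) + \tfrac{\eta}{2}\|g_t\|_2^2$. Summing over $t = 1, \dots, T$, the difference terms telescope to $\tfrac{1}{2\eta}\big(\|\phi_1 - \theta^*\|_2^2 - \|\phi_{T+1} - \theta^*\|_2^2\big) \le \tfrac{1}{2\eta}\|\theta^*\|_2^2$, using $\phi_1 = 0$ and discarding the nonpositive trailing term. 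For the second piece I would simply apply Cauchy--Schwarz: $\sum_{t=1}^T \langle g_t, \theta_1 \rangle \le \|\theta_1\|_2 \sum_{t=1}^T \|g_t\|_2$.

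Combining the two estimates yields $\sum_{t=1}^T \big(f_t(\theta_t) - f_t(\theta^*)\big) \le \tfrac{1}{2\eta}\|\theta^*\|_2^2 + \tfrac{\eta}{2}\sum_{t=1}^T \|g_t\|_2^2 + \|\theta_1\|_2 \sum_{t=1}^T \|g_t\|_2$; dividing through by $T$ and loosening $\tfrac{\eta}{2}$ to $\eta$ (which only weakens the bound) gives the claimed inequality. Frankly there is no real obstacle here — this is a standard online-convex-optimization estimate — so the only thing to get right is the bookkeeping in the change of variables $\phi_t = \theta_t - \theta_1$: it is exactly this shift that replaces the usual $\|\theta^* - \theta_1\|_2^2$ appearing in the regret bound by $\|\theta^*\|_2^2$, at the price of the extra $\|\theta_1\|_2 \sum_t \|g_t\|_2$ term, which is the precise form into which the proof of Theorem~\ref{thm:main} substitutes the learned-feature construction of Lemma~\ref{lem:feature}.
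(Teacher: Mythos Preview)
Your argument is correct: the shift $\phi_t = \theta_t - \theta_1$ together with the standard telescoping potential argument and Cauchy--Schwarz yields exactly the stated bound, and loosening $\eta/2$ to $\eta$ is harmless. Note, however, that the paper does not actually prove this statement; it is quoted verbatim as Theorem~13 of \cite{daniely2020learning} and used as a black box in the proof of Theorem~\ref{thm:main}, so there is no in-paper proof to compare against --- your derivation is the standard one and matches what one would find in the cited source.
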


\section{Lower Bound for Linear Models on Fixed Feature Mappings} \label{app:proof_fix}

\begin{theorem}[Restatement of Theorem~\ref{thm:lower_fixed}]
Suppose $\Psi$ is a data-independent feature mapping of dimension $\N$ with bounded features, i.e., $\Psi: \X \rightarrow [-1, 1]^\N$. Define for $B > 0$:
\begin{align}
    \mathcal{H}_B = \{h(\xo): h(\xo) = \langle \Psi(\xo), w \rangle, \|w\|_2 \le B \}.
\end{align}
Then, if $3 < \kA \le \mDict/16$ and $\kA$ is odd, then there exists $\Ddist \in \Dfamily_\Xi$ such that all $h\in \mathcal{H}_B$ have hinge-loss at least $\pn \left(1 - \frac{\sqrt{2\N} B}{2^\kA}\right)$.
\end{theorem}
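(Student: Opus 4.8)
The plan is to construct a hard distribution $\Ddist \in \Dfamily_\Xi$ as a mixture of two components: with probability $1-\pn$, draw from a distribution $\mathcal{D}^{(0)}_\A$ on which the label is "easy" (constant, say $y=-1$, realized by forcing $\sum_{i\in\A}\hro_i$ to land outside $\SP$), and with probability $\pn$, draw from $\mathcal{D}^{(1)}_\A$ in which the relevant coordinates $\hro_j$ ($j\in\A$) are i.i.d.\ uniform on $\{0,1\}$ and the label is the parity $\Pi_\A(z)=\prod_{j\in\A}(2\hro_j-1)$ (this is the Example 2 labeling, since $\kA$ is odd). One checks that with the background coordinates set as in (A2) — here taking $\pn$ to be the appearance probability — and an appropriate choice of $\Ddist_{\hro}$ on $\hro_\A$, the resulting mixture indeed satisfies (A0), (A1), (A2), so $\Ddist \in \Dfamily_\Xi$; the correlation condition (A1) with $\pp>0$ is where the $\kA\le\mDict/16$, $\kA$ odd constraints get used, since on the parity component $\E[y\hro_j]-\E[y]\E[\hro_j]$ must be positive and uniform over $j\in\A$.

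Next I would lower bound the hinge loss of an arbitrary $h=\langle\Psi(\cdot),w\rangle \in \mathcal{H}_B$. Since hinge loss is at least $(1-y h(\xo))_+ \ge 1 - y h(\xo)$ pointwise, the loss on $\Ddist$ is at least $\pn\,\E_{\mathcal{D}^{(1)}_\A}[1 - y h(\xo)] = \pn\big(1 - \E_{\mathcal{D}^{(1)}_\A}[\Pi_\A(z) h(\xo)]\big)$ plus a nonnegative contribution from the easy component (which I will simply drop, or handle to confirm it is $\ge 0$). So it suffices to bound the correlation $\E_{\mathcal{D}^{(1)}_\A}[\Pi_\A(z)\, h(\xo)]$. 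On the component $\mathcal{D}^{(1)}_\A$, the map $z\mapsto\xo=\Dict\hro$ is a fixed injection, so each feature $\Psi_i(\xo)$ is a bounded function of $z$, expandable in the Fourier/parity basis $\{\Pi_S\}_{S}$. Writing $\E[\Pi_\A \Psi_i] = \widehat{\Psi_i}(\A)$, we get $\E[\Pi_\A h] = \langle w, v\rangle$ where $v_i=\widehat{\Psi_i}(\A)$, so $|\E[\Pi_\A h]| \le B\|v\|_2$. Because each $\Psi_i$ takes values in $[-1,1]$, Parseval gives $\sum_S \widehat{\Psi_i}(S)^2 \le 1$, hence $|\widehat{\Psi_i}(\A)| \le$ the weight on the single coefficient $\A$; to get the $2^{-\kA}$ factor I would instead average over the exponentially many choices of $\A$ (all $\binom{\mDict}{\kA}$ of them, or a large orthogonal subfamily): since the $\Pi_\A$ are orthonormal, $\sum_\A \widehat{\Psi_i}(\A)^2 \le 1$ for each $i$, so by averaging there exists $\A$ with $\sum_i \widehat{\Psi_i}(\A)^2 \le N/\binom{\mDict}{\kA}$. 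Combining, for that $\A$, $\|v\|_2 \le \sqrt{N/\binom{\mDict}{\kA}}$, and since $\binom{\mDict}{\kA}\ge(\mDict/\kA)^\kA \ge 16^\kA \cdot(\text{something})$... — actually the cleaner route, matching the theorem's $2^\kA$, is to follow~\cite{daniely2020learning}: restrict to a subfamily of $\A$'s of size at least $2^{2\kA}$ (possible when $\kA\le\mDict/16$), getting $\|v\|_2\le\sqrt{2N}/2^\kA$ for some $\A$ in the subfamily, whence $|\E[\Pi_\A h]|\le \sqrt{2N}B/2^\kA$ and the loss is at least $\pn(1-\sqrt{2N}B/2^\kA)$.

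The main obstacle I anticipate is the combinatorial/counting step: carefully verifying that there is a subfamily of $\kA$-subsets of $[\mDict]$ of size at least $2^{2\kA}$ (or whatever exact count yields the stated $\sqrt{2N}/2^\kA$ bound) under the hypothesis $3<\kA\le\mDict/16$, and that the resulting $\Ddist$ genuinely lies in $\Dfamily_\Xi$ — in particular nailing down $\Ddist_{\hro}$ on $\hro_\A$ so that (A0) (balanced classes, $\Pr[\sum_{i\in\A}\hro_i\in\SP]=1/2$) and (A1) (uniform positive correlation $\pp$) hold simultaneously with the mixture weight $\pn$ being exactly $\Pr[\hro_i=1]$ for $i\notin\A$. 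The correlation bound itself is essentially Parseval plus Cauchy--Schwarz and should go through routinely once the distribution is pinned down; I would lean on~\cite{daniely2020learning} for the precise packing count rather than re-deriving it.
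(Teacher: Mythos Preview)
Your overall approach matches the paper's: construct the distribution as a mixture with weight $\pn$ on a ``hard'' component (uniform $\hro_\A$, parity label) and weight $1-\pn$ on an ``easy'' component, lower-bound the hinge loss by the hard component's contribution alone (the easy part contributes something nonnegative), and invoke the Daniely--Malach Parseval/averaging bound on the parity correlations. The paper treats that last step as a black box (their Theorem~\ref{thm:parity_lower}), while you sketch its proof; both are fine.

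The one place your proposal would actually fail is the construction of the easy component. A constant label $y=-1$ there makes $\Pr[y=+1]=\pn/2$ on the mixture, violating (A0). The paper's resolution of the obstacle you flagged is to take $\hro_\A$ uniform on $\{(0,\ldots,0),(1,\ldots,1)\}$ in the easy component: since $\kA$ is odd, this gives $y=\pm1$ with probability $1/2$ each, so both components---and hence the mixture---are balanced. To get the background marginal $\Pr[\hro_j=1]=\pn$ for $j\notin\A$, the paper sets it to $1/2$ in the hard component and to $\pn/(2-2\pn)$ in the easy one, so the mixture gives $\pn\cdot\tfrac12+(1-\pn)\cdot\tfrac{\pn}{2(1-\pn)}=\pn$. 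With this easy component your argument goes through verbatim; one also checks $\pp=(1-\pn)/2>0$ uniformly over $j\in\A$, so (A1) holds. (The constraints ``$\kA$ odd'' and ``$\kA\le\mDict/16$'' are used for the parity labeling and the counting step respectively, not for (A1).)
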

\begin{proof}[Proof of Theorem~\ref{thm:lower_fixed}]
We first show that $\Dfamily_\Xi$ contains some distributions that are essentially sparse parity learning problems, and then we invoke the lower bound result from existing work for such problems.

Consider $\Ddist$ defined as follows. 
\begin{itemize}
    \item Let $\SP = \{i \in [\kA]: i \textrm{~is odd}\}$. That is, if there are odd numbers of $1$'s in $\hro_\A$, then $y=+1$. 
    \item Let $\Ddist^{(0)}_{\hro}$ be a distribution where all entries  $\hro_j$ are i.i.d.\ with $\Pr[\hro_j = 0] = \Pr[\hro_j = 1] = 1/2$. 
    Let $\Ddist^{(0)}$ be the distribution over $(\xo, y)$ induced by $\Ddist^{(0)}_{\hro}$ and the above $\SP$.
    \item Let $\Ddist^{(1)}_{\hro}$ be a distribution where all entries  $\hro_j$ for $j \not\in \A$ are i.i.d.\ with $ \Pr[\hro_j = 1] = \pn/(2 - 2\pn)$, while $\Pr[\hro_\A = (0,0,\ldots,0)] = \Pr[\hro_\A = (1,1,\ldots,1)] = 1/2$. 
    Let $\Ddist^{(1)}$ be the distribution over $(\xo, y)$ induced by $\Ddist^{(1)}_{\hro}$ and the above $\SP$.
    \item Let $\Ddist^{\textrm{mix}}_\A = \pn \Ddist^{(0)} + (1-\pn) \Ddist^{(1)}$. 
\end{itemize}
It can be verified that such distributions are included in $\Dfamily_\Xi$ for $\pp = \Theta(1)$. 

Assume for contradiction that for all $\Ddist \in \Dfamily_\Xi$, there exists $h^* \in \mathcal{H}_B$ such that $h= \langle \Psi, w^*\rangle$ loss smaller than $\pn \left(1 - \frac{\sqrt{2\N} B}{2^\kA}\right)$. Then for all the distributions $\Ddist^{\textrm{mix}}_\A$ defined above, we have
\begin{align}
    \E_{\Ddist^{(0)}} [\ell(h^*(\xo), y)] < 1 - \frac{\sqrt{2\N} B}{2^\kA}.
\end{align}

Now let $\Ddist_z$ be a distribution over $z \in \{-1, +1\}^{\mDict}$ with i.i.d.\ entries $z_j$ and $\Pr[z_j = -1] = \Pr[z_j = +1] = 1/2$. Let $f_\A(z) = \prod_{j \in \A} z_j$ be the $k$-sparse parity functions. Let $\Psi'(z) = \Psi(\Dict(z+1)/2)$. Then we have $h'(z) = \langle \Psi'(z), w^*\rangle$ such that for all $\A$, 
\begin{align}
    \E_{\Ddist_z} [\ell(h'(z), f_\A(z))] < 1 - \frac{\sqrt{2\N} B}{2^\kA}.
\end{align}
This is contradictory to Theorem~\ref{thm:parity_lower}.
\end{proof}

The following theorem is implicit in the proof in Theorem 1 in \cite{daniely2020learning}.
\begin{theorem}\label{thm:parity_lower}
For a subset $\A \subseteq [\mDict]$ of size $\kA$, let the distribution $\mathcal{D}_\A$ over $(z,y)$ defined as follows: $z$ is uniform over $\{\pm 1\}^\mDict$ and $y = \prod_{i\in \A} z_i$. 
Fix some $\Psi: \{\pm 1\}^\mDict \rightarrow [-1, +1]^N$, and define:
\[
    \mathcal{H}^B_\Psi = \{ z \rightarrow \langle \Psi(z), w \rangle: \|w\|_2 \le B \}.
\]
If $\kA$ is odd and $\kA\le \mDict/16$, then there exists some $\A$ such that 
\[ 
  \min_{h \in \mathcal{H}^B_\Psi} \E_{\Ddist_\A} [\ell(h(z), y)] \ge 1 - \frac{\sqrt{2\N} B}{2^\kA}.
\]
\end{theorem}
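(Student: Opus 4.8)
\textbf{Proof proposal for Theorem~\ref{thm:parity_lower}.}
The plan is a standard Fourier/averaging argument: a fixed $\N$-dimensional bounded feature map cannot be correlated with all ${\mDict\choose\kA}$ size-$\kA$ parities at once, and an upper bound on the correlation with $\Pi_\A(z):=\prod_{i\in\A}z_i$ converts directly into a hinge-loss lower bound for $\Ddist_\A$. First I would reduce to correlation. For any $h(z)=\langle\Psi(z),\w\rangle$ with $\|\w\|_2\le B$, pointwise $\ell(h(z),y)=\max\{0,1-y\,h(z)\}\ge 1-y\,h(z)$; since $z$ is uniform on $\{\pm1\}^\mDict$, the parities $\{\Pi_S\}_{S\subseteq[\mDict]}$ form an orthonormal basis of $L^2(\{\pm1\}^\mDict)$ and $y=\Pi_\A(z)$, so
\begin{align}
\E_{\Ddist_\A}[\ell(h(z),y)] \ge 1-\E_z[\Pi_\A(z)\,h(z)] = 1-\langle v_\A,\w\rangle \ge 1-B\,\|v_\A\|_2,
\end{align}
where $v_\A:=\E_z[\Pi_\A(z)\,\Psi(z)]\in\mathbb{R}^\N$, whose $j$-th coordinate is the Fourier coefficient $\widehat{\Psi_j}(\A)=\E_z[\Psi_j(z)\Pi_\A(z)]$. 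Hence it suffices to exhibit one set $\A$ with $|\A|=\kA$ and $\|v_\A\|_2\le \sqrt{2\N}/2^\kA$; the displayed bound then holds uniformly over $h\in\mathcal{H}^B_\Psi$, so the minimum over $h$ is at least $1-\sqrt{2\N}B/2^\kA$.

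Next I would average over all size-$\kA$ subsets. For each coordinate $j\in[\N]$, Parseval over the Boolean cube gives $\sum_{S\subseteq[\mDict]}\widehat{\Psi_j}(S)^2=\E_z[\Psi_j(z)^2]\le 1$ using $\Psi_j(z)\in[-1,1]$. Restricting the sum to subsets of size exactly $\kA$ and then summing over $j$,
\begin{align}
\sum_{\A:\,|\A|=\kA}\|v_\A\|_2^2 = \sum_{j=1}^{\N}\sum_{\A:\,|\A|=\kA}\widehat{\Psi_j}(\A)^2 \le \N,
\end{align}
so there is an $\A$ of size $\kA$ with $\|v_\A\|_2^2\le \N/{\mDict\choose\kA}$. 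I would then close with the counting bound ${\mDict\choose\kA}\ge(\mDict/\kA)^\kA\ge 16^\kA\ge 2^{2\kA-1}$, where the first step is the elementary inequality $\binom{n}{k}\ge(n/k)^k$ and the second uses the hypothesis $\kA\le\mDict/16$. This yields $\|v_\A\|_2\le\sqrt{\N/2^{2\kA-1}}=\sqrt{2\N}/2^\kA$ for that $\A$, which combined with the first paragraph gives the theorem. (The assumption that $\kA$ is odd is inherited from the application in Theorem~\ref{thm:lower_fixed} and is not used in this bound.)

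There is no deep obstacle; the one point requiring care is the order of quantifiers. The bad set $\A$ must be produced by the averaging argument \emph{before} the adversary optimizes $\w$, which is why the averaging is run against the total $L^2$ mass $\sum_{j}\|\Psi_j\|_2^2\le\N$ of the fixed map $\Psi$ rather than against any one predictor; finiteness of $\N$ together with $\|\Psi_j\|_\infty\le1$ is exactly what keeps this mass bounded. The remaining bookkeeping detail is to restrict the averaging to subsets of size \emph{exactly} $\kA$, so that the count ${\mDict\choose\kA}$ — and hence the $2^{\Omega(\kA)}$ gap — is available rather than a sum over all $2^{\mDict}$ subsets.
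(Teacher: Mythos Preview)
Your proposal is correct and follows essentially the same Fourier/averaging argument sketched in the paper's Section~\ref{sec:proof_sketch}; the paper itself does not give a self-contained proof of Theorem~\ref{thm:parity_lower} but cites it as implicit in Theorem~1 of \cite{daniely2020learning}, and your writeup is exactly the standard derivation underlying that result. One minor remark: your binomial estimate is quite loose (in fact $\binom{\mDict}{\kA}\ge 16^{\kA}=2^{4\kA}$ under $\kA\le\mDict/16$, far stronger than the $2^{2\kA-1}$ you use), but the slack you keep is precisely what is needed to match the constant $\sqrt{2\N}$ in the statement.
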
 

We now prove the corollary.

\begin{corollary}[Restatement of Corollary~\ref{cor:lower_fixed}]
For any function $f$ using a shift-invariant kernel $K$ with RKHS norm bounded by $L$, or $f(x) = \sum_i \alpha_i K(z_i, x)$ for some data points $z_i$ and $||\alpha||_2 \le L$. If $3 < \kA \le \mDict/16$ and $\kA$ is odd, then there exists $\Ddist \in \Dfamily_\Xi$ such that $f$ have hinge-loss at least $\pn (1 - {\textup{poly}(d, L) \over 2^\kA}) - {1 \over \textup{poly}(d, L)}$.
\end{corollary}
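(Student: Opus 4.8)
The plan is to derive this from Theorem~\ref{thm:lower_fixed} by way of the random features approximation of Rahimi and Recht. First I would recall that by Claim~1 of~\cite{rahimi2007random}, for any shift-invariant kernel $K$ and any accuracy $\epsilon' \in (0,1)$, there is a randomized feature map $\xo \mapsto \Psi'(\xo) \in [-1,1]^{\N}$ with $\N = \textup{poly}(d, 1/\epsilon')$ such that, with positive probability over the random draw of $\Psi'$, every $f$ of the stated form — $f$ in the RKHS with norm at most $L$, or $f(\xo) = \sum_i \alpha_i K(z_i, \xo)$ with $\|\alpha\|_2 \le L$ — can be written as $f(\xo) = \langle \Psi'(\xo), w \rangle + r(\xo)$ with $\|w\|_2 \le B = \textup{poly}(d, L, 1/\epsilon')$ and $\sup_{\xo}|r(\xo)| \le \epsilon'$, where the supremum is over the bounded input set $\{\Dict\hro : \hro \in \{0,1\}^{\mDict}\}$ occurring in $\Dfamily_\Xi$ (which lies in a ball of radius $\sqrt{\mDict}$ since $\Dict$ is orthonormal, so this diameter enters $\N$ only polynomially). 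For the RKHS-norm case one first applies the standard representer/integral reduction to bring $f$ into the form $\sum_i \alpha_i K(z_i,\cdot)$ with a controlled coefficient norm before invoking the random-feature bound. I then fix one realization of $\Psi'$ for which the above holds; crucially, $\Psi'$ is data-independent, depending only on $K$ and the random seed, not on the target distribution.

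Next I would apply Theorem~\ref{thm:lower_fixed} with $\Psi := \Psi'$ and this $B$: under $3 < \kA \le \mDict/16$ with $\kA$ odd, there is a distribution $\Ddist \in \Dfamily_\Xi$ such that every $h \in \mathcal{H}_B$ has hinge-loss at least $\pn\bigl(1 - \sqrt{2\N}B/2^{\kA}\bigr)$. In particular, since $h(\xo) := \langle \Psi'(\xo), w \rangle$ lies in $\mathcal{H}_B$, we get $\E_{\Ddist}[\loss(y, h(\xo))] \ge \pn\bigl(1 - \sqrt{2\N}B/2^{\kA}\bigr)$.

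Finally I would transfer this bound from $h$ to $f$ using that $\loss(y, \hat y) = \max\{1 - y\hat y, 0\}$ is $1$-Lipschitz in $\hat y$ and $|y|=1$: since $|f(\xo) - h(\xo)| = |r(\xo)| \le \epsilon'$ on the support of $\Ddist$, we have $\E_{\Ddist}[\loss(y, f(\xo))] \ge \E_{\Ddist}[\loss(y, h(\xo))] - \epsilon' \ge \pn\bigl(1 - \sqrt{2\N}B/2^{\kA}\bigr) - \epsilon'$. Because $\sqrt{2\N}B = \textup{poly}(d, L, 1/\epsilon')$, choosing $\epsilon' = 1/\textup{poly}(d, L)$ (small enough to serve as the additive error, yet with $\textup{poly}(d,L)$ bounded so that $\sqrt{2\N}B$ remains $\textup{poly}(d,L)$) yields the claimed hinge-loss lower bound $\pn\bigl(1 - \textup{poly}(d,L)/2^{\kA}\bigr) - 1/\textup{poly}(d,L)$.

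The main obstacle is the first step: extracting from Rahimi–Recht exactly the statement needed — a single data-independent, $[-1,1]$-bounded feature map of polynomial dimension that simultaneously (i) approximates the given $f$ uniformly on the relevant bounded input set and (ii) represents it with a weight vector of polynomial norm — and ensuring that both "$\textup{poly}$"s ($\N$ and $B$) can be kept polynomial in $d$ and $L$ after the final choice of $\epsilon'$, so that $\sqrt{2\N}B/2^{\kA}$ has the advertised form. Everything afterward — applying Theorem~\ref{thm:lower_fixed} and the Lipschitz transfer — is routine.
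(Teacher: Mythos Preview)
Your proposal is correct and follows essentially the same approach as the paper's proof: invoke Claim~1 of~\cite{rahimi2007random} to approximate $f$ by $\langle \Psi(\xo), w\rangle$ with $N$ and $\|w\|_2$ both $\textup{poly}(d,L,1/\epsilon')$, apply Theorem~\ref{thm:lower_fixed} to this fixed feature map, transfer the bound to $f$ via the $1$-Lipschitz property of the hinge loss, and finally set $\epsilon' = 1/\textup{poly}(d,L)$. Your write-up is in fact more careful than the paper's about the data-independence of $\Psi$ and the Lipschitz transfer step.
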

\begin{proof}
By Claim 1 in \cite{rahimi2007random}, for any $\nu>0$, there exists $N=\text{poly}(d, 1/\nu)$ Fourier features $\Psi_j$ that can approximate the shift-invariant kernel up to error $\nu$. For any $\epsilon>0$, consider $\sum_i \alpha_i \langle \Psi(z_i), \Phi(x) \rangle = \langle \sum_i \alpha_i \Psi(z_i), \Psi(x) \rangle$. Let $w = \sum_i \alpha_i \Psi(z_i)$ and let $\nu = O({\epsilon \over L})$, then $\langle \Psi(x), w\rangle$ approximates $f(x)$ upto error $\epsilon$ and $N= \text{poly}(d, L, 1/\epsilon)$ and the norm of $w$ bounded by $B=\text{poly}(d, L, 1/\epsilon)$.  The reasoning is the same for $f$ in the RKHS form, replacing sum with integral. 
By Theorem \ref{thm:lower_fixed}, $\langle \Psi(x), w\rangle$ has hinge-loss at least $p_0 (1 - {\sqrt{2N}B \over 2^k})$. Thus, the function $f$ has loss at least $p_0 (1 - {\text{poly}(d, L, 1/\epsilon) \over 2^k}) - \epsilon$. Choose $\epsilon={1 \over \text{poly}(d, L)}$, we get the bound. 
\end{proof}

\section{Lower Bound for Learning without Input Structure} \label{app:proof_sq}

First recall the Statistical Query model~\citep{kearns1998efficient}. In this model, the learning algorithm can only receive information about the data through statistical queries. A statistical query is specified by some property predicate $Q$ of labeled instances, and a tolerance parameter $\tau \in [0,1]$. When the algorithm asks a statistical query $(Q, \tau)$, it receives a response $\hat{P}_Q \in [P_Q - \tau, P_Q + \tau]$, where $P_Q = \Pr[Q(x, y) \textrm{~is true}]$.
$Q$ is also required to be polynomially computable, i.e., for any $(x,y)$
$Q(x,y)$ can be computed in polynomial time.
Notice that a statistical query can be simulated by empirical average of a large random sample of data of size roughly $O(1/\tau^2)$ to assure
the tolerance $\tau$ with high probability.

\cite{blum1994weakly} introduces the notion of Statistical Query dimension, which is convenient for our purpose.

\begin{definition}[Definition 2 in~\cite{blum1994weakly}]
For concept class $C$ and distribution $\mathcal{D}$, the statistical query dimension $\textrm{SQ-DIM}(C, \mathcal{D})$ is the largest number $d$ such that $C$ contains $d$ concepts $c_1, \ldots, c_d$ that are nearly
pairwise uncorrelated: specifically, for all $i \neq j$,
\begin{align}
    |\Pr_{x \sim \mathcal{D}}[c_i(x) = c_j(x)] - \Pr_{x \sim \mathcal{D}}[c_i(x) \neq c_j(x)]| \le 1/d^3.
\end{align}
\end{definition}

\begin{theorem}[Theorem 12 in~\cite{blum1994weakly}]
In order to learn $C$ to error less than $1/2-1/d^3$ in the Statistical Query model, where $d = \textrm{SQ-DIM}(C, \mathcal{D})$, either the number of queries or $1/\tau$ must be at least $\frac{1}{2} d^{1/3}$.
\end{theorem}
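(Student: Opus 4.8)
The plan is to prove this by the standard adversary argument for statistical‑query lower bounds, carried out in the Hilbert space $L^2(\mathcal{D})$ with inner product $\langle f,g\rangle=\E_{x\sim\mathcal{D}}[f(x)g(x)]$. First I would fix a family $c_1,\dots,c_d\in C$ witnessing $\mathrm{SQ\text{-}DIM}(C,\mathcal{D})=d$: these are $\pm1$‑valued, so $\|c_i\|_2=1$, and the near‑uncorrelatedness of Definition~2 says exactly $|\langle c_i,c_j\rangle|\le 1/d^3$ for $i\neq j$, i.e.\ $\{c_i\}$ is a near‑orthonormal system. Next I would record the elementary decomposition of a statistical query into a label‑independent and a label‑correlated part: for any predicate $Q(x,y)$, write $\mathbf{1}[Q(x,y)]=a_Q(x)+y\,b_Q(x)$ with $a_Q(x)=\tfrac12(\mathbf{1}[Q(x,1)]+\mathbf{1}[Q(x,-1)])$ and $b_Q(x)=\tfrac12(\mathbf{1}[Q(x,1)]-\mathbf{1}[Q(x,-1)])$, so $|b_Q(x)|\le\tfrac12$ and $\|b_Q\|_2^2\le\tfrac14$. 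When the target concept is $c_i$, the true query probability is $P_Q^{(i)}=\E[a_Q]+\langle c_i,b_Q\rangle$, so all the target‑dependence of the answer is carried by the single number $\langle c_i,b_Q\rangle$.

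The heart of the argument is a linear‑algebraic estimate showing a single query can ``distinguish'' only a few of the $c_i$. I would prove that for every $g\in L^2(\mathcal{D})$, $\sum_{i=1}^d\langle c_i,g\rangle^2\le 2\|g\|_2^2$: the Gram matrix $G$ of $\{c_i\}$ equals $I+E$ with $\|E\|_{\mathrm{op}}\le\|E\|_F\le d\cdot(1/d^3)=1/d^2<1$, hence $G\succeq\tfrac12 I$; writing $u_i=\langle c_i,g\rangle$, the squared length of the projection of $g$ onto $\mathrm{span}(c_i)$ is $u^\top G^{-1}u\ge\tfrac12\|u\|_2^2$ and is also $\le\|g\|_2^2$. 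Applying this with $g=b_Q$ gives $\sum_i\langle c_i,b_Q\rangle^2\le\tfrac12$, so at most $1/(2\tau^2)$ indices $i$ satisfy $|\langle c_i,b_Q\rangle|>\tau$. Now the adversary is simple: answer every query $(Q,\tau)$ with the ``null'' value $\E[a_Q]$ (the value the query would take if the label were an independent fair coin). By the previous estimate this answer is a legitimate in‑tolerance response for all but at most $1/(2\tau^2)$ of the concepts per query, so after $q$ queries (each with tolerance at least $\tau$) the set $S$ of concepts for which the entire transcript is a valid honest run has $|S|\ge d-q/(2\tau^2)$. In particular, if both $q<\tfrac12 d^{1/3}$ and $1/\tau<\tfrac12 d^{1/3}$, then $q/(2\tau^2)<\tfrac1{16}d$, so $|S|=\Omega(d)$.

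Finally I would handle the output hypothesis. Since the transcript is fixed (all null answers), the algorithm's hypothesis $h$ is a fixed function; if the algorithm really learns $C$, then for \emph{every} $c_i$ with $i\in S$ the run above is valid, so it must have error below the target threshold, which translates to a lower bound on $\langle\mathrm{sign}(h),c_i\rangle$ holding simultaneously for all $i\in S$. Against the near‑orthonormality this gives a contradiction: by Cauchy--Schwarz and the same $L^2$ bound, $\sum_{i\in S}\langle\mathrm{sign}(h),c_i\rangle\le\sqrt{2|S|}$, so the average — hence the minimum — correlation over $i\in S$ is at most $\sqrt{2/|S|}$, which for $|S|=\Omega(d)$ is too small for the algorithm to be correct on the worst surviving concept; combining the two counts ($|S|\ge d-q/(2\tau^2)$ versus the number of concepts a fixed $h$ can be correct on) yields $q$ or $1/\tau$ at least $\tfrac12 d^{1/3}$. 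I expect this last step to be the main obstacle: reconciling the smallness of the permitted advantage in the statement with the $d^{1/3}$ bound is exactly where the parameter bookkeeping is delicate, and the plain null adversary as described only forces the conclusion for a somewhat larger advantage; obtaining the precise tradeoff requires letting the adversary choose its answers slightly more carefully (still keeping $\Omega(d)$ concepts consistent) so that the forced hypothesis is provably wrong on one of the survivors — this is the refinement carried out in~\cite{blum1994weakly}, which I would invoke/adapt. All the other ingredients (the query decomposition, the Gram‑matrix estimate, and the per‑query accounting) are routine once the near‑orthonormal family is fixed.
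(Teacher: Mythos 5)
The paper itself does not prove this statement; it is imported verbatim as Theorem~12 of~\cite{blum1994weakly} and used as a black box in the proof of Theorem~\ref{thm:lower_sq}, so there is no in-paper proof to compare against. Your outline assembles the correct standard ingredients: the decomposition $\mathbf{1}[Q(x,y)]=a_Q(x)+y\,b_Q(x)$, the Gram-matrix estimate $\sum_i\langle c_i,g\rangle^2\le 2\|g\|_2^2$ for the near-orthonormal SQ-DIM witnesses, and the per-query pigeonhole that lets a null adversary keep all but $O(1/\tau^2)$ concepts consistent; these are indeed the tools behind the cited theorem. (One cosmetic slip: the step $u^\top G^{-1}u\ge \|u\|_2^2/2$ follows from the largest eigenvalue of $G$ being at most $2$, not from the smallest being at least $1/2$ as you wrote; the latter bounds the quadratic form from above. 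Both facts follow from $\|E\|_{\mathrm{op}}\le 1$, and your final estimate $\sum_i\langle c_i,g\rangle^2\le 2\|g\|_2^2$ is correct.)

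The genuine gap is the one you flag yourself at the end, and it is real rather than a matter of constants. Running your accounting with the stated parameters, $q< d^{1/3}/2$ and $1/\tau< d^{1/3}/2$ give $q/(2\tau^2)<d/16$, hence $|S|>15d/16$; then $\sum_{i\in S}\langle h,c_i\rangle^2\le 2$ plus pigeonhole only forces some surviving $c_i$ to satisfy $\langle h,c_i\rangle \le \sqrt{32/(15d)}=O(1/\sqrt d)$, i.e.\ worst-case error $\ge 1/2-O(1/\sqrt d)$. Since $1/\sqrt d \gg 1/d^3$ for every $d>1$, this is strictly weaker than the claimed $1/2-1/d^3$ and does not rule out an algorithm achieving error below $1/2-1/d^3$. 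Recovering the $1/d^3$ advantage bound requires the sharper treatment of the output hypothesis from~\cite{blum1994weakly}, which you invoke but do not reconstruct, so the proposal is not self-contained at its final step. Within this paper the gap is harmless (the statement is cited, not proved), but as a freestanding proof of Theorem~12 it does not close.
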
 

We now use the above tools to prove our lower bound. 

\begin{theorem}[Restatement of Theorem~\ref{thm:lower_sq}]
For any algorithm in the Statistical Query model that can learn over $\Dfamily_{\Xi_0}$ to error less than $\frac{1}{2}- \frac{1}{{\mDict \choose \kA}^3}$,
either the number of queries or $1/\tau$ must be at least $\frac{1}{2} {\mDict \choose \kA}^{1/3}$.
\end{theorem}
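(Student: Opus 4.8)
The plan is to obtain this Statistical Query lower bound by a reduction to the classical SQ hardness of learning parities, through the Statistical Query dimension machinery of~\cite{blum1994weakly}. The crucial point is that $\Dfamily_{\Xi_0}$ already contains, as a sub-family, (essentially) all $\kA$-sparse parity learning problems over the uniform distribution on the Boolean cube: fix any orthonormal dictionary $\Dict$ (e.g.\ the first $\mDict$ standard basis vectors of $\mathbb{R}^d$), take $\SP = \{i \in [\kA]: i \textrm{~is odd}\}$ with $\kA$ odd, and let $\Ddist_{\hro}$ be uniform over $\{0,1\}^\mDict$, so that (A1') holds. Setting $z_j = 2\hro_j - 1 \in \{\pm 1\}$, the label is then $y = \Pi_\A(z) := \prod_{j\in\A} z_j$, and because $\hro\mapsto\Dict\hro$ is injective and the normalization uses constants ($\E[\xo]$ and $\sigmax$) depending only on $\Xi_0$, the normalized input $\x$ is a fixed, $\A$-independent, invertible rescaling of $z$. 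Hence each $\A$ with $|\A|=\kA$ gives a distribution $\mathcal{D}_\A^{(1)}\in\Dfamily_{\Xi_0}$, and any SQ algorithm that learns $\Dfamily_{\Xi_0}$ must in particular learn the concept class $C := \{\x\mapsto\Pi_\A(z) : \A\subseteq[\mDict],\,|\A|=\kA\}$ under the (normalized) uniform distribution $\mathcal{D}$, using no more queries and no smaller tolerance.

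Next I would lower bound $\textrm{SQ-DIM}(C,\mathcal{D})$. For distinct $\A_1,\A_2$ of size $\kA$ we have $\A_1\triangle\A_2\neq\emptyset$, so $\E_z[\Pi_{\A_1}(z)\Pi_{\A_2}(z)] = 0$; thus the ${\mDict\choose\kA}$ concepts in $C$ are \emph{exactly} pairwise uncorrelated, and in particular the near-uncorrelatedness condition (correlation at most $1/d^3$) defining $\textrm{SQ-DIM}$ holds with $d = {\mDict\choose\kA}$, giving $\textrm{SQ-DIM}(C,\mathcal{D}) \ge {\mDict\choose\kA}$. I would then invoke Theorem~12 of~\cite{blum1994weakly}: writing $d := \textrm{SQ-DIM}(C,\mathcal{D}) \ge {\mDict\choose\kA}$, any SQ algorithm learning $C$ over $\mathcal{D}$ to error below $\frac12 - \frac{1}{d^3}$ must use either at least $\frac12 d^{1/3}$ queries or a tolerance $\tau$ with $1/\tau \ge \frac12 d^{1/3}$. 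Since $d \ge {\mDict\choose\kA}$, a target error below $\frac12 - \frac{1}{{\mDict\choose\kA}^3}$ is a fortiori below $\frac12 - \frac{1}{d^3}$, and $\frac12 d^{1/3} \ge \frac12{\mDict\choose\kA}^{1/3}$; combined with the reduction above, this is exactly the claimed statement.

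I do not expect a serious obstacle; the only place where care is required is the reduction in the first paragraph. One must verify that a polynomially-computable property predicate $Q(\xo,y)$ over labeled (raw or normalized) inputs stays polynomially computable under the fixed rescaling, and that the normalization introduces no dependence on the hidden set $\A$ (so no information leaks and no extra queries are spent), so that an SQ algorithm for $\Dfamily_{\Xi_0}$ really does yield one for $C$ over $\mathcal{D}$ with the same resources. Everything else is routine: the substance is just the exact orthogonality $\E[\Pi_{\A_1}\Pi_{\A_2}] = 0$ of distinct parities --- the textbook hard instance for SQ --- plugged into the $\textrm{SQ-DIM}$ bound of~\cite{blum1994weakly}; no delicate estimates arise.
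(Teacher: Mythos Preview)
Your proposal is correct and follows essentially the same approach as the paper's proof: both identify the sub-family of $\Dfamily_{\Xi_0}$ given by uniform $\hro$ with the parity labeling $\SP=\{i\in[\kA]:i\text{ odd}\}$, observe that the resulting concept class over a fixed marginal consists of $\binom{\mDict}{\kA}$ exactly pairwise-uncorrelated functions (so $\mathrm{SQ\text{-}DIM}\ge\binom{\mDict}{\kA}$), and invoke Theorem~12 of \cite{blum1994weakly}. Your additional remarks on the reduction (that the normalization is $\A$-independent and preserves polynomial computability of query predicates) are valid and easily checked, though the paper does not spell them out.
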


\begin{proof}[Proof of Theorem~\ref{thm:lower_sq}]
Consider the following concept class and marginal distribution:
\begin{itemize}
    \item Let $\mathcal{D}$ be the distribution over $\xo$, given by $\xo = \Dict \hro$ and $\hro_j$ are i.i.d.\ with $\Pr[\hro_j = 0]=\Pr[\hro_j = 1] = 1/2$.
    \item Let $C$ be the class of functions $y = \g_\A(\hro) = \Id[\sum_j (1-\hro_j) \textrm{~is odd}]$ for different $\A \subseteq [\mDict]$.  
\end{itemize} 
The distributions over $(\xo, y)$ induced by $(C, \mathcal{D})$ are a subset of $\Dfamily_{\Xi_0}$. It is then sufficient to show that $\textrm{SQ-DIM}(C, \mathcal{D}) \ge {\mDict \choose \kA}$. 

It is easy to see that $C$ are essentially the sparse parity functions: if $z_j = 2\hro_j - 1$, then $\g_\A(\hro) = \prod_{j \in \A} z_j$. 
This then implies that the $\g_\A$'s are uncorrelated, so $\textrm{SQ-DIM}(C, \mathcal{D}) \ge {\mDict \choose \kA}$.
\end{proof} 
\newpage

\section{Complete Experimental Results} \label{app:complete_experiment}
Our experiments mainly focus on feature learning  and the effect of the input structure. 
We first perform simulations on our learning problems to (1) verify our main theorems on the benefit of feature learning and the effect of input structure (2) verify our analysis of feature learning in networks. We then check if our insights carry over to real data: (3) whether similar feature learning is presented in real network/data; (4) whether damaging the input structure lowers the performance. The results are consistent with our analysis and provide positive support for the theory. 

The experiments were ran 5 times with different random seeds, and the average results (accuracy) are reported. The standard deviations of the results are smaller than 0.5\% and thus we do not present them for clarity. 
The hardware specifications are 4 Intel(R) Core(TM) i7-7700HQ CPU @ 2.80GHz, 16 GB RAM, and one NVIDIA GPU GTX1080. 

\subsection{Simulation}
We train a two-layer network following our learning process. We use two fixed feature methods: the NTK~\citep{fang2021mathematical} and random feature (RF) methods based on the same network and random initialization as the network learning. More precisely, in the NTK method, we randomly initialize the network and take its NTK and learn a classifier on it. In the RF method, we freeze the first layer of the network, and train the second layer (on the random features given by  the frozen neurons). The training step number is the same as that in network learning. 
We also test these three methods on the data distribution with input structure removed (i.e., $\Dfamily_{\Xi_0}$ in Theorem~\ref{thm:lower_sq}). For comparison, we take the representation of our two-layer network at step one/step two, named One Step/Two Step (fix the weight of the 1st layer after the first step/second step to train the weight of the second layer), and train the best classifiers on top of them. 

Recall that our analysis is on the \emph{directions} of the weights without considering their \emph{scaling}, and thus it is important to choose cosine similarity rather than the typical $\ell_2$ distance. Thus, we use metric Cos Similarity $\max_{\{i \in [2m] \}}\cos(\w_i, \sum_{j \in \A} \Dict_j)$ in our tables, and use Multidimensional Scaling to plot the weights distribution. The simulation dataset size is 50000. During training, the batch size is 1000, while for the first two steps we use the approximate full gradient (batch size is 50000). Each step is corresponding to one weights update.

\subsubsection{Parity Labeling}\label{sec:parity}

\mypara{Setting.} 
We generate data according to the parity function data distributions used in our proof of the lower bound for fixed features (Theorem~\ref{thm:lower_fixed}), with $d = 500, \mDict = 100, \kA = 5, \pn=1/2$, with a randomly sampled $\A$. More precisely, we consider $\Ddist$ defined as follows. 
\begin{itemize}
    \item Let $\SP = \{i \in [\kA]: i \textrm{~is odd}\}$. That is, if there are odd numbers of $1$'s in $\hro_\A$, then $y=+1$. 
    \item Let $\Ddist^{(0)}_{\hro}$ be a distribution where all entries  $\hro_j$ are i.i.d.\ with $\Pr[\hro_j = 0] = \Pr[\hro_j = 1] = 1/2$. 
    Let $\Ddist^{(0)}$ be the distribution over $(\xo, y)$ induced by $\Ddist^{(0)}_{\hro}$ and the above $\SP$.
    \item Let $\Ddist^{(1)}_{\hro}$ be a distribution where all entries  $\hro_j$ for $j \not\in \A$ are i.i.d.\ with $ \Pr[\hro_j = 1] = \pn/(2 - 2\pn)$, while $\Pr[\hro_\A = (0,0,\ldots,0)] = \Pr[\hro_\A = (1,1,\ldots,1)] = 1/2$. 
    Let $\Ddist^{(1)}$ be the distribution over $(\xo, y)$ induced by $\Ddist^{(1)}_{\hro}$ and the above $\SP$.
    \item Let $\Ddist^{\textrm{mix}}_\A = \pn \Ddist^{(0)} + (1-\pn) \Ddist^{(1)}$. 
\end{itemize}

The network and the training follow Section~\ref{sec:problem}, where the network size is $\nw = 300$ and the training time $T$ = 600 steps. 
 
\begin{figure}[!ht]
\centering
\includegraphics[width=0.56\textwidth]{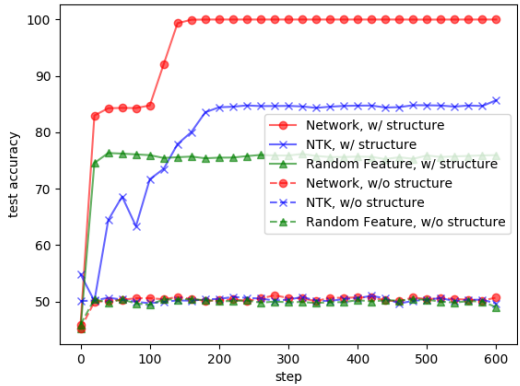}
\caption{Test accuracy on simulated data under parity labeling with or without input structure.}
\label{fig:curve_parity}
\end{figure}

\begin{figure}[!ht]
    \centering
    \newcommand{\imagewidth}{0.32\linewidth}
    \includegraphics[width=\imagewidth]{img/syn/parity/adp_parity_epoch_0.png}
    \includegraphics[width=\imagewidth]{img/syn/parity/adp_parity_epoch_1.png}
    \includegraphics[width=\imagewidth]{img/syn/parity/adp_parity_epoch_2.png}
    \caption{Visualization of the weights $\w_i$'s after initialization/one gradient step/two gradient steps in network learning under parity labeling. The red star denotes the ground-truth $\sum_{j \in \A} \Dict_j$; the orange star is $-\sum_{j \in \A} \Dict_j$. The red dots are the weights closest to the red star after two steps; the orange ones are for the orange star.}
    \label{fig:parity_full}
\end{figure}

\begin{table}[h!]
    \centering
    \begin{tabular}{c|cccccc}
        \toprule
        Model & Network & NTK & RF & One Step & Two Step & Network w/o structure\\
        \hline
        Train Acc (\%)  & 100.0 & 84.0 & 74.7 & 51.3 & 100.0 & 100.0 \\
        Test Acc (\%)  & 100.0 & 86.4 & 76.0 & 52.2 & 100.0 & 52.0 \\
        Cos Similarity  & 0.997 & NA & 0.114 & 0.848 & 0.997 & 0.253\\
        \bottomrule
    \end{tabular} 
    \caption{Parity labeling results in six methods. The cosine similarity is computed between the ground-truth $\sum_{j \in \A} \Dict_j$ and the closest neuron weight. }
    \label{tab:parity}
\end{table}

\mypara{Verification of the Main Results.}
Figure~\ref{fig:curve_parity} shows that the results are consistent with our analysis. Network learning gets high test accuracy while the two fixed feature methods get significantly lower accuracy. Furthermore, when the input structure is removed, all three methods get test accuracy similar to random guessing. 

\mypara{Feature Learning in Networks.}
Figure~\ref{fig:parity_full} shows that the results are as predicted by our analysis. After the first gradient step, some weights begin to cluster around the ground-truth $\sum_{j\in \A} \Dict_j$ (or $-\sum_{j\in \A} \Dict_j$ due to we have $\aw_i$ in the gradient update which can be positive or negative). After the second step the weights get improved and well-aligned with the ground-truth (with cosine similarity $>0.99$). 

Table~\ref{tab:parity} shows the results for different methods. 
Recall that the Cos Similarity metric is $\max_{\{i \in [2m] \}}\cos(\w_i, \sum_{j \in \A} \Dict_j)$, which reports the cosine value of the closest one. One Step refers to the method where we take the neurons after one gradient step, freeze their weights, and train a classifier on top; similar for Two Step. One Step gets test accuracy about $52\%$, while Two Step gets accuracy about $100\%$. This demonstrates that while some effective feature emerge in the first step, they need to be improved in the second step for accurate prediction.  NTK, random feature, One Step all failed, while Network and Two Step can achieve 100\% test accuracy. Network w/o structure refers to training the network on data without the input structure. It overfits the training dataset with 52\% test accuracy.

\subsubsection{Interval Labeling}

\begin{figure}[!ht]
\centering
\includegraphics[width=0.56\textwidth]{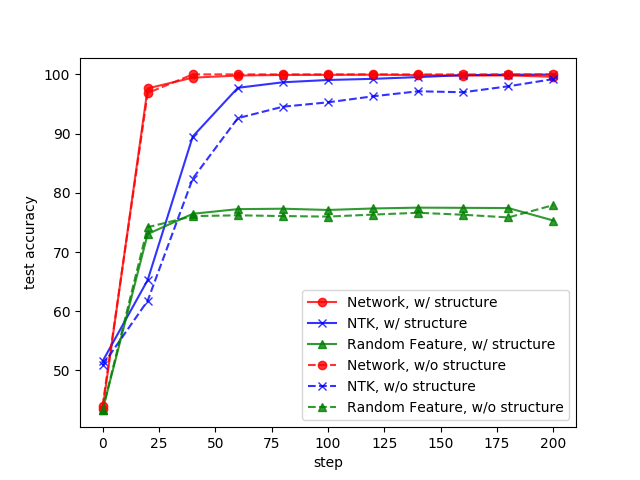}
\caption{Test accuracy on simulated data under interval labeling with or without input structure.}
\label{fig:curve_interval}
\end{figure} 

\begin{figure}[ht]
    \centering
    \newcommand{\imagewidth}{0.32\linewidth}
    \includegraphics[width=\imagewidth]{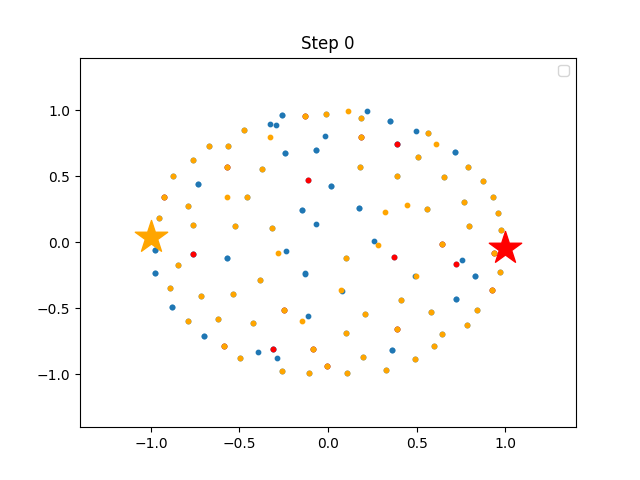}
    \includegraphics[width=\imagewidth]{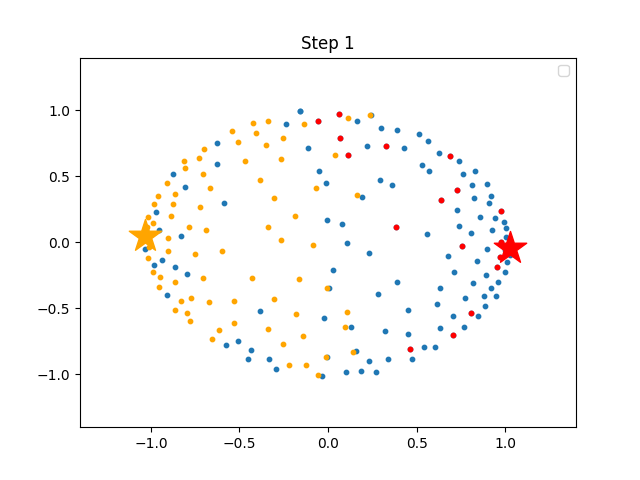}
    \includegraphics[width=\imagewidth]{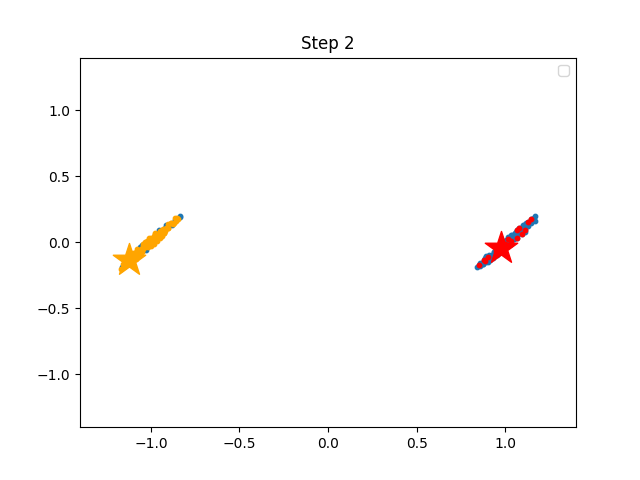}
    \caption{Visualization of the weights $\w_i$'s after initialization/one gradient step/two gradient steps in network learning under interval labeling. The red star denotes the ground-truth $\sum_{j \in \A} \Dict_j$; the orange star is $-\sum_{j \in \A} \Dict_j$. The red dots are the weights closest to the red star after two steps; the orange ones are for the orange star.}
    \label{fig:interval_full}
\end{figure}

\begin{table}[h!]
    \centering
    \begin{tabular}{c|cccccc}
        \toprule
        Model & Network & NTK & RF & One Step & Two Step & Network w/o structure \\
        \hline
        Train Acc (\%)  & 100.0 & 100.0 & 76.4 & 44.1 & 100.0 & 100.0 \\
        Test Acc (\%)  & 100.0 & 100.0 & 73.2 & 41.0 & 100.0 & 100.0 \\
        Cos Similarity  & 1.00 & NA & 0.153 & 0.901 & 0.994 & 0.965 \\
        \bottomrule
    \end{tabular}
    \caption{Interval labeling results in six methods. }
    \label{tab:interval}
\end{table}

\mypara{Setting.} 
We also tried interval function, where $y = 1$ if $\sum_{i \in \A} \hro_i$  is in the range $[t_1, t_2]$ with $t_1 = 20$ and $t_2 = 30$, otherwise $y = -1$. We use $d = 500, \mDict = 100, \kA = 30$. 
The $\hro_i$'s are independent, and $\Pr[\hro_i = 1] = 2/3$ for any $i \in A$, and $\Pr[\hro_i = 1] = 1/2$ otherwise. When the input structure is removed, we set $\Pr[\hro_i = 1] = 1/2$ for all $i$'s. 

The network and training again follows Section~\ref{sec:problem} with a network size  $\nw = 100$ and the training time $T$ = 200 steps.

\mypara{Verification of the Main Results.}
Figure~\ref{fig:curve_interval} shows that network learning learns the fastest, NTK learns slower but reaches similar test accuracy, while random feature can only reach a decent but lower accuracy. 
This is because for such simpler labeling functions, fixed feature methods can still achieve good performance (note that the lower bound does not hold for such a case), while the performance depends on what fixed features to use.  

Furthermore, when the input structure is removed, the methods still get similar (or only slightly worse) performance as with input structure. This shows that when the labeling function is simple, the help of the input structure for learning may not be needed. In the experiments on real data, we will show that when the input structure is changed, it indeed leads to lower performance which suggests that the labeling function in practice is typically more complicated than this interval labeling setting, and the help of the input structure is significant for learning. 
 
\mypara{Feature Learning in Networks.}
Figure~\ref{fig:interval_full} shows the phenomenon of feature learning similar to that in the parity labeling setting.
Table~\ref{tab:interval} shows the test accuracy of six different methods. Random feature and One Step failed, while Network, NTK and Two Step succeed showing that interval labeling setting is a simpler case than parity labeling setting.

\subsection{More Simulation Result in Various Settings}
We show the robustness of our simulation results by studying the learning behaviors in a variety of settings including different sample size, input data dimension and class imbalance.
We reuse the same setting as the simulation in the main text (details in~\ref{sec:parity}), vary different parameters, and report the accuracy, the cosine similarities between the learned weights, and the visualization of the neuron weights.

\subsubsection{Varying Input Data Dimension}
In the simulation experiments in the main text, the input data dimension $d$ is 500. Here we change the input data dimension to 100 and 2000. All other configurations follow \ref{sec:parity}.

\mypara{Verification of the Main Results.} 
Figure~\ref{fig:curve_dim} shows that our claim is robust under different input data dimensions. The performance of network learning is superior over NTK and random feature approaches on inputs with structure, and on inputs without structure, all three methods fail. 

\begin{figure}[ht]
\centering
\subfloat[$d = 100$]{\includegraphics[width=0.45\textwidth]{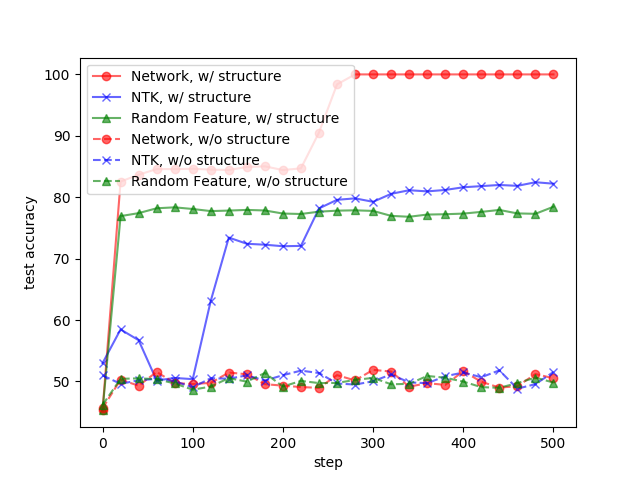}}
\subfloat[$d = 2000$]{\includegraphics[width=0.45\textwidth]{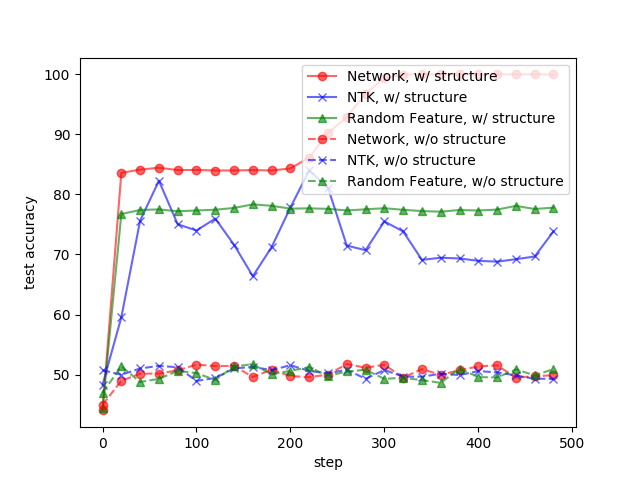}}
\caption{Test accuracy on simulated data under different input data dimensions.}
\label{fig:curve_dim}
\end{figure} 

\begin{figure}[ht]
    \centering
    \newcommand{\imagewidth}{0.32\linewidth}
    \includegraphics[width=\imagewidth]{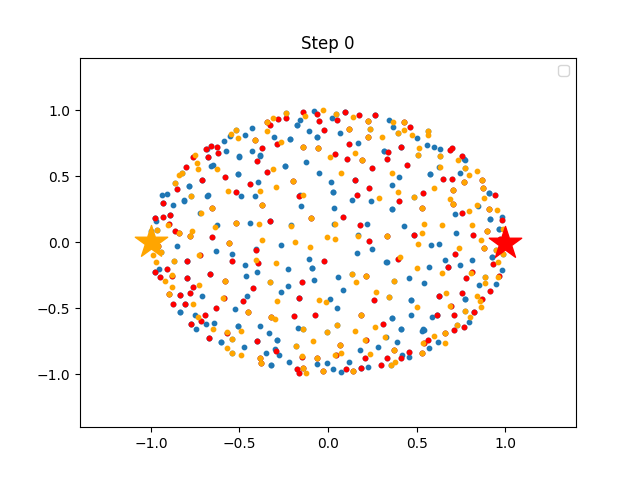}
    \includegraphics[width=\imagewidth]{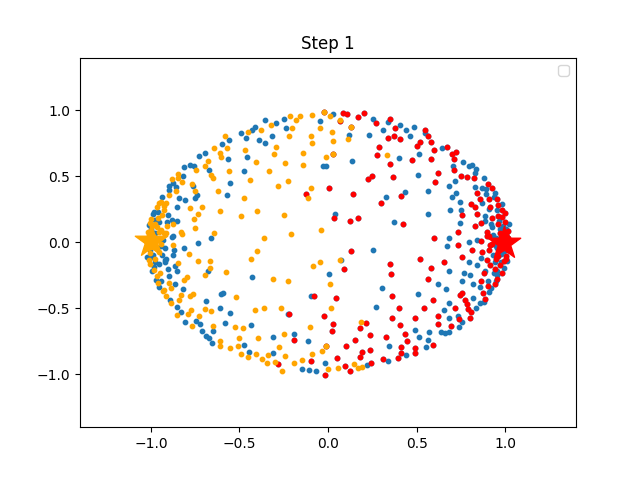}
    \includegraphics[width=\imagewidth]{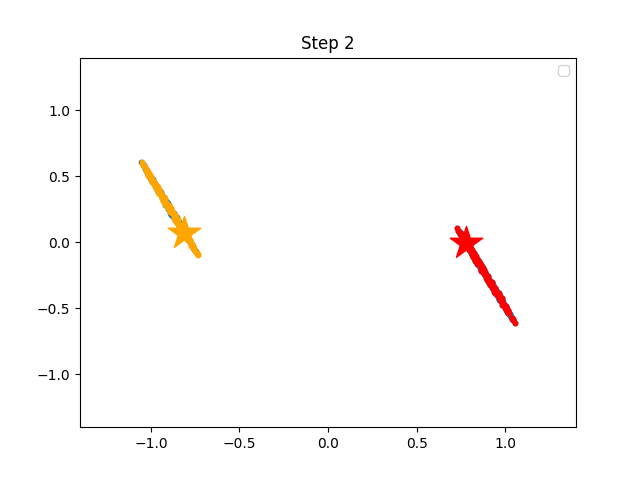}
    \includegraphics[width=\imagewidth]{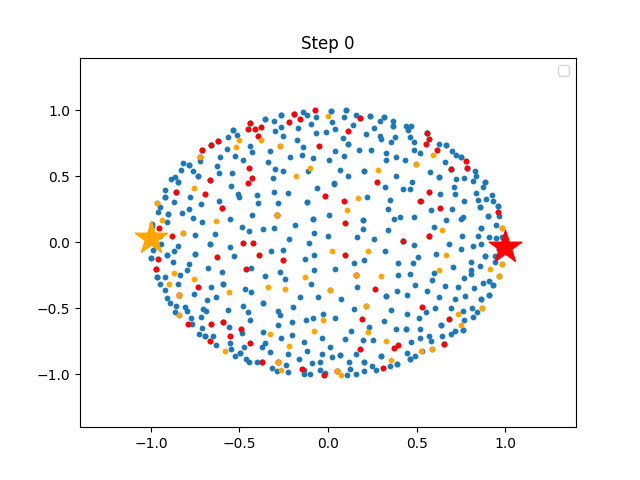}
    \includegraphics[width=\imagewidth]{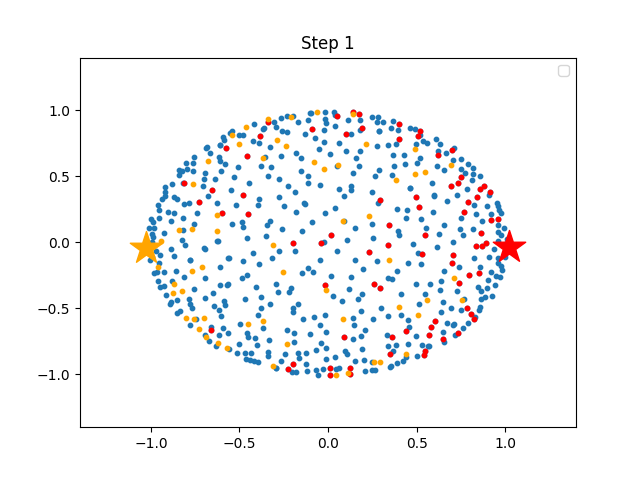}
    \includegraphics[width=\imagewidth]{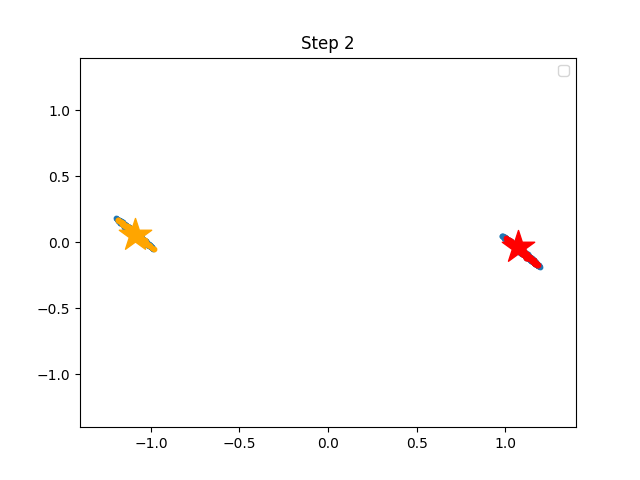}
    \caption{Visualization of the weights $\w_i$'s in early steps under different input data dimensions. Upper row: input data dimension $d=100$; lower row: $d=2000$. }
    \label{fig:dim}
\end{figure}

\begin{table}[ht]
    \centering
    \begin{tabular}{c|cccccc}
        \toprule
        $d = 100$ & Network & NTK & RF & One Step & Two Step & Network w/o structure \\
        \hline
        Train Acc      & 100.0   & 83.1 & 78.9  & 53.0     & 100.0    & 100.0 \\
        Test Acc       & 100.0   & 81.5 & 78.3  & 51.1     & 100.0    & 51.0 \\
        Cos Similarity & 1.000   & NA   & 0.354 & 0.967    & 1.000    & 0.331 \\
        \bottomrule
    \end{tabular}
    \begin{tabular}{c|cccccc}
        \toprule
        $d = 2000$ & Network & NTK & RF & One Step & Two Step & Network w/o structure \\
        \hline
        Train Acc      & 100.0   & 75.6 & 80.0  & 50.22    & 100.0    & 100.0 \\
        Test Acc       & 100.0   & 75.4 & 77.0  & 50.01    & 100.0    & 52.5  \\
        Cos Similarity & 0.998   & NA   & 0.056 & 0.560    & 0.998    & 0.309 \\
        \bottomrule
    \end{tabular}
    \caption{Results of six methods for different input data dimensions. The cosine similarity is computed between the ground-truth  $\sum_{j \in \A} \Dict_j$ and the closest neuron weight. }
    \label{tab:dim}
\end{table}

\mypara{Feature Learning in Networks.}
Figure~\ref{fig:dim} visualizes the neuron weights. It shows similar results to that in \ref{sec:parity}: the weights gets updated to to the effective feature in the first two steps, forming clusters. Table~\ref{tab:dim} shows some quantitative results. In particular, the average cosine similarities between neuron weights and the effective features after two steps are close to 1, showing that they match the effective features. 

\subsubsection{Varying Class Imbalance Ratio}
The experiments in the main text has 25000 training samples for each class. Here we keep the total sample size 50000 but use different class imbalance ratios, which is the class $-1$ sample size divide by the total sample size.  

\mypara{Verification of the Main Results.} 
Figure~\ref{fig:curve_Imbal} shows that our claim is robust under different class imbalance ratios. The results are similar to those for balanced classes, except that NTK becomes less stable. 

\begin{figure}[ht]
\centering
\subfloat[Negative class ratio $ = 0.8$]{\includegraphics[width=0.45\textwidth]{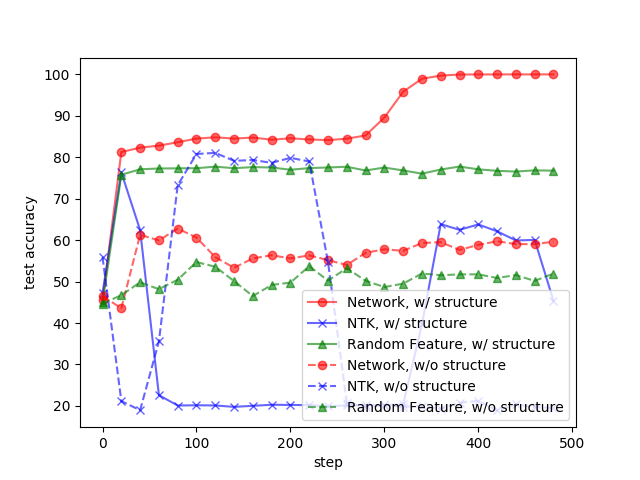}}
\subfloat[Negative class ratio $ = 0.9$]{\includegraphics[width=0.45\textwidth]{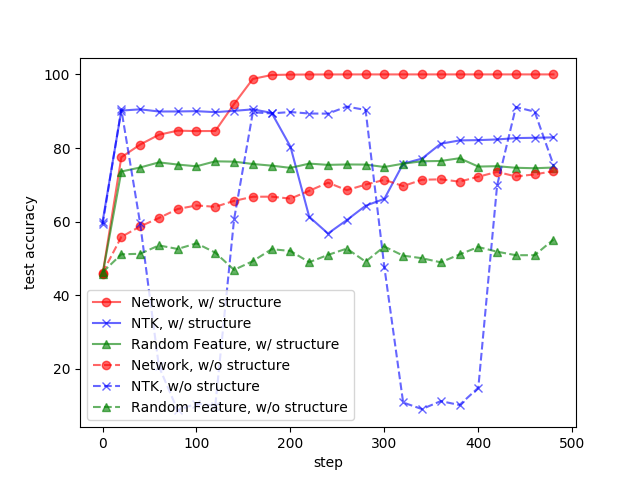}}
\caption{Test accuracy on simulated data under different negative class ratios.}
\label{fig:curve_Imbal}
\end{figure} 

\begin{figure}[ht]
    \centering
    \newcommand{\imagewidth}{0.32\linewidth}
    \includegraphics[width=\imagewidth]{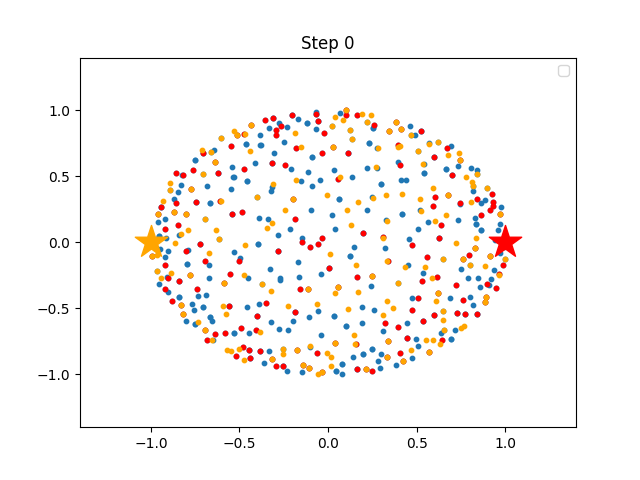}
    \includegraphics[width=\imagewidth]{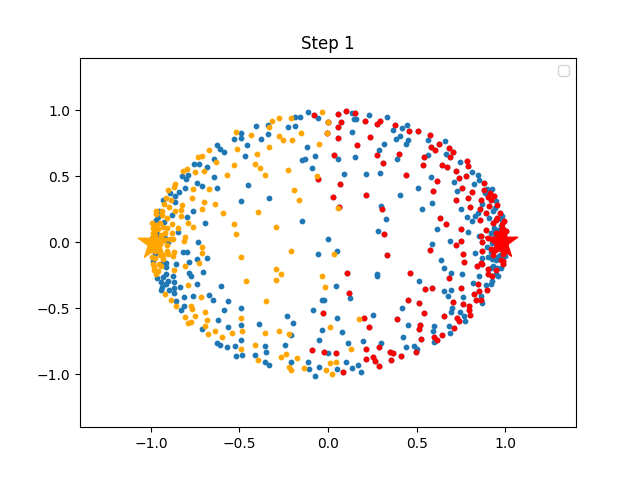}
    \includegraphics[width=\imagewidth]{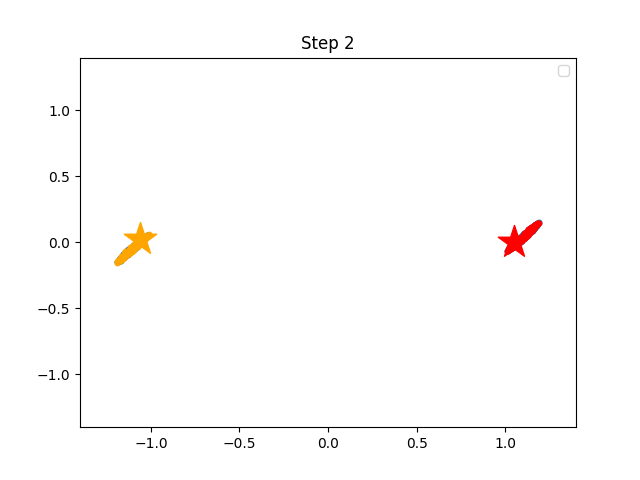}
     \includegraphics[width=\imagewidth]{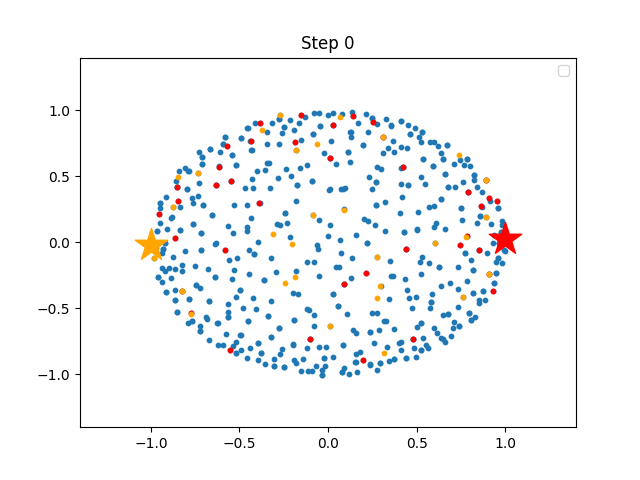}
    \includegraphics[width=\imagewidth]{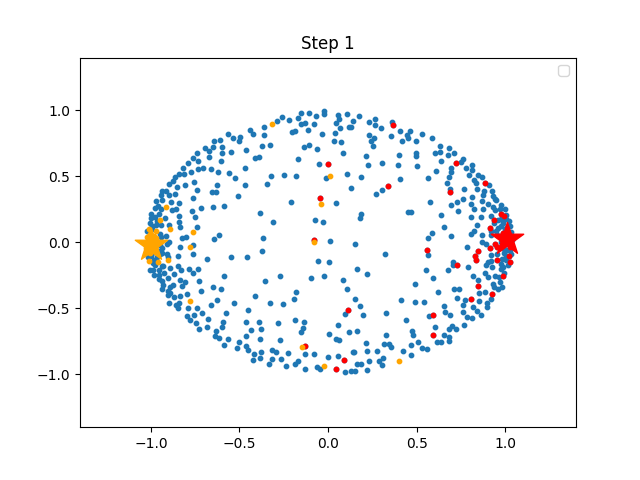}
    \includegraphics[width=\imagewidth]{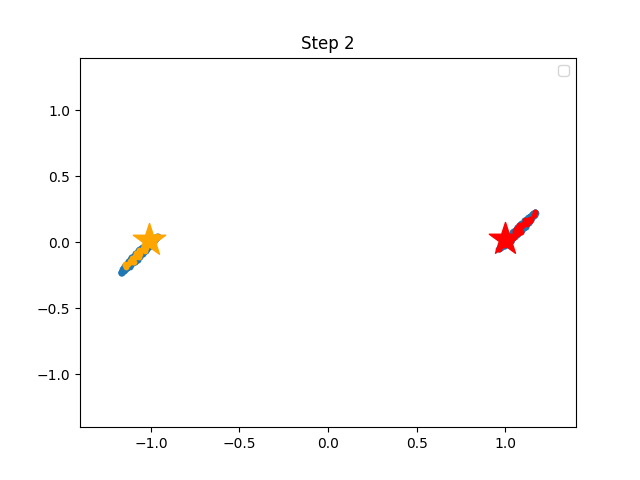}
    \caption{Visualization of the weights $\w_i$'s in early steps under different class imbalance ratios.  Upper row: negative class ratio 0.8; lower row: 0.9. }
    \label{fig:Imbal}
\end{figure}

\begin{table}[ht]
    \centering
    \begin{tabular}{c|cccccc}
        \toprule
        ratio = 0.8 & Network & NTK & RF & One Step & Two Step & Network w/o structure \\
        \hline
        Train Acc      & 100.0   & 62.9 & 72.7  & 78.3     & 100.0    & 100.0 \\
        Test Acc       & 100.0   & 82.7 & 70.4  & 75.7     & 100.0    & 61.7  \\
        Cos Similarity & 0.999   & NA   & 0.293 & 0.950    & 0.999    & 0.218 \\
        \bottomrule
    \end{tabular}
    \begin{tabular}{c|cccccc}
        \toprule
        ratio = 0.9 & Network & NTK & RF & One Step & Two Step & Network w/o structure \\
        \hline
        Train Acc      & 100.0   & 84.0 & 73.6  & 92.3     & 100.0    & 100.0 \\
        Test Acc       & 100.0   & 81.7 & 72.4  & 89.2     & 100.0    & 71.8  \\
        Cos Similarity & 0.997   & NA   & 0.296 & 0.956    & 0.997    & 0.286 \\
        \bottomrule
    \end{tabular}
    \caption{Results of six methods under different negative class ratios.}
    \label{tab:Imbal}
\end{table}

\mypara{Feature Learning in Networks.}
Figure~\ref{fig:Imbal} visualizes the neurons' weights. Again, the observation is similar to that for balanced classes. Table~\ref{tab:Imbal} shows some quantitative results which are also similar to those for balanced classes.

\subsubsection{Varying Sample Size}
Here we change the sample size 50000 in Section~\ref{sec:parity} to be 25000 and 10000. For sample size 25000, we observe similar results. For sample size 10000, we observe over-fitting (test accuracy much lower than train accuracy). Therefore, for sample size 10000 we reduces the size of the network (i.e., number of hidden neurons) from $m=300$ to $m=50$. 

\mypara{Verification of the Main Results.} 
Figure~\ref{fig:curve_size} shows that our claim is robust under different sample sizes. In particular, the network learning still outperforms the NTK and random feature approaches on structured inputs. 

\begin{figure}[ht]
\centering
\subfloat[$n = 25000$]{\includegraphics[width=0.45\textwidth]{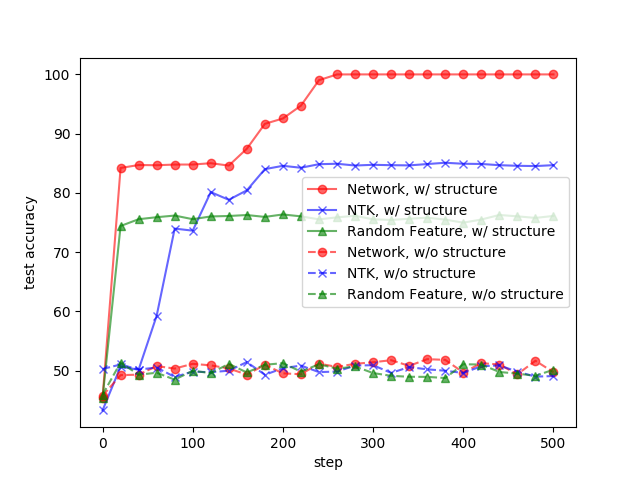}}
\subfloat[$n = 10000$]{\includegraphics[width=0.45\textwidth]{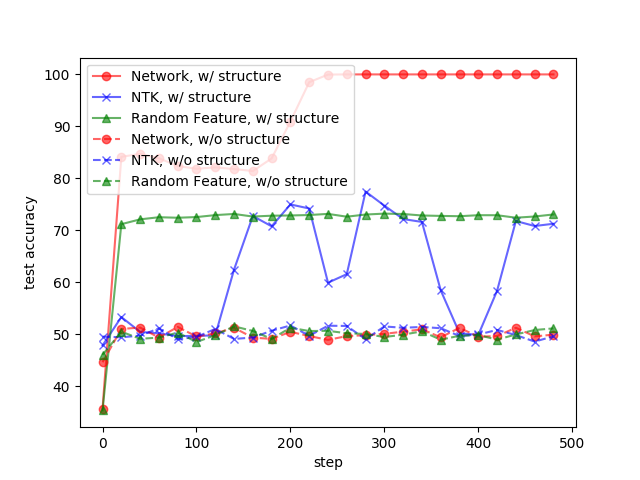}}
\caption{Test accuracy on simulated data under different sample sizes $n$.}
\label{fig:curve_size}
\end{figure} 

\begin{figure}[ht]
    \centering
    \newcommand{\imagewidth}{0.32\linewidth}
    \includegraphics[width=\imagewidth]{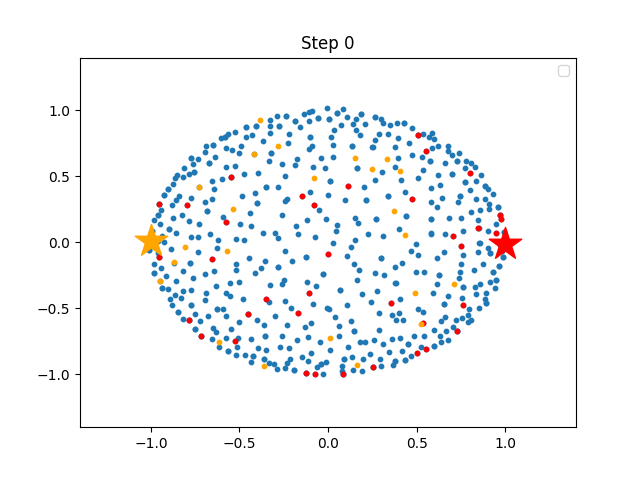}
    \includegraphics[width=\imagewidth]{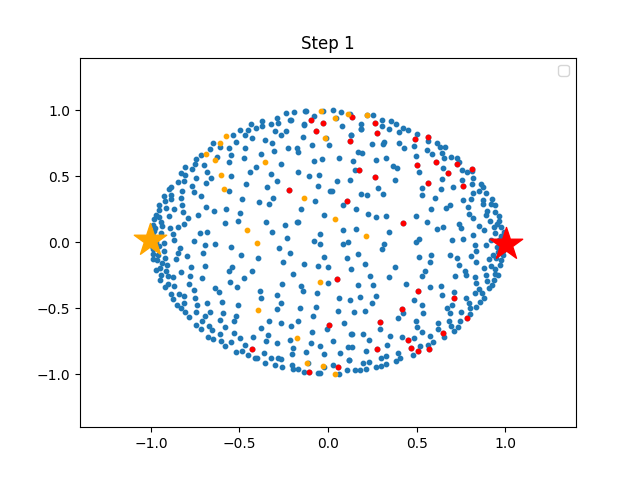}
    \includegraphics[width=\imagewidth]{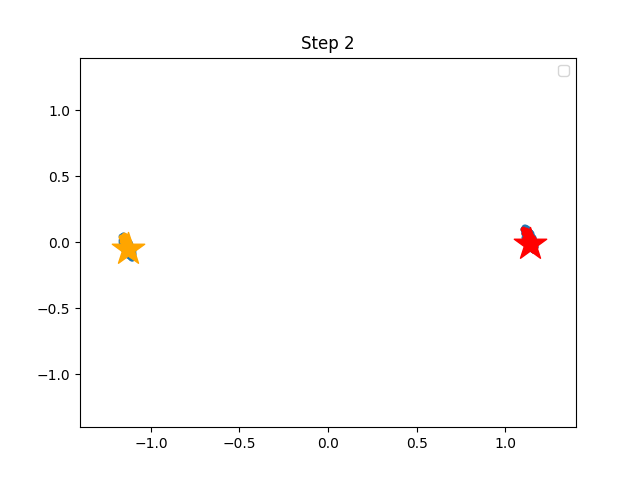}
    \includegraphics[width=\imagewidth]{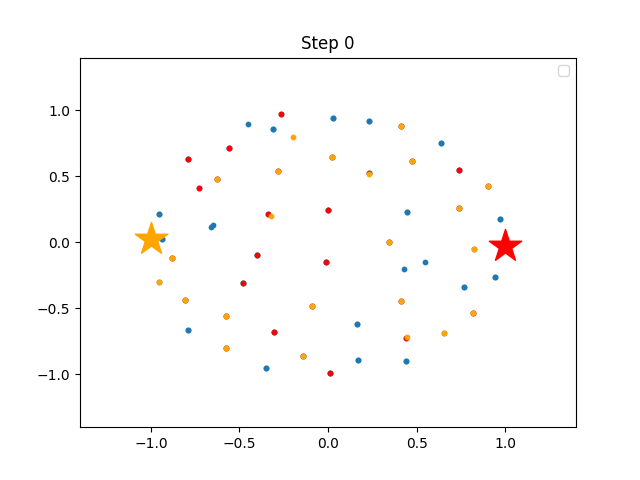}
    \includegraphics[width=\imagewidth]{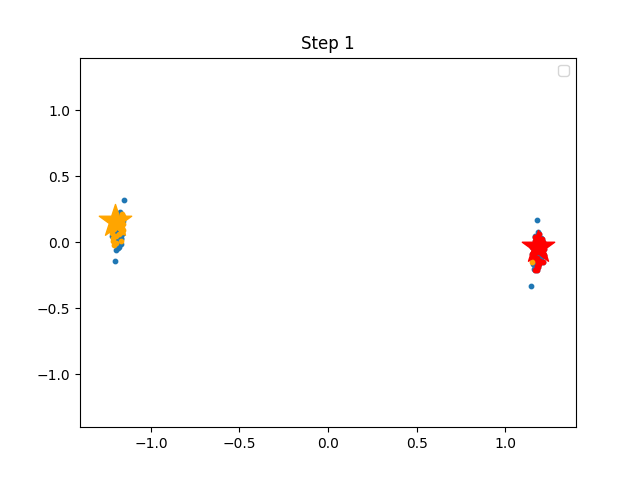}
    \includegraphics[width=\imagewidth]{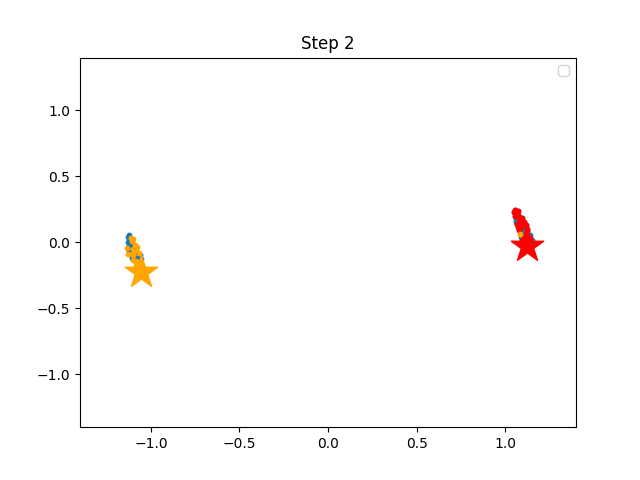}
    \caption{Visualization of the weights $\w_i$'s in early steps under different sample sizes. Upper row: sample size 25000; lower row: 10000.  }
    \label{fig:size}
\end{figure}

\begin{table}[ht]
    \centering
    \begin{tabular}{c|cccccc}
        \toprule
        $n = 25000$ & Network & NTK & RF & One Step & Two Step & Network w/o structure \\
        \hline
        Train Acc      & 100.0   & 84.0 & 78.6  & 50.6     & 100.0    & 100   \\
        Test Acc       & 100.0   & 84.1 & 74.7  & 50.0     & 100.0    & 50.2  \\
        Cos Similarity & 0.997   & NA   & 0.105 & 0.851    & 0.997    & 0.230 \\
        \bottomrule
    \end{tabular}
    \begin{tabular}{c|cccccc}
        \toprule
        $n = 10000$ & Network & NTK & RF & One Step & Two Step & Network w/o structure \\
        \hline
        Train Acc      & 100.0   & 73.9 & 71.6  & 50.7     & 100.0    & 100.0  \\
        Test Acc       & 100.0   & 75.0 & 74.3  & 50.3     & 100.0    & 52.2   \\
        Cos Similarity & 0.995   & NA   & 0.096 & 0.974    & 0.994    & 0.176  \\
        \bottomrule
    \end{tabular}
    \caption{Results of six methods for different sample size.}
    \label{tab:size}
\end{table}
\mypara{Feature Learning in Networks.}
Figure~\ref{fig:size} and Table~\ref{tab:size} show that the phenomenon of feature learning for different samples is similar to that in \ref{sec:parity}.

\subsection{Experiments on More Data Generation Models}

In this section we consider some additional data distributions and run the simulation experiments, in particular, focusing on the feature learning phenomenon. Note that our analysis is for the setting where the input distributions have structure revealing some information about the labeling function. (More precisely, the labeling function is specified by $\A$ and $\SP$, while the input distribution also depends on them.) 
We therefore consider two other data generation mechanisms where the labeling function also has connections to the input distributions.

\subsubsection{Hidden Representation Labeling}
Here we consider the following data model: first uniformly at random select $\tilde\phi_{\A}$ from a set of binary vectors, and assign label 1 to some and -1 to others; sample irrelevant patterns $\tilde\phi_{-\A}$ uniformly at random; generate the input $\x = \Dict\tilde\phi$. We randomly select 50 binary vectors for each label, with $d = 500, \mDict = 250, \kA = 50, \pn=1/2$.
 
This is a generalization of the distribution $\mathcal{D}^{(1)}$, a component in the distribution of our simulation experiments (see the proof of Theorem~\ref{thm:lower_fixed} for details). 
Recall the definition of $\mathcal{D}^{(1)}$:  $\tilde\phi_A$ is uniform on only two values $[+1, \dots, +1]$ and $[0, \dots,0]$, and uniform over irrelevant patterns; the  value $[+1, \dots, +1]$ corresponds to one class and $[0, \dots,0]$ correspond to another class. Our data model here generalizes $\mathcal{D}^{(1)}$ to more than 2 values. 

The visualization is shown in Figure~\ref{fig:fix}. We can observe similar feature learning phenomena, and the neuron weights are updated to form clusters. 

\begin{figure}[ht]
    \centering
    \newcommand{\imagewidth}{0.32\linewidth}
    \includegraphics[width=\imagewidth]{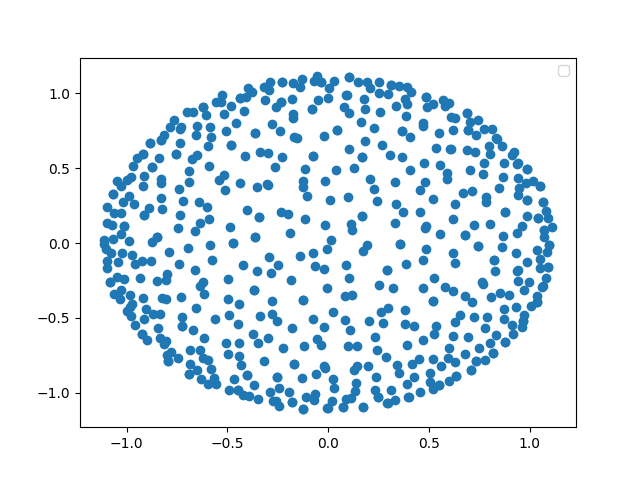}
    \includegraphics[width=\imagewidth]{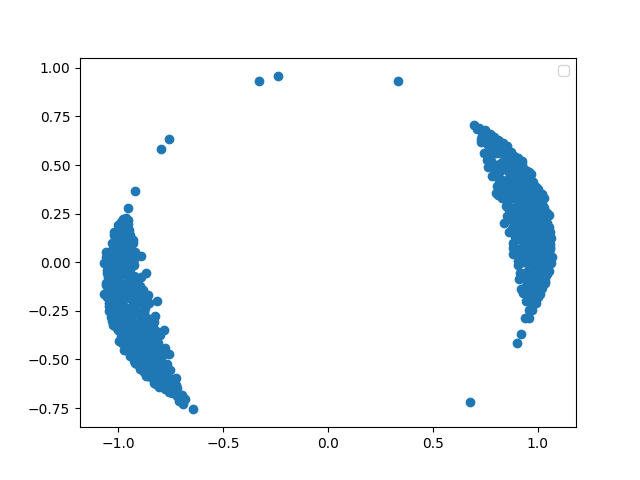}
    \includegraphics[width=\imagewidth]{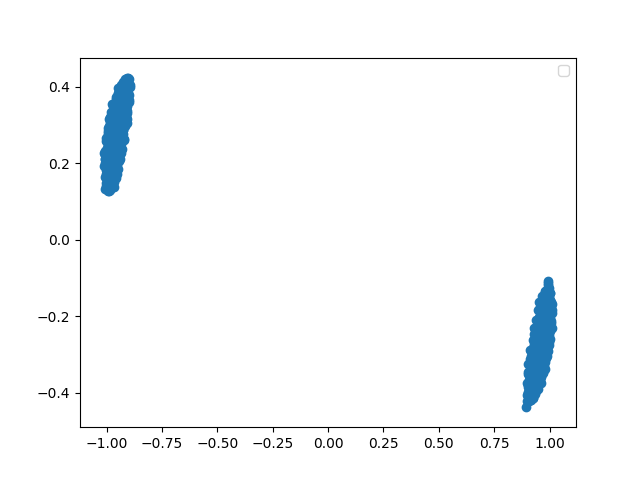}
    \caption{Visualization of the weights $\w_i$'s after initialization/one gradient step/two gradient steps in network learning under hidden representation labeling.}
    \label{fig:fix}
\end{figure}

\subsubsection{Two-layer Networks on Mixture of Gaussians}
To further support our intuition of feature learning, we run experiments on mixture of Gaussians.

\paragraph{Data.}  
Let $\X = \mathbb{R}^d$ be the input space, and $\Y = \{\pm 1\}$ be the label space. Suppose $\Dict \in \mathbb{R}^{d \times k}$ is an dictionary with $k$ orthonormal columns. Let $\varepsilon_i, i=1,\dots,k$ be i.i.d symmetric Bernoulli random variables, and $g \sim \mathcal{N}(0,\sigma_r^2\frac{k}{d}\mathbb{I}_d)$. Then we generate the input $x$ and class label $y$ by:
\begin{align} \label{eq:gaussian_mixture_data}
    \x = \sum_{i=1}^k \varepsilon_i \Dict_{:i} + g, \quad 
    y =  \prod_{i=1}^k \varepsilon_i
\end{align} 

In this case, $2^k$ Gaussian clusters will be created. The centers of the Gaussian clusters $\sum_{i=1}^k \pm \Dict_{:i}$ lie on the vertices of a hyper cube, and the label of each Gaussian cluster is determined by the parity function on the vertices of the hyper cube.

Note that the labeling function is roughly equivalent to a network: $y = \sum_{i=1}^{n} a_i \text{ReLU}(\langle c_i, \x \rangle)$ where $c_i$’s are the Gaussian centers, and $a_i \propto 1$ for Gaussian components with label 1 and $a_i \propto -1$ for those with label -1.

\mypara{Setting.} We then train a two-layer network with $\nw = 800$ hidden neurons on data sets generated as above with different chosen $k$'s and $d$'s. The training follows typical practice (not the hyperparameters in our analysis). In this setting, we expect the neural network to learn the effective features: the directions of Gaussian cluster centers.

\paragraph{Result.}
We run experiments with different settings. The parameters are shown in Table \ref{tab:Gaussian_mixture}. From Figure \ref{fig:Gaussian_mixture} we can see that some neurons learn the directions of  Gaussian centers, and each Gaussian center is covered by some neurons, which matches our expectation.

\begin{table}[h!]
    \centering
    \begin{tabular}{c|c c c c}
        \toprule
        Parameters & $d$ & $\kA$ & Number of Clusters & $\sigma_r$  \\
        \hline
           Experiment 1   & 100   & 4 & 16 & 1   \\
           Experiment 2   & 25   & 4 & 16 & 0.7  \\
           Experiment 3   & 100   & 5 & 32 & 1   \\
        \bottomrule
    \end{tabular}

    \caption{Gaussian mixture setting.}
    \label{tab:Gaussian_mixture}
\end{table}

\begin{figure}[h!]
    \centering
    \newcommand{\imagewidth}{0.32\linewidth}
    \subfloat[Experiment 1 with epoch 0/50/80]{\includegraphics[width=\imagewidth]{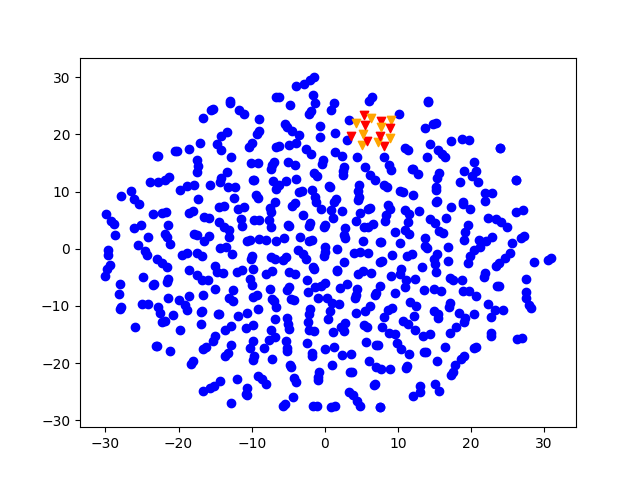}
    \includegraphics[width=\imagewidth]{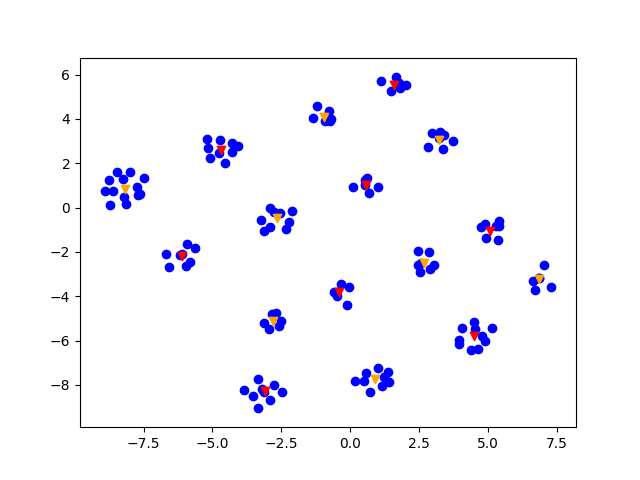}
    \includegraphics[width=\imagewidth]{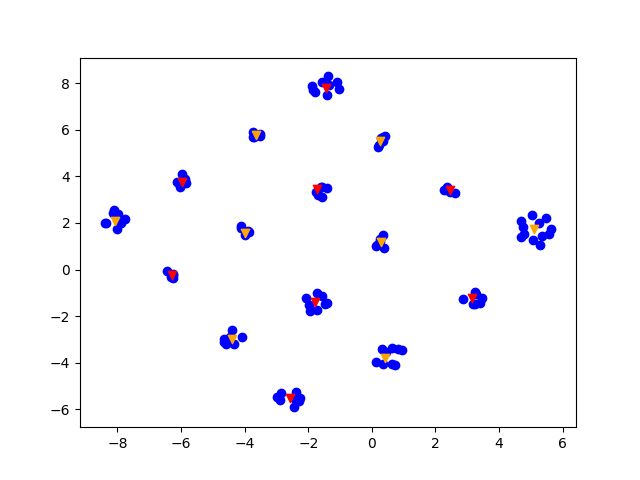}}\\
    \subfloat[Experiment 2 with epoch 0/30/50]{\includegraphics[width=\imagewidth]{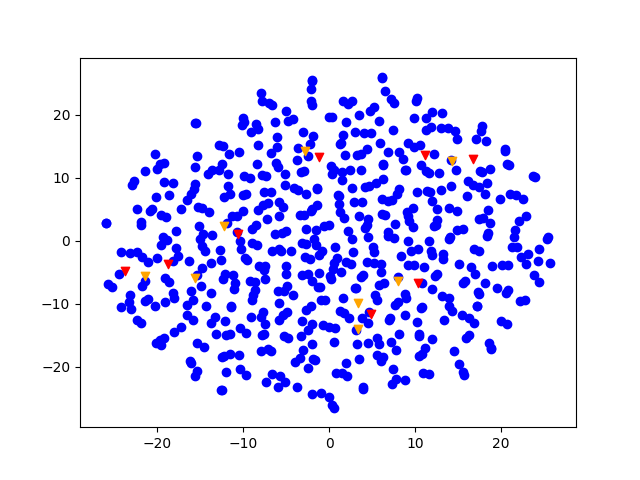}
    \includegraphics[width=\imagewidth]{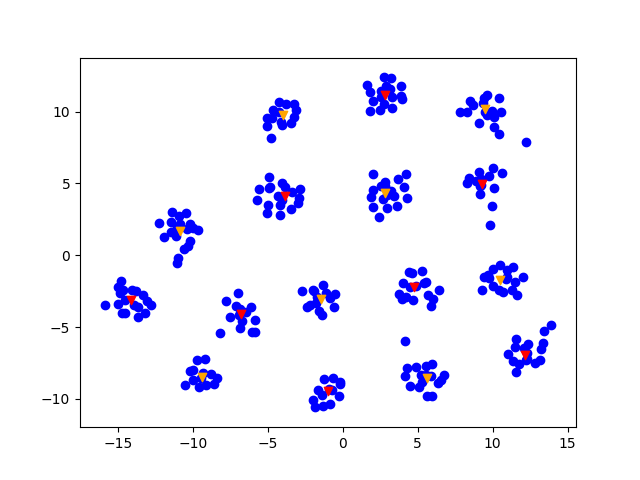}
    \includegraphics[width=\imagewidth]{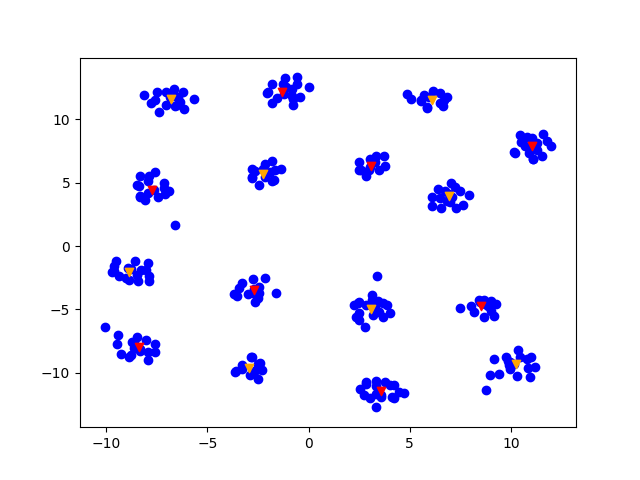}}\\
    \subfloat[Experiment 3 with epoch 0/50/80]{\includegraphics[width=\imagewidth]{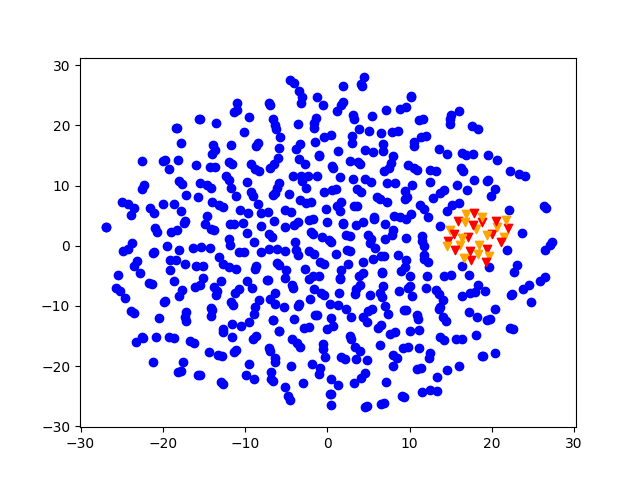}
    \includegraphics[width=\imagewidth]{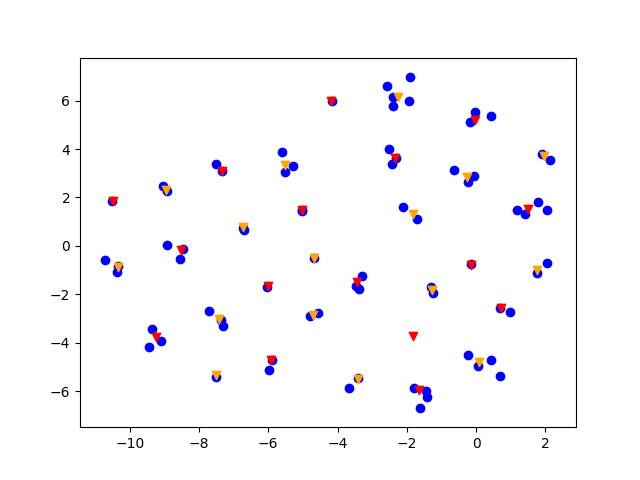}
    \includegraphics[width=\imagewidth]{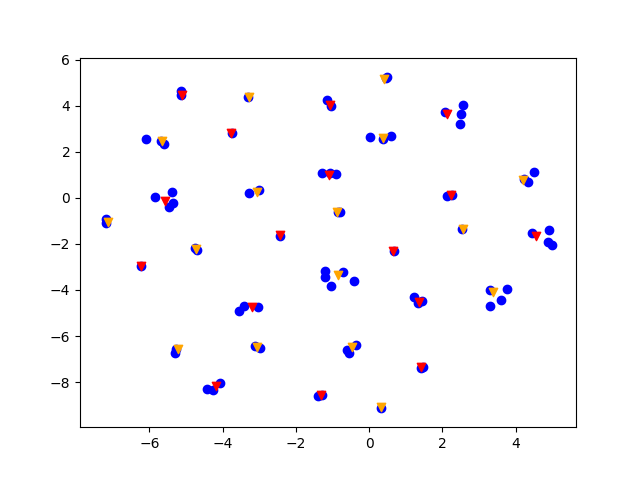}}
    
    \caption{Visualization of the weights $w_i$'s (blue dots) and Gaussian centers (red for positive labeled clusters and orange for negative labeled clusters). }
    \label{fig:Gaussian_mixture}

\end{figure}

\subsection{Real Data: Feature Learning in Networks}
\begin{figure}[!t]
    \centering
    \newcommand{\imagewidth}{0.32\linewidth}
    \includegraphics[width=\imagewidth]{img/real/mnist_epoch_0.png}
    \includegraphics[width=\imagewidth]{img/real/mnist_epoch_3.png}
    \includegraphics[width=\imagewidth]{img/real/mnist_epoch_20.png}
    \caption{Visualization of the neurons' weights in a two-layer network trained on the subset of MNIST data with label 0/1. The weights gradually form two clusters.
    }
    \label{fig:mnist_full}
\end{figure}

\begin{figure}[!t]
    \centering
    \newcommand{\imagewidth}{0.32\linewidth}
    \includegraphics[width=\imagewidth]{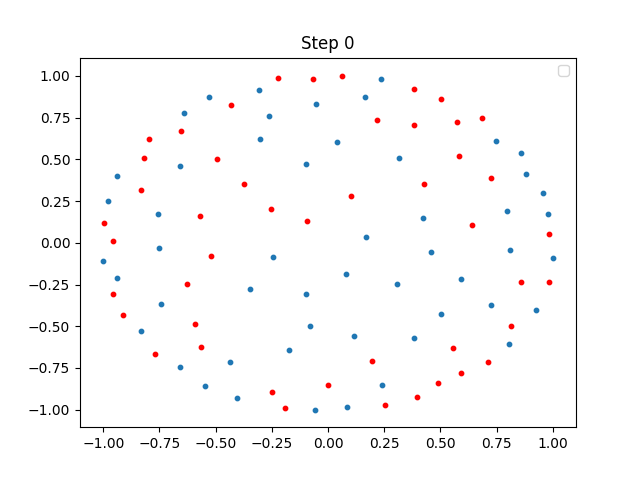}
    \includegraphics[width=\imagewidth]{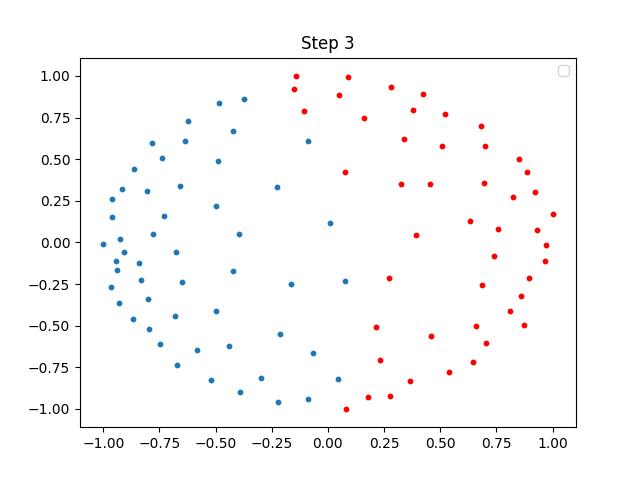}
    \includegraphics[width=\imagewidth]{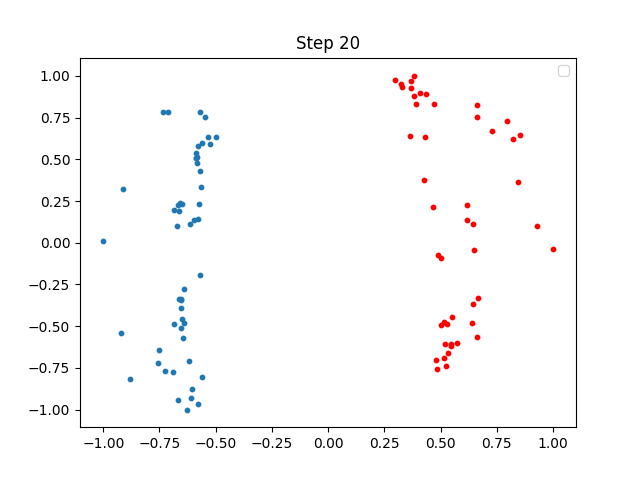}
    \caption{Visualization of the neurons' weights in a two-layer network trained on the subset of CIFAR10 data with label airplane/automobile. The weights gradually form two clusters.}
    \label{fig:cifar_full}
\end{figure}

\begin{figure}[!t]
    \centering
    \newcommand{\imagewidth}{0.32\linewidth}
    \includegraphics[width=\imagewidth]{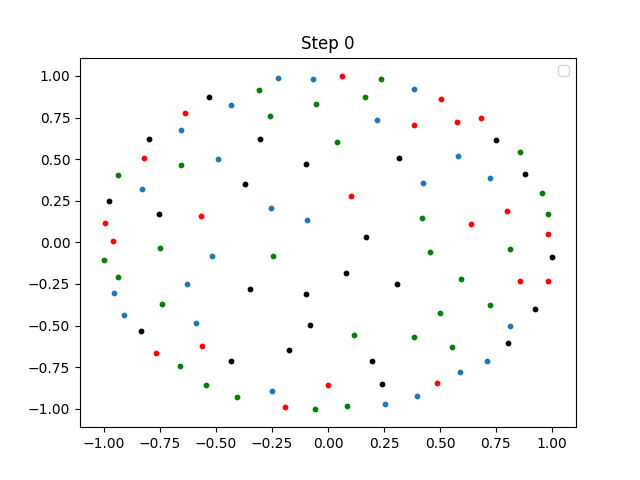}
    \includegraphics[width=\imagewidth]{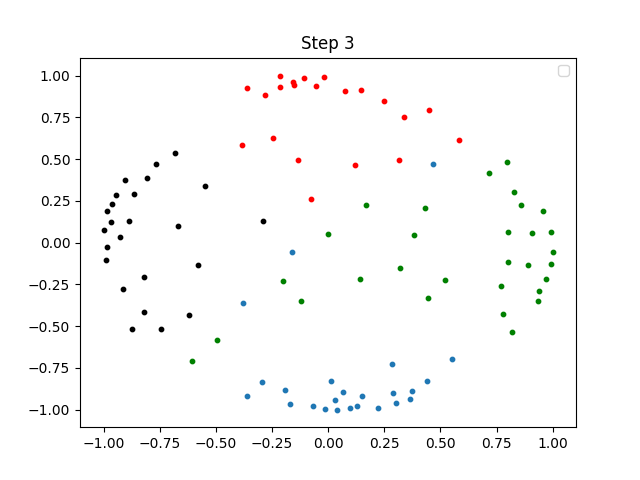}
    \includegraphics[width=\imagewidth]{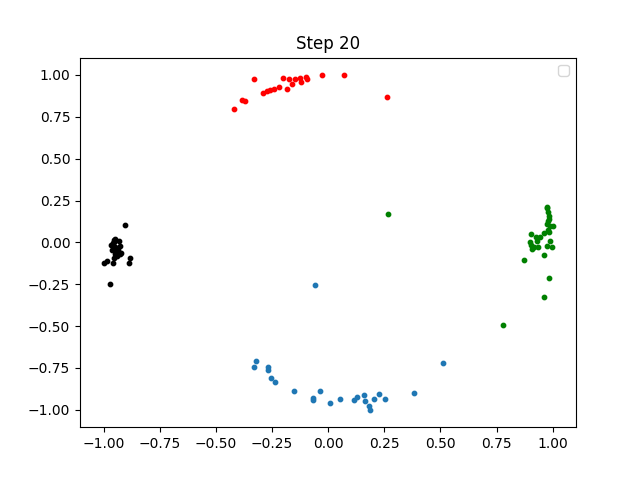}
    \caption{Visualization of the neurons' weights in a two-layer network trained on the subset of SVHN data with label 0/1. The weights gradually form four clusters.}
    \label{fig:svhn_full}
\end{figure}

We take the subset of MNIST~\citep{lecun-mnisthandwrittendigit-2010} with labels 0/1, CIFAR10~\citep{Krizhevsky-cifar-2012} with labels airplane/automobile and SVHN~\citep{Netzer2011} with labels 0/1, and train a two-layer network with $\nw=50$. We use traditional weight initialization method (random Gaussian) and training method (SGD with momentum $= 0.95$ without regularization) in this section, for our purpose of investigating the training dynamics in practice.

Then we visualize the neurons' weights following the same method in the simulation. Figure~\ref{fig:mnist_full}, Figure~\ref{fig:cifar_full} and Figure~\ref{fig:svhn_full} show a similar feature learning phenomenon:  effective features emerge after a few steps, and then get improved to form clusters. This shows the insights obtained on our learning problems are also applicable to the real data.  

\subsubsection{CNNs on Binary Cifar10: Feature Learning in Networks}

\mypara{Setting.}  We use $\ResNet(m)$, which is a ResNet-18 convolutional neural network~\citep{he2015deep} with $m$ filters in the first residual block. It is obtained by scaling the number of filters in each block proportionally from the standard ResNet-18 network which is $\ResNet(64)$. We use $\ResNet(128)$ and $\ResNet(256)$ in this experiment. We train our model on Binary CIFAR10~\citep{Krizhevsky-cifar-2012} with labels airplane/automobile for 20 epochs. The final test accuracy of $\ResNet(128)$ is 95.75\% and that of $\ResNet(256)$ is 93.8\%.

\paragraph{Results.}
Figure~\ref{fig:cnn_cifar10} visualizes the filters' weights of different residual blocks in $\ResNet(128)$ at Epoch 0, 3, and 20, and Figure~\ref{fig:cnn_cifar10_256} shows those in $\ResNet(256)$. They show that feature learning happens in the early stage, and show that there are some clusters of weights (e.g., the red and green points). These colored points are selected at Epoch 20. We first visualize the weights at Epoch 20, and then hand pick the points that roughly form two clusters (i.e., the points in the same cluster are close to each other while those in different clusters are far away). We assign red and green colors to the two clusters at Epoch 20, and then assign these weights with the same color in Epoch 0 and 3. Finally, we compute the cosine similarities and show that the hand picked points are indeed roughly clusters in the high-dimension.

\begin{table}[!t]
    \centering
    \begin{tabular}{c|cccccc}
        \hline
        & $\cos(v_1, \bar{v})$ & $\cos(v_2, \bar{v})$ & $\cos(v_3, \bar{v})$ & $\cos(v_1, v_2)$ & $\cos(v_1, v_3)$ & $\cos(v_2, v_3)$ \\
       \hline 
        $\ResNet(128)$ & 0.9727 & 0.8655 & 0.6549 & 0.7454 & 0.5083 & 0.6533\\ 
        $\ResNet(256)$ & 0.8646 & 0.9665 & 0.9121 & 0.7087 & 0.6919 & 0.9135\\
        \hline
    \end{tabular}
    \caption{Cosine similarities between the gradients in the early steps. We choose the neuron weight closest to the average weight of the green cluster at the end of the training (in Figure~\ref{fig:cnn_cifar10} for $\ResNet(128)$ and Figure~\ref{fig:cnn_cifar10_256} for $\ResNet(256)$). We record the gradients of the first 30 steps and divide them to three trunks of 10 steps evenly and sequentially. For the three trunks, we get the average gradients $v_1, v_2, v_3$. We calculate their cosine similarities to their average $\bar{v} = (v_1+v_2+v_3)/3$ and those between them. }
    \label{tab:grad_dir}
\end{table}

\begin{figure}[!t]
    \centering
    \newcommand{\imagewidth}{0.32\linewidth}
    \subfloat[Residual block 1: (Green: 0.6152, Red: 0.6973, Two Centers: -0.7245)]{
	    \includegraphics[width=\imagewidth]{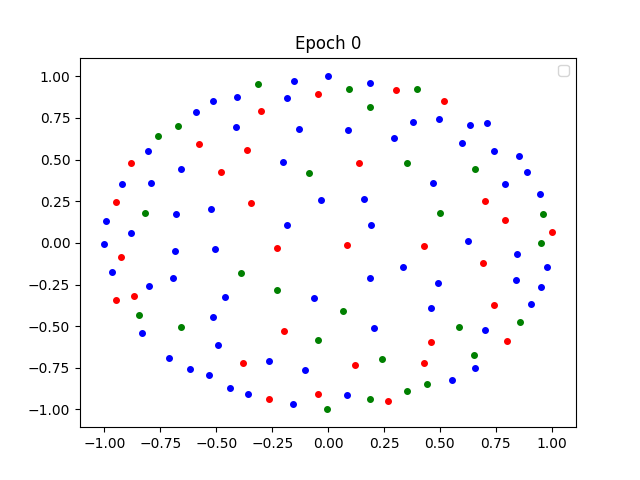}
        \includegraphics[width=\imagewidth]{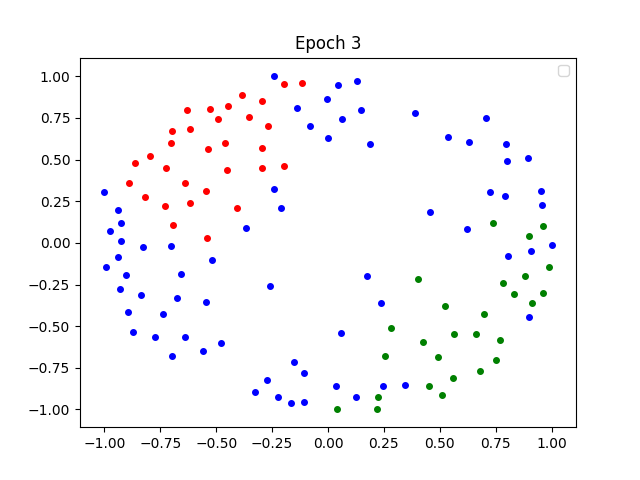}
        \includegraphics[width=\imagewidth]{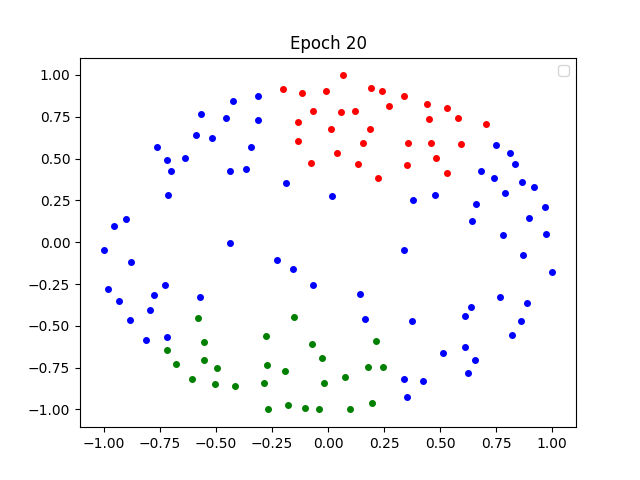}
    }\\
    \subfloat[Residual block 2:  (Green: 0.5528, Red: 0.6000, Two Centers: -0.7509)]{
	    \includegraphics[width=\imagewidth]{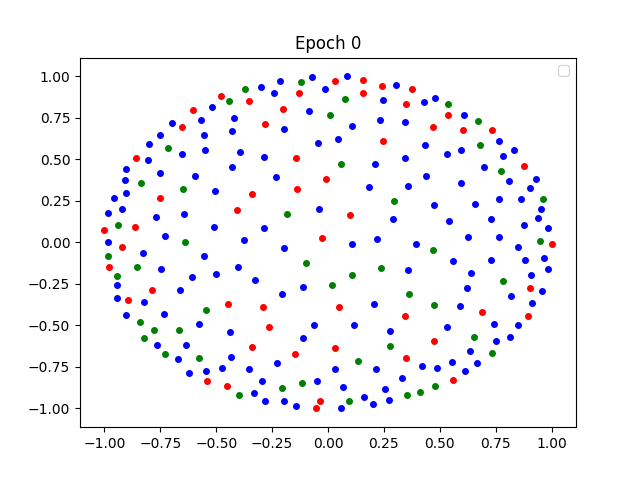}
        \includegraphics[width=\imagewidth]{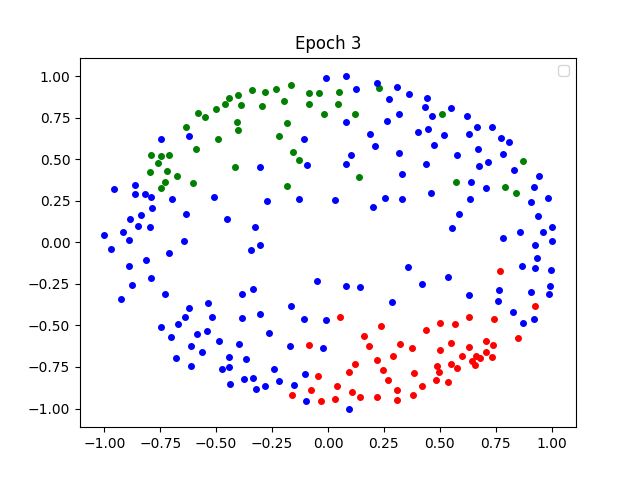}
        \includegraphics[width=\imagewidth]{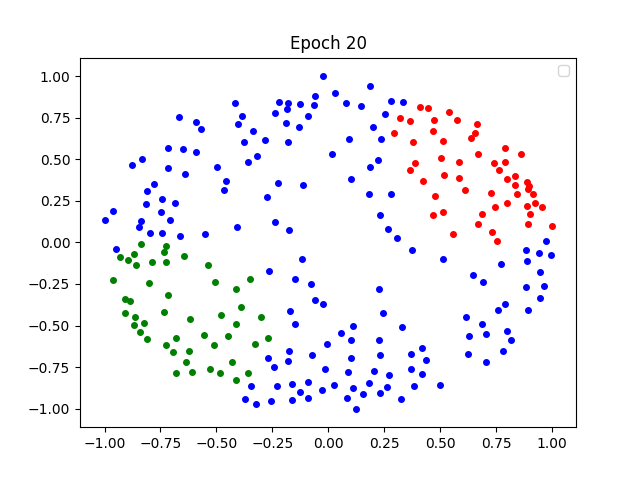}
    }\\
    \subfloat[Residual block 3:  (Green: 0.4260, Red: 0.5006, Two Centers: -0.5099)]{
	    \includegraphics[width=\imagewidth]{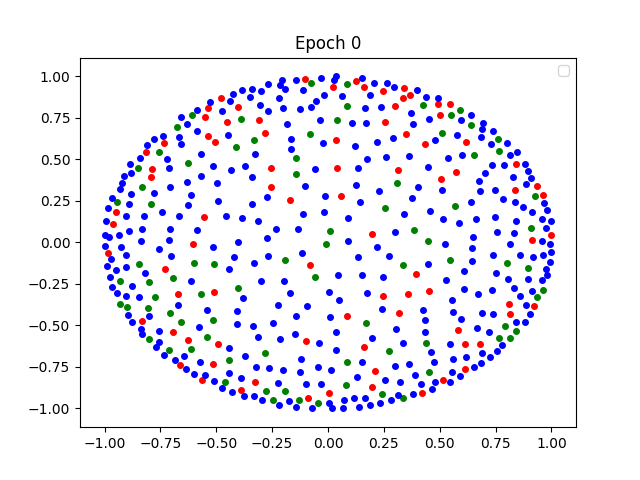}
        \includegraphics[width=\imagewidth]{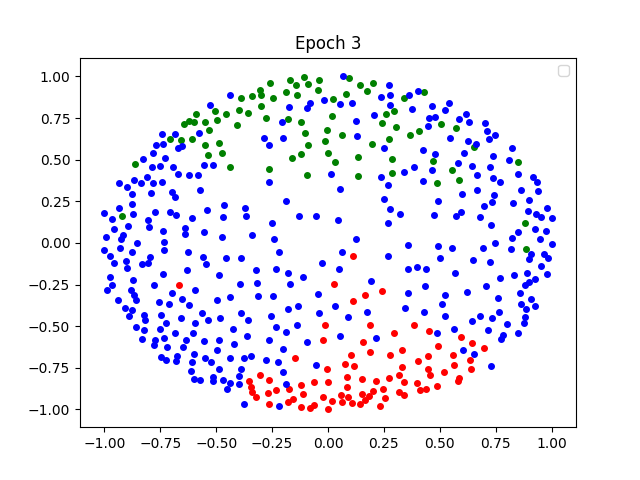}
        \includegraphics[width=\imagewidth]{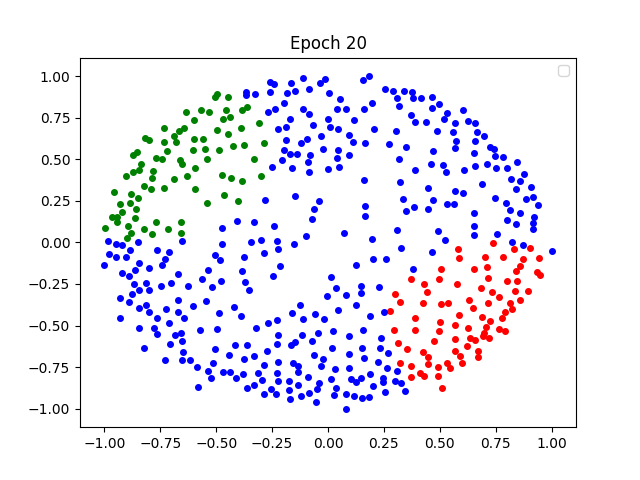}
    }\\
    \subfloat[Residual block 4:  (Green: 0.5584, Red: 0.5697, Two Centers: -0.9074)]{
	    \includegraphics[width=\imagewidth]{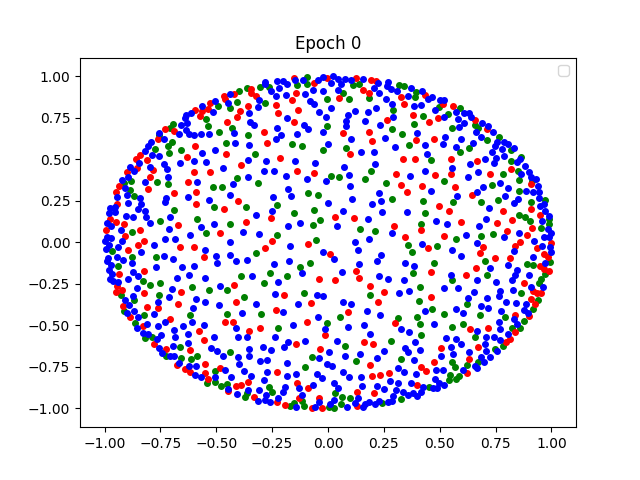}
        \includegraphics[width=\imagewidth]{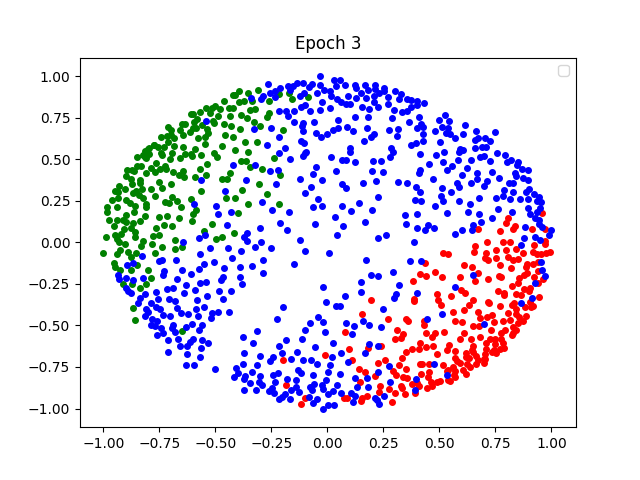}
        \includegraphics[width=\imagewidth]{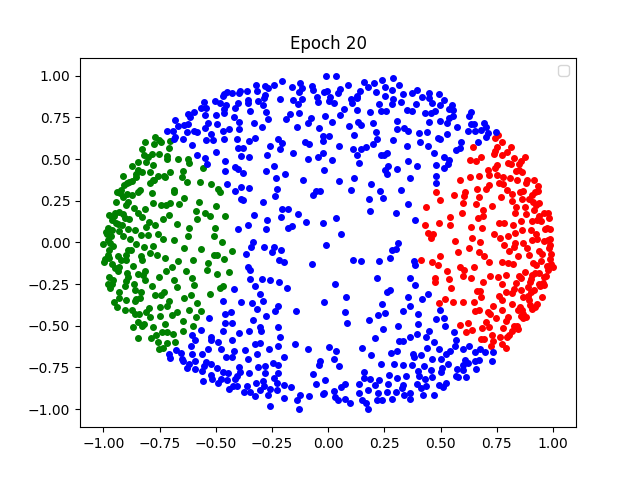}
    }
    \caption{Visualization of the normalized convolution weights in all Residual block of $\ResNet(128)$ trained on the subset of CIFAR10 data with labels airplane/automobile. We show the weights after 0/3/20 epochs in network learning. The weights gradually form two clusters in all Residual blocks. We also report average cosine similarity between the green/red points in the clusters to their centers and cosine similarity between two cluster centers as (Green, Red, Two Centers).}\label{fig:cnn_cifar10}
\end{figure}


\begin{figure}[!t]
    \centering
    \newcommand{\imagewidth}{0.32\linewidth}
    \subfloat[Residual block 1: (Green: 0.7065, Red: 0.6551, Two Centers: -0.7875)]{
	    \includegraphics[width=\imagewidth]{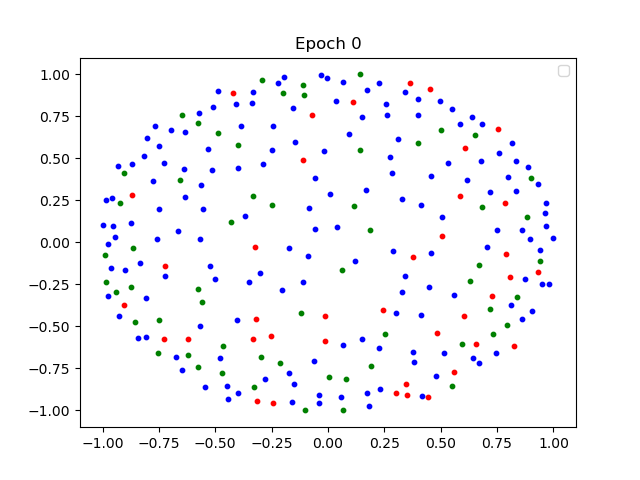}
        \includegraphics[width=\imagewidth]{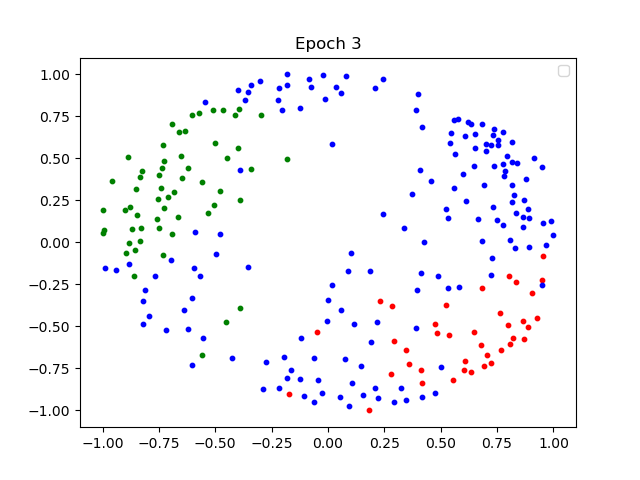}
        \includegraphics[width=\imagewidth]{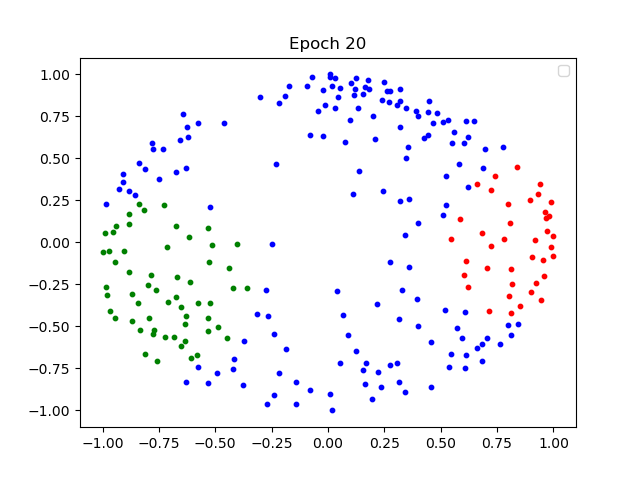}
    }\\
    \subfloat[Residual block 2:  (Green: 0.6599, Red: 0.5299, Two Centers: -0.9004)]{
	    \includegraphics[width=\imagewidth]{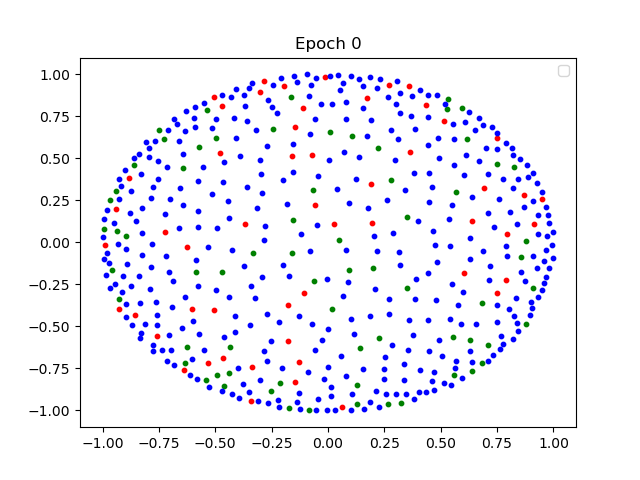}
        \includegraphics[width=\imagewidth]{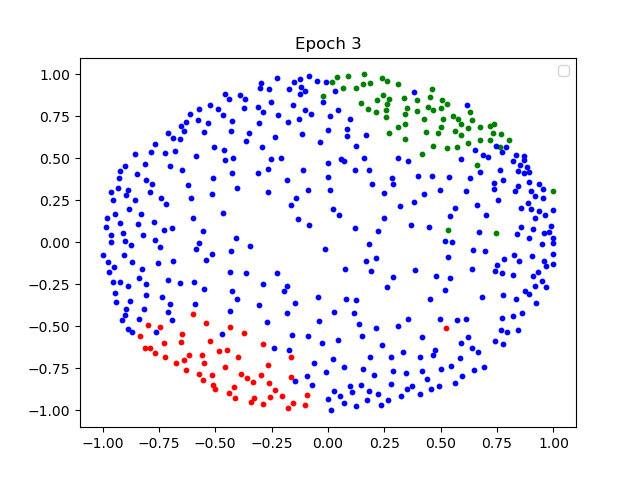}
        \includegraphics[width=\imagewidth]{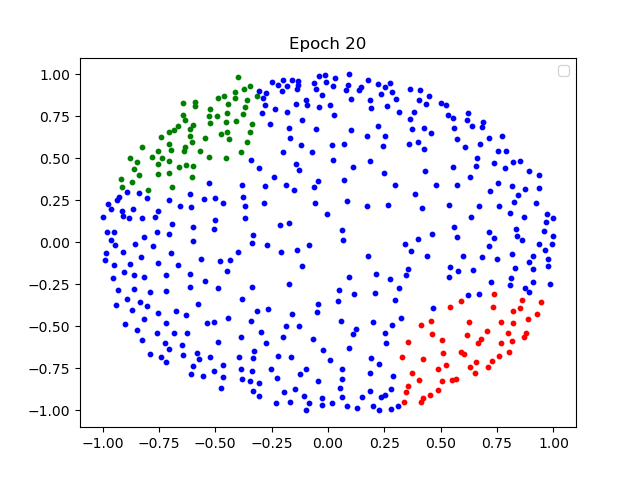}
    }\\
    \subfloat[Residual block 3:  (Green: 0.5193, Red: 0.6267, Two Centers: -0.9258)]{
	    \includegraphics[width=\imagewidth]{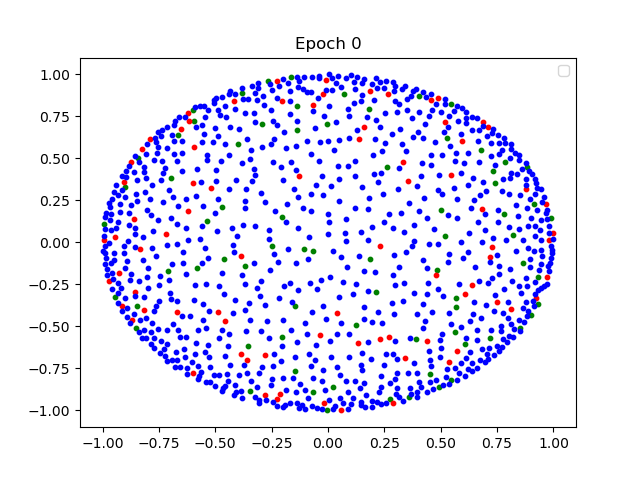}
        \includegraphics[width=\imagewidth]{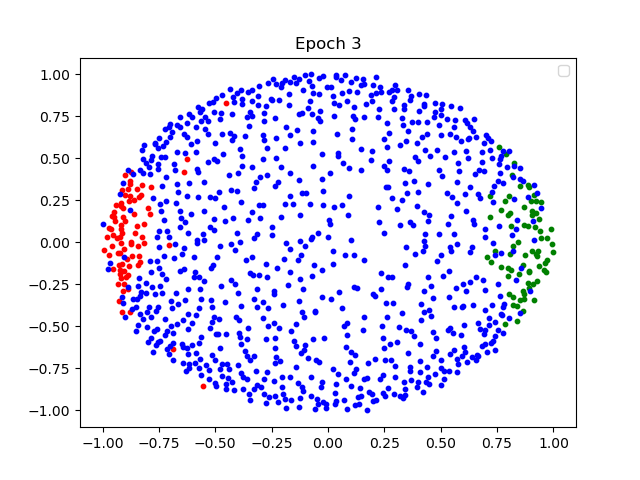}
        \includegraphics[width=\imagewidth]{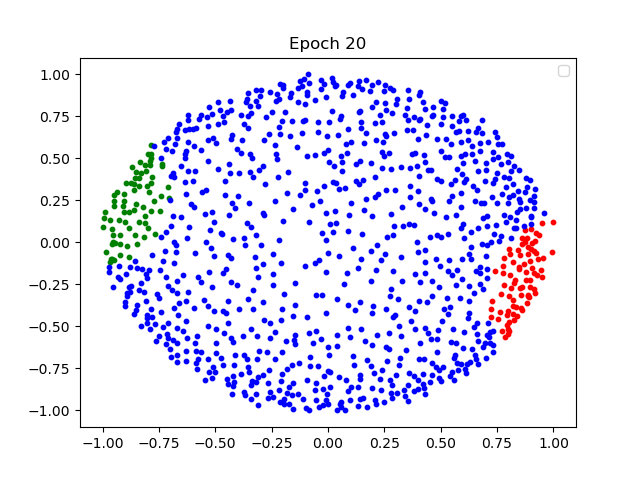}
    }\\
    \subfloat[Residual block 4:  (Green: 0.7386, Red: 0.6839, Two Centers: -0.9740)]{
	    \includegraphics[width=\imagewidth]{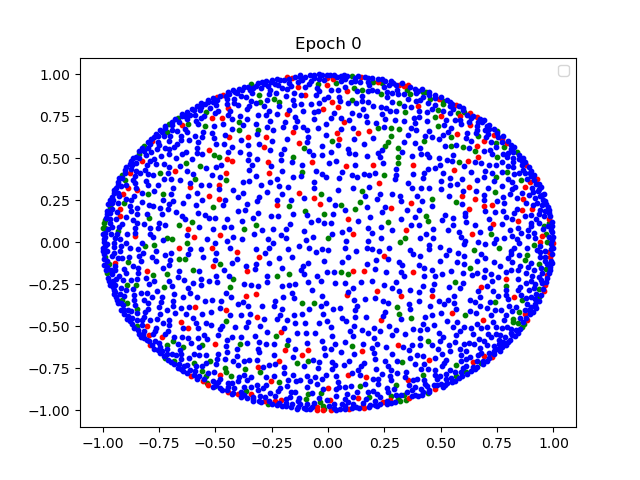}
        \includegraphics[width=\imagewidth]{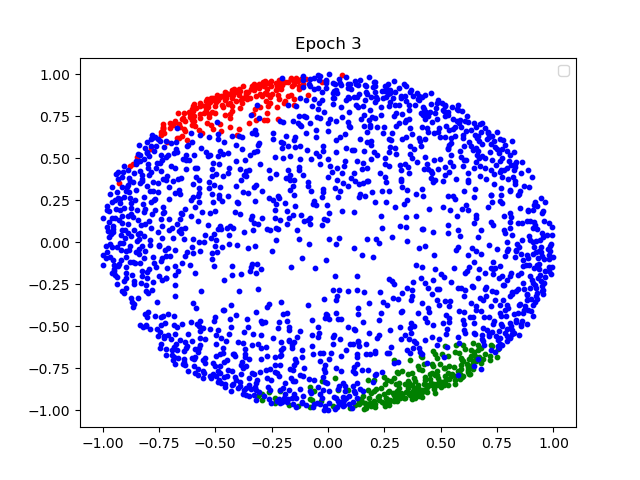}
        \includegraphics[width=\imagewidth]{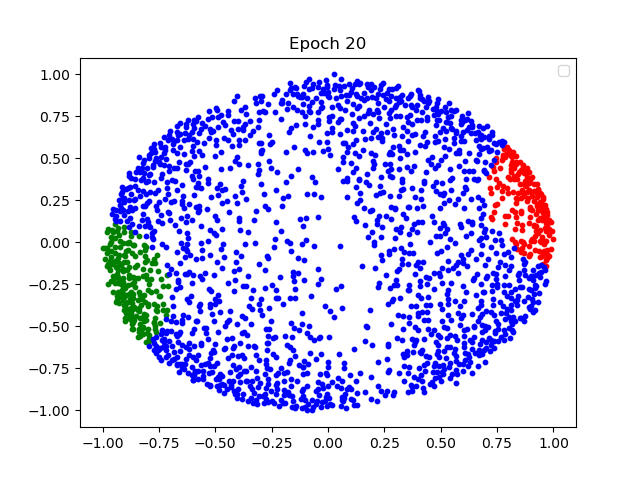}
    }
    \caption{Visualization of the normalized convolution weights in all Residual block of $\ResNet(256)$ trained on the subset of CIFAR10 data with labels airplane/automobile. We show the weights after 0/3/20 epochs in network learning. The weights gradually form two clusters in all Residual blocks. We also report average cosine similarity between the green/red points in the clusters to their centers and cosine similarity between two cluster centers as (Green, Red, Two Centers).}\label{fig:cnn_cifar10_256}
\end{figure}


In particular, we have the following three observations.

First, we can see that the filter weights change significantly during the early stage of the training, indicating feature learning happens in the early stage: the change between Epoch 0 and Epoch 3 is much more significant than that between Epoch 3 and Epoch 20. 

Second, we can also verify that the feature learning is guided by the gradients: the gradients of a filter in the early gradient steps point to similar directions (and thus the updated filter will learn this direction). More precisely, for a selected filter, we average the gradients every 10 gradient steps (so to reduce the variance due to mini-batch), and get $v_1, v_2$ and $v_3$ for the first 30 steps and compute their cosine similarities and those to their average. Table~\ref{tab:grad_dir} shows the results. In general the similarities are high indicating they point to similar directions. (Note that a similarity of $0.6$ is regarded as very significant as the filters are in a high dimension of $3 \times 3 \times 1024 = 9216$).

Third, we also observe some clustering effect of the filter weights, though not as significant as in our simulations. For example, in the red and green clusters in Figure~\ref{fig:cnn_cifar10}(a) for the first residual block, the average cosine similarity for filter weights in the red cluster is about 0.62 and that for the green is about 0.7, while the cosine similarity between the two clusters’ centers is about -0.72. This shows significant similarities within the cluster while difference between clusters. 

Note that the clustering is less significant than our simulation experiments. This is because practical data have more patterns (i.e., effective feature directions) to be learned than our synthetic data, and also the practical network is not as overparameterized as in our simulation. Then filters are likely to learn different patterns (or their mixtures) without forming significant clusters. The results of $\ResNet(256)$ show more significant clustering than $\ResNet(128)$, which supports our explanation. On the other hand, we emphasize that the key insight of our analysis is that the gradient guides the learning of effective features in the early stage of training (rather than the clustering), which is verified as discussed above.

\subsection{Real Data: The Effect of Input Structure}
To study the influence of the input structure, we propose to keep the labeling function unchanged, vary the input distributions, and exam the change of the loss surface and the training dynamics. We first describe the detailed experimental methodology, which allows us to generate data with similar labeling function but different input distributions. Then we perform experiments on the generated datasets to investigate the change of the learning due to the change in the input distributions, and present the experimental results. Finally, we also perform experiments to verify the intuition behind our experimental method.  

\subsubsection{Experimental Methodology}
We consider the following experimental method. 
Given an original dataset $\mathcal{L} = \{(x_i, y_i)\}_{i=1}^{n}$ (e.g., CIFAR10) and an unlabeled dataset $\mathcal{U}=\{\tilde x_i\}_{i=1}^{m}$ from a proposed distribution $P_\mathcal{U}$ (e.g., Gaussians), first extend the labeling function of $\mathcal{L}$ to $\mathcal{U}$, giving synthetic labels $\tilde y_i$ to $\tilde x_i$. Then train a neural network on the union of $\mathcal{L}$ and the synthetic data $\mathcal{L}_\mathcal{U} = \{(\tilde x_i, \tilde y_i) \}_{i=1}^m$. By investigating the new training dynamics, in particular the difference on the original part $\mathcal{L}$ and the synthetic part $\mathcal{L}_{\mathcal{U}}$, we can see the effect of the input structure. The original dataset should be from real-world data, since one of our goals is to compare them with synthetic data, and identify the properties of real-world data important for the success of learning.

A natural idea is to first learn a powerful network $f(x)$ (called the teacher) on $\mathcal{L}$ to approximate the true labeling function, then apply $f$ on $\mathcal{U}$ to generate synthetic labels, and finally train another network (called the student) on the synthetic data and original data. However, we found that na\"ively implementation of this idea fails miserably: the support of $\mathcal{L}$ and $\mathcal{U}$ can be typically different, and the powerful network learned over $\mathcal{L}$ can have entirely different behavior on $\mathcal{U}$. Therefore, we need to control the size of the teacher $f$ so that the labeling on $\mathcal{U}$ has similar complexity as that on $\mathcal{L}$. For our purpose, we can define the complexity of the labeling on $\mathcal{L}$ as the minimum size of the teacher achieving an approximation error $\epsilon$ for a chosen $\epsilon$, if the ground-truth data distribution of $\mathcal{L}$ is known. However, given only limited data, we cannot faithfully estimate the needed size of the teacher, and need to take into account the variance introduced by the finite data. 

Our key idea is to use the U-shaped curve of the bias-variance trade-off and select the size of the teacher at the minimum of the U-shaped curve. Since recent works~\citep{belkin2018reconciling,nakkiran2020deep} show that neural networks can have a double descent curve for the error v.s.\ model complexity, we thus plot the double descent curve, and find the minimum in the classical regime (corresponding to the traditional U-shape curve). 

Our method is designed based on the following two reasons. First, on the U-shaped curve, the complexity of the network is still roughly controlled by that of the number of parameters. The local minimum of the U-shaped curve is a good measurement of the complexity of the data. If the ground-truth is much more complicated than the teacher, then increasing the teacher's size leads to a significant decrease in the approximation error (bias) compared to a small increase in the variance, that is, we will be on the left-hand side of the U-shaped. In contrast, on the right-hand side of the U-shaped, increasing the teacher's size leads to a small decrease in the bias compared to a significant increase in the variance. That is, the complexity of the ground-truth is comparable to or lower than the teacher. So the local minimum approximates the complexity of the ground-truth labeling function.

Second, the local minimum point is chosen to get the best approximation of the true labels. This helps to maintain the labeling from the real-world data and thus helps our investigation on the input, since too drastic change in the labeling can affect the training.

We note that the method is not perfect. First, the teacher at the local minimum of U-shape may not have very high accuracy, especially on more complicated data. To alleviate this, we also use the teacher to give synthetic labels $y'_i$ to $x_i$ in $\mathcal{L}$, and train the student network on $\mathcal{L}'=\{(x_i, y'_i)\}_{i=1}^n$. Though this introduces some differences from the original labels, it is acceptable for our purpose of studying the inputs. Furthermore, ensuring the consistency of the labels on the original input in $\mathcal{L}$ and $\mathcal{U}$ is important in our experiments. Second, the measurement is an approximation due to variance. Since only limited labeled data is available, it's important and necessary to calibrate the measurement w.r.t. the level of variance on the given dataset.

\mypara{Method Description.} 
Algorithm~\ref{alg:artificial_data} presents the details. For a fixed network architecture for the teacher $f$, it first varies the network size and plots the double descent curve. Then it selects the local minimum in the classic regime of U-shape and trains the teacher with the corresponding size. 
In practice, we observed that the teacher might have unbalanced probabilities for different classes on $\mathcal{U}$ if its training does not take into account $\mathcal{U}$. Therefore, we propose the following heuristic regularization using $x \in \mathcal{U}$, where $\lambda$ is a regularization weight, and $f(x)$ is the probabilities over classes given by the teacher:
\begin{align}\label{eq:reg}
    R(x) & = R_1(x) + \lambda R_2(x) \\
    R_1(x) & = \sum_j 
    \left(\frac{\sum_i f(x)_j}{m}\ln \frac{\sum_i f(x)_j}{m} \right) \\
    R_2(x) & = -  \frac{1}{m} \sum_i \sum_j ( f(x)_j \ln({f(x)_j})).
\end{align} 
 Here, $R_1(x)$ guarantees that each kind of label has the same average probability to be generated, and $R_2(x)$ pushes the probability away from uniform to avoid the case that the class probabilities for each data point converge to uniform.
 
\begin{algorithm}[htb]
   \caption{Learning the teacher network to generate synthetic labels for studying the effect of the input structure}
   \label{alg:artificial_data}
\begin{algorithmic}
   \STATE{\bfseries Input:} teacher architecture $f$, labeled dataset $\mathcal{L}= \{(x_i, y_i)\}_{i=1}^{n}$, unlabeled dataset $\mathcal{U}=\{\tilde{x}_i\}_{i=1}^{m}$.
   \STATE Let $i$ to be the size of $f$, $f_i$ to be the teacher of size $i$.
   \FOR{$i=1$ to $n$}
   \STATE Train $f_i$ on $\mathcal{L}$ and let $l_i$ denote the test loss
   \ENDFOR
   \STATE Plot $l_i$ v.s. $i$, identify the classical regime, and the size $i_t$ corresponding to the local minimum in classical regime.
   \STATE Train $f_{i_t}$ on $\mathcal{L}$ with a regularizer $R(x)$ on $\mathcal{U}$ defined in~(\ref{eq:reg}).
    \STATE{\bfseries Output:} $f_{i_t}$
\end{algorithmic}
\end{algorithm}

\subsubsection{Experimental Results}\label{sec:real_result}

\mypara{Network models.}
Here we use one-hidden-layer fully-connected networks with $m$ hidden units and quadratic activation functions.
The network is denoted as $\FC(m)$. We use $\ResNet(m)$, which is a ResNet-18 convolutional neural network~\citep{he2015deep} with $m$ filters in the first residual block. It is obtained by scaling the number of filters in each block proportionally from the standard ResNet-18 network which is $\ResNet(64)$.

\mypara{Datasets.} 
We use MNIST~\citep{lecun-mnisthandwrittendigit-2010}, CIFAR10~\citep{Krizhevsky-cifar-2012} and SVHN~\citep{Netzer2011} as $\mathcal{L}$, and use Gaussian and images in Tiny ImageNet~\citep{le2015tiny} as $\mathcal{U}$. We generate the mixture data, where the fraction of the unlabeled data is denoted as $\alpha$.

\mypara{Setup.} 
We first use Algorithm~\ref{alg:artificial_data} on the labeled data $\mathcal{L}$ and the unlabeled data $\mathcal{U}$ to get a synthetic labeling function (the teacher network) and then use it to give synthetic labels on a mixture of inputs from $\mathcal{L}$ and $\mathcal{U}$. For MNIST, the teacher network learned is $\FC(9)$, where the number of the hidden units is determined by Algorithm~\ref{alg:artificial_data}.  See empirical verification in Figure~\ref{fig:ddc}. For CIFAR10 and SVHN, the teacher networks are $\ResNet(5)$ and $\ResNet(2)$, respectively, as determined by our method. The student network for MNIST is $\FC(9)$, and those for CIFAR10 and SVHN are $\ResNet(9)$ and $\ResNet(8)$, respectively. Finally, we train the student networks on these new datasets with perturbed input distributions. 


\begin{figure}[!th]
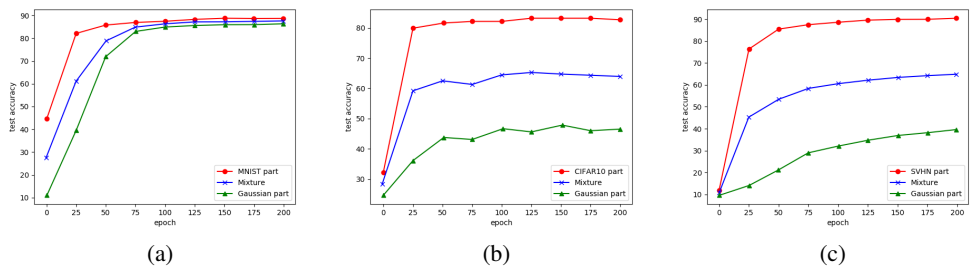

    \centering
    \newcommand{\imagewidth}{0.32\linewidth}
    \subfloat[]{\includegraphics[width=\imagewidth]{img/real/zip_k28_alpha9_alpha50.png}}
    \subfloat[]{\includegraphics[width=\imagewidth]{img/real/zip_cifar10_alpha50.png}}
    \subfloat[]{\includegraphics[width=\imagewidth]{img/real/zip_a_svhn_Res18_16_aug_alpha50.png}}
    \caption{Test accuracy at different steps for an equal mixture $\alpha = 0.5$ of Gaussian inputs with data: (a) MNIST, (b) CIFAR10, (c) SVHN. }
    \label{fig:mixture_full}
\end{figure}

\begin{figure}[!th]
    \centering
    \newcommand{\imagewidth}{0.32\linewidth}
    \subfloat[]{\includegraphics[width=\imagewidth]{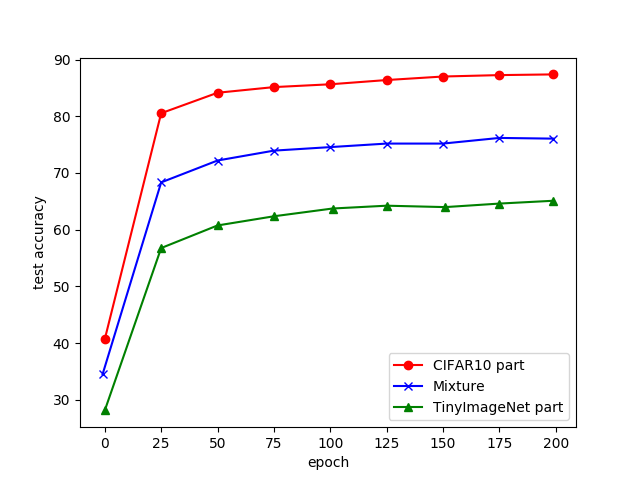}}
    \subfloat[]{\includegraphics[width=\imagewidth]{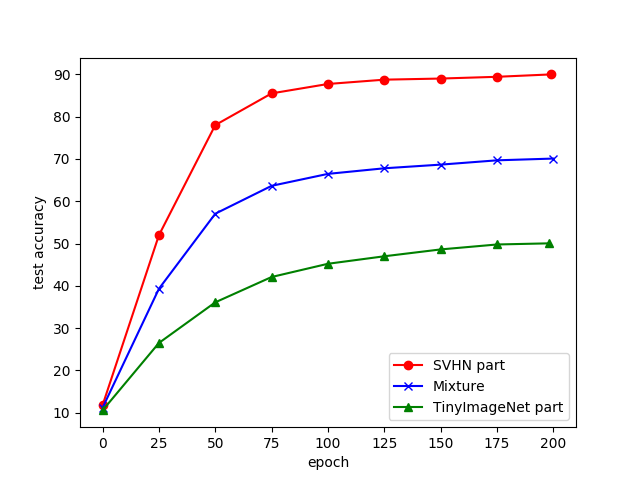}}
    \caption{Test accuracy at different steps for an equal mixture $\alpha = 0.5$ of Tiny ImageNet inputs with data: (a) CIFAR10, (b) SVHN.}
    \label{fig:mixture_full_tiny}
\end{figure}

\begin{figure}[!th]
    \centering
    \newcommand{\imagewidth}{0.32\linewidth}
    \subfloat[$\alpha = 0.25$]{\includegraphics[width=\imagewidth]{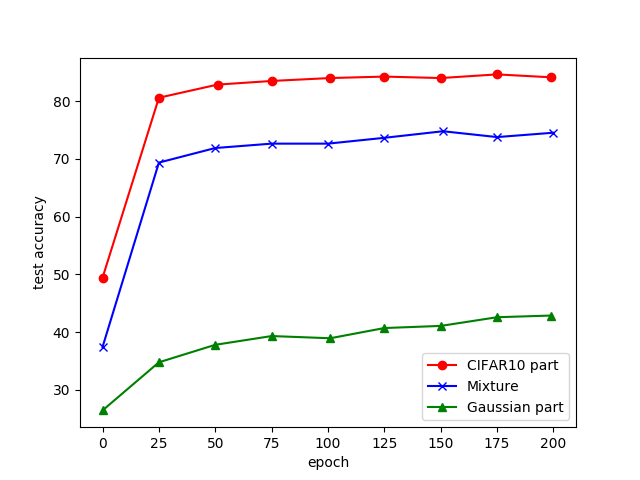}}
    \subfloat[$\alpha = 0.50$]{\includegraphics[width=\imagewidth]{img/real/zip_cifar10_alpha50.png}}
    \subfloat[$\alpha = 0.75$]{\includegraphics[width=\imagewidth]{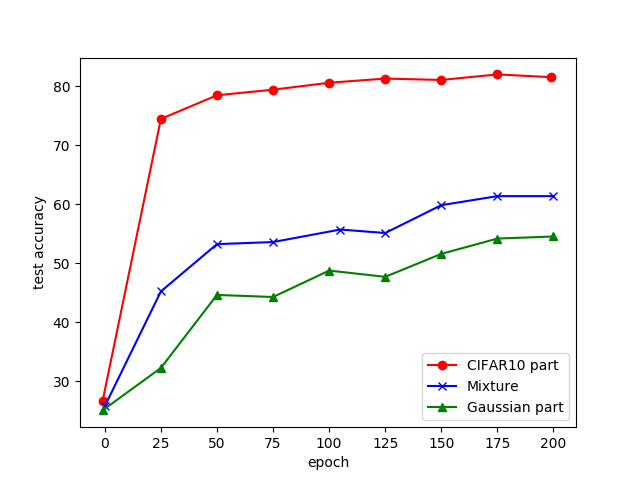}}
    \caption{Test accuracy at different steps for varying mixture $\alpha$ of Gaussian inputs with CIFAR10.}
    \label{fig:mixture_full_alpha}
\end{figure}

Figure~\ref{fig:mixture_full} shows the results on an equal mixture of data and Gaussian. It presents the test accuracy of the student on the original data part, the Gaussian part, and the whole mixture. For example, for CIFAR10, the test accuracy on the whole mixture is lower than that of training on the original CIFAR10, showing that the input structure indeed has a significant impact on the learning. Furthermore, the network learns well over the CIFAR10 part (with accuracy similar to that on the original data) but learns slower with worse accuracy on the Gaussian part. This suggests that the CIFAR10 input structure is still helping the network to learn effective features. While the results on MNIST+Gaussian do not show a significant trend (possibly because the tasks there are simpler), the results on SVHN+Gaussian show similar significant trends as CIFAR10+Gaussian.

Figure~\ref{fig:mixture_full_alpha} shows the results when we vary the fraction of the Gaussian data $\alpha$. We observe that the test accuracy curve on the original part and that on the synthetic part have roughly the same trend for different $\alpha$ as before, further verifying our insights.

Figure~\ref{fig:mixture_full_tiny} shows the results when mixed with Tiny ImageNet data instead of Gaussians. It shows a similar trend, while the performance on the Tiny ImageNet part is higher than that on the Gaussian part. This suggests that compared to Gaussians, the Tiny ImageNet data has helpful input structures, though not as helpful as that on the original data for learning the particular labeling.

\subsubsection{Larger Network on MNIST for Checking The Effect of Input Structure}

Here we perform the experiment on MNIST as in~\ref{sec:real_result}, but for a network with $m = 50$ hidden neurons rather than $m=9$.
Figure~\ref{fig:mnist_large} shows similar results as those for $m=9$: the learning on the MNIST input part is faster and better than that on the Gaussian input part.  The separation between the two is actually more significant than that for $m=9$. This then also supports our insight about the effect of input structures. 

\begin{figure}[ht]
    \centering
    \newcommand{\imagewidth}{0.4\linewidth}
    \includegraphics[width=\imagewidth]{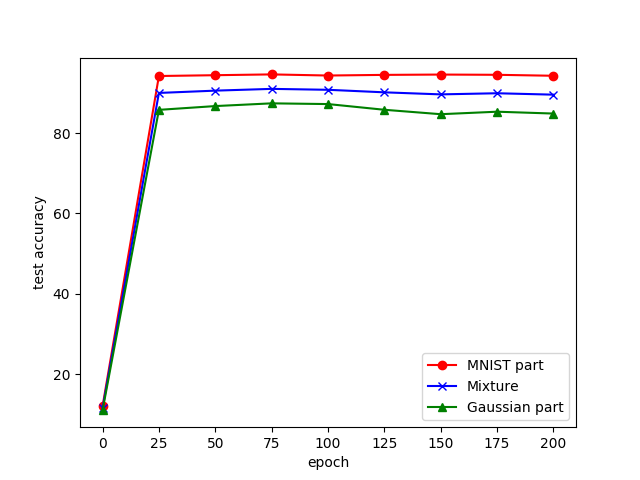}
    \caption{Test accuracy at different steps for an equal mixture $\alpha = 0.5$ of Gaussian inputs with MNIST, where $m=50$.}
    \label{fig:mnist_large}
\end{figure}

\subsubsection{Empirical Verification of Our Method}

\begin{figure}[!ht]
\centering
\newcommand{\imagewidth}{0.4\linewidth}
\subfloat[Teacher's double descent curve]{\includegraphics[width=\imagewidth]{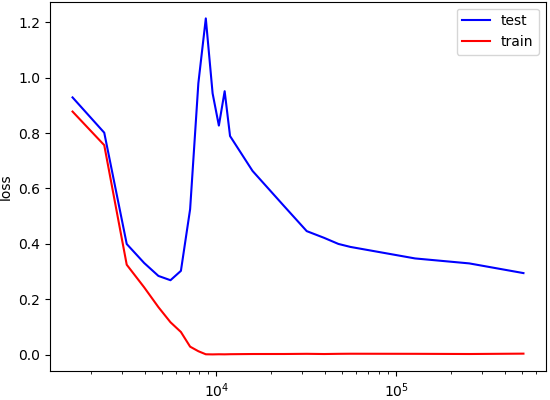}}
\subfloat[Student's curve when Teacher=$\FC(9)$]{\includegraphics[width=\imagewidth]{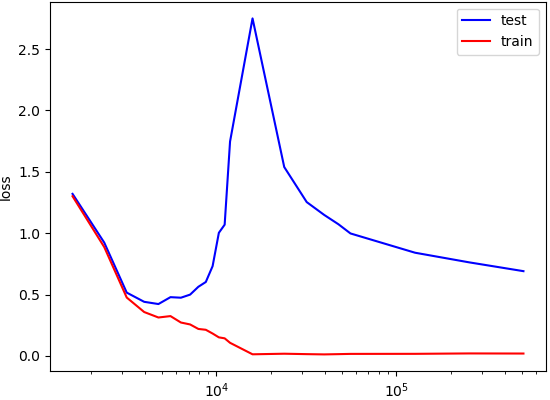}}
\\
\subfloat[Student's curve when Teacher=$\FC(50)$]{\includegraphics[width=\imagewidth]{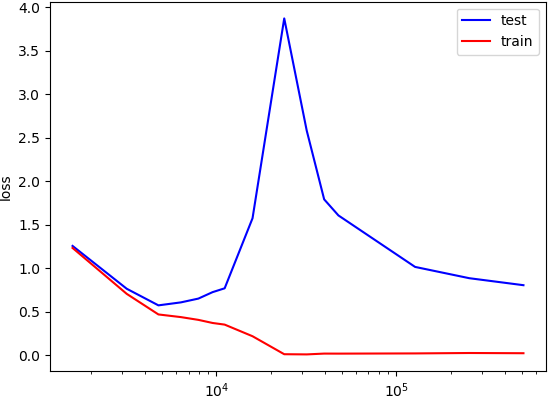}}
\subfloat[Student's curve when Teacher=$\FC(500)$]{\includegraphics[width=\imagewidth]{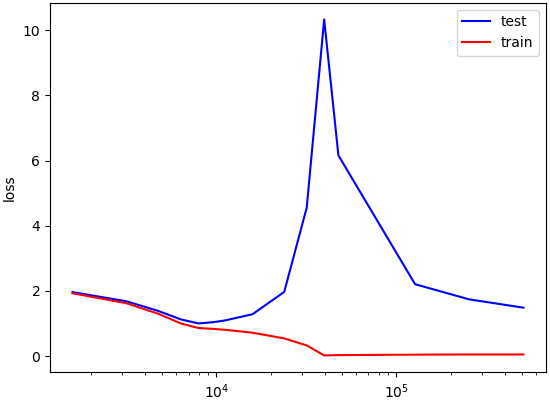}}
\caption{Double descent curves of the students trained on data with synthetic labels (Loss v.s. Parameter number).}
\label{fig:ddc}
\end{figure}
We also perform experiments to verify the intuition behind our methodology, i.e., the method gives a synthetic labeling function with roughly the same complexity on the original inputs and the injected inputs. 
We first use our method on MNIST and samples (of the same size as MNIST) from a Gaussian to get the teacher $\FC(9)$; the double descent curve is in Figure~\ref{fig:ddc}(a). Then we train students on the Gaussian data with synthetic labels from the teacher, and plot the double descent curve for the students in Figure~\ref{fig:ddc}(b). The local minimums of the two U-shapes are roughly the same, matching our reasoning. Then we also train larger teachers and plot the double descent curve for students on Gaussian data. Figure~\ref{fig:ddc}(c) Teacher size $50$. Figure~\ref{fig:ddc}(d) Teacher size $500$. The local minimum of the U-shape becomes larger when the teacher gets larger, again matching our reasoning.

\newpage

\section{Provable Guarantees for Neural Networks in A More General Setting} \label{app:general_analysis}

\newcommand{\pmin}{p_{\min}} 
\newcommand{\inco}{\mu} 
\newcommand{\ns}{\zeta} 
\newcommand{\sigmans}{\sigma_\ns} 
\newcommand{\nsp}{\iota} 

\newcommand{\cDict}{\rho} 

\newcommand{\riskw}{L^{\wy}}

\newcommand{\gradns}{\nu}
\newcommand{\wns}{\upsilon}
\newcommand{\epsdevns}{\epsilon_\gradns}

This section provides the analysis in a more general setting. We first describe the learning problems, and then provide the proofs following similar intuitions as for the simpler settings in the main text. 

\subsection{Problem Setup}
Let $\X = \mathbb{R}^d$ be the input space, and $\Y = \{\pm 1\}$ be the label space. Suppose $\Dict \in \mathbb{R}^{d \times \mDict}$ is a dictionary with $\mDict$ elements, where each element $\Dict_j$ can be regarded as a pattern. We assume quite general incoherent dictionary:
\begin{itemize}
    \item[\textbf{(D)}] $\Dict$ is $\mu$-incoherent, i.e., the columns of $\Dict$ are unit vectors, and for any $i\neq j, |\langle \Dict_i, \Dict_j\rangle | \le \mu/\sqrt{d}.$
\end{itemize}Note that the setting in the main text corresponds to $\inco = 0$.

Let $\hro \in \{0, 1\}^\mDict$ be a hidden vector that indicates the presence of each pattern, and $\Ddist_{\hro}$ a distribution for $\hro$. Let $\A \subseteq [\mDict]$ be a subset of size $\kA$ corresponding to the class relevant patterns.
Let $\SP \subseteq [\kA]$.
We first sample  $\hro$ from $\Ddist_{\hro}$, and then generate the input $\xo$ and the class label $y$ from $\hro, \A, \SP$ by:
\begin{align} \label{eq:model_gen}
    \xo = \Dict \hro + \ns, 
    \quad y =  
    \begin{cases}
    +1, & \textrm{~if~} \sum_{i \in \A} \hro_i \in \SP,
    \\
    -1, & \textrm{~otherwise}
    \end{cases}
\end{align}   
where the Gaussian noise $\ns \sim \mathcal{N}(0, \sigmans^2 I_{d \times d})$ is independent from $\hro$. Note that the setting in the main text corresponds to $\sigmans = 0$.

We allow general $\Ddist_{\hro}$ with the following assumptions: 
\begin{itemize} 
    \item[\textbf{(A1)}]  The patterns in $\A$ are correlated with the labels: for any $i\in A$, for $v \in \{\pm 1\}$ let $\pp_v = \E[y \hro_i| y = v] $, then $\pp : = (\pp_{+1} + \pp_{-1})/2 > 0$.  
    \item[\textbf{(A2)}] The patterns outside $\A$ are independent of the patterns in $\A$. 
\end{itemize} 
Note that we allow imbalanced classes. Let $\pmin := \min(\Pr[y=-1], \Pr[y=+1])$. If the classes are balanced, then the assumption \textbf{(A1)} implies the assumption \textbf{(A1)} in the main text, so the setting here is more general. \textbf{(A2)} is also more general, in particular, allowing dependence between irrelevant patterns and non-identical distributions for them. 

Let $\Ddist(\A, \SP, \Ddist_{\hro})$ denote the distribution on $(\xo, y)$ corresponding to some $\A, \SP,$ and $\Ddist_{\hro}$.
Given parameters $\Xi = (d, \mDict, \kA, \pp, \pn, \inco, \sigmans)$, 
the family $\Dfamily_\Xi$ of distributions for learning is the set of all $\Ddist(\A, \SP, \Ddist_{\hro})$ with $\A \subseteq [\mDict]$, $\SP \subseteq [\kA]$, and $\Ddist_{\hro}$ satisfying the above assumptions.  

One special case is the mixture of two Gaussians.

\textit{Example.} Suppose $M$ has one single column $v$, and $y=+1$ if $\hro=1$ and $y=-1$ otherwise. Then the data distribution is simply a mixture of two Gaussians: $\xo \sim  \frac{v}{2} + \mathcal{N}(y \frac{v}{2}, \sigmans^2 I_{d\times d})$. 

\subsubsection{Neural Network Learning} 
Again, we will normalize the data for learning: we first compute $\x = (\xo - \E[\xo])/\sigmax$ where $\sigmax^2 := \sum_{i=1}^d (\xo_i - \E[\xo_i])^2 = \sum_{j \in [\mDict]} \textrm{Var}(\hro_j) + d \sigmans^2$ is the variance of the data, and then train on $(\x, y)$.
This is equivalent to setting $\hr = (\hro - \E[\hro])/\sigmax$ and generating $\x = \Dict \hr  + \ns/\sigmans$. For $(\xo, y)$ from $\Ddist$ and the normalized $(\x,y)$, we will simply say $(\x, y) \sim \Ddist$.

The learning will be the same as that in the main text, except the following. We will use a small $\sigmaw^2 = \sigmax^2/\text{poly}(\mDict \nw)$. And we will use a weighted loss to handle the imbalanced classes in the first two steps for feature learning, and then use the unweighted loss in the remaining steps.  
Formally, the weighted loss is:
\begin{align}
    \riskw_\Ddist(\g; \sigmansi) = \mathbb{E}_{(\x, y)} [   \wy_y \loss(y, \g(\x; \nsi))],  
\end{align} 
where the class weights $\wy_v = \frac{1}{2\Pr[y=v]}$ for $v \in \{\pm 1\}$.
 
\subsection{Main Result}
In this setting, we have the following theorem: 
\begin{theorem} \label{thm:main_gen}
Set 
\begin{align}
    & \eta^{(1)} = \frac{\pp^2 \pmin \sigmax }{ \kA\nw^3}, \lambdaa^{(1)} =  0, \lambdaw^{(1)} = 1/(2\eta^{(1)}), \sigmansi^{(1)} = 1/\kA^{3/2},
    \\
    & \eta^{(2)} = 1,  \lambdaa^{(2)} = \lambdaw^{(2)} = 1/(2\eta^{(2)}), \sigmansi^{(2)} = 1/\kA^{3/2},
    \\
    & \eta^{(t)} = \eta = \frac{\kA^2}{T \nw^{1/3}},  \lambdaa^{(t)}  = \lambdaw^{(t)} = \lambda  \le \frac{\kA^3}{\sigmax \nw^{1/3}}, \sigmansi^{(t)} = 0, \mathrm{~for~} 2 < t \le T.
\end{align}      
For any $\delta \in (0, O(1/\kA^3))$, if $ \inco \le O(\sqrt{d}/\mDict)$, $ \sigmans \le  O(\min\{1/\sigmax, \sigmax/\sqrt{d}\})$,    $\kA = \Omega\left(\log^2 \left(\frac{\mDict\nw d}{\delta\pp\pmin}\right)\right)$, $\nw \ge \max\{\Omega(\kA^4), \mDict, d \}$, 
then we have for any $\Ddist \in \Dfamily_\Xi$, with probability at least $1- \delta$, there exists $t \in [T]$ such that
\begin{align}
    \Pr[\sgn(\g^{(t)}(\x)) \neq y] \le \riskb_\Ddist(\g^{(t)}) = O\left(\frac{\kA^8}{\nw^{2/3}} + \frac{\kA^3 T}{\nw^2} + \frac{\kA^2 \nw^{2/3}}{T}   \right).
\end{align}
Consequently, for any $\epsilon \in (0,1)$, if $T = \nw^{4/3}$, and $\nw \ge \max\{\Omega(\kA^{12}/\epsilon^{3/2}), \mDict \}$, then 
\begin{align}
    \Pr[\sgn(\g^{(t)}(\x)) \neq y] \le \riskb_\Ddist(\g^{(t)}) \le \epsilon.
\end{align}
\end{theorem}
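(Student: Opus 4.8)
## Proof proposal for Theorem~\ref{thm:main_gen}

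The plan is to mirror the three-stage analysis carried out for Theorem~\ref{thm:main} in the orthonormal, noiseless, balanced setting, and to show that the extra terms introduced by the incoherence parameter $\inco$, the Gaussian data noise $\ns$, and the class imbalance $\pmin$ are all controllable under the stated parameter ranges. Concretely, I would re-derive the existence-of-good-network lemma (analog of Lemma~\ref{lem:approx}), then the two feature-learning steps (analogs of Lemma~\ref{lem:first_step} and Lemma~\ref{lem:feature}), and finally invoke the online-convex-optimization argument (Theorem~\ref{thm:online_gradient_descent} via Lemma~\ref{lem:small_change} and Lemma~\ref{lem:loss_change}) verbatim, since that last stage never used orthogonality or noiselessness.

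First I would establish the good-network existence: the construction in Lemma~\ref{lem:approx} uses $\w = \sigmax\sum_{j\in\A}\Dict_j$, so $\langle \w, \x\rangle = \sum_{j\in\A}\hro_j - \mu + \sigmax\sum_{j\in\A}\langle\Dict_j,\ns/\sigmans\rangle + \sigmax\sum_{j\in\A}\sum_{\ell\neq j}\langle\Dict_j,\Dict_\ell\rangle\hr_\ell$. The first term is the intended piecewise-linear target; the incoherence term is bounded by $O(\sigmax\kA\mDict\inco/\sqrt{d})\cdot\max_\ell|\hr_\ell|$ and the noise term is a Gaussian of standard deviation $O(\sigmax\sqrt{\kA}/\sqrt d)$, both of which are $o(1)$ when $\inco\le O(\sqrt d/\mDict)$ and $\sigmans\le O(\sigmax/\sqrt d)$ (using $\nw\ge\max\{\mDict,d\}$ to absorb the $\log$-factors). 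So with high probability the argument of $\act$ stays within $O(1)$ of its intended value, and a routine robustness-of-the-bump-construction argument gives a network with $\risk_\Ddist(\g^*)$ small (or exactly $0$ up to an exponentially-small-probability event, which suffices since the final bound is polynomially large).

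Next I would redo the two gradient steps. The key technical machinery — Lemma~\ref{lem:small-effect-berry-b} (Berry–Esseen) — still applies: now the ``noise'' under $\act'$ is $\sum_{\ell\not\in\A}\hr_\ell\wsp_\ell$ plus the Gaussian data noise $\langle\w,\ns/\sigmans\rangle$ plus incoherence cross-terms plus the injected $\nsi$, so the relevant variance only grows, which makes the anticoncentration estimates easier, while the ``signal'' $\E[y\hr_j]$ for $j\in\A$ is still $\pp_{+1}$ or $\pp_{-1}$ per class, and the weighted loss $\riskw$ with weights $\wy_v = 1/(2\Pr[y=v])$ restores the clean identity $\E[\wy_y y\hr_j] = \pp/\sigmax$ exactly as in Lemma~\ref{lem:label}; the factor $\pmin$ enters only through $\|\wy\|_\infty = O(1/\pmin)$, which is why $\eta^{(1)}$ carries an extra $\pmin$. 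For $j\not\in\A$, (A2) gives independence of $\hr_j$ from $\hr_\A$ and hence from $y$, so the leading term vanishes and only the Berry–Esseen error $T_j$ remains, now also picking up an $O(\inco\sqrt\mDict/\sqrt d + \sigmans\sqrt d/\sigmax)$ contribution that is $o(1)$ by hypothesis. The feature-improvement step (Lemma~\ref{lem:gradient-feature-2-full}) goes through identically once one re-verifies, via the updated Lemma~\ref{lem:small_output}, that $|\g^{(1)}(\x;\nsi^{(2)})|<1$ for all but an exponentially small fraction of $\x$ — here the noise term in $\langle\w_i^{(1)},\x\rangle$ contributes only $O(\eta^{(1)}\sigmax\sqrt d\,\sigmans/(\sigmax\pp))$-type corrections, controlled by the bound on $\sigmans$ and the smallness of $\eta^{(1)}$.

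The main obstacle, I expect, is bookkeeping the combined error budget: there are now four sources of ``noise'' (background patterns $\hr_{-\A}$, dictionary incoherence, data Gaussian $\ns$, injected $\nsi$), and the crucial quantitative claim of the second step — that the signal-to-noise ratio $|T_j(j\in\A)|/|T_j(j\not\in\A)|$ strictly improves — requires that none of the non-background sources degrade the signal by more than a constant factor while the background error $T_j$ genuinely shrinks from step one to step two (because the $\wsp_\ell$ got small). I would need to choose $\inco = O(\sqrt d/\mDict)$ and $\sigmans = O(\min\{1/\sigmax,\sigmax/\sqrt d\})$ tightly so that the incoherence/noise contributions to $\epsdev$ and $\epsdevs$ are dominated by the $\Theta(1/\sqrt{\sigmahr^2\sigmax^2(\mDict-\kA)})$ term already present, after which the proof of Lemma~\ref{lem:feature-full} — constructing $\Tilde\aw$ supported on the $\Theta(\nw/\kA^2)$ neurons that landed in good positions and verifying $|\Tilde\g(\x) - 2\g^*(\x)| < 1$ — is a near-verbatim repetition, and the classifier-learning stage (Lemmas~\ref{lem:small_change}--\ref{lem:loss_change} and Theorem~\ref{thm:online_gradient_descent}) is unchanged since it only ever used $\|\w_i^{(2)}\|_2, |\bw_i^{(2)}|, \|\aw^{(2)}\|_\infty$ and the feature quality, all of which we have re-established. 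Summing the per-stage errors then gives exactly the claimed $O(\kA^8/\nw^{2/3} + \kA^3 T/\nw^2 + \kA^2\nw^{2/3}/T)$ bound, and setting $T=\nw^{4/3}$ with $\nw\ge\Omega(\kA^{12}/\epsilon^{3/2})$ yields the $\epsilon$ corollary.
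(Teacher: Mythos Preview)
Your three-stage architecture matches the paper exactly, and you correctly identified the role of the weighted loss ($\wy_v = 1/(2\Pr[y=v])$ restores $\E[\wy_y y\hr_j]=\pp/\sigmax$, with $\pmin$ entering via $\|\wy\|_\infty$) and that the classifier-learning stage (Lemmas~\ref{lem:small_change}--\ref{lem:loss_change} and Theorem~\ref{thm:online_gradient_descent}) carries over verbatim. However, there is a genuine technical gap in your treatment of the first gradient step.

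You propose to reuse the Berry--Esseen machinery (Lemma~\ref{lem:small-effect-berry-b}) on $\sum_{\ell\notin\A}\hr_\ell\wsp_\ell$, arguing that the additional noise sources only help anticoncentration. But in the general setting assumption \textbf{(A2)} says only that the background patterns are independent of $\hr_\A$; it does \emph{not} say they are independent of one another or identically distributed. Hence $\sum_{\ell\notin\A}\hr_\ell\wsp_\ell$ is not a sum of independent summands and Berry--Esseen does not apply; your baseline error $\Theta(1/\sqrt{\sigmahr^2\sigmax^2(\mDict-\kA)})$ is not available, and the condition $\pn=\Omega(\kA^2/\mDict)$ your route would require does not even appear among the hypotheses of Theorem~\ref{thm:main_gen}. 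The paper sidesteps this entirely by taking $\sigmaw^2=\sigmax^2/\mathrm{poly}(\mDict\nw)$ (much smaller than the $\sigmaw=1/\kA$ of the simplified setting), so that every $\langle\hr,\wsp\rangle$-type term is tiny compared to the injected Gaussian $\nsi^{(1)}$ with $\sigmansi^{(1)}=1/\kA^{3/2}$; all anticoncentration and smoothing then come from a Lipschitz bound on the Gaussian CDF of $\nsi^{(1)}$, yielding a new $\epsdev = O\bigl((\mDict\sigmaw/\sigmax^2 + \sqrt{d}\sigmans\sigmaw/\sigmax)/\sigmansi^{(1)}\bigr)$ controlled by the small $\sigmaw$ rather than by any CLT on the data. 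A second, smaller omission: because $\x=\Dict\hr+\ns/\sigmax$, the gradient $\partial\riskw/\partial\w_i$ picks up an additive vector $\gradns=\E[\wy_y y(\ns/\sigmax)\,\act'(\cdot)]$ that is not captured by the $T_j$ decomposition; the paper bounds $|\gradns_j|$ coordinatewise by the same Gaussian-Lipschitz argument and carries this extra piece $\wns$ through the update for $\w^{(1)}$ and into Lemma~\ref{lem:feature_gen}. Once these two changes are made, the rest of your outline (and the final error calculus) is exactly what the paper does.
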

The rest of the section is devoted to the proof of this theorem. 

\subsection{Notations}
Recall some notations that we will use throughout the analysis.

For a vector $v$ and an index set $I$, let $v_I$ denote the vector containing the entries of $v$ indexed by $I$, and $v_{-I}$ denote the vector containing the entries of $v$ with indices outside $I$.
 
Let $\cDict :=  \Dict^\top \Dict$. Then we have $ \cDict_{jj} = 1$ for any $j$, and  $|\cDict_{j\ell}| \le \inco/\sqrt{d}$ for any $j \neq \ell$.

By initialization, $\w_i^{(0)}$ for $i \in [\nw]$ are i.i.d.\ copies of the same random variable $\w^{(0)} \sim \mathcal{N}(0, \sigmaw^2 I_{d \times d})$; similar for $\aw^{(0)}$ and $\bw^{(0)}$.
Let ${\sigmahr^2}_j := {\pn}_j (1- {\pn}_j)/\sigmax^2$ denote the variance of $\hr_\ell$ for $\ell \not \in \A$, where ${\pn}_j = \Pr[\hro_j = 1]$. Let $\pn$ be the value such that with probability $1 - \exp(-\Omega(\kA))$, $\sum_{j\not\in \A} \hro_j \le \pn (\mDict - \kA)$ for some $\pn \in [0,1]$. That is, $\pn$ is an upper bound on the density of $\hro_j$ with high probability. 

Let $\wsp_\ell := \langle \w^{(0)}, \Dict_\ell \rangle$. Similarly, define $\wsp^{(t)}_{i,\ell} := \langle \w^{(t)}_i, \Dict_\ell \rangle$. 

We also define the following sets to denote typical initialization. For a fixed $\delta \in (0,1)$, define
\begin{align}
\mathcal{G}_\w(\delta)  := \Bigg\{\w  \in \mathbb{R}^d: \wsp_\ell = \langle \w, \Dict_\ell \rangle,  & \frac{\sigmaw^2 d}{2} \le \|\w^{(0)}\|_2^2 \le \frac{3 \sigmaw^2 d}{2},
\\
& \frac{\sigmaw^2 (\mDict - \kA)}{2} \le \sum_{\ell \not \in \A} \wsp^2_\ell \le \frac{3\sigmaw^2 (\mDict - \kA)}{2}, \nonumber
\\
&  \max_\ell |\wsp_\ell| \le \sigmaw \sqrt{2\log(\mDict\nw/\delta)} \Bigg\}, 
\end{align}
\begin{align}
\mathcal{G}_\aw(\delta) := \{\aw \in \mathbb{R}: |\aw| \le \sigma_\aw \sqrt{2\log(\nw/\delta)} \}. 
\end{align}
\begin{align}
\mathcal{G}_\bw(\delta) := \{\bw \in \mathbb{R}: |\bw| \le \sigma_\bw \sqrt{2\log(\nw/\delta)} \}. 
\end{align}
 
\subsection{Existence of A Good Network}

We first show that there exists a network that can fit the data distribution.


\begin{lemma}\label{lem:approx_gen}
Suppose $\frac{\kA \inco}{\sqrt{d}} \frac{\pn \mDict }{ \sigmax} \le \frac{1}{16}$. 
For any $\Ddist \in \Dfamily_{\Xi}$, there exists a network $\g^*(\x) = \sum_{i=1}^{n} \aw_i^* \act(\langle \w^*_i, \x\rangle + \bw_i^*) $ which satisfies 
\[
\Pr_{(\x,y) \sim \Ddist}[y\g^*(x) \le 1]  \le \exp(-\Omega(  \kA)) + \exp\left(- \Omega\left(\frac{1}{\sigmans^2 (\kA + \kA^2 \inco/\sqrt{d}) } \right) \right).
\]
Furthermore, the number of neurons $n = 3(\kA+1)$, $|\aw_i^*| \le 64 \kA, 1/(64\kA) \le |\bw_i^*| \le 1/4$, $\w^*_i = \sigmax \sum_{j \in \A} \Dict_j/(8\kA)$, and $|\langle \w^*_i, \x\rangle + \bw_i^*| \le 1$ for any $i \in [n]$ and $(\x, y) \sim \Ddist$.

Consequently, if furthermore we have $ \kA\inco/\sqrt{d} < 1$ and $\sigmans < 1/k$, then 
\[
\Pr_{(\x,y) \sim \Ddist}[y\g^*(x) \le 1]  \le \exp(-\Omega(\kA)).
\]
\end{lemma}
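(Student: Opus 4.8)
\textbf{Proof plan for Lemma~\ref{lem:approx_gen}.}
The plan is to mimic the construction in the proof of Lemma~\ref{lem:approx} (the $\inco=0$, $\sigmans=0$ case), and then control the two new sources of error: the off-diagonal coherence terms of $\Dict$ and the Gaussian noise $\ns$. First I would set $\w = \sigmax \sum_{j\in\A}\Dict_j$ and $\mu = \sum_{j\in\A}\E[\hro_j]$ exactly as before, and write out $\langle \w, \x\rangle$. Now, because $\x = \Dict\hr + \ns/\sigmax$ and $\Dict$ is only $\inco$-incoherent, we get $\langle \w,\x\rangle = \sum_{j\in\A}\hro_j - \mu + E$ where the error term is $E = \sigmax\sum_{j\in\A}\sum_{\ell\neq j}\cDict_{j\ell}\hr_\ell + \langle \w, \ns\rangle/\sigmax$. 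The first piece of $E$ is bounded in magnitude using $|\cDict_{j\ell}|\le\inco/\sqrt d$ together with a high-probability bound on $\sum_{\ell\not\in\A}\hro_\ell \le \pn\mDict$ (the remaining $\ell\in\A$, $\ell\neq j$ contribute at most $\kA$), giving $|E_1| = O\!\big(\frac{\kA\inco}{\sqrt d}\cdot\frac{\pn\mDict}{\sigmax}\big)$, which the hypothesis forces to be $\le 1/16$. The second piece $E_2 = \langle\w,\ns\rangle/\sigmax$ is a zero-mean Gaussian with variance $\sigmans^2\|\w\|_2^2/\sigmax^2 = \sigmans^2\|\sum_{j\in\A}\Dict_j\|_2^2 = \sigmans^2(\kA + O(\kA^2\inco/\sqrt d))$ (expanding the norm square and bounding the cross terms by incoherence), so by a Gaussian tail bound $|E_2|\le 1/16$ except with probability $\exp\!\big(-\Omega(1/(\sigmans^2(\kA+\kA^2\inco/\sqrt d)))\big)$.

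Next, conditioning on the event that $|E| = |E_1+E_2| \le 1/8$, I would reuse the ``peak'' gadget $\peak_{s,a,b}$ from Lemma~\ref{lem:peak}: build $\g_1^*$ and $\g_2^*$ as sums of peaks centered at the integers $p\in[0,\kA]$, assigning $+$ to $p\in\SP$ and $-$ to $p\not\in\SP$, but now using peaks of half-width $b=1/4$ (rather than $1/2$) so that the $1/8$ perturbation $E$ is absorbed inside the flat-top region of the correct peak and does not bleed into neighboring peaks. Concretely, if $\sum_{j\in\A}\hro_j$ is an integer $p$ and $|\langle\w,\x\rangle+\mu-p|\le 1/8$, then the relevant peak evaluates to a value of magnitude $\ge$ some constant (with the chosen $a$), all other peaks evaluate to $0$, and the sign matches $y$, so $y\g^*(\x)\ge 1$. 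Just as in Lemma~\ref{lem:approx}, the biases of $\g_1^*$ and $\g_2^*$ differ by at least $1/4$, so at least one of them has all biases bounded away from $0$; pick that one, apply positive homogeneity of $\act$ with a $1/(8\kA)$ rescaling to get the bounds $\w^*_i = \sigmax\sum_{j\in\A}\Dict_j/(8\kA)$, $|\aw^*_i|\le 64\kA$, $1/(64\kA)\le|\bw^*_i|\le 1/4$, and $|\langle\w^*_i,\x\rangle+\bw^*_i|\le 1$. Collecting the failure probabilities — the $\exp(-\Omega(\kA))$ from the tail bound on $\sum_{\ell\not\in\A}\hro_\ell$ and the Gaussian tail $\exp(-\Omega(1/(\sigmans^2(\kA+\kA^2\inco/\sqrt d))))$ — gives the stated bound on $\Pr[y\g^*(\x)\le 1]$.

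Finally, the consequence follows by plugging in $\kA\inco/\sqrt d < 1$, which makes $\kA + \kA^2\inco/\sqrt d = O(\kA)$, and $\sigmans < 1/\kA$, which makes $1/(\sigmans^2\cdot O(\kA)) = \Omega(\kA)$, so the Gaussian term is also $\exp(-\Omega(\kA))$ and both error terms collapse into $\exp(-\Omega(\kA))$.

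The main obstacle I expect is not conceptual but bookkeeping: getting the constants consistent so that the incoherence error $E_1$ plus the Gaussian error $E_2$ together stay within the flat region of the peaks — this is why the construction uses half-width $1/4$ peaks and a factor $8\kA$ rather than $4\kA$, and why the hypothesis is stated as $\frac{\kA\inco}{\sqrt d}\cdot\frac{\pn\mDict}{\sigmax}\le \frac{1}{16}$ rather than $\le\frac18$ — one needs headroom to split the budget between $E_1$ and $E_2$. A secondary subtlety is the exact variance computation $\|\sum_{j\in\A}\Dict_j\|_2^2 = \kA + \sum_{j\neq j'\in\A}\langle\Dict_j,\Dict_{j'}\rangle = \kA + O(\kA^2\inco/\sqrt d)$, which must be done carefully since it controls the shape of the Gaussian tail exponent; but this is a one-line incoherence estimate.
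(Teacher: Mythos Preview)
Your plan is correct and matches the paper's proof essentially line for line: decompose $\langle\w,\x\rangle = \sum_{j\in\A}\hro_j - u + \epsilon_\x$, bound the incoherence part by the hypothesis and the Gaussian part by a tail bound on $\mathcal N(0,\sigmans^2\|\w\|_2^2/\sigmax^2)$ with $\|\w\|_2^2 \le \sigmax^2(\kA + \kA^2\inco/\sqrt d)$, then rerun the peak construction with room for the $1/8$ perturbation and rescale by $1/(8\kA)$. The only cosmetic difference is that the paper keeps the peak half-width at $b=1/2$ and doubles the slope $a$ (to $4$ and $8$) rather than shrinking $b$ to $1/4$ as you suggest; note that the peaks are triangular (not flat-topped), so what actually absorbs the perturbation is the increased height $ab$, and your $b=1/4$ choice would need a different shift for $\g_2^*$ to stay consistent with the stated bias bounds.
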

\begin{proof}[Proof of Lemma~\ref{lem:approx_gen}]
Let $\w = \sigmax\sum_{j \in \A} \Dict_j$ and let $u = \sum_{j \in \A} \E[\hro_j]$. We have
\begin{align}
    \langle \w, \x \rangle & = \sigmax \sum_{j \in \A} \langle \Dict_j, \Dict \hr \rangle + \langle \w, \ns/\sigmax \rangle 
    \\
    & = \sum_{j \in \A} \hr_j + \sum_{j \in \A, \ell \neq j} \cDict_{j\ell} \hr_\ell + \langle \w, \ns/\sigmax \rangle 
    \\
    & = \sum_{j \in \A} \hro_j - u + \underbrace{\sum_{j \in \A, \ell \neq j} \cDict_{j\ell} \hr_\ell + \langle \w, \ns/\sigmax \rangle}_{:= \epsilon_\x}.
\end{align}  

With probability $ \ge 1- \exp(-\Omega(  \kA))$, among all $j \not\in \A$, we have that at most $\pn(\mDict-\kA) $ of $\hr_j$ are $(1-\pn)/\sigmax$, while the others are $-\pn/\sigmax$, and thus
\begin{align}
    \left| \sum_{j \in \A, \ell \neq j} \cDict_{j\ell} \hr_\ell \right| 
    \le \frac{\kA \inco}{\sqrt{d}} \frac{\pn \mDict }{ \sigmax} \le \frac{1}{16}. 
\end{align}
Furthermore, $\langle \w, \ns \rangle \sim \mathcal{N}(0, \sigmans^2 \|w\|_2^2)$ and $\|w\|_2^2 \le \sigmax^2 (\kA + \kA^2 \inco/\sqrt{d})$, we have 
\begin{align}
    \Pr[|\langle \w, \ns/\sigmax \rangle| \le 1/16] 
    & \ge 1 - \exp\left(- \Theta\left(\frac{1}{\sigmans^2 \|w\|_2^2/ \sigmax^2}  \right) \right) 
    \\
    & \ge 1 - \exp\left(- \Theta\left(\frac{1}{\sigmans^2 (\kA + \kA^2 \inco/\sqrt{d}) } \right) \right).
\end{align}

For good data points with $\hr$ and $\ns$ satisfying the above, we have $|\epsilon_\x| \le 1/8$. By Lemma~\ref{lem:peak}, 
\begin{align}
    \g_1^*(x) & := \sum_{p \in \SP} \peak_{p-\mu, 4, 1/2}(\langle \w, \x \rangle) - \sum_{p \not\in \SP, 0 \le p \le \kA} \peak_{p - \mu, 4, 1/2}(\langle \w, \x \rangle) 
    \\
    & = \sum_{p \in \SP} \peak_{p, 4, 1/2}(\langle \w, \x \rangle + u) - \sum_{p \not\in \SP, 0 \le p \le \kA} \peak_{p, 4, 1/2}(\langle \w, \x \rangle + u) 
    \\
    & = \sum_{p \in \SP} \peak_{p, 4, 1/2}\left(\sum_{j \in \A} \hro_j + \epsilon_\x \right) - \sum_{p \not\in \SP, 0 \le p \le \kA} \peak_{p, 4, 1/2}\left(\sum_{j \in \A} \hro_j + \epsilon_\x \right). 
\end{align}
Then for good data points, we have $y\g_1^*(x) \ge 1$. 
Similarly,
\begin{align}
    \g_2^*(x) & := \sum_{p \in \SP} \peak_{p-\mu+1/4, 8, 1/2}(\langle \w, \x \rangle) - \sum_{p \not\in \SP, 0 \le p \le \kA} \peak_{p - \mu + 1/4, 8, 1/2}(\langle \w, \x \rangle) 
    \\
    & = \sum_{p \in \SP} \peak_{p + 1/4, 8, 1/2}(\langle \w, \x \rangle + u) - \sum_{p \not\in \SP, 0 \le p \le \kA} \peak_{p + 1/4, 8, 1/2}(\langle \w, \x \rangle + u) 
    \\
    & = \sum_{p \in \SP} \peak_{p + 1/4, 8, 1/2}\left(\sum_{j \in \A} \hro_j + \epsilon_\x \right) - \sum_{p \not\in \SP, 0 \le p \le \kA} \peak_{p + 1/4, 8, 1/2}\left(\sum_{j \in \A} \hro_j + \epsilon_\x \right).
\end{align}
Then for good data points, we have $y\g_2^*(x) \ge 1$. 

Note that the bias terms in $\g_1^*$ and $\g_2^*$ have distance at least $1/4$, then at least one of them satisfies that all its bias terms have absolute value $\ge 1/8$. 
Pick that one and denote it as $\g(\x) = \sum_{i=1}^{n} \aw_i \relu(\langle \w_i, \x\rangle + \bw_i)$. 
By the positive homogeneity of $\relu$, we have
\begin{align}
    \g(\x) = \sum_{i=1}^{n} 8\kA\aw_i \relu(\langle \w_i, \x\rangle/(8\kA) + \bw_i/(8\kA)).
\end{align}
Since for any good data points, $|\langle \w_i, \x\rangle/(8\kA) + \bw_i/(8\kA)| \le 1$, then 
\begin{align}
    \g(\x) = \sum_{i=1}^{n} 8\kA\aw_i \act(\langle \w_i, \x\rangle/(8\kA) + \bw_i/(8\kA))
\end{align}
where $\act$ is the truncated ReLU. Now we can set $\aw_i^* = 8\kA\aw_i, \w_i^* = \w_i/(8\kA), \bw_i^* = \bw_i/(8\kA)$, to get our final $\g^*$. 
\end{proof}

\subsection{Initialization}

We first show that with high probability, the initial weights are in typical positions.

\begin{lemma} \label{lem:probe_gen}  
Suppose $\mDict \inco/\sqrt{d} \le 1/16$.
For any $\delta \in (0,1)$, with probability at least $1 - \delta - 2 \exp\left( - \Theta( \mDict-\kA)\right)$ over $\w^{(0)}$, 
\[
\sigmaw^2 d/2 \le \|\w^{(0)}\|_2^2 \le 3\sigmaw^2 d/2,
\]
\[
\sigmaw^2 (\mDict - \kA)/2 \le \sum_{\ell \not \in \A} \wsp^2_\ell \le 3\sigmaw^2 (\mDict - \kA)/2,
\]
\[
\max_\ell |\wsp_\ell| \le \sigmaw \sqrt{2\log(\mDict/\delta)}.
\]
With probability at least $1 - \delta$ over $\bw^{(0)}$,  
\[
|\bw^{(0)}| \le \sigma_\bw \sqrt{2\log(1/\delta)}.
\]
With probability at least $1 - \delta$ over $\aw^{(0)}$,  
\[
|\aw^{(0)}| \le \sigma_\aw \sqrt{2\log(1/\delta)}.
\]
\end{lemma}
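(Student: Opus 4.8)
The plan is to follow the proof of Lemma~\ref{lem:probe} in the orthonormal case, adding the two-sided control of $\|\w^{(0)}\|_2^2$ and accounting for the fact that the coordinates $\wsp_\ell = \langle \w^{(0)},\Dict_\ell\rangle$ are now correlated (the dictionary is $\inco$-incoherent rather than orthonormal). Since $\w^{(0)} \sim \mathcal{N}(0,\sigmaw^2 I_d)$, the rescaled squared norm $\|\w^{(0)}\|_2^2/\sigmaw^2$ has a $\chi^2_d$ distribution, so standard chi-squared tail bounds give $\|\w^{(0)}\|_2^2 \in [\sigmaw^2 d/2,\,3\sigmaw^2 d/2]$ except with probability $2\exp(-\Theta(d))$. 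For the coordinatewise bound, each $\wsp_\ell$ is a one-dimensional Gaussian with variance $\sigmaw^2\|\Dict_\ell\|_2^2 = \sigmaw^2$ (the columns are unit vectors), so a union bound over the $\mDict$ coordinates together with the Gaussian tail yields $\max_\ell |\wsp_\ell| \le \sigmaw\sqrt{2\log(\mDict/\delta)}$ except with probability $\delta$ (up to splitting $\delta$ among the events). The bounds $|\bw^{(0)}| \le \sigma_\bw\sqrt{2\log(1/\delta)}$ for $\bw^{(0)}\sim\mathcal{N}(0,\sigma_\bw^2)$ and $|\aw^{(0)}| \le \sigma_\aw\sqrt{2\log(1/\delta)}$ for $\aw^{(0)}\sim\mathcal{N}(0,\sigma_\aw^2)$ are immediate from the one-dimensional Gaussian tail and do not involve the dictionary at all.

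The only place where the non-orthogonality is felt is the two-sided control of $\sum_{\ell\not\in\A}\wsp^2_\ell$. Let $\Dict_{-\A}$ be the $d\times(\mDict-\kA)$ submatrix of $\Dict$ with columns indexed outside $\A$, and let $\cDict_{-\A,-\A} := \Dict_{-\A}^\top\Dict_{-\A}$, which has unit diagonal and off-diagonal entries bounded by $\inco/\sqrt{d}$ in absolute value. By the Gershgorin circle theorem every eigenvalue of $\cDict_{-\A,-\A}$ lies in $[\,1-\mDict\inco/\sqrt{d},\;1+\mDict\inco/\sqrt{d}\,] \subseteq [15/16,\,17/16]$, using the hypothesis $\mDict\inco/\sqrt{d}\le 1/16$; in particular $\cDict_{-\A,-\A}$ is full rank, which forces $\mDict-\kA\le d$. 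Now $\wsp_{-\A} \sim \mathcal{N}(0,\sigmaw^2\cDict_{-\A,-\A})$, so $\sum_{\ell\not\in\A}\wsp_\ell^2 = \|\wsp_{-\A}\|_2^2 = \sigmaw^2\,g^\top\cDict_{-\A,-\A}\,g$ for a standard Gaussian $g\in\mathbb{R}^{\mDict-\kA}$, with mean $\sigmaw^2\,\mathrm{tr}(\cDict_{-\A,-\A}) = \sigmaw^2(\mDict-\kA)$. Applying the Hanson--Wright inequality with $\|\cDict_{-\A,-\A}\|_{\mathrm{op}} \le 17/16$ and $\|\cDict_{-\A,-\A}\|_F^2 \le \|\cDict_{-\A,-\A}\|_{\mathrm{op}}\,\mathrm{tr}(\cDict_{-\A,-\A}) \le \tfrac{17}{16}(\mDict-\kA)$, and taking deviation $t = (\mDict-\kA)/2$, both $t^2/\|\cDict_{-\A,-\A}\|_F^2$ and $t/\|\cDict_{-\A,-\A}\|_{\mathrm{op}}$ are $\Omega(\mDict-\kA)$, so $\sum_{\ell\not\in\A}\wsp_\ell^2 \in [\sigmaw^2(\mDict-\kA)/2,\;3\sigmaw^2(\mDict-\kA)/2]$ except with probability $2\exp(-\Theta(\mDict-\kA))$.

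Finally I would union-bound the three $\w^{(0)}$ events. Since the incoherence hypothesis already forces $\mDict-\kA\le d$, the chi-squared failure probability $2\exp(-\Theta(d))$ is dominated by $2\exp(-\Theta(\mDict-\kA))$, so the total failure probability for the weight vector is $\delta + 2\exp(-\Theta(\mDict-\kA))$ after adjusting the constant hidden in $\Theta$, and it is $\delta$ each for $\bw^{(0)}$ and $\aw^{(0)}$, as claimed. There is no genuine obstacle here: everything is routine Gaussian concentration, and the only step requiring care is the quadratic-form bound for $\sum_{\ell\not\in\A}\wsp_\ell^2$, where the summands are correlated rather than independent as in Lemma~\ref{lem:probe} — this is precisely what the Gershgorin estimate on $\cDict_{-\A,-\A}$ plus Hanson--Wright resolves, and the same Gershgorin estimate is what collapses the $d$-dependent and $(\mDict-\kA)$-dependent exponential tails into the single term appearing in the statement.
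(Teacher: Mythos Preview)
Your proposal is correct and follows essentially the same approach as the paper: Gaussian tail bounds for the scalar quantities, a chi-squared bound for $\|\w^{(0)}\|_2^2$, and a Gershgorin estimate on the Gram submatrix $\cDict_{-\A,-\A}$ to control the correlated quadratic form $\sum_{\ell\notin\A}\wsp_\ell^2$. The paper diagonalizes $\cDict_S$ and applies Bernstein's inequality to the resulting weighted chi-squared sum, while you invoke Hanson--Wright directly; these are equivalent here. Your explicit observation that the Gershgorin bound forces $\mDict-\kA\le d$, and hence the $\exp(-\Theta(d))$ tail for $\|\w^{(0)}\|_2^2$ is absorbed into $\exp(-\Theta(\mDict-\kA))$, is a point the paper leaves implicit.
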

\begin{proof}[Proof of Lemma~\ref{lem:probe_gen}]
The bound on $\|\w^{(0)}\|_2^2$ follows from the property of Gaussians. 

Note that $\wsp  = \Dict^\top \w^{(0)} \sim \mathcal{N}(0, \sigmaw^2 \cDict)$ for the matrix $\cDict = \Dict^\top \Dict$.
We have with probability $\ge 1 - \delta/2$, $\max_{\ell} |\wsp_\ell| \le \sqrt{2 \sigmaw^2 \log \frac{\mDict}{\delta}}$.

For any subset $S \subseteq [\mDict]$, let $\cDict_S$ denote the submatrix of $\cDict$ containing the rows and columns indexed by $S$. Then $\wsp_S  = \Dict^\top \w^{(0)} \sim \mathcal{N}(0, \sigmaw^2 \cDict_S)$. By diagonalizing $\cDict_S$ and then applying Bernstein's inequality, we have with probability $\ge 1 - 2\exp\left( - \Theta( |S|/\|\cDict\|_2\right)$, $\|\wsp_S\|^2_2 \in \left((\|\rho_S\|_F^2  - \frac{|S|}{4})\sigmaw^2, (\|\rho_S\|_F^2  + \frac{|S|}{4})\sigmaw^2\right)$.
By Gershgorin circle theorem, we have 
\[
  \|\cDict\|_2 \le 1  + (|S|-1)\inco/\sqrt{d} \le 17/16.
\]
Similarly, we have 
\[
  \frac{3}{4} |S| \le \left(\frac{15}{16}\right)^2 |S| \le \|\rho_S\|_F^2 \le \left(\frac{17}{16}\right)^2 |S|\le \frac{5}{4} |S|.
\]
The bounds on $\wsp$ then follow.

The bounds on $\bw^{(0)}$ and $\aw^{(0)}$ follow from the property of Gaussians.   
\end{proof}

\begin{lemma} \label{lem:probe_all_gen}  
Suppose $\mDict \inco/\sqrt{d} \le 1/16$.
We have:
\begin{itemize}
    \item With probability $\ge 1-\delta-2\nw\exp(-\Theta(\mDict - \kA))$ over $\w^{(0)}_i$'s,  for all $i \in [2\nw]$, $\w_i^{(0)} \in \mathcal{G}_\w(\delta)$.
    \item With probability $\ge 1-\delta$ over $\bw^{(0)}_i$'s,  for all $i \in [2\nw]$, $\bw_i^{(0)} \in \mathcal{G}_\bw(\delta)$.
    \item With probability $\ge 1-\delta$ over $\aw^{(0)}_i$'s,  for all $i \in [2\nw]$, $\aw_i^{(0)} \in \mathcal{G}_\aw(\delta)$.
\end{itemize}

\end{lemma}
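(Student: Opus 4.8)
The plan is to reduce the claim to Lemma~\ref{lem:probe_gen} by a union bound, exploiting that the initialization only involves $\nw$ independent draws since the second block of neurons is an exact copy of the first. First I would observe that by the initialization scheme, $\w_1^{(0)}, \ldots, \w_\nw^{(0)}$ are i.i.d.\ copies of $\w^{(0)} \sim \mathcal{N}(0, \sigmaw^2 I_{d\times d})$, while $\w_{\nw+i}^{(0)} = \w_i^{(0)}$ for every $i \in [\nw]$; hence the event $\{\w_i^{(0)} \in \mathcal{G}_\w(\delta) \text{ for all } i \in [2\nw]\}$ is identical to $\{\w_i^{(0)} \in \mathcal{G}_\w(\delta) \text{ for all } i \in [\nw]\}$. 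The same reduction applies to the biases $\bw^{(0)}$ and output weights $\aw^{(0)}$, where the sign flip $\aw_{\nw+i}^{(0)} = -\aw_i^{(0)}$ is harmless because $\mathcal{G}_\aw(\delta)$ is symmetric about $0$.

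Next, since the incoherence hypothesis $\mDict\inco/\sqrt{d} \le 1/16$ is assumed, Lemma~\ref{lem:probe_gen} applies to each fixed index $i \in [\nw]$: invoking it with confidence parameter rescaled to $\delta/\nw$, a single draw $\w_i^{(0)}$ fails to lie in $\mathcal{G}_\w(\delta)$ with probability at most $\delta/\nw + 2\exp(-\Theta(\mDict-\kA))$. Taking a union bound over the $\nw$ independent indices then yields the overall failure probability $\delta + 2\nw\exp(-\Theta(\mDict-\kA))$ for the weights, which is the stated bound. For the biases and output weights, the corresponding per-neuron estimates in Lemma~\ref{lem:probe_gen} carry no exponential tail term, so rescaling to $\delta/\nw$ and union-bounding over $i \in [\nw]$ gives failure probability $\delta$, matching the claim.

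There is essentially no substantive obstacle here: the statement is a mechanical union bound layered on Lemma~\ref{lem:probe_gen}. The only point worth a moment of attention is checking that the definition of $\mathcal{G}_\w(\delta)$ is calibrated correctly, namely that the $\sqrt{2\log(\mDict\nw/\delta)}$ factor appearing in the $\max_\ell |\wsp_\ell|$ constraint is precisely what Lemma~\ref{lem:probe_gen} produces when it is invoked with parameter $\delta/\nw$ (where it naturally gives $\sqrt{2\log(\mDict\nw/\delta)}$), and likewise that the $\|\w^{(0)}\|_2^2$ and $\sum_{\ell\notin\A}\wsp_\ell^2$ bounds coincide with the two-sided estimates in that lemma; both verifications are immediate from inspecting the definitions.
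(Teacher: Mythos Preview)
Your proposal is correct and matches the paper's approach exactly: the paper's proof is the single sentence ``This follows from Lemma~\ref{lem:probe_gen} by union bound,'' and you have spelled out precisely that union-bound argument, including the correct rescaling $\delta \to \delta/\nw$ and the observation that only $\nw$ independent draws need to be controlled due to the symmetric initialization.
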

\begin{proof}[Proof of Lemma~\ref{lem:probe_all_gen}]
This follows from Lemma~\ref{lem:probe_gen} by union bound.
\end{proof}

\subsection{Some Auxiliary Lemmas}

The expression of the gradients will be used frequently. 
\begin{lemma} \label{lem:graident_gen}
\begin{align}
\frac{\partial  }{\partial \w_i}  \riskw_{\Ddist}(\g; \sigmansi)
& = - \aw_i \E_{(\x,y) \sim \Ddist} \left\{ \wy_y y \Id[y \g(\x; \nsi) \le 1] \E_{\nsi_i} \Id[\langle \w_i, \x \rangle + \bw_i + \nsi_i \in (0,1) ] \x \right \},
\\
\frac{\partial  }{\partial \bw_i}  \riskw_{\Ddist}(\g; \sigmansi)
& = - \aw_i \E_{(\x,y) \sim \Ddist} \left\{ \wy_y  y \Id[y \g(\x; \nsi) \le 1] \E_{\nsi_i} \Id[\langle \w_i, \x \rangle + \bw_i \in (0,1) ] \right \},
\\
\frac{\partial}{\partial \aw_i} \riskw_\Ddist(\g; \sigmansi) & = - \E_{(\x,y) \sim \Ddist} \left\{ \wy_y  y \Id[y \g(\x; \nsi) \le 1] \E_{\nsi_i} \act(\langle \w_i, \x \rangle  + \bw_i + \nsi_i) \right\}.
\end{align}
\end{lemma}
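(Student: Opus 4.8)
The plan is to derive all three identities by a single application of the chain rule, reducing everything to the elementary (sub)derivatives of the two non-smooth building blocks of $\riskw_\Ddist$: the hinge loss $\loss(y,\hat y)=\max\{1-y\hat y,0\}$ and the truncated ReLU $\act(z)=\min(1,\max(z,0))$.

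First I would record that $\partial_{\hat y}\loss(y,\hat y)=-y\,\Id[y\hat y\le 1]$ and $\act'(z)=\Id[z\in(0,1)]$, adopting a fixed convention at the finitely many kink points; this is harmless because, after the Gaussian smoothing by $\nsi_i$, these kinks are hit with probability zero inside the relevant expectations. Since $\g(\x;\nsi)=\sum_{j=1}^{2\nw}\aw_j\,\E_{\nsi_j}[\act(\langle\w_j,\x\rangle+\bw_j+\nsi_j)]$ and $\act$ is $1$-Lipschitz, one may differentiate under the $\nsi_j$-integral, obtaining $\partial_{\w_i}\g(\x;\nsi)=\aw_i\,\E_{\nsi_i}\!\big[\Id[\langle\w_i,\x\rangle+\bw_i+\nsi_i\in(0,1)]\big]\,\x$, and similarly $\partial_{\bw_i}\g(\x;\nsi)=\aw_i\,\E_{\nsi_i}\!\big[\Id[\langle\w_i,\x\rangle+\bw_i+\nsi_i\in(0,1)]\big]$ and $\partial_{\aw_i}\g(\x;\nsi)=\E_{\nsi_i}\!\big[\act(\langle\w_i,\x\rangle+\bw_i+\nsi_i)\big]$.

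Next I would write $\riskw_\Ddist(\g;\sigmansi)=\E_{(\x,y)\sim\Ddist}[\wy_y\,\loss(y,\g(\x;\nsi))]$ and again interchange the parameter derivative with the $(\x,y)$-expectation — justified since $\wy_y$ is bounded, $\loss$ is Lipschitz in its second argument, and $\g$ is Lipschitz in the parameters on any bounded region, so a dominated-convergence bound is available. The chain rule then gives $\partial_{\w_i}\riskw_\Ddist=\E_{(\x,y)}\big[\wy_y\cdot(-y\,\Id[y\g(\x;\nsi)\le 1])\cdot\partial_{\w_i}\g(\x;\nsi)\big]$; substituting the formula for $\partial_{\w_i}\g$ from the previous step yields exactly the claimed expression, and the computations for $\bw_i$ and $\aw_i$ are identical.

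There is essentially no real obstacle: the statement is a routine differentiation, and the only points deserving a sentence are the subgradient convention at the kinks (immaterial under the noise injection) and the exchange of $\partial_\para$ with the expectations (a standard Leibniz/dominated-convergence argument using the Lipschitzness just mentioned).
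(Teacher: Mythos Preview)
Your proposal is correct and matches the paper's approach: the paper's own proof is the single sentence ``It follows from straightforward calculation,'' and what you have written is precisely that calculation spelled out via the chain rule, the subderivatives of $\loss$ and $\act$, and the exchange of differentiation with expectation. (As a minor aside, your derivation for $\partial_{\bw_i}\g$ correctly includes $+\nsi_i$ inside the indicator, whereas the lemma statement in the paper omits it---this appears to be a typo in the paper, not a discrepancy in your argument.)
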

\begin{proof}[Proof of Lemma~\ref{lem:graident_gen}]
It follows from straightforward calculation.
\end{proof}

We also have the following auxiliary lemma for later calculations.
 
\begin{lemma}\label{lem:label_gen}
\begin{align} 
    \E_{\hr_\A}  \left\{  \wy_y y \right\}  & = 0,
    \\
    \E_{\hr_\A}  \left\{ \left| \wy_y y \right|\right\}  & = 1,
    \\
    \E_{\hr_j} \left\{ \left|  \hr_j \right |\right\} & = 2 {\sigmahr^2}_j \sigmax, \textrm{~for~} j \not\in \A,
    \\
    \E_{\hr_\A}  \left\{ \wy_y y  \hr_j \right \} & = \frac{\pp}{\sigmax}, \textrm{~for~} j \in \A,
    \\
    \E_{\hr_\A}  \left\{ \left| \wy_y y \hr_j  \right| \right\} & \le \frac{1}{\sigmax}, \textrm{~for all~} j \in [\mDict].
\end{align} 
\end{lemma}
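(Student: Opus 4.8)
The plan is to verify each of the five identities by a direct computation, conditioning on the value of the label $y$ and unwinding the definitions of the class weights $\wy_v = \frac{1}{2\Pr[y=v]}$, the normalized hidden coordinate $\hr_j = (\hro_j - \E[\hro_j])/\sigmax$, and the per-class correlations $\pp_v = \E[y\hro_i \mid y=v]$ from assumption (A1). This is the imbalanced-class analogue of Lemma~\ref{lem:label}, so the role of the weights $\wy_v$ is precisely to cancel the factors $\Pr[y=v]$ that appear in the law of total expectation.

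First I would handle the two identities involving $\wy_y y$ alone. Conditioning on $y=v\in\{\pm1\}$ gives $\E_{\hr_\A}\{\wy_y y\} = \sum_{v}\Pr[y=v]\,\wy_v\,v = \sum_{v}\tfrac{v}{2} = 0$ and likewise $\E_{\hr_\A}\{|\wy_y y|\} = \sum_{v}\Pr[y=v]\,\wy_v = \sum_{v}\tfrac12 = 1$. For the third identity, since $\hro_j$ is Bernoulli with mean $\pn_j$ for $j\notin\A$, the mean absolute deviation is $\E|\hro_j-\pn_j| = \pn_j(1-\pn_j) + (1-\pn_j)\pn_j = 2\pn_j(1-\pn_j)$; dividing by $\sigmax$ and using ${\sigmahr^2}_j = \pn_j(1-\pn_j)/\sigmax^2$ yields $\E_{\hr_j}\{|\hr_j|\} = 2{\sigmahr^2}_j\,\sigmax$.

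For the fourth identity I would expand $\hr_j = (\hro_j - \E[\hro_j])/\sigmax$, so that $\E_{\hr_\A}\{\wy_y y \hr_j\} = \frac{1}{\sigmax}\bigl(\E[\wy_y y \hro_j] - \E[\hro_j]\,\E[\wy_y y]\bigr)$; the second term vanishes by the first identity. For the first term I condition on $y=v$, use $v\,\E[\hro_j\mid y=v] = \pp_v$ (the definition of $\pp_v$, valid for $j\in\A$), and cancel $\wy_v$ against $\Pr[y=v]$ as before to get $\sum_{v}\tfrac{\pp_v}{2} = \pp$, hence $\E_{\hr_\A}\{\wy_y y\hr_j\} = \pp/\sigmax$. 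For the last identity I would use the crude bound $|\hr_j| \le 1/\sigmax$, which holds for every $j$ because $\hro_j\in\{0,1\}$ and $\E[\hro_j]\in[0,1]$ force $|\hro_j-\E[\hro_j]|\le 1$; pulling this factor out and applying the second identity (for $j\notin\A$, $\hr_j$ does not depend on $\hr_\A$ and is therefore constant under $\E_{\hr_\A}$) gives $\E_{\hr_\A}\{|\wy_y y\hr_j|\} \le \frac{1}{\sigmax}\,\E_{\hr_\A}\{|\wy_y y|\} = \frac{1}{\sigmax}$.

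There is no substantive obstacle here; the computation is elementary. The only point requiring care is the bookkeeping with the class weights, namely checking that $\wy_v = \frac{1}{2\Pr[y=v]}$ cancels the $\Pr[y=v]$ factors wherever total expectation is applied, and that the two per-class correlations $\pp_{+1},\pp_{-1}$ average to the single scalar $\pp$ used throughout the rest of the general analysis.
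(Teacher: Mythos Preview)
Your proposal is correct and follows essentially the same route as the paper: both arguments verify each identity by conditioning on $y=v\in\{\pm1\}$ and using $\wy_v\Pr[y=v]=\tfrac12$ to cancel the class-probability factors, then unwind the definitions of $\hr_j$, ${\sigmahr^2}_j$, and $\pp=(\pp_{+1}+\pp_{-1})/2$. The only cosmetic difference is that for the fourth identity you first split off the $\E[\hro_j]\,\E[\wy_y y]$ term and kill it via the first identity, whereas the paper keeps the centered $\hr_j$ inside the conditional expectation and lets the $\pm v\,\E[\hro_j]$ contributions cancel in the sum over $v$; the arithmetic is the same.
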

\begin{proof}[Proof of Lemma~\ref{lem:label_gen}]
\begin{align} 
    \E_{\hr_\A}  \left\{  \wy_y y \right\} & = \sum_{v \in \{\pm 1\}} \E_{\hr_\A}  \left\{  \wy_y y  | y = v\right\} \Pr[y = v] 
    \\
    & = \frac{1}{2}\sum_{v \in \{\pm 1\}} \E_{\hr_\A}  \left\{   y  | y = v\right\} 
    \\
    & = 0.
    \\
    \E_{\hr_\A}  \left\{ \left| \wy_y y \right|\right\} & = \sum_{v \in \{\pm 1\}} \E_{\hr_\A}  \left\{ \left| \wy_y y \right| | y = v\right\} \Pr[y = v] 
    \\
    & = \frac{1}{2}\sum_{v \in \{\pm 1\}} \E_{\hr_\A}  \left\{ \left|  y \right| | y = v\right\} 
    \\
    & = 1.
    \\
    \E_{\hr_j} \left\{ \left|  \hr_j \right |\right\} & = \frac{|-{\pn}_j| (1-{\pn}_j) + |1- {\pn}_j| {\pn}_j}{\sigmax} = 2 {\sigmahr^2}_j \sigmax.
    \\
    \E_{\hr_\A}  \left\{ \wy_y y  \hr_j \right \} 
    & = \sum_{v \in \{\pm 1\}} \E_{\hr_\A}  \left\{ \wy_y y  \hr_j \right | y = v\} \Pr[y = v]
    \\
    & =  \frac{1}{2}\sum_{v \in \{\pm 1\}} \E_{\hr_\A}  \left\{  y  \hr_j \right | y = v\}
    \\
    & =  \frac{1}{2} \sum_{v \in \{\pm 1\}} \E_{\hr_\A}  \left\{  y  \frac{\hro_j - \E[\hro_j]}{\sigmax}  \Bigg| y = v \right \}
    \\
    & =  \frac{1}{2\sigmax}(\pp_{+1} + \pp_{-1}) = \frac{\pp}{\sigmax}.
\end{align} 
\begin{align}
    \E_{\hr_\A}  \left\{ \left| \wy_y y \hr_j  \right| \right\}
    & =
    \sum_{v \in \{\pm 1\}} \E_{\hr_\A}  \left\{ \left| \wy_v y  \hr_j  \right| | y = v\right\} \Pr[y = v]  
    \\
    & \le \frac{1}{2} \sum_{v \in \{\pm 1\}} \E_{\hr_\A}  \left\{ \left| y  \hr_j   \right| | y = v\right\} 
    \\
    & \le  \frac{1}{2} \sum_{v \in \{\pm 1\}} \E_{\hr_\A}  \left\{ \left| y \hr_j \right| | y = v\right\}  
    \\
    & \le \frac{1}{\sigmax}.
\end{align}
\end{proof}

\subsection{Feature Emergence: First Gradient Step}
We will show that w.h.p.\ over the initialization, after the first gradient step, there are neurons that represent good features.

We begin with analyzing the gradients. 

\begin{lemma}  \label{lem:gradient-feature_gen}
Fix $\delta \in (0, 1)$ and suppose $\w_i^{(0)} \in \mathcal{G}_\w(\delta), \bw_i^{(0)} \in \mathcal{G}_\bw(\delta)$ for all $i \in [2\nw]$.   
Let 
\[
\epsdev := \frac{  \mDict \sigmaw \sqrt{2\log(\mDict/\delta)}}{\sigmax^2 \sigmansi^{(1)} } + \frac{  \sqrt{d} \sigmansi\sigmaw \sqrt{2\log(\mDict/\delta)}}{\sigmax \sigmansi^{(1)} }  , 
\epsdevns := \epsdev.
\]
\[
\]
If $\sigmans^2 \sigmaw^2 d /\sigmax^2 = O(1/\kA)$, $\pn = \Omega(\kA^2/\mDict)$, $\kA = \Omega(\log^2(\mDict\nw d/\delta))$, and $  \sigmansi^{(1)} = O( 1/\kA)$, then
\begin{align}
    \frac{\partial}{\partial \w_i}  \riskw_\Ddist(\g^{(0)}; \sigmansi^{(1)}) = -\aw_i^{(0)} \left( \sum_{j=1}^{\mDict} \Dict_j T_j +  \gradns \right)
\end{align}
where $T_j$ satisfies:
\begin{itemize}
    \item if $j \in A$, then $\left| T_j -  \beta \pp/\sigmax  \right| 
     \le   O(\epsdev/\sigmax)$, where $\beta \in [\Omega(1), 1]$ and depends only on $\w_i^{(0)}, \bw_i^{(0)}$; 
    \item if $j \not\in \A$, then $|T_j| \le O( {\sigmahr^2}_j \epsdev \sigmax)$;
    \item $|\gradns_j| \le O\left( \frac{ \sigmans \sqrt{\log(\kA)}}{\sigmax}  \epsdevns \right)  + \frac{\sigmans d}{\sigmax} e^{-\Theta(\kA) }$. 
\end{itemize}
\end{lemma}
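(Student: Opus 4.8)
\textbf{Proof plan for Lemma~\ref{lem:gradient-feature_gen}.}

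The plan is to mirror the argument for Lemma~\ref{lem:gradient-feature-full}, but now carrying along two extra error sources: the incoherence of the dictionary (which means $\cDict = \Dict^\top\Dict$ is not the identity, so $\langle \Dict_j, \x\rangle$ picks up cross terms $\cDict_{j\ell}\hr_\ell$) and the Gaussian input noise $\ns/\sigmax$. First I would write out the gradient using Lemma~\ref{lem:graident_gen}, and since the unbiased initialization gives $\g^{(0)}(\x;\nsi^{(1)})=0$ so the indicator $\Id[y\g^{(0)}\le 1]$ is identically $1$, reduce to
\begin{align*}
\frac{\partial}{\partial \w_i}\riskw_\Ddist(\g^{(0)};\sigmansi^{(1)}) = -\aw_i^{(0)}\,\E_{(\x,y),\nsi^{(1)}}\left\{\wy_y\, y\,\Id[\langle\w^{(0)}_i,\x\rangle+\bw^{(0)}_i+\nsi^{(1)}_i\in(0,1)]\,\x\right\}.
\end{align*}
Then I decompose $\x = \Dict\hr + \ns/\sigmax$, so that $\x = \sum_{j}\Dict_j\hr_j + \ns/\sigmax$, and the gradient splits into the pattern-direction part $\sum_j \Dict_j T_j$ with $T_j := \E\{\wy_y y\,\Id[\cdots]\,\hr_j\}$ and the noise part $\gradns := \E\{\wy_y y\,\Id[\cdots]\,\ns/\sigmax\}$.

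For the $T_j$ terms I would re-run the Berry--Esseen/smoothing argument of Lemma~\ref{lem:gradient-feature-full}. The key quantity is $\langle\w^{(0)},\x\rangle = \langle\hr,\wsp\rangle + \langle\w^{(0)},\ns\rangle/\sigmax$ where $\wsp = \Dict^\top\w^{(0)}$. For $j\in\A$, I swap the true distribution of this quantity for the one obtained by dropping the $\A$-coordinates of $\hr$; the change in the activation probability is controlled by how much $\langle\hr_\A,\wsp_\A\rangle$ can move things, which by Lemma~\ref{lem:probe_gen} (since $\w^{(0)}_i\in\mathcal{G}_\w(\delta)$) is at most $O(\kA\sigmaw\sqrt{\log(\mDict/\delta)})$, plus the noise contribution $\langle\w^{(0)},\ns\rangle/\sigmax\sim\mathcal{N}(0,\sigmans^2\|\w^{(0)}\|_2^2/\sigmax^2)$ which has standard deviation $O(\sigmans\sigmaw\sqrt d/\sigmax)$. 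Dividing by the smoothing scale $\sigmansi^{(1)}$ of the injected Gaussian noise (rather than by the $\sqrt{(\mDict-\kA)\sigmahr^2\sigmaw^2}$ scale used in the main-text version) gives precisely the two terms defining $\epsdev$. Once the $\A$-coordinates are removed, $\E\{\wy_y y\,\hr_j\} = \pp/\sigmax$ by Lemma~\ref{lem:label_gen} and the residual activation probability $\beta := \Pr[\cdots]\in[\Omega(1),1]$ depends only on $\w^{(0)}_i,\bw^{(0)}_i$; I also need $\Pr[\langle\hr_{-\A},\wsp_{-\A}\rangle + \langle\w^{(0)},\ns\rangle/\sigmax + \bw^{(0)} \ge 1/4] = e^{-\Omega(\kA)}$ via Bernstein for the bounded $\hr_\ell\wsp_\ell$ part and a Gaussian tail for the noise part, using $\sigmans^2\sigmaw^2 d/\sigmax^2 = O(1/\kA)$ and $\kA = \Omega(\log^2(\mDict\nw d/\delta))$. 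For $j\not\in\A$ I set $B = \A\cup\{j\}$, remove those coordinates, and the leading term vanishes because $\E\{\wy_y y\} \E\{\hr_j\} = 0$ (Lemma~\ref{lem:label_gen}); the error is $O(\epsdev)\cdot\E|\hr_j| = O({\sigmahr^2}_j\epsdev\sigmax)$.

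For the noise part $\gradns_j = \E\{\wy_y y\,\Id[\cdots](\ns/\sigmax)_j\}$, the plan is: condition on $\hr_\A$ (hence $y$) and on $\nsi^{(1)}_i$; the coordinate $(\ns)_j$ is an independent mean-zero Gaussian, but it also appears inside the indicator through $\langle\w^{(0)},\ns\rangle$. I would split $\langle\w^{(0)},\ns\rangle = \w^{(0)}_j(\ns)_j + \sum_{\ell\neq j}\w^{(0)}_\ell(\ns)_\ell$ and note the single-coordinate contribution $\w^{(0)}_j(\ns)_j$ has typical size $O(\sigmaw\sigmans\sqrt{\log\kA})$ (using $|\w^{(0)}_\ell|\le\sigmaw\sqrt{2\log(\mDict/\delta)}$ from $\mathcal{G}_\w(\delta)$, though one should be careful that $\w^{(0)}_j$ here is a coordinate of $\w^{(0)}$, not of $\wsp$ — they have the same Gaussian marginal up to constants). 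Removing $(\ns)_j$ from the indicator costs $O(\epsdevns)$ by the same Berry--Esseen-type bound, and then $\E_{(\ns)_j}\{(\ns/\sigmax)_j\} = 0$. The residual bias term $\E\{\wy_y y\}\cdot 0$ vanishes over $\hr_\A$, and the small-probability event $y\g$ region contributes at most $\frac{\sigmans d}{\sigmax}e^{-\Theta(\kA)}$ from the crude bound $\|\ns\|\le O(\sigmans\sqrt d)$ on the exponentially rare bad data. Assembling gives the stated bound on $|\gradns_j|$.

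\textbf{Main obstacle.} The delicate part is handling the coupling between the noise coordinate $(\ns)_j$ (or the dropped pattern $\hr_j$) and the activation indicator simultaneously with the incoherence cross-terms: when I remove $\hr_\A$ or $(\ns)_j$ from the indicator, I must verify the remaining random variable $\langle\hr_{-\A},\wsp_{-\A}\rangle + \langle\w^{(0)},\ns\rangle/\sigmax$ still has enough spread (here provided by $\sigmansi^{(1)}$, the injected smoothing noise, which is the clean lower bound on the ``anti-concentration scale'') so that the Berry--Esseen comparison error really is $O(\epsdev)$ and not larger. Keeping track of which smoothing scale appears in the denominator — the main-text proof used the variance of $\langle\hr_{-\A},\wsp_{-\A}\rangle$, whereas here $\sigmaw$ is deliberately tiny ($\sigmaw^2 = \sigmax^2/\mathrm{poly}(\mDict\nw)$) so that variance is negligible and only $\sigmansi^{(1)}$ saves anti-concentration — is the conceptual crux, and it is exactly why the definition of $\epsdev$ here has $\sigmansi^{(1)}$ in the denominator.
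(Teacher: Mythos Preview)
Your proposal is correct and follows essentially the same route as the paper. You correctly identify the crucial departure from the main-text Lemma~\ref{lem:gradient-feature-full}: here $\sigmaw$ is taken tiny, so the anti-concentration comes entirely from the injected noise $\nsi^{(1)}$ rather than from the variance of $\langle\hr_{-\A},\wsp_{-\A}\rangle$, and that is exactly why $\sigmansi^{(1)}$ sits in the denominator of $\epsdev$. Your treatment of $\gradns_j$ --- splitting $\langle\w^{(0)},\ns\rangle = \w^{(0)}_j\ns_j + \sum_{\ell\neq j}\w^{(0)}_\ell\ns_\ell$, conditioning on the high-probability event $\{|\ns_j|\le O(\sigmans\sqrt{\log\kA})\ \forall j\}$, removing $\ns_j$ from the indicator at cost $O(\epsdevns)$, then using $\E[\ns_j]=0$ --- is precisely what the paper does. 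One terminological quibble: you repeatedly say ``Berry--Esseen/smoothing'', but in this general-setting proof the paper drops Berry--Esseen entirely and uses only the Lipschitz property of the Gaussian CDF of $\nsi^{(1)}$ (since $\sigmaw$ is so small that $\langle\hr,\wsp\rangle$ is deterministically tiny); your own ``Main obstacle'' paragraph shows you understand this, so it is only a naming slip.
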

\begin{proof}[Proof of Lemma~\ref{lem:gradient-feature_gen}]
Consider one neuron index $i$ and omit the subscript $i$ in the parameters.
Since the unbiased initialization leads to $\g^{(0)}(\x; \nsi^{(1)})=0$, 
we have 
\begin{align}
    & \quad \frac{\partial}{\partial \w}  \riskw_\Ddist(\g^{(0)}; \sigmansi^{(1)})  
    \\ 
    & = - \aw^{(0)} \E_{(\x,y) \sim \Ddist} \left\{ \wy_y y \Id[y \g^{(0)}(\x; \nsi^{(1)}) \le 1] \E_{\nsi^{(1)}} \Id[\langle \w^{(0)}, \x \rangle + \bw^{(0)} + \nsi^{(1)} \in (0,1)] \x \right \} 
    \\
    & = - \aw^{(0)} \E_{(\x,y) \sim \Ddist, \nsi^{(1)}} \left\{ \wy_y y  \Id[\langle \w^{(0)}, \x \rangle + \bw^{(0)} + \nsi^{(1)} \in (0,1)] \x \right \} 
    \\
    & = - \aw^{(0)} \sum_{j=1}^{\mDict} \Dict_j \underbrace{ \E_{(\x,y) \sim \Ddist, \nsi^{(1)}} \left\{ \wy_y y  \hr_j \Id[\langle \w^{(0)}, \x \rangle + \bw^{(0)} + \nsi^{(1)} \in (0,1)]  \right \}  }_{:= T_j} 
    \\
    & \quad - \aw^{(0)}  \underbrace{ \E_{(\x,y) \sim \Ddist, \nsi^{(1)}} \left\{ \frac{\wy_y y \ns}{\sigmax} \Id[\langle \w^{(0)}, \x \rangle + \bw^{(0)} + \nsi^{(1)} \in (0,1)]  \right \}  }_{:= \gradns} 
\end{align} 
First, consider $j \in \A$.  
\begin{align}
    T_j
    & = \E_{(\x,y) \sim \Ddist, \nsi^{(1)}} \left\{ \wy_y y \hr_j  \Id[\langle \w^{(0)}, \x \rangle + \bw^{(0)} + \nsi^{(1)} \in (0,1)] \right \}
    \\
    & = \E_{\hr_\A, \ns} \left\{ \wy_y y \hr_j \Pr_{\hr_{-\A}, \nsi^{(1)}} \left[   \langle \hr, \wsp \rangle + \iota + \bw^{(0)} + \nsi^{(1)} \in (0,1)  \right] \right\}.
\end{align}
where $\iota := \langle \w^{(0)}, \ns/\sigmax \rangle$.
 
Let 
\begin{align}
    I_a  & := \Pr_{\nsi^{(1)}} \left[   \langle \hr, \wsp \rangle + \iota + \bw^{(0)} + \nsi^{(1)}\in (0,1) \right],
    \\
    I'_a & := \Pr_{\nsi^{(1)}} \left[   \langle \hr_{-\A}, \wsp_{-\A} \rangle + \iota + \bw^{(0)} + \nsi^{(1)} \in (0,1)  \right].
\end{align} 
Note that $|\langle \hr_\A, \wsp_\A \rangle| = O(\frac{  \kA \sigmaw \sqrt{2\log(\mDict/\delta)}}{\sigmax^2 } ) $, and that $|\iota| = |\langle \w^{(0)}_i, \ns/\sigmax\rangle| = O(\frac{  \sqrt{d} \sigmansi\sigmaw \sqrt{2\log(\mDict/\delta)}}{\sigmax } )$, and that $|\langle \hr, \wsp \rangle|, |\langle \hr_{-\A}, \wsp_{-\A} \rangle|$ are  $O(\frac{  \mDict \sigmaw \sqrt{2\log(\mDict/\delta)}}{\sigmax^2 } )$. When $\sigmaw$ is sufficiently small, by the property of the Gaussian $\nsi^{(1)}$, we have
\begin{align}
    & \quad |I_a - I'_a |  
    \\
    & \le \left| \Pr_{\nsi^{(1)}} \left[   \langle \hr, \wsp \rangle  + \iota + \bw^{(0)} + \nsi^{(1)} \ge 0 \right] - \Pr_{\nsi^{(1)}} \left[   \langle \hr_{-\A}, \wsp_{-\A} \rangle + \iota  + \bw^{(0)} + \nsi^{(1)} \ge 0  \right] \right| 
    \\
    & + \Pr_{\nsi^{(1)}} \left[   \langle \hr, \wsp \rangle  + \iota + \bw^{(0)} + \nsi^{(1)} \ge 1 \right]  + \Pr_{\nsi^{(1)}} \left[   \langle \hr_{-\A}, \wsp_{-\A} \rangle  + \iota + \bw^{(0)} + \nsi^{(1)} \ge 1  \right]
    \\
    &  = O(\epsdev).
\end{align}

In summary, 
\begin{align}
    & \quad |\E_{ \ns, \hr_{-\A}} (I_a - I'_a) | = O(\epsdev). 
\end{align}
Then we have
\begin{align}
    & \quad \left| T_j - \E_{\hr_\A, \ns, \hr_{-\A}}  \left\{ \wy_y y  \hr_j I'_a \right \} \right| 
    \\
    & \le  \E_{\hr_\A}  \left\{ \left|  \wy_y y  \hr_j  \right| \left| \E_{ \ns,  \hr_{-\A}} (I_a - I'_a ) \right| \right \}
    \\
    & \le O(\epsdev)  \E_{\hr_\A}  \left\{ \left|  \wy_y y  \hr_j   \right| \right\}
    \\
    & \le O(\epsdev/\sigmax)
\end{align}
where the last step is from Lemma~\ref{lem:label_gen}. 
Furthermore,
\begin{align}
    & \quad \E_{\hr_\A, \ns,  \hr_{-\A} }  \left\{  \wy_y  y  \hr_j I'_a  \right \}
    \\
    & = \E_{\hr_\A}  \left\{  \wy_y  y  \hr_j \right \} \E_{\ns,  \hr_{-\A} }[I'_a]
    \\
    & = \E_{\hr_\A}  \left\{  \wy_y  y  \hr_j \right \} \Pr_{\hr_{-\A}, \ns,  \hr_{-\A} } \left[   \langle \hr_{-\A}, \wsp_{-\A} \rangle + \iota + \bw^{(0)} + \nsi^{(1)} \in (0,1)  \right]
\end{align}  
When $ \sigmaw$ is sufficiently small, we have
\begin{align}
    & \Pr_{\hr_{-\A}} \left[   \langle \hr_{-\A}, \wsp_{-\A} \rangle + \bw^{(0)} \in (0,1/2)  \right]  \ge \Omega(1),
    \\
    &  \Pr_{\ns, \nsi^{(1)}} \left[ \iota +  \nsi^{(1)} \in (0,1/2) \right] = 1/2 - \exp(-\Omega(\kA)), 
\end{align}
This leads to 
\begin{align}
    & \beta := \E_{\ns, \hr_{-\A} }[I'_a]  = \Pr_{\hr_{-\A}, \ns, \nsi^{(1)}} \left[   \langle \hr_{-\A}, \wsp_{-\A} \rangle + \iota + \bw^{(0)} + \nsi^{(1)} \in (0,1)  \right] \ge \Omega(1).
\end{align}
By Lemma~\ref{lem:label_gen}, $ \E_{\hr_\A}  \left\{  \wy_y y  \hr_j \right \} = \pp/\sigmax$. Therefore, 
\begin{align}
    \left| T_j -  \beta \pp /\sigmax \right| 
    & \le O(\epsdev /\sigmax).
\end{align}

Now, consider $j \not \in \A$. Let $B$ denote $\A \cup \{j\}$.
\begin{align}
    T_j & = \E_{(\x,y) \sim \Ddist, \ns, \nsi^{(1)}} \left\{  \wy_y  y  \hr_j \Id \left[ \langle \hr , \wsp \rangle + \iota + \bw^{(0)} + \nsi^{(1)} \in (0,1) \right]  \right \}
    \\
    & = \E_{\hr_B} \E_{\hr_{-B}, \ns, \nsi^{(1)}} \left\{  \wy_y  y  \hr_j \Id \left[ \langle \hr , \wsp \rangle + \iota + \bw^{(0)} + \nsi^{(1)}  \in (0,1) \right]  \right \}
    \\
    & = \E_{\hr_B, \ns} \left\{  \wy_y  y   \hr_j  \Pr_{\hr_{-B}, \nsi^{(1)}} \left[ \langle \hr, \wsp \rangle + \iota + \bw^{(0)} + \nsi^{(1)}  \in (0,1) \right]  \right \}.
\end{align}
Let 
\begin{align}
    I_b  & := \Pr_{\nsi^{(1)}} \left[ \langle \hr, \wsp \rangle + \iota + \bw^{(0)} + \nsi^{(1)} \in (0,1) \right],
    \\
    I'_b & := \Pr_{\nsi^{(1)}} \left[ \langle \hr_{-B}, \wsp_{-B} \rangle + \iota + \bw^{(0)} + \nsi^{(1)} \in (0,1) \right].
\end{align} 
Similar as above, we have $ |\E_{\ns, \nsi^{(1)}} (I_b - I'_b)| \le O(\epsdev)$. Then by Lemma~\ref{lem:label_gen},
\begin{align}
     & \quad \left| T_j - \E_{\hr_B, \ns, \hr_{-B} }  \left\{  \wy_y y \hr_j  I'_b  \right\} \right| 
     \\
     & \le   \E_{\hr_B}  \left\{ |  \wy_y y \hr_j| |\E_{ \ns, \hr_{-B} } (I_b - I'_b)|  \right \}   
     \\
     & \le  O(\epsdev) \E_{\hr_\A}  \left\{ |  \wy_y  y |   \right\}   \E_{\hr_j} \left\{ |\hr_j |   \right \}
     \\
     & \le O(\epsdev) \times 1 \times O( {\sigmahr^2}_j \sigmax )
     \\
     & = O({\sigmahr^2}_j \epsdev \sigmax).
\end{align}
Furthermore, 
\begin{align}
     \E_{\hr_B, \ns, \hr_{-B} }  \left\{  \wy_y  y \hr_j  I'_b  \right\}  
     = \E_{\hr_\A}  \left\{ \wy_y  y   \right\} \E_{\hr_j}  \left\{\hr_j\right\} \E_{\ns,  \hr_{-B}  }[I'_b]
     = 0.
\end{align} 
Therefore, 
\begin{align}
    \left| T_{j}  \right| 
    & \le O(\sigmahr^2 \epsdev \sigmax).
\end{align}

Finally, consider $\gradns_j$. 
\begin{align}
    \gradns_j & = \E_{(\x,y) \sim \Ddist, \nsi^{(1)}} \left\{ \frac{\wy_y y \ns_j}{\sigmax} \Id[\langle \w^{(0)}, \x \rangle + \bw^{(0)} + \nsi^{(1)} \in (0,1)]  \right \}
    \\
    & = \E_{\hr_\A, \hr_{-\A}, \ns, \nsi^{(1)}} \left\{ \frac{\wy_y y \ns_j}{\sigmax} \Id[ \langle \hr, \wsp \rangle + \iota_j + \iota_{-j} + \bw^{(0)} + \nsi^{(1)} \in (0,1)]  \right \}  
    \\
    & = \E_{\hr_\A,   \ns} \left\{ \frac{\wy_y y \ns_j}{\sigmax} \Pr_{\hr_{-\A}, \nsi^{(1)}}[ \langle \hr, \wsp \rangle + \iota_j + \iota_{-j} + \bw^{(0)} + \nsi^{(1)} \in (0,1)]  \right \} 
\end{align}
where $\iota_j := \w^{(0)}_j \ns_j/\sigmax$ and $\iota_{-j} := \langle \w^{(0)}, \ns/\sigmax \rangle - \iota_j$.

With probability $\ge 1 - d \exp(-\Theta(\kA))$ over $\ns$, for any $j$, $|\ns_j| \le O(\sigmans \sqrt{\log(\kA)})$. Let $\mathcal{G}_\ns$ denote this event. 

Let 
\begin{align}
    I_j  & := \Pr_{\nsi^{(1)}} \left[ \langle \hr, \wsp \rangle + \iota_j + \iota_{-j} + \bw^{(0)} + \nsi^{(1)} \in (0,1) \right],
    \\
    I'_j & := \Pr_{\nsi^{(1)}} \left[ \langle \hr, \wsp \rangle +   \iota_{-j} + \bw^{(0)} + \nsi^{(1)} \in (0,1) \right].
\end{align} 
Similar as above, we have $ |\E_{\ns} [I_j - I'_j| \mathcal{G}_\ns] | \le O(\epsdevns)$. Then
\begin{align}
    |\E_{ \ns, \hr_{-\A} } (I_j - I'_j)| & \le  |\E_{ \ns, \hr_{-\A}}[ (I_j - I'_j) | \mathcal{G}_\ns ] | + \Pr[ - \mathcal{G}_\ns ] 
    \\
    & \le O(\epsdevns + d \exp(-\Theta(\kA))). 
\end{align}
\begin{align}
     & \quad \left| \gradns_j - \E_{\hr_\A, \ns, \hr_{-\A} }  \left\{  \frac{\wy_y y  \ns_j}{\sigmax}  I'_j  \right\} \right| 
     \\
     & = \left| \E_{\hr_\A, \ns, \hr_{-\A} }  \left\{  \frac{\wy_y y  \ns_j}{\sigmax}  ( I_j - I'_j)  \right\} \right|  
     \\
     & \le \left| \E_{\hr_\A, \ns, \hr_{-\A} }  \left\{  \frac{\wy_y y  \ns_j}{\sigmax}  ( I_j - I'_j)  | \mathcal{G}_\ns \right\} \right| + \left| \E_{\hr_\A, \ns, \hr_{-\A} }  \left\{  \frac{\wy_y y  \ns_j}{\sigmax}  ( I_j - I'_j)  | - \mathcal{G}_\ns \right\} \right| \Pr[- \mathcal{G}_\ns].
\end{align}
The first term is bounded by 
\begin{align}
    & \quad \left| \E_{\hr_\A, \ns, \hr_{-\A} }  \left\{  \frac{\wy_y y  \ns_j}{\sigmax}  ( I_j - I'_j)  | \mathcal{G}_\ns \right\} \right|
    \\
     & \le   \E_{\hr_\A}  \left\{ \frac{\wy_y y  \sigmans \sqrt{\log(\kA)}}{\sigmax} |\E_{ \ns, \hr_{-\A} } [I_b - I'_b | \mathcal{G}_\ns] |  \right \}   
     \\
     & \le  O(\epsdevns  ) \E_{\hr_\A}  \left\{ |  \wy_y  y |   \right\}   \frac{ \sigmans \sqrt{\log(\kA)}}{\sigmax}
     \\
     & \le O(\epsdevns  ) \times 1 \times \frac{ \sigmans \sqrt{\log(\kA)}}{\sigmax}
     \\
     & = O\left( \frac{ \sigmans \sqrt{\log(\kA)}}{\sigmax}  \epsdevns \right) .
\end{align}
The second term is bounded by
\begin{align}
    & \quad \left| \E_{\hr_\A, \ns, \hr_{-\A} }  \left\{  \frac{\wy_y y  \ns_j}{\sigmax}  ( I_j - I'_j)  | - \mathcal{G}_\ns \right\} \right| \Pr[- \mathcal{G}_\ns]
    \\
    & \le \left| \E_{\hr_\A, \ns, \hr_{-\A} }  \left\{  \frac{\wy_y y  \ns_j}{\sigmax}  ( I_j - I'_j)  | - \mathcal{G}_\ns \right\} \right| \times d e^{-\Theta(\kA) }
    \\
    & \le \E_{\hr_\A} \left| \frac{\wy_y y  }{\sigmax} \right|  \times  \E_{  \ns} \left\{ |\ns_j| | - \mathcal{G}_\ns \right\}   \times d e^{-\Theta(\kA) }
    \\
    & \le  \frac{\sigmans }{\sigmax}   \times d e^{-\Theta(\kA) }
    \\
    & \le  \frac{\sigmans d}{\sigmax} e^{-\Theta(\kA) }.
\end{align}
Furthermore, 
\begin{align}
     \E_{\hr_\A, \ns, \hr_{-\A}}  \left\{  \frac{\wy_y y  \ns_j}{\sigmax}  I'_j  \right\}
     = \E_{\hr_\A}  \left\{ \wy_y  y   \right\} \E_{\ns_j}  \left\{ \frac{  \ns_j}{\sigmax} \right\} \E_{\ns_{-j}}[I'_j]
     = 0.
\end{align} 
Therefore, 
\begin{align}
    \left| \gradns_j  \right| 
    & \le  O\left( \frac{ \sigmans \sqrt{\log(\kA)}}{\sigmax}  \epsdevns \right)  + \frac{\sigmans d}{\sigmax} e^{-\Theta(\kA) }.
\end{align}
\end{proof}

\begin{lemma} \label{lem:gradient-feature-b_gen}
Under the same assumptions as in Lemma~\ref{lem:gradient-feature_gen},
\begin{align}
    \frac{\partial}{\partial \bw_i}  \riskw_\Ddist(\g^{(0)}; \sigmansi^{(1)}) = -\aw_i^{(0)} T_b
\end{align}
where $|T_b| \le O(\epsdev)$.
\end{lemma}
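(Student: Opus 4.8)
## Proof Proposal for Lemma~\ref{lem:gradient-feature-b_gen}

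The plan is to mirror the proof of Lemma~\ref{lem:gradient-feature-b} in the simpler setting, but carrying along the dictionary noise term $\iota = \langle \w^{(0)}, \ns/\sigmax \rangle$ and the class weights $\wy_y$. First I would fix a neuron index $i$, drop the subscript, and apply Lemma~\ref{lem:graident_gen} together with the unbiased initialization (which gives $\g^{(0)}(\x;\nsi^{(1)})=0$, hence the indicator $\Id[y\g^{(0)}(\x;\nsi^{(1)})\le 1]$ is identically $1$) to write
\begin{align*}
\frac{\partial}{\partial \bw}\riskw_\Ddist(\g^{(0)};\sigmansi^{(1)})
= -\aw^{(0)}\,\underbrace{\E_{(\x,y)\sim\Ddist,\,\nsi^{(1)}}\left\{\wy_y\, y\,\Id[\langle\w^{(0)},\x\rangle+\bw^{(0)}+\nsi^{(1)}\in(0,1)]\right\}}_{:=\,T_b}.
\end{align*}
Then I would decompose $\langle\w^{(0)},\x\rangle = \langle\hr,\wsp\rangle + \iota$ as in the proof of Lemma~\ref{lem:gradient-feature_gen}, and split off the coordinates in $\A$ by conditioning on $\hr_\A$.

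The key step is the ``small subset'' argument: I would show
\begin{align*}
\left|\Pr_{\nsi^{(1)}}[\langle\hr,\wsp\rangle+\iota+\bw^{(0)}+\nsi^{(1)}\in(0,1)]
-\Pr_{\nsi^{(1)}}[\langle\hr_{-\A},\wsp_{-\A}\rangle+\iota+\bw^{(0)}+\nsi^{(1)}\in(0,1)]\right| = O(\epsdev),
\end{align*}
which follows exactly as in Lemma~\ref{lem:gradient-feature_gen}: the contribution $\langle\hr_\A,\wsp_\A\rangle$ has magnitude $O(\kA\sigmaw\sqrt{\log(\mDict/\delta)}/\sigmax^2)$, and since $\nsi^{(1)}$ is Gaussian with $\sigmansi^{(1)} = O(1/\kA)$, perturbing its argument by this small amount moves the probability by at most $O(\epsdev)$; the boundary term near $1$ is handled by the same Gaussian/Bernstein tail bounds. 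Using Lemma~\ref{lem:label_gen} ($\E_{\hr_\A}\{|\wy_y y|\}=1$), this gives
\begin{align*}
\left|T_b - \E_{\hr_\A,\ns,\hr_{-\A}}\left\{\wy_y\,y\,\Pr_{\nsi^{(1)}}[\langle\hr_{-\A},\wsp_{-\A}\rangle+\iota+\bw^{(0)}+\nsi^{(1)}\in(0,1)]\right\}\right| \le O(\epsdev).
\end{align*}
Finally I would observe that the remaining expectation vanishes: conditioned on $\hr_{-\A}$ and $\ns$, the inner probability does not depend on $\hr_\A$, so it factors out and leaves $\E_{\hr_\A}\{\wy_y y\}\cdot\E[\cdot] = 0$ by Lemma~\ref{lem:label_gen}. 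Hence $|T_b|\le O(\epsdev)$, as claimed.

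I do not expect a genuine obstacle here; this lemma is the bias-coordinate analogue of the weight-coordinate bound in Lemma~\ref{lem:gradient-feature_gen}, and it is strictly easier because there is no $\hr_j$ or $\ns_j$ factor to track and no separate noise-component term $\gradns$ to bound (the $y$-only moment is exactly zero, whereas in the $\w$ case one gets the nonzero signal $\beta\pp/\sigmax$ for $j\in\A$). The only mild care needed is to make sure the factorization $\E_{\hr_\A}\{\wy_y y\}=0$ is legitimate — i.e.\ that $\iota$ and $\langle\hr_{-\A},\wsp_{-\A}\rangle$ really are independent of $\hr_\A$ — which holds by assumption \textbf{(A2)} (irrelevant patterns independent of relevant ones) and the independence of the Gaussian data noise $\ns$ from $\hro$.
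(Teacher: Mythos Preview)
Your proposal is correct and follows essentially the same route as the paper's own proof: write $T_b$ using the unbiased initialization, peel off the $\hr_\A$-contribution inside the indicator via the same $O(\epsdev)$ perturbation argument used in Lemma~\ref{lem:gradient-feature_gen}, and then factor out $\E_{\hr_\A}\{\wy_y y\}=0$ from Lemma~\ref{lem:label_gen} to kill the main term. There is nothing to add.
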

\begin{proof}[Proof of Lemma~\ref{lem:gradient-feature-b_gen}]
Consider one neuron index $i$ and omit the subscript $i$ in the parameters.
Since the unbiased initialization leads to $\g^{(0)}(\x; \nsi^{(1)})=0$, 
we have 
\begin{align}
    & \quad \frac{\partial}{\partial \bw}  \riskw_\Ddist(\g^{(0)}; \sigmansi^{(1)})  
    \\ 
    & = - \aw^{(0)} \E_{(\x,y) \sim \Ddist} \left\{ \wy_y y \Id[y \g^{(0)}(\x; \nsi^{(1)}) \le 1] \E_{\nsi^{(1)}} \Id[\langle \w^{(0)}, \x \rangle + \bw^{(0)} + \nsi^{(1)} \in (0,1)]  \right \} 
    \\
    & = - \aw^{(0)} \E_{(\x,y) \sim \Ddist, \nsi^{(1)}} \left\{ \wy_y y  \Id[\langle \w^{(0)}, \x \rangle + \bw^{(0)} + \nsi^{(1)} \in (0,1)] \right \}  
    \\
    & = - \aw^{(0)} \underbrace{\E_{\hr_\A, \ns, \nsi^{(1)}} \left\{ \wy_y y   \Pr_{\hr_{-\A}} \left[   \langle \hr, \wsp \rangle + \iota + \bw^{(0)} + \nsi^{(1)} \in (0,1)  \right] \right\}}_{:= T_b}.
\end{align}
where $\iota := \langle \w^{(0)}, \ns/\sigmax \rangle$. 
Similar to the proof in Lemma~\ref{lem:gradient-feature_gen},  
\begin{align}
    & \Bigg| \E_{\ns} \Bigg( \Pr_{\hr_{-\A}, \nsi^{(1)}} [ \langle \hr , \wsp \rangle + \iota + \bw^{(0)} + \nsi^{(1)}  \in (0,1)]   
    \\
    & \quad\quad -   \Pr_{\hr_{-\A}, \nsi^{(1)}} [  \langle \hr_{-\A}, \wsp_{-\A} \rangle + \iota +\bw^{(0)} + \nsi^{(1)}  \in (0,1)] \Bigg) \Bigg| = O(\epsdev ).
\end{align}
Then
\begin{align}
& \quad \left| T_b - \E_{\hr_\A, \ns} \left\{  \wy_y y \Pr_{\hr_{-\A}, \nsi^{(1)}} [  \langle \hr_{-\A}, \wsp_{-\A} \rangle + \iota + \bw^{(0)} + \nsi^{(1)} \in (0,1) ]  \right \} \right|
\\
& =  \E_{\hr_\A, \ns } \Bigg\{ \left | \wy_y y \right | \Bigg| \Pr_{\hr_{-\A}, \nsi^{(1)}} [ \langle \hr , \wsp \rangle  + \iota + \bw^{(0)} + \nsi^{(1)}  \in (0,1)]   
\\ 
& \quad \quad \quad \quad  \quad \quad -   \Pr_{\hr_{-\A}, \nsi^{(1)}} [  \langle \hr_{-\A}, \wsp_{-\A} \rangle + \iota + \bw^{(0)} + \nsi^{(1)}  \in (0,1)]  \Bigg| \Bigg \}  
\\
& \le O(\epsdev) \E_{\hr_\A} \left\{ \left | \wy_y y \right | \right\}
\\
& \le O(\epsdev).
\end{align}
Also,  
\begin{align} 
    & \quad \E_{\hr_\A, \ns} \left\{ \wy_y y \Pr_{\hr_{-\A}, \nsi^{(1)}} [  \langle \hr_{-\A}, \wsp_{-\A} \rangle + \iota +  \bw^{(0)} + \nsi^{(1)}  \in (0,1)]  \right \} 
    \\
    & =   \E_{\hr_\A} \left\{  \wy_y y \right\} \Pr_{\hr_{-\A}, \ns, \nsi^{(1)}} [  \langle \hr_{-\A}, \wsp_{-\A} \rangle + \iota + \bw^{(0)} + \nsi^{(1)}  \in (0,1)]  
    \\
    & = 0.
\end{align}
Therefore, $|T_b | \le O(\epsdev)$.
\end{proof}

\begin{lemma} \label{lem:gradient-feature-a_gen}
We have
\begin{align}
    \frac{\partial}{\partial \aw_i}  \riskw_\Ddist(\g^{(0)}; \sigmansi^{(1)}) = -  T_a
\end{align}
where $|T_a| \le O(\max_\ell \wsp^{(0)}_{i,\ell})$. So if $w_i^{(0)} \in \mathcal{G}(\delta)$, $|T_a| \le O(\sigmaw\sqrt{\log(\mDict\nw/\delta)})$.
\end{lemma}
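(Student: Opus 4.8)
The plan is to reproduce the symmetrization argument of the proof of Lemma~\ref{lem:gradient-feature-a} essentially verbatim, the only genuinely new ingredients being the class weights $\wy_y$ and the Gaussian noise $\ns/\sigmax$ carried by the normalized input. Fix a neuron index $i$ and suppress it. Since the unbiased initialization makes $\g^{(0)}(\x;\nsi^{(1)})\equiv 0$, Lemma~\ref{lem:graident_gen} gives $\partial_\aw \riskw_\Ddist(\g^{(0)};\sigmansi^{(1)}) = -T_a$ with
\[
T_a = \E_{(\x,y)\sim\Ddist,\,\nsi^{(1)}}\left\{\wy_y\, y\,\act\!\left(\langle\w^{(0)},\x\rangle + \bw^{(0)} + \nsi^{(1)}\right)\right\}.
\]
Because $y$, hence $\wy_y$, is a function of $\hr_\A$ alone, setting $h(\hr_\A):=\E_{\hr_{-\A},\ns,\nsi^{(1)}}\act(\langle\w^{(0)},\x\rangle + \bw^{(0)} + \nsi^{(1)})$ and using the identity $\wy_v\Pr[y=v]=\tfrac12$ turns $T_a$ into $\tfrac12\big(\E[h(\hr_\A)\mid y=1]-\E[h(\hr_\A)\mid y=-1]\big)$; this is the step that uses $\E[\wy_y y]=0$ from Lemma~\ref{lem:label_gen} in place of the balanced identity $\E[y]=0$.

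Next I would symmetrize exactly as before: introduce an independent copy $\hr'_\A$ of $\hr_\A$, let $\hr'$ be $\hr$ with its $\A$-block replaced by $\hr'_\A$ while keeping the off-$\A$ coordinates and the noise $\ns$ shared, and put $\x'=\Dict\hr'+\ns/\sigmax$ (a legitimate sample since, by \textbf{(A2)}, $\hr_{-\A}$ is independent of $\hr_\A$). Coupling the two conditional expectations under this common measure and using that $\act$ is $1$-Lipschitz gives
\[
|T_a|\;\le\;\tfrac12\,\E\left|\langle\w^{(0)},\x\rangle - \langle\w^{(0)},\x'\rangle\right|\;=\;\tfrac12\,\E\left|\langle\wsp^{(0)}_\A,\hr_\A-\hr'_\A\rangle\right|,
\]
because the off-$\A$ pattern contributions and the noise contribution $\langle\w^{(0)},\ns\rangle/\sigmax$ cancel in the difference. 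Adding those cancelled pieces back inside the absolute values and taking total expectation collapses the bound to $|T_a|\le O(\text{poly}(1/\pmin))\cdot\E_\x|\langle\w^{(0)},\x\rangle|\le O(\text{poly}(1/\pmin))\sqrt{\E_\x\langle\w^{(0)},\x\rangle^2}$, the $\text{poly}(1/\pmin)$ arising only from the imbalance ratios $\Pr[y=\pm1]^{-1}$ and being harmless under the parameter regime of Theorem~\ref{thm:main_gen}, where it is absorbed into the $O(\cdot)$.

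The main work is then the second-moment estimate, which I would bound term by term after expanding $\langle\w^{(0)},\x\rangle = \sum_\ell \wsp^{(0)}_\ell\hr_\ell + \langle\w^{(0)},\ns\rangle/\sigmax$. For the pattern part, $\sum_{j,\ell}\wsp^{(0)}_j\wsp^{(0)}_\ell\,\E[\hr_j\hr_\ell]$: the diagonal is at most $\max_\ell(\wsp^{(0)}_\ell)^2\sum_\ell \textrm{Var}(\hro_\ell)/\sigmax^2\le \max_\ell(\wsp^{(0)}_\ell)^2$; every cross term with an index outside $\A$ vanishes by \textbf{(A2)} and the mean-zero centering of $\hr_{-\A}$ (or, if the irrelevant patterns are merely uncorrelated, is controlled by $|\E[\hr_j\hr_\ell]|\le\sqrt{\E\hr_j^2\,\E\hr_\ell^2}$); and the $O(\kA^2)$ within-$\A$ cross terms, each at most $1/\sigmax^2$ in magnitude, contribute $O(\kA^2/\sigmax^2)\max_\ell(\wsp^{(0)}_\ell)^2 = O(1)\max_\ell(\wsp^{(0)}_\ell)^2$ since $\sigmax^2=\Omega(\kA^2)$ under the problem parameters — identical to Lemma~\ref{lem:gradient-feature-a}. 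The new term is the Gaussian contribution $\sigmans^2\|\w^{(0)}\|_2^2/\sigmax^2$: for $\w^{(0)}\in\mathcal{G}_\w(\delta)$ one has $\|\w^{(0)}\|_2^2\le\tfrac32\sigmaw^2 d$, so $\sigmans\le O(\sigmax/\sqrt d)$ forces $\sigmans^2 d/\sigmax^2=O(1)$ and this term is $O(\sigmaw^2)$, which is $O(\max_\ell(\wsp^{(0)}_\ell)^2)$ because $\mathcal{G}_\w(\delta)$ already guarantees $\sum_{\ell\notin\A}(\wsp^{(0)}_\ell)^2\ge \tfrac12\sigmaw^2(\mDict-\kA)$ and hence $\max_\ell|\wsp^{(0)}_\ell|\ge\sigmaw/\sqrt2$. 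Combining, $\E_\x\langle\w^{(0)},\x\rangle^2 = O\big(\max_\ell(\wsp^{(0)}_{i,\ell})^2\big)$, so $|T_a| = O(\max_\ell\wsp^{(0)}_{i,\ell})$, and substituting $\max_\ell|\wsp^{(0)}_{i,\ell}|\le\sigmaw\sqrt{2\log(\mDict\nw/\delta)}$ from the definition of $\mathcal{G}_\w(\delta)$ yields the stated $O(\sigmaw\sqrt{\log(\mDict\nw/\delta)})$.

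The delicate point I expect is precisely this second-moment bound — keeping the Gaussian-noise contribution small, which is where the structural assumption $\sigmans\le O(\sigmax/\sqrt d)$ and the $\mathcal{G}_\w$-concentration of $\|\w^{(0)}\|_2^2$ enter — together with accounting for the (cheap, $\text{poly}(1/\pmin)$) cost of class imbalance in the symmetrization. Everything else is a direct transcription of the proof of Lemma~\ref{lem:gradient-feature-a}.
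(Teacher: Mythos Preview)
Your symmetrization is the same as the paper's: use $\wy_v\Pr[y=v]=\tfrac12$ to write $T_a=\tfrac12(\E[h\mid y=1]-\E[h\mid y=-1])$, couple through an independent copy $\hr'_\A$, and apply the $1$-Lipschitz bound on $\act$. Two places where the paper's route is tighter than yours are worth noting. First, having observed that the noise contribution $\langle \w^{(0)},\ns\rangle/\sigmax$ cancels in $\langle \w^{(0)},\x\rangle-\langle \w^{(0)},\x'\rangle$, the paper simply works with the noise-free quantity $\langle \w^{(0)},\Dict\hr\rangle$ from that point on and never needs to bound a Gaussian term at all; your ``add back'' step returns to the full $\langle \w^{(0)},\x\rangle$ and creates the detour through $\sigmans^2 d/\sigmax^2=O(1)$, $\|\w^{(0)}\|_2^2\le\tfrac32\sigmaw^2 d$, and the argument that $\max_\ell|\wsp_\ell|\ge\sigmaw/\sqrt2$. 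Second, your conditional-to-unconditional conversion picks up a $\mathrm{poly}(1/\pmin)$ factor that the lemma's stated bound does not carry. The paper sidesteps this by (implicitly) bounding the \emph{conditional} second moment $\E[\langle \wsp,\hr\rangle^2\mid y=v]$ directly: the within-$\A$ terms are controlled by the pointwise bound $|\hr_j\hr_\ell|\le 1/\sigmax^2$, which is insensitive to conditioning, while every term with an index outside $\A$ is already unconditional since $y$ is $\hr_\A$-measurable. This gives the clean $O(\max_\ell \wsp^{(0)}_{i,\ell})$ with no $\pmin$ dependence. Your extra factor is harmless downstream (since $\sigmaw$ is taken tiny in the general setting), but it does not quite match the lemma as stated.
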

\begin{proof}[Proof of Lemma~\ref{lem:gradient-feature-a_gen}]
Consider one neuron index $i$ and omit the subscript $i$ in the parameters.
Since the unbiased initialization leads to $\g^{(0)}(\x; \nsi^{(1)})=0$, 
we have 
\begin{align}
    & \quad \frac{\partial}{\partial \aw}  \riskw_\Ddist(\g^{(0)}; \sigmansi^{(1)})  
    \\ 
    & = - \E_{(\x,y) \sim \Ddist} \left\{ \wy_y  y \Id[y \g^{(0)}(\x; \nsi^{(1)}) \le 1] \E_{\nsi^{(1)}}\act(\langle \w^{(0)}, \x \rangle  + \bw^{(0)} + \nsi^{(1)}) \right\}
    \\
    & = - \underbrace{ \E_{(\x,y) \sim \Ddist, \nsi^{(1)}} \left\{ \wy_y  y  \act(\langle \w^{(0)}, \x \rangle + \bw^{(0)} + \nsi^{(1)}) \right \}  }_{:= T_a}.
\end{align} 
Let $\hr'_\A$ be an independent copy of $\hr_\A$, $\hr'$ be the vector obtained by replacing in $\hr$ the entries $\hr_\A$ with $\hr'_\A$, and let $x' = \Dict \hr' + \ns/\sigmax$ and its label is $y'$. Then
\begin{align}
    |T_a| & = \left| \E_{\hr_\A} \left\{ \wy_y y  \E_{\hr_{-\A}, \ns, \nsi^{(1)}} \act(\langle \w^{(0)}, \x \rangle + \bw^{(0)} + \nsi^{(1)}) \right \} \right |
    \\
    & \le \frac{1}{2} \Bigg| \E_{\hr_\A} \left\{  \E_{\hr_{-\A}, \ns, \nsi^{(1)}} \act(\langle \w^{(0)}, \x \rangle + \bw^{(0)} + \nsi^{(1)}) | y = 1\right \} 
    \\
    & \quad - \E_{\hr_\A} \left\{  \E_{\hr_{-\A}, \ns, \nsi^{(1)}} \act(\langle \w^{(0)}, \x \rangle + \bw^{(0)} + \nsi^{(1)}) | y = -1 \right \} \Bigg |
    \\
    & \le \frac{1}{2} \Bigg| \E_{\hr_\A} \left\{  \E_{\hr_{-\A}, \ns, \nsi^{(1)}} \act(\langle \w^{(0)}, \x \rangle + \bw^{(0)} + \nsi^{(1)}) | y = 1\right \} 
    \\
    & \quad - \E_{\hr'_\A} \left\{  \E_{\hr_{-\A}, \ns, \nsi^{(1)}} \act(\langle \w^{(0)}, \x' \rangle + \bw^{(0)} + \nsi^{(1)}) | y' = -1 \right \} \Bigg |.
\end{align}
Since $\act$ is 1-Lipschitz,
\begin{align}
    |T_a| & \le \frac{1}{2}  \E_{\hr_\A, \hr'_\A} \left\{ \E_{\hr_{-\A} } \left| \langle \w^{(0)}, \Dict \phi \rangle  - \langle \w^{(0)}, \Dict \phi' \rangle  \right | | y=1, y'=-1 \right\}
    \\
    & \le \frac{1}{2}   \E_{\hr_{-\A} } \left( \E_{\hr_\A} \left\{ \left| \langle \w^{(0)}, \Dict \phi \rangle\right| | y=1 \right\} + \E_{\hr'_\A} \left\{ \left| \langle \w^{(0)},\Dict \phi' \rangle  \right | | y'=-1 \right\} \right)
    \\
    & \le \max_\ell \wsp^{(0)}_{i,\ell} \sqrt{ \E_{\hr} \left( \sum_{\ell \in [\mDict]} \hr_\ell^2 + \sum_{j \neq \ell: j, \ell \in \A } |\hr_j \hr_\ell| \right) } 
    \\
    & \le \max_\ell \wsp^{(0)}_{i,\ell} \sqrt{ \E_{\hr} \left( 1 + O(1) \right) } 
    \\
    & = \Theta(\max_\ell \wsp^{(0)}_{i,\ell}).
\end{align}
\end{proof}

With the bounds on the gradient, we now summarize the results for the weights after the first gradient step. 
\begin{lemma} \label{lem:first_step_gen}
Set 
\[
\lambdaw^{(1)} = 1/(2\eta^{(1)}), \lambdaa^{(1)} = \lambdab^{(1)} = 0, \sigmansi^{(1)} = 1/\kA^{3/2}.
\]
Fix $\delta \in (0,1)$ and suppose $\w^{(0)}_i \in \mathcal{G}_\w(\delta), \bw^{(0)}_i \in \mathcal{G}_\bw(\delta)$ for all $i \in [2\nw]$. If $\kA = \Omega(\log^2(\mDict\nw/\delta))$,   
then for all $i \in [m]$,  $\w^{(1)}_i = \sum_{\ell=1}^\mDict \wsp_{i,\ell}^{(1)} \Dict_\ell + \wns$ satisfying
\begin{itemize}
    \item if $\ell \in A$, then  
    $|\wsp_{i,\ell}^{(1)} - \eta^{(1)} \aw^{(0)}_i  \beta \pp/\sigmax| \le O\left( \frac{|\eta^{(1)} \aw^{(0)}_i | \epsdev}{\sigmax} \right)$,
    where $\beta \in [\Omega(1), 1]$ and depends only on $\w_i^{(0)}, \bw_i^{(0)}$; 
    \item if $\ell \not\in \A$, then 
    $|\wsp_{i,\ell}^{(1)}| \le O\left(   |\eta^{(1)} \aw^{(0)}_i | {\sigmahr^2}_\ell \epsdev \sigmax   \right)$; 
    \item $|\wns_j| \le O\left( |\eta^{(1)} \aw^{(0)}_i |  \left(\frac{ \sigmans \sqrt{\log(\kA)}}{\sigmax}  \epsdevns   + \frac{\sigmans d}{\sigmax} e^{-\Theta(\kA) }\right) \right)$. 
\end{itemize}
and  
\begin{itemize}
    \item $\bw^{(1)}_i = \bw^{(0)}_i + \eta^{(1)} \aw^{(0)}_i  T_b$ where 
    $|T_b| = O\left( \epsdev  \right)$;
    \item $\aw^{(1)}_i = \aw^{(0)}_i + \eta^{(1)} T_a$ where $|T_a| = O(\sigmaw\sqrt{\log(\mDict\nw/\delta)})$.
\end{itemize}
\end{lemma}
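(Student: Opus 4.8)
The plan is to simply compose the gradient bounds of Lemma~\ref{lem:gradient-feature_gen}, Lemma~\ref{lem:gradient-feature-b_gen}, and Lemma~\ref{lem:gradient-feature-a_gen} with the explicit form of the gradient update, exactly as in the proof of Lemma~\ref{lem:first_step} (the $\inco = 0, \sigmans = 0$ case), now tracking the extra noise term $\gradns$ coming from the Gaussian observation noise $\ns$ in the input. First I would invoke Lemma~\ref{lem:probe_all_gen} to ensure that, under the stated high-probability initialization events, all neurons satisfy $\w_i^{(0)} \in \mathcal{G}_\w(\delta)$, $\bw_i^{(0)} \in \mathcal{G}_\bw(\delta)$, so that the hypotheses of the gradient lemmas are met; this justifies using those lemmas termwise for each $i \in [m]$ (and the mirrored copies).

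Next I would write out the update for the $\w_i$ parameter. Since $\lambdaa^{(1)} = \lambdab^{(1)} = 0$ and $\lambdaw^{(1)} = 1/(2\eta^{(1)})$, the gradient step on the first-layer weights is
\begin{align}
\w_i^{(1)} = (1 - 2\eta^{(1)}\lambdaw^{(1)})\w_i^{(0)} - \eta^{(1)} \frac{\partial}{\partial \w_i} \riskw_\Ddist(\g^{(0)}; \sigmansi^{(1)}) = - \eta^{(1)} \frac{\partial}{\partial \w_i} \riskw_\Ddist(\g^{(0)}; \sigmansi^{(1)}),
\end{align}
because the choice $\lambdaw^{(1)} = 1/(2\eta^{(1)})$ makes the coefficient $1 - 2\eta^{(1)}\lambdaw^{(1)}$ vanish — this is the "aggressive update that forgets the initialization'' remark. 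Substituting the expression from Lemma~\ref{lem:gradient-feature_gen},
\begin{align}
\w_i^{(1)} = \eta^{(1)} \aw_i^{(0)}\left(\sum_{j=1}^{\mDict} \Dict_j T_j + \gradns\right),
\end{align}
so that $\wsp_{i,\ell}^{(1)} = \langle \Dict_\ell, \w_i^{(1)}\rangle$ picks out the coefficient $\eta^{(1)}\aw_i^{(0)} T_\ell$ plus a small cross term from $\sum_{j \neq \ell} \Dict_j T_j$ and from $\gradns$; here I would use $\inco$-incoherence of $\Dict$ so that those cross terms are subsumed into the stated error bounds. I would set $\wns := \eta^{(1)}\aw_i^{(0)} \gradns$ (the component of $\w_i^{(1)}$ not captured by the dictionary coordinates, or more carefully, the projection onto the orthogonal complement plus the incoherence slack), and read off the three bullet bounds directly: for $\ell \in \A$ the bound $|T_\ell - \beta\pp/\sigmax| \le O(\epsdev/\sigmax)$ translates into the first bullet; for $\ell \not\in \A$ the bound $|T_\ell| \le O({\sigmahr^2}_\ell \epsdev \sigmax)$ gives the second; and the third follows from $|\gradns_j| \le O(\sigmans\sqrt{\log \kA}\,\epsdevns/\sigmax) + (\sigmans d/\sigmax)e^{-\Theta(\kA)}$ scaled by $|\eta^{(1)}\aw_i^{(0)}|$. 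The bias update $\bw_i^{(1)} = \bw_i^{(0)} - \eta^{(1)}\frac{\partial}{\partial\bw_i}\riskw = \bw_i^{(0)} + \eta^{(1)}\aw_i^{(0)} T_b$ with $|T_b| \le O(\epsdev)$ follows from Lemma~\ref{lem:gradient-feature-b_gen} (note $\lambdab^{(1)} = 0$ so there is no shrinkage), and the $\aw_i$ update $\aw_i^{(1)} = \aw_i^{(0)} - \eta^{(1)}\frac{\partial}{\partial\aw_i}\riskw = \aw_i^{(0)} + \eta^{(1)} T_a$ with $|T_a| \le O(\sigmaw\sqrt{\log(\mDict\nw/\delta)})$ follows from Lemma~\ref{lem:gradient-feature-a_gen}.

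I expect essentially no obstacle here: this lemma is pure bookkeeping, assembling previously-proven pieces. The only mild care needed is (i) handling the off-diagonal coupling $\cDict_{j\ell} = \langle\Dict_j,\Dict_\ell\rangle$ when extracting $\wsp_{i,\ell}^{(1)}$ from $\w_i^{(1)}$ — one must check that under $\inco \le O(\sqrt{d}/\mDict)$ these contributions are dominated by the claimed $\epsdev$-scale error terms, which is immediate from $\sum_{j}|\cDict_{j\ell}||T_j| \le (\mDict\inco/\sqrt d)\max_j|T_j|$; and (ii) correctly defining the residual vector $\wns$ so that the decomposition $\w_i^{(1)} = \sum_\ell \wsp_{i,\ell}^{(1)}\Dict_\ell + \wns$ is exact, which is automatic if we take $\wns$ to be the component of $\w_i^{(1)}$ orthogonal to $\mathrm{span}\{\Dict_\ell\}$ together with the small correction forced by non-orthonormality of the $\Dict_\ell$. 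With these in place the statement reduces to a one-line citation of Lemmas~\ref{lem:probe_all_gen} and~\ref{lem:gradient-feature_gen}--\ref{lem:gradient-feature-a_gen}, exactly paralleling the proof of Lemma~\ref{lem:first_step}.
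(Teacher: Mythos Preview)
Your proposal is correct and takes essentially the same approach as the paper: the paper's proof is literally the one-line sentence ``This follows from Lemma~\ref{lem:probe_all_gen} and Lemma~\ref{lem:gradient-feature_gen}--\ref{lem:gradient-feature-a_gen},'' so you have in fact supplied more detail (the explicit unpacking of the $\lambdaw^{(1)} = 1/(2\eta^{(1)})$ shrinkage, the identification $\wns = \eta^{(1)}\aw_i^{(0)}\gradns$, and the incoherence cross-term check) than the paper does.
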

\begin{proof}[Proof of Lemma~\ref{lem:first_step_gen}]
This follows from Lemma~\ref{lem:probe_all_gen} and Lemma~\ref{lem:gradient-feature_gen}-\ref{lem:gradient-feature-a_gen}.
\end{proof}

\subsection{Feature Improvement: Second Gradient Step}

We first show that with properly set $\eta^{(1)}$, for most $\x$, $|\g^{(1)}(\x; \sigmansi^{(2)})| < 1$ and thus $y\g^{(1)}(\x; \sigmansi^{(2)}) < 1$.

\begin{lemma} \label{lem:small_output_gen} 
Fix $\delta \in (0,1)$ and suppose $\w^{(0)}_i \in \mathcal{G}_\w(\delta), \bw^{(0)}_i \in \mathcal{G}_\bw(\delta), \aw^{(0)}_i \in \mathcal{G}_\aw(\delta)$ for all $i \in [2\nw]$.
If  $\mDict \inco/\sqrt{d} \le 1/16$, $ \sigmans \sigmax = O(1)$, $ \sigmans^2  d /\sigmax^2 = O(1)$,  $ \kA = \Omega(\log^2 
(\mDict\nw/\delta))$,  
$\sigma_\aw \le \sigmax^2/(\pp k^2),$
$\eta^{(1)} = O\left( \frac{\pp}{\kA \nw \sigma_\aw \sigmax}\right)$, and $\sigmansi^{(2)} \le 1/\kA$, 
then with probability $\ge 1- (d+\mDict)\exp(-\Omega( \kA))$ over $(\x, y)$, we have 
$
    y\g^{(1)}(\x; \sigmansi^{(2)}) < 1.
$
Furthermore, for any $i \in [2\nw]$, $\left| \langle \w_i^{(1)}, \ns/\sigmax \rangle \right| = O(\eta^{(1)}\sigmax/\pp)$,
$ \left| \langle \wsp_i^{(1)}, \hr \rangle \right| = O(\eta^{(1)}\sigmax/\pp) $, 
and 
$\left| \langle (\wsp_i^{(1)})_{-\A}, \hr_{-\A} \rangle \right| = O(\eta^{(1)}\sigmax/\pp) $, and for any $j \in [d], \ell \in [\mDict]$, $|\ns_j| \le O(\sigmans \sqrt{\log(\kA)})$ and $|\langle\ns, \mDict_\ell\rangle| \le O(\sigmans \sqrt{\log(\kA)})$.

\end{lemma}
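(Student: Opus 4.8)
\mypara{Proof plan.} The argument closely parallels that of Lemma~\ref{lem:small_output}, the two new ingredients being the incoherent dictionary and the Gaussian data noise $\ns$, both of which are controlled on a high-probability event over $(\x,y)$. I would first record the consequences of the symmetric initialization after one step: since $\w^{(0)}_{\nw+i}=\w^{(0)}_i$, $\bw^{(0)}_{\nw+i}=\bw^{(0)}_i$, $\aw^{(0)}_{\nw+i}=-\aw^{(0)}_i$ and $\lambdaw^{(1)}=1/(2\eta^{(1)})$ with $\lambdab^{(1)}=0$, the updates yield $\w^{(1)}_{\nw+i}=-\w^{(1)}_i$, $\bw^{(1)}_{\nw+i}=\bw^{(0)}_i-\eta^{(1)}\aw^{(0)}_iT_b$, and $\aw^{(1)}_{\nw+i}=-\aw^{(0)}_i+\eta^{(1)}T_a$ with the \emph{same} $T_a$ as for neuron $i$ (using Lemma~\ref{lem:first_step_gen}). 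Grouping the $2\nw$ terms of $\g^{(1)}(\x;\sigmansi^{(2)})$ into $\nw$ pairs and splitting each pair's contribution into a part proportional to $\aw^{(1)}_i+\aw^{(1)}_{\nw+i}=2\eta^{(1)}T_a$ and a part proportional to $\aw^{(1)}_{\nw+i}$ times the difference of the two smoothed activations $\E_\nsi\act(\cdot+\nsi)$ --- exactly as in the main-text proof --- and using that $\act$ is $1$-Lipschitz and truncated ReLU, the per-pair contribution is at most $O(\eta^{(1)}|T_a|)(|\langle\w^{(1)}_i,\x\rangle+\bw^{(1)}_i|+\sigmansi^{(2)})+|\aw^{(1)}_{\nw+i}|(2|\langle\w^{(1)}_i,\x\rangle|+2|\eta^{(1)}\aw^{(0)}_iT_b|)$.

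Next I would pass to the good event over $(\x,y)$: by Bernstein's inequality at most $O(\pn(\mDict-\kA)+\kA)$ of the background coordinates $\hr_j$, $j\notin\A$, equal $(1-\pn)/\sigmax$ and the rest equal $-\pn/\sigmax$; and by Gaussian tails with $d$- and $\mDict$-fold union bounds (absorbed since $\kA=\Omega(\log^2(\mDict\nw/\delta))$) we have $|\ns_j|\le O(\sigmans\sqrt{\log\kA})$ for all $j\in[d]$ and $|\langle\ns,\Dict_\ell\rangle|\le O(\sigmans\sqrt{\log\kA})$ for all $\ell\in[\mDict]$. These last two bounds are themselves two of the claimed conclusions, and the event fails with probability $(d+\mDict)\exp(-\Omega(\kA))$.

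On this event I would bound the quantities entering the per-pair estimate. Writing $\langle\w^{(1)}_i,\x\rangle=\langle\wsp^{(1)}_i,\hr\rangle+\langle\w^{(1)}_i,\ns\rangle/\sigmax$, I would estimate $\langle\wsp^{(1)}_i,\hr\rangle$ by splitting into the $\A$- and $(-\A)$-parts and inserting the per-coordinate bounds of Lemma~\ref{lem:first_step_gen} ($|\wsp^{(1)}_{i,\ell}|=O(|\eta^{(1)}\aw^{(0)}_i|(\pp+\epsdev)/\sigmax)$ for $\ell\in\A$, and $O(|\eta^{(1)}\aw^{(0)}_i|{\sigmahr^2}_\ell\epsdev\sigmax)$ for $\ell\notin\A$) together with $|\hr_\ell|\le 1/\sigmax$ and the coordinate counts from the good event; and I would estimate $\langle\w^{(1)}_i,\ns\rangle=\eta^{(1)}\aw^{(0)}_i(\sum_j T_j\langle\Dict_j,\ns\rangle+\langle\gradns,\ns\rangle)$ using the $T_j$- and $\gradns$-bounds of Lemma~\ref{lem:gradient-feature_gen} and $|\langle\Dict_j,\ns\rangle|,|\ns_j|\le O(\sigmans\sqrt{\log\kA})$. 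Combining these with the prescribed $\eta^{(1)}=O(\pp/(\kA\nw\sigma_\aw\sigmax))$, $\sigma_\aw\le\sigmax^2/(\pp\kA^2)$, $\sigmans\sigmax=O(1)$, $\sigmans^2 d/\sigmax^2=O(1)$, $\mDict\inco/\sqrt d\le 1/16$ gives $|\langle\w^{(1)}_i,\x\rangle|=O(\eta^{(1)}\sigmax/\pp)$, and restricting the same computation to the $(-\A)$-part (resp.\ the $\ns$-part) yields $|\langle(\wsp^{(1)}_i)_{-\A},\hr_{-\A}\rangle|$ and $|\langle\w^{(1)}_i,\ns/\sigmax\rangle|$ of the same order --- the remaining ``furthermore'' claims. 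Finally $|\bw^{(1)}_i|\le|\bw^{(0)}_i|+|\eta^{(1)}\aw^{(0)}_iT_b|$ and $|\aw^{(1)}_{\nw+i}|\le|\aw^{(0)}_i|+O(\eta^{(1)}\sigmaw\sqrt{\log(\mDict\nw/\delta)})$ follow from Lemma~\ref{lem:first_step_gen} and $\w^{(0)}_i\in\mathcal{G}_\w(\delta)$, $\bw^{(0)}_i\in\mathcal{G}_\bw(\delta)$, $\aw^{(0)}_i\in\mathcal{G}_\aw(\delta)$.

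Substituting all of these into the per-pair bound, summing over the $\nw$ pairs and plugging in the stated hyperparameters shows $|\g^{(1)}(\x;\sigmansi^{(2)})|<1$ on the good event, hence $y\g^{(1)}(\x;\sigmansi^{(2)})<1$, with the good event having the claimed probability. As in the main text, the difficulty is not any single step but the careful accounting of the many small error terms so that their sum over $\nw$ neurons is still $o(1)$; the genuinely new obstacle here is controlling the data-noise term $\langle\w^{(1)}_i,\ns\rangle$ and the incoherence cross-terms $\cDict_{j\ell}$ simultaneously with the background-pattern contributions, which is exactly why the general setting takes the first gradient step with the very small $\eta^{(1)}$ above and the small initialization scale $\sigmaw^2=\sigmax^2/\mathrm{poly}(\mDict\nw)$.
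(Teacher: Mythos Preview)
Your proposal is correct and follows essentially the same route as the paper's proof: pair the $2\nw$ neurons via the symmetric initialization, restrict to the high-probability event over $(\x,y)$ (sparsity of background patterns plus Gaussian tail bounds on $\ns_j$ and $\langle\ns,\Dict_\ell\rangle$), bound $|\langle\w_i^{(1)},\x\rangle|$ from the first-step gradient estimates, and sum the per-pair contributions. The only cosmetic difference is that the paper organizes the bound on $|\langle\w_i^{(1)},\x\rangle|$ as a four-term expansion $\langle\sum_\ell\wsp_{i,\ell}^{(1)}\Dict_\ell+\wns,\sum_j\hr_j\Dict_j+\ns/\sigmax\rangle$ (its Claim~\ref{cla:wx}), handling the incoherence cross-terms $\cDict_{j\ell}$ inside that expansion, whereas you write $\langle\wsp_i^{(1)},\hr\rangle+\langle\w_i^{(1)},\ns\rangle/\sigmax$ and would need to fold the incoherence corrections into the per-coordinate bounds on $\wsp_{i,\ell}^{(1)}=\langle\w_i^{(1)},\Dict_\ell\rangle$; both lead to the same estimate.
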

\begin{proof}[Proof of Lemma~\ref{lem:small_output_gen}]
Note that $\w^{(0)}_{i} = \w^{(0)}_{m+i}$, $\bw^{(0)}_{i} = \bw^{(0)}_{m+i}$, and $\aw^{(0)}_{i} = \aw^{(0)}_{m+i}$. 
Then the gradient for $\w_{m+i}$ is the negation of that for $\w_{m+i}$,
the gradient for $\bw_{m+i}$ is the negation of that for $\bw_{m+i}$,
and the gradient for $\aw_{m+i}$ is the same as that for $\aw_{m+i}$.
\begin{align}
    & \quad \left| \g^{(1)}(\x; \sigmansi^{(2)})  \right|
    \\
    & = \left| \sum_{i=1}^{2\nw} \aw_i^{(1)} \E_{\nsi^{(2)}} \act(\langle \w_i^{(1)}, \x \rangle + \bw_i^{(1)} + \nsi^{(2)}_i) \right|
    \\
    & = \left| \sum_{i=1}^{\nw} \left( \aw_i^{(1)} \E_{\nsi^{(2)}} \act(\langle \w_i^{(1)}, \x \rangle + \bw_i^{(1)} + \nsi^{(2)}_i) + \aw_{m+i}^{(1)} \E_{\nsi^{(2)}} \act(\langle \w_{m+i}^{(1)}, \x \rangle + \bw_{m+i}^{(1)} + \nsi^{(2)}_{m+i}) \right) \right|
    \\
    & \le \left|\sum_{i=1}^{\nw} \left( \aw_i^{(1)} \E_{\nsi^{(2)}} \act(\langle \w_i^{(1)}, \x \rangle + \bw_i^{(1)} + \nsi^{(2)}_i) + \aw_{m+i}^{(1)} \E_{\nsi^{(2)}} \act(\langle \w_i^{(1)}, \x \rangle + \bw_i^{(1)} + \nsi^{(2)}_i) \right) \right| 
    \\
    & + \left|\sum_{i=1}^{\nw} \left( - \aw_{m+i}^{(1)} \E_{\nsi^{(2)}} \act(\langle \w_i^{(1)}, \x \rangle + \bw_i^{(1)} + \nsi^{(2)}_i) + \aw_{m+i}^{(1)} \E_{\nsi^{(2)}} \act(\langle \w_{m+i}^{(1)}, \x \rangle + \bw_{m+i}^{(1)} + \nsi^{(2)}_{i}) \right) \right|.
\end{align}    
Then we have
\begin{align}
   \left| \g^{(1)}(\x; \sigmansi^{(2)})  \right| 
    & \le \sum_{i=1}^{\nw} \left| 2\eta^{(1)} T_a \E_{\nsi^{(2)}} \act(\langle \w_i^{(1)}, \x \rangle + \bw_i^{(1)} + \nsi^{(2)}_i)\right|
    \\
    & + 
    \sum_{i=1}^{\nw} \left| \aw_{m+i}^{(1)} \right| \left( \left| \langle \w_{i}^{(1)} - \w_{m+i}^{(1)}, \x\rangle \right| + \left| \bw_{i}^{(1)} - \bw_{m+i}^{(1)} \right|\right)
    \\
    & \le \sum_{i=1}^{\nw} \left| 2\eta^{(1)} T_a \right| \left(\left| \langle \w_i^{(1)}, \x \rangle + \bw_i^{(1)} \right| + \E_{\nsi^{(2)}} \left| \nsi^{(2)}_i \right| \right) 
    \\
    & + \sum_{i=1}^{\nw} \left| \aw_{m+i}^{(1)} \right| \left( \left| \langle \w_{i}^{(1)} - \w_{m+i}^{(1)}, \x \rangle \right| + \left| \bw_{i}^{(1)} - \bw_{m+i}^{(1)} \right|\right).
\end{align}

With probability $ \ge 1- \exp(-\Omega( \kA))$, among all $j \not\in \A$, we have that at most $\pn(\mDict-\kA) $ of $\hr_j$ are $(1-{\pn}_j)/\sigmax$, while the others are $-{\pn}_j/\sigmax$. 
With probability $\ge 1 - (d+\mDict) \exp(-\Omega(\kA))$ over $\ns$, for any $j$, $|\ns_j| \le O(\sigmans \sqrt{\log(\kA)})$ and $|\langle\ns, \mDict_\ell\rangle| \le O(\sigmans \sqrt{\log(\kA)})$.  For data points with $\hr$ and $\ns$ satisfying these, we have:

\begin{claim} \label{cla:wx}
$\left| \langle \w_i^{(1)}, \x \rangle \right| \le O(\eta^{(1)}/\pp)(1 +  \sigmax + \sigmax/\sqrt{\kA} ).$
\end{claim}
\begin{proof}[Proof of Claim~\ref{cla:wx}]
\begin{align}  \label{eq:max_inner_gen}
    \left| \langle \w_i^{(1)}, \x \rangle \right| 
    & = \left| \langle \sum_{\ell=1}^\mDict \wsp_{i,\ell}^{(1)} \Dict_\ell + \wns,  \sum_{j=1}^\mDict \hr_j \Dict_j + \ns/\sigmax \rangle \right| 
    \\
    & \le \left| \langle \sum_{\ell=1}^\mDict \wsp_{i,\ell}^{(1)} \Dict_\ell,  \sum_{j=1}^\mDict \hr_j \Dict_j  \rangle \right| 
    + \left| \langle \sum_{\ell=1}^\mDict \wsp_{i,\ell}^{(1)} \Dict_\ell  ,    \ns/\sigmax \rangle \right| 
    + \left| \langle \wns,  \sum_{j=1}^\mDict \hr_j \Dict_j \rangle \right| 
    + \left| \langle   \wns,    \ns/\sigmax \rangle \right|. 
\end{align}
For each term above we bound as follows. Note that when $\sigmaw$ is sufficiently small, $ \epsdev = O(\kA\log^{1/2}(\mDict\nw/\delta)/\sqrt{ \mDict})$. Let 
\begin{align}
    & B_1 := \beta \pp /\sigmax + \epsdev/\sigmax, 
    \\ 
    & B_2 := \sigmahr^2\epsdev \sigmax = O(\epsdev/\sqrt{ \mDict}), 
    \\
    & C_1 = \frac{k}{\sigmax}, 
    \\
    & C_2 :=  \pn \mDict/\sigmax = O( \mDict /\sigmax).
\end{align} 
Then
\begin{align}
    & |\aw^{(0)}_i| B_1 C_1 = O(\log(\mDict\nw/\delta)/\kA + \log(\mDict\nw/\delta)\epsdev/(\pp \kA)) = O(1/ \pp),
    \\
    & |\aw^{(0)}_i| B_2 C_2 = O(\sigmax/ \pp),
    \\
    & |\aw^{(0)}_i|  B_1 C_2 = O(\mDict/\kA + \sqrt{\mDict}/\pp),
    \\
    & |\aw^{(0)}_i|  B_2 C_1 = O(\epsdev/(\pp \sqrt{\kA})) = O(1/ \pp),
\end{align}
Then by the assumption on $\inco$,
\begin{align}
    & \quad \left| \langle \sum_{\ell=1}^\mDict \wsp_{i,\ell}^{(1)} \Dict_\ell,  \sum_{j=1}^\mDict \hr_j \Dict_j  \rangle \right|  
    \\
    & = \left| \sum_{\ell \in \A}  \langle  \wsp_{i,\ell}^{(1)} \Dict_\ell,     \Dict_\ell \hr_\ell \rangle \right| +  \left|  \sum_{\ell \not\in \A}  \langle  \wsp_{i,\ell}^{(1)} \Dict_\ell,     \Dict_\ell \hr_\ell \rangle \right| +  \left|  \sum_{\ell \neq j}  \langle  \wsp_{i,\ell}^{(1)} \Dict_\ell,  \Dict_j \hr_j \rangle\right|
    \\
    & \le O(|\eta^{(1)} \aw^{(0)}_i|) \left( B_1 C_1   + B_2 C_2 +   \frac{\inco}{\sqrt{d}} (\kA B_1 (C_1 + C_2) + \mDict B_2 (C_1 + C_2)) \right) 
    \\
    & \le O(|\eta^{(1)} \aw^{(0)}_i|) \left( B_1 C_1   + B_2 C_2 +   \frac{\kA \inco}{\sqrt{d}}  B_1   C_2 +  B_2 C_1   \right)  
    \\
    & \le O(\eta^{(1)})(1/\pp + \sigmax/\pp + 1/\pp + 1/\pp )
    \\
    & \le O(\eta^{(1)}/\pp)(1 + \sigmax  ). 
\end{align}
By the assumption on $\sigmans$,
\begin{align}
    & \quad \left| \langle \sum_{\ell=1}^\mDict \wsp_{i,\ell}^{(1)} \Dict_\ell  ,    \ns/\sigmax \rangle \right|
    \\
    & \le  O(|\eta^{(1)} \aw^{(0)}_i|) (\kA B_1 + \mDict B_2) \frac{\sigmans \sqrt{\log(\kA)}}{\sigmax}  
    \\
    & \le O(\eta^{(1)} ) (\kA B_1 + \mDict B_2) \frac{\sigmans \sqrt{\log(\kA)}}{\sigmax} \frac{\sigmax^2 \sqrt{\log(\mDict\nw/\delta)}}{\pp \kA^2}
    \\
    & \le O(\eta^{(1)} ) \left(\sigmans \log(\mDict\nw/\delta)\left(\frac{1}{\kA} + \frac{\epsdev}{\pp \kA}\right) + \sigmans\sigmax\frac{\log(\kA)\log(\mDict\nw/\delta)}{\pp \kA}\right)
    \\
    & \le O(\eta^{(1)}/\pp). 
\end{align}
Also note that $|\gradns_j| \le O\left( \frac{ \sigmans \log^2(\mDict\nw/\delta)}{\sigmax \sqrt{ \mDict}} \right)$. Then by the assumption on $\sigmans$,
\begin{align}
    & \quad \left| \langle \wns,  \sum_{j=1}^\mDict \hr_j \Dict_j \rangle \right| 
    \\
    & \le O(|\eta^{(1)} \aw^{(0)}_i|) \times \sqrt{d} \times O\left( \frac{ \sigmans \log^2(\mDict\nw/\delta)}{\sigmax \sqrt{ \mDict}} \right) \times (C_1 + C_2)
    \\
    & \le  O(|\eta^{(1)}/\pp).
\end{align}
Finally, we have
\begin{align}
    \left| \langle   \wns,    \ns/\sigmax \rangle \right|
    & \le \sum_{j=1}^d |\wns_j| |\ns_j/\sigmax|
    \\
    & \le O(|\eta^{(1)} \aw^{(0)}_i|) \times d \times \frac{\sigmans \log^2(\mDict\nw/\delta)}{\sigmax \sqrt{ \mDict}} \frac{\sigmax \sqrt{\log(\kA)}}{\sigmax}
    \\
    & \le O(\eta^{(1)}/\pp) \frac{\sigmax}{\sqrt{\kA}}.
\end{align}
\end{proof}

We also have:
\begin{align}
    |T_a| & = O(\sigmaw\sqrt{\log(\mDict\nw/\delta)})
    \\
    \left| \bw_i^{(1)} \right| 
    & \le \left| \bw_i^{(0)} \right| + \left| \eta^{(1)} \aw_i^{(0)} T_b \right| 
    \\
    & \le \frac{\sqrt{\log(\nw/\delta)}}{k^2} + \left| \eta^{(1)} \aw_i^{(0)} \epsdev \right|.
    \\
    \E_{\nsi^{(2)}} \left| \nsi^{(2)}_i \right|  & \le O(\sigmansi^{(2)}).
\end{align}
\begin{align} 
    |\aw^{(1)}_{m+i}| & \le |\aw^{(0)}_i| + |\eta^{(1)} T_a| \le |\aw^{(0)}_i| + O(\eta^{(1)} \sigmaw \sqrt{\log(\mDict\nw/\delta)}).
    \\
    \left| \langle \w_{i}^{(1)} - \w_{m+i}^{(1)}, \x \rangle \right| 
    & = 2 \left| \langle \w_{i}^{(1)} , \x \rangle \right| = O(\eta^{(1)} \sigmax/\pp).
    \\
    \left| \bw_{i}^{(1)} - \bw_{m+i}^{(1)} \right| 
    & = 2 |\eta^{(1)} \aw^{(0)}_i T_b| = O(|\eta^{(1)} \aw^{(0)}_i |  \epsdev ).
\end{align}
Then we have
\begin{align}
    \left| \g^{(1)}(\x; \sigmansi^{(2)})  \right| 
    & \le O\left(\nw \eta^{(1)} \sigmaw\sqrt{\log(\mDict\nw/\delta)} \right) \left( \frac{\eta^{(1)} \sigmax}{\pp} + \frac{\sqrt{\log(\nw/\delta)}}{k^2} + \left| \eta^{(1)} \aw_i^{(0)}  \epsdev \right| + \sigmansi^{(2)}\right)
    \\
    & + O\left( \nw (|\aw^{(0)}_{i}| + \eta^{(1)} \sigmaw\sqrt{\log(\mDict\nw/\delta)}) \right) \left( \frac{\eta^{(1)} \sigmax}{\pp} + \left| \eta^{(1)} \aw_i^{(0)} \epsdev \right| \right) 
    \\
    & = O\left( \nw \eta^{(1)} \sigmaw   \frac{\log(\mDict\nw/\delta)}{k} + \nw |\aw^{(0)}_{i}| \left( \frac{\eta^{(1)} \sigmax}{\pp} + \left| \eta^{(1)} \aw_i^{(0)}  \epsdev \right| \right) \right) 
    \\
    & = O\left( \nw \eta^{(1)} \sigmaw   \frac{\log(\mDict\nw/\delta)}{k} + \nw |\aw^{(0)}_{i}|  \frac{\eta^{(1)} \sigmax}{\pp} + \nw |\aw^{(0)}_{i}|  \frac{\eta^{(1)} \sigmax}{\pp \sqrt{\kA}}  \right)
    \\
    &  < 1.
\end{align}
Then $\left| y\g^{(1)}(\x; \sigmansi^{(2)})  \right| < 1$.
Finally, the statement on $\left| \langle (\wsp_i^{(1)})_{-\A}, \hr_{-\A} \rangle \right|$ follows from a similar calculation on $ \left| \langle \w_i^{(1)}, \x \rangle \right|  = \left| \langle \wsp_i^{(1)}, \hr \rangle \right|$.
\end{proof}

We are now ready to analyze the gradients in the second gradient step. 

\begin{lemma}  \label{lem:gradient-feature-2_gen}
Fix $\delta \in (0,1)$ and suppose $\w^{(0)}_i \in \mathcal{G}_\w(\delta), \bw^{(0)}_i \in \mathcal{G}_\bw(\delta), \aw^{(0)}_i \in \mathcal{G}_\aw(\delta)$ for all $i \in [2\nw]$.
Let $\epsdevs := O\left( \frac{\eta^{(1)} |\aw_i^{(0)}| k (\pp+\epsdev)}{\sigmax^2 \sigmansi^{(2)}} \right) + \exp(-\Theta(k))$. 
If  $\mDict \inco/\sqrt{d} \le 1/16$, $ \sigmans \sigmax = O(1)$, $ \sigmans^2  d /\sigmax^2 = O(1)$,   $ \kA = \Omega(\log^2 (\mDict\nw/\delta))$,  
$\sigma_\aw \le \sigmax^2/(\pp k^2),$
$\eta^{(1)} = O\left( \frac{\pp}{\kA \nw \sigma_\aw \sigmax}\right)$, and $\sigmansi^{(2)} = 1/k^{3/2}$, then
\begin{align}
    \frac{\partial}{\partial \w_i}  \risk_\Ddist(\g^{(1)}; \sigmansi^{(2)}) = -\aw_i^{(1)} \left( \sum_{j=1}^{\mDict} \Dict_j T_j + \gradns \right)
\end{align}
where $T_j$ satisfies:
\begin{itemize}
    \item if $j \in A$, then $\left| T_j -  \beta \pp/\sigmax  \right| 
     \le  O(\epsdevs /\sigmax + \eta^{(1)}/ \sigmansi^{(2)} + \eta^{(1)} |\aw_i^{(0)}| \epsdev/(\sigmax\sigmansi^{(2)}))$, where $\beta \in [\Omega(1), 1]$ and depends only on $\w_i^{(0)}, \bw_i^{(0)}$; 
    \item if $j \not\in \A$, then $|T_j| \le \frac{1}{\sigmax} \exp(-\Omega(\kA)) + O( \sigmahr^2 \sigmax \epsdevs)$;
    \item $| \gradns_j  | 
    \le  O\left( \frac{\eta^{(1)}\sigmans}{\pp \sigmansi^{(2)}} \right) +  \exp(-\Omega(\kA))$.
\end{itemize}
\end{lemma}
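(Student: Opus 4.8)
The plan is to replay the argument of Lemma~\ref{lem:gradient-feature-2-full} from the orthonormal, noiseless case, while dragging along the two extra error sources already handled in the first-step analysis for the general setting (Lemma~\ref{lem:gradient-feature_gen}): the incoherence cross-terms $\cDict_{j\ell}$ with $j\ne\ell$, and the Gaussian input noise $\ns$. By Lemma~\ref{lem:graident_gen} and the decomposition $\x=\sum_j\hr_j\Dict_j+\ns/\sigmax$, the weight gradient equals $-\aw_i^{(1)}\big(\sum_j\Dict_j T_j+\gradns\big)$ where, with $\Id_\x:=\Id[y\g^{(1)}(\x;\nsi^{(2)})\le 1]$, we set $T_j=\E\{\wy_y y\,\Id_\x\,\hr_j\,\E_{\nsi^{(2)}}\Id[\langle\w_i^{(1)},\x\rangle+\bw_i^{(1)}+\nsi^{(2)}\in(0,1)]\}$ and $\gradns=\E\{\wy_y y\,\Id_\x\,(\ns/\sigmax)\,\E_{\nsi^{(2)}}\Id[\cdots]\}$, the residual orthogonal to the dictionary span. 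By Lemma~\ref{lem:small_output_gen}, $\Pr[\Id_\x=0]\le(d+\mDict)\exp(-\Omega(\kA))$; since $\kA=\Omega(\log^2(\mDict\nw d/(\delta\pp\pmin)))$ and $|\wy_y y\hr_j|\le 1/(2\pmin\sigmax)$, replacing $\Id_\x$ by $1$ perturbs each $T_j$ by $\frac{1}{\sigmax}\exp(-\Omega(\kA))$ and (via Cauchy--Schwarz in $\ns_j$) each $\gradns_j$ by $\exp(-\Omega(\kA))$, so it suffices to bound the indicator-free quantities $T_{j1},\gradns_{j1}$.

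For $j\in\A$, condition on $(\hr_\A,\ns)$, leaving only $\hr_{-\A}$ and $\nsi^{(2)}$ random in the smoothed indicator. There $\langle\w_i^{(1)},\x\rangle$ decomposes into $\langle\hr_\A,\wsp_{i,\A}^{(1)}\rangle+\langle\hr_{-\A},\wsp_{i,-\A}^{(1)}\rangle$, the incoherence cross-terms $\sum_{\ell\ne\ell'}\cDict_{\ell\ell'}\wsp_{i,\ell}^{(1)}\hr_{\ell'}$, and the noise contributions $\langle\wns_i,\ns/\sigmax\rangle$ plus diagonal $\ns$ terms. By Lemma~\ref{lem:first_step_gen} and Lemma~\ref{lem:small_output_gen}, each of $|\langle\hr_\A,\wsp_{i,\A}^{(1)}\rangle|$, the cross-terms, the noise terms, and the bias drift $|\bw_i^{(1)}-\bw_i^{(0)}|$ is $\ll\sigmansi^{(2)}=1/\kA^{3/2}$, while $|\langle\w_i^{(1)},\x\rangle|=O(\eta^{(1)}\sigmax/\pp)=O(1/\kA)$ controls the mass of the smoothed indicator near the right endpoint $1$. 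Gaussian anti-concentration of $\nsi^{(2)}$ then removes the $\hr_\A$-dependent and incoherence parts and replaces $\bw_i^{(1)}$ by $\bw_i^{(0)}$, at total cost $O(\epsdevs/\sigmax+\eta^{(1)}/\sigmansi^{(2)}+\eta^{(1)}|\aw_i^{(0)}|\epsdev/(\sigmax\sigmansi^{(2)}))$. Afterwards the indicator is independent of $\hr_\A$, so $T_{j1}\approx\E_{\hr_\A}\{\wy_y y\hr_j\}\cdot\E_{\hr_{-\A},\ns}[I'_a]$ where $I'_a$ is the $\hr_\A$-free version of the smoothed indicator; by Lemma~\ref{lem:label_gen} the first factor is $\pp/\sigmax$, and the second further reduces to $\beta=\Pr_{\nsi^{(2)}}[\nsi^{(2)}\in(-\bw_i^{(0)},1-\bw_i^{(0)})]=\Omega(1)$, depending only on $\bw_i^{(0)}$. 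This gives the stated estimate for $|T_j-\beta\pp/\sigmax|$.

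For $j\notin\A$, condition on $(\hr_B,\ns)$ with $B=\A\cup\{j\}$ and perform the same deletion (cost $O(\epsdevs)$); after it the indicator no longer depends on $\hr_\A$, and since $\hr_\A$ is independent of $(\hr_{-\A},\ns)$ by assumption (A2) and $\E_{\hr_\A}\{\wy_y y\}=0$ by Lemma~\ref{lem:label_gen}, the leading term vanishes, leaving $|T_j|\le\frac{1}{\sigmax}\exp(-\Omega(\kA))+O(\sigmahr^2\sigmax\,\epsdevs)$ after $\E|\hr_j|=O(\sigmahr^2\sigmax)$. For $\gradns_j$ I condition on $(\hr,\ns_{-j})$: the only $\ns_j$-dependence of the smoothed indicator enters through the $j$-th coordinate of $\w_i^{(1)}$ times $\ns_j/\sigmax$, which on the typical event $|\ns_j|=O(\sigmans\sqrt{\log\kA})$ (probability $1-(d+\mDict)\exp(-\Omega(\kA))$) is $\ll\sigmansi^{(2)}$ by the $\ell_\infty$ bound on $\w_i^{(1)}$ from Lemma~\ref{lem:first_step_gen}; deleting it costs $O(\eta^{(1)}\sigmans/(\pp\sigmansi^{(2)}))$ after multiplying by $\E|\wy_y y\ns_j/\sigmax|=O(\sigmans/\sigmax)$, and the remaining term is $\E_{\ns_j}[\ns_j]\cdot(\cdots)=0$ by independence of $\ns_j$. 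Hence $|\gradns_j|\le O(\eta^{(1)}\sigmans/(\pp\sigmansi^{(2)}))+\exp(-\Omega(\kA))$.

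The main obstacle is the simultaneous bookkeeping of all the perturbations fed to the anti-concentration step: one must check that \emph{every} such term --- the incoherence cross-terms $\sum_{\ell\ne\ell'}\cDict_{\ell\ell'}\wsp_{i,\ell}^{(1)}\hr_{\ell'}$, the input-noise terms $\langle\wns_i,\ns/\sigmax\rangle$ and their per-coordinate pieces, and the bias drift $|\bw_i^{(1)}-\bw_i^{(0)}|$ --- is $o(\sigmansi^{(2)})$, so that it is absorbed into $\epsdevs$. This rests on combining the step-one smallness estimates of Lemma~\ref{lem:first_step_gen} and Lemma~\ref{lem:small_output_gen} with the hypotheses $\mDict\inco/\sqrt d\le 1/16$, $\sigmans\sigmax=O(1)$, and $\sigmans^2 d/\sigmax^2=O(1)$. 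The only genuinely new ingredient relative to Lemma~\ref{lem:gradient-feature-2-full} is the explicit per-coordinate bound on $\gradns$, which requires the extra conditioning on $\ns_{-j}$ together with a split on the typical-noise event, mirroring exactly the $\gradns$ computation already carried out for the first step in Lemma~\ref{lem:gradient-feature_gen}.
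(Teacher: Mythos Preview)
Your plan is correct and matches the paper's overall structure: strip the indicator $\Id_\x$ via Lemma~\ref{lem:small_output_gen}, then use Gaussian anti-concentration of $\nsi^{(2)}$ to peel off the $\hr_\A$-dependent pieces and reduce to the $\beta$ factor, and finally kill the leading term via $\E_{\hr_\A}\{\wy_y y\}=0$ (for $j\notin\A$) or $\E_{\ns_j}[\ns_j]=0$ (for $\gradns_j$).

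Two places where the paper takes a slightly cleaner path than you do. First, you decompose $\langle\w_i^{(1)},\x\rangle$ via the coefficient representation $\w_i^{(1)}=\sum_\ell\wsp_{i,\ell}^{(1)}\Dict_\ell+\wns$ from Lemma~\ref{lem:first_step_gen}, which generates genuine incoherence cross-terms $\sum_{\ell\ne\ell'}\cDict_{\ell\ell'}\wsp_{i,\ell}^{(1)}\hr_{\ell'}$ that you then have to bound separately. The paper instead works directly with $\wsp_\ell:=\langle\w_i^{(1)},\Dict_\ell\rangle$, for which $\langle\w_i^{(1)},\Dict\hr\rangle=\langle\hr,\wsp\rangle$ holds \emph{exactly}; all the incoherence has already been absorbed into the size bounds $|\langle\hr,\wsp\rangle|,|\langle\hr_{-\A},\wsp_{-\A}\rangle|=O(\eta^{(1)}\sigmax/\pp)$ provided by Lemma~\ref{lem:small_output_gen}, so no extra cross-term bookkeeping is needed. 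Second, for $\gradns_j$ you delete only the single-coordinate piece $(\w_i^{(1)})_j\ns_j/\sigmax$, which forces you to invoke an $\ell_\infty$ bound on $\w_i^{(1)}$ that Lemma~\ref{lem:first_step_gen} does not give directly; the paper instead deletes the \emph{entire} $\iota=\langle\w_i^{(1)},\ns/\sigmax\rangle$ from the indicator --- its size $O(\eta^{(1)}\sigmax/\pp)$ is already supplied by Lemma~\ref{lem:small_output_gen} --- after which the remaining indicator is independent of all of $\ns$, and $\E_\ns[\ns_j]=0$ finishes immediately. Both of your variants can be pushed through, but the paper's choices avoid the extra estimates you flag as the ``main obstacle''.
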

\begin{proof}[Proof of Lemma~\ref{lem:gradient-feature-2_gen}]
Consider one neuron index $i$ and omit the subscript $i$ in the parameters.
By Lemma~\ref{lem:small_output_gen}, with probability at least $1 - (d+\mDict)\exp(-\Omega(\kA)) = 1 - \exp(-\Omega(\kA))$ over $(\x, y)$, $ y \g^{(1)}(\x; \nsi^{(2)}) > 1$ and furthermore, for any $i \in [2\nw]$, $\left| \langle \w_i^{(1)}, \ns/\sigmax \rangle \right| = O(\eta^{(1)}\sigmax/\pp)$,
$  \left| \langle \wsp_i^{(1)}, \hr \rangle \right| = O(\eta^{(1)}\sigmax/\pp) $, 
and 
$\left| \langle (\wsp_i^{(1)})_{-\A}, \hr_{-\A} \rangle \right| = O(\eta^{(1)}\sigmax/\pp) $, and for any $j \in [d], \ell \in [\mDict]$, $|\ns_j| \le O(\sigmans \sqrt{\log(\kA)})$ and $|\langle\ns, \mDict_\ell\rangle| \le O(\sigmans \sqrt{\log(\kA)})$. Let $\Id_\x$ be the indicator of this event.
\begin{align}
    & \quad \frac{\partial}{\partial \w}  \riskw_\Ddist(\g^{(1)}; \sigmansi^{(2)})  
    \\ 
    & = - \aw^{(1)} \E_{(\x,y) \sim \Ddist} \left\{ \wy_y y \Id_\x \E_{\nsi^{(2)}} \Id[\langle \w^{(1)}, \x \rangle + \bw^{(1)} + \nsi^{(2)} \in (0,1)] \x \right \}  
    \\
    & = - \aw^{(1)} \sum_{j=1}^{\mDict} \Dict_j \underbrace{ \E_{(\x,y) \sim \Ddist, \nsi^{(2)}} \left\{ \wy_y y  \Id_\x \Id[\langle \w^{(1)}, \x \rangle + \bw^{(1)} + \nsi^{(2)} \in (0,1)] \hr_j \right \}  }_{:= T_j}
    \\
    & \quad - \aw^{(1)}  \underbrace{ \E_{(\x,y) \sim \Ddist, \nsi^{(2)}} \left\{ \frac{\wy_y y \ns}{\sigmax} \Id_\x \Id[\langle \w^{(1)}, \x \rangle + \bw^{(1)} + \nsi^{(2)} \in (0,1)]  \right \}  }_{:= \gradns}.
\end{align} 
Let $T_{j1} := \E_{(\x,y) \sim \Ddist, \nsi^{(2)}} \left\{ \wy_y  y \Id[\langle \w^{(1)}, \x \rangle + \bw^{(1)} + \nsi^{(2)} \in (0,1)] \hr_j \right \}$. We have 
\begin{align}
    & \quad \left|T_j -  T_{j1}\right|
    \\
    & =  \left|\E_{(\x,y) \sim \Ddist, \nsi^{(2)}} \left\{ \wy_y y  (1-\Id_\x) \Id[\langle \w^{(1)}, \x \rangle + \bw^{(1)} + \nsi^{(2)} \in (0,1)] \hr_j \right \} \right|
    \\
    & \le \frac{1}{\sigmax} \exp(-\Omega(\kA)).
\end{align}
Similarly, let $\gradns' := \E_{(\x,y) \sim \Ddist, \nsi^{(2)}} \left\{ \frac{\wy_y y \ns}{\sigmax}   \Id[\langle \w^{(1)}, \x \rangle + \bw^{(1)} + \nsi^{(2)} \in (0,1)]  \right \}$. We have
\begin{align}
    & \quad \left|\gradns  -  \gradns' \right|
    \\
    & =  \left|\E_{(\x,y) \sim \Ddist, \nsi^{(2)}} \left\{ \frac{\wy_y y \ns}{\sigmax} (1 - \Id_\x)  \Id[\langle \w^{(1)}, \x \rangle + \bw^{(1)} + \nsi^{(2)} \in (0,1)]  \right \} \right|
    \\
    & \le \frac{\sigmans}{\sigmax} \exp(-\Omega(\kA)).
\end{align}
So it is sufficient to bound $T_{j1}$ and $\gradns'$. For simplicity, we use $\wsp$ as a shorthand for $\wsp^{(1)}_i$.

First, consider $j \in \A$.  
\begin{align}
    T_{j1}
    & = \E_{(\x,y) \sim \Ddist, \nsi^{(2)}} \left\{ \wy_y y  \Id[\langle \w^{(1)}, \x \rangle + \bw^{(1)} + \nsi^{(2)} \in (0,1)] \hr_j \right \}
    \\
    & = \E_{\hr_\A} \left\{  \wy_y y \hr_j \Pr_{\hr_{-\A}, \nsi^{(2)}} \left[   \langle \hr, \wsp \rangle + \iota + \bw^{(1)} + \nsi^{(2)} \in (0,1)  \right] \right\}
\end{align}
where $\iota := \langle \w^{(1)}, \ns/\sigmax \rangle$.
Let 
\begin{align}
    I_a  & := \Pr_{\nsi^{(2)}} \left[   \langle \hr, \wsp \rangle + \iota + \bw^{(1)} + \nsi^{(2)}\in (0,1) \right],
    \\
    I'_a & := \Pr_{\nsi^{(2)}} \left[   \langle \hr_{-\A}, \wsp_{-\A} \rangle + \iota + \bw^{(1)} + \nsi^{(2)} \in (0,1)  \right].
\end{align} 
By the property of the Gaussian $\nsi^{(2)}$, that $|\langle \hr_\A, \wsp_\A \rangle| = O(\frac{\eta^{(1)} |\aw_i^{(0)}| \kA (\pp+\epsdev)}{\sigmax^2 } ) $, and that $|\iota| = |\langle \w^{(1)}_i, \ns/\sigmax\rangle|, |\langle \hr, \wsp \rangle|, |\langle \hr_{-\A}, \wsp_{-\A} \rangle|$ are all $O(\eta^{(1)}\sigmax/\pp) < O(1/\kA)$, we have
\begin{align}
    & \quad |I_a - I'_a |  
    \\
    & \le \left| \Pr_{\nsi^{(2)}} \left[   \langle \hr, \wsp \rangle  + \iota + \bw^{(1)} + \nsi^{(2)} \ge 0 \right] - \Pr_{\nsi^{(2)}} \left[   \langle \hr_{-\A}, \wsp_{-\A} \rangle + \iota  + \bw^{(1)} + \nsi^{(2)} \ge 0  \right] \right| 
    \\
    & + \Pr_{\nsi^{(2)}} \left[   \langle \hr, \wsp \rangle  + \iota + \bw^{(1)} + \nsi^{(2)} \ge 1 \right]  + \Pr_{\nsi^{(2)}} \left[   \langle \hr_{-\A}, \wsp_{-\A} \rangle  + \iota + \bw^{(1)} + \nsi^{(2)} \ge 1  \right]
    \\
    & = O\left( \frac{\eta^{(1)} |\aw_i^{(0)}| \kA (\pp+\epsdev)}{\sigmax^2 \sigmansi^{(2)}} \right) + \exp(-\Theta(k)) = O(\epsdevs).
\end{align} 
This leads to
\begin{align}
    & \quad \left| T_{j1} - \E_{\hr_\A, \hr_{-\A}}  \left\{ \wy_y y  \hr_j I'_a \right \} \right| 
    \\
    & \le  \E_{\hr_\A}  \left\{ \left|  \wy_y  y  \hr_j  \right| \left| \E_{ \hr_{-\A}} (I_a - I'_a ) \right| \right \}
    \\
    & \le O(\epsdevs)  \E_{\hr_\A}  \left\{ \left|  \wy_y y  \hr_j   \right| \right\}
    \\
    & \le O(\epsdevs/\sigmax)
\end{align}
where the last step is from Lemma~\ref{lem:label_gen}. 
Furthermore,
\begin{align}
    & \quad \E_{\hr_\A, \hr_{-\A}}  \left\{ \wy_y y  \hr_j I'_a  \right \}
    \\
    & = \E_{\hr_\A}  \left\{ \wy_y y  \hr_j \right \} \E_{ \hr_{-\A} }[I'_a]
    \\
    & = \E_{\hr_\A}  \left\{ \wy_y y  \hr_j \right \} \Pr_{\hr_{-\A}, \nsi^{(2)}} \left[   \langle \hr_{-\A}, \wsp_{-\A} \rangle + \iota + \bw^{(1)} + \nsi^{(2)} \in (0,1)  \right].
\end{align} 
By Lemma~\ref{lem:small_output}, we have $|\langle \hr_{-\A}, \wsp_{-\A} \rangle + \iota | \le  O(\eta^{(1)}\sigmax/\pp)$.  Also, $|b^{(1)} - b^{(0)}| \le O(\eta^{(1)} |\aw_i^{(0)}| \epsdev)$. By the property of $\nsi^{(2)}$, 
\begin{align}
    & \quad \left| \Pr_{\nsi^{(2)}} \left[   \langle \hr_{-\A}, \wsp_{-\A} \rangle  + \iota + \bw^{(1)} + \nsi^{(2)} \in (0,1)  \right] - \Pr_{ \nsi^{(2)}} \left[   \bw^{(0)} + \nsi^{(2)} \in (0,1)  \right] \right|
    \\ 
    & \le O(\eta^{(1)}\sigmax/(\pp \sigmansi^{(2)})) + O(\eta^{(1)} |\aw_i^{(0)}| \epsdev/\sigmansi^{(2)}). 
\end{align}
On the other hand,
\begin{align}
    \beta := \Pr_{\hr_{-\A}, \nsi^{(2)}} \left[   \bw^{(0)} + \nsi^{(2)} \in (0,1)  \right] 
    & = \Pr_{\nsi^{(2)}} \left[  \nsi^{(2)} \in (-\bw^{(0)}, 1-\bw^{(0)}) \right]
    \\
    & = \Omega(1)
\end{align}
and $\beta$ only depends on $\bw^{(0)}$. 
By Lemma~\ref{lem:label_gen}, $ \E_{\hr_\A}  \left\{ \wy_y y  \hr_j \right \} = \pp/\sigmax$. Therefore, 
\begin{align}
    \left| T_{j1} -  \beta \pp /\sigmax \right| 
    & \le O(\epsdevs /\sigmax) +O(\eta^{(1)}/  \sigmansi^{(2)} ) + O(\eta^{(1)} |\aw_i^{(0)}| \epsdev/(\sigmax\sigmansi^{(2)})).
\end{align}  

Now, consider $j \not \in \A$. Let $B$ denote $\A \cup \{j\}$.
\begin{align}
    T_{j1} & = \E_{(\x,y) \sim \Ddist, \nsi^{(2)}} \left\{ \wy_y  y  \hr_j \Id \left[ \langle \hr , \wsp \rangle + \iota + \bw^{(1)} + \nsi^{(2)}  \in (0,1) \right]  \right \}
    \\
    & = \E_{\hr_B} \E_{\hr_{-B}, \ns, \nsi^{(2)}} \left\{  \wy_y  y  \hr_j \Id \left[ \langle \hr , \wsp \rangle  + \iota + \bw^{(1)} + \nsi^{(2)}  \in (0,1) \right]  \right \}
    \\
    & = \E_{\hr_B} \left\{  \wy_y  y   \hr_j  \Pr_{\hr_{-B}, \ns, \nsi^{(2)}} \left[ \langle \hr, \wsp \rangle  + \iota + \bw^{(1)} + \nsi^{(2)}  \in (0,1) \right]  \right \}.
\end{align}
Let 
\begin{align}
    I_b  & := \Pr_{\nsi^{(2)}} \left[ \langle \hr, \wsp \rangle + \iota + \bw^{(1)} + \nsi^{(2)} \in (0,1) \right],
    \\
    I'_b & := \Pr_{\nsi^{(2)}} \left[ \langle \hr_{-B}, \wsp_{-B} \rangle + \iota + \bw^{(1)} + \nsi^{(2)} \in (0,1) \right].
\end{align} 
Similar as above, we have $ | I_b - I'_b| \le \epsdevs$. Then by Lemma~\ref{lem:label_gen},
\begin{align}
     & \quad \left| T_{j1} - \E_{\hr_B, \hr_{-B}, \ns}  \left\{ \wy_y y \hr_j  I'_b  \right\} \right| 
     \\
     & \le   \E_{\hr_B}  \left\{ | \wy_y y \hr_j| |\E_{ \hr_{-B}, \ns} (I_b - I'_b)|  \right \}   
     \\
     & \le  O(\epsdevs)   \E_{\hr_\A}  \left\{ | \wy_y y |\right\} \E_{\hr_j} \left\{ |\hr_j |   \right \}
     \\
     & \le O(\epsdevs) \times   O(\sigmahr^2 \sigmax )
     \\
     & = O(\sigmahr^2 \sigmax \epsdevs ).
\end{align}
Furthermore, 
\begin{align}
     \E_{\hr_B, \hr_{-B}, \ns}  \left\{ \wy_y y \hr_j  I'_b  \right\}  
     = \E_{\hr_\A}  \left\{ \wy_y y   \right\} \E_{\hr_j}  \left\{\hr_j\right\} \E_{ \hr_{-B} }[I'_b]
     = 0.
\end{align} 
Therefore, 
\begin{align}
    \left| T_{j1}  \right| 
    & \le O(\sigmahr^2 \sigmax\epsdevs ).
\end{align}

Finally, consider $\gradns'_j$. 
\begin{align}
    \gradns'_j & = \E_{(\x,y) \sim \Ddist, \nsi^{(2)}} \left\{ \frac{\wy_y y \ns_j}{\sigmax} \Id[\langle \w^{(1)}, \x \rangle + \bw^{(1)} + \nsi^{(2)} \in (0,1)]  \right \}
    \\
    & = \E_{\hr_\A, \hr_{-\A}, \ns, \nsi^{(2)}} \left\{ \frac{\wy_y y \ns_j}{\sigmax} \Id[ \langle \hr, \wsp \rangle + \iota  + \bw^{(1)} + \nsi^{(2)} \in (0,1)]  \right \}  
    \\
    & = \E_{\hr_\A,  \hr_{-\A}, \ns} \left\{ \frac{\wy_y y \ns_j}{\sigmax} \Pr_{\nsi^{(2)}}[ \langle \hr, \wsp \rangle + \iota + \bw^{(1)} + \nsi^{(2)} \in (0,1)]  \right \} 
\end{align} 
Let 
\begin{align}
    I_j  & := \Pr_{\nsi^{(2)}} \left[ \langle \hr, \wsp \rangle + \iota  + \bw^{(0)} + \nsi^{(1)} \in (0,1) \right],
    \\
    I'_j & := \Pr_{\nsi^{(2)}} \left[ \langle \hr, \wsp \rangle +     \bw^{(0)} + \nsi^{(1)} \in (0,1) \right].
\end{align} 
Since $|\iota| \le O(\eta^{(1)}\sigmax/\pp)$, we have $|I_j - I'_j| \le O(\eta^{(1)}\sigmax/(\pp \sigmansi^{(2)}))$. 
Then 
\begin{align}
    & \quad \left| \gradns'_j - \E_{\hr_\A,  \hr_{-\A}, \ns} \left\{ \frac{\wy_y y \ns_j}{\sigmax} I'_j  \right \} \right|
    \\
    & = \left| \E_{\hr_\A,  \hr_{-\A}, \ns} \left\{ \frac{\wy_y y \ns_j}{\sigmax} (I_j - I'_j)  \right \} \right|
    \\
    & \le O(\eta^{(1)}\sigmax/(\pp \sigmansi^{(2)}))  \E_{\hr_\A,  \hr_{-\A}, \ns} \left|\frac{\wy_y y \ns_j}{\sigmax}    \right|
    \\
    & \le O(\eta^{(1)}\sigmax/(\pp \sigmansi^{(2)}))  \E_{\hr_\A    } \left| \wy_y y      \right| \E_{ \ns} \left|\frac{  \ns_j}{\sigmax}    \right|
    \\
    & \le O(\eta^{(1)}\sigmax/(\pp \sigmansi^{(2)})) \times 1 \times    \frac{  \sigmans}{\sigmax}    
    \\
    & \le O(\eta^{(1)}\sigmans/(\pp \sigmansi^{(2)})).    
\end{align}
Furthermore, 
\begin{align}
     \E_{\hr_\A,  \hr_{-\A}, \ns} \left\{ \frac{\wy_y y \ns_j}{\sigmax} I'_j  \right \}
     = \E_{\hr_\A,  \hr_{-\A} } \left\{ \frac{\wy_y y }{\sigmax} I'_j  \right \} \E_{\ns } \left\{   \ns_j  \right \}
     = 0.
\end{align} 
Therefore, 
\begin{align}
    \left| \gradns_j  \right| 
    & \le  O\left( \frac{\eta^{(1)}\sigmans}{\pp \sigmansi^{(2)}} \right) +  \exp(-\Omega(\kA)).
\end{align}
\end{proof}

\begin{lemma} \label{lem:gradient-feature-b-2_gen}
Under the same assumptions as in Lemma~\ref{lem:gradient-feature-2_gen}, 
\begin{align}
    \frac{\partial}{\partial \bw}  \riskb_\Ddist(\g^{(1)}; \sigmansi^{(2)}) = -\aw_i^{(1)} T_b
\end{align}
where $|T_b| \le   \exp(-\Omega(\kA)) + O(\epsdevs)$.
\end{lemma}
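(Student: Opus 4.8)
The statement to prove is Lemma~\ref{lem:gradient-feature-b-2_gen}, and the plan is to run the exact argument of Lemma~\ref{lem:gradient-feature-b-2} from the main text, but for the weighted loss $\riskw_\Ddist$ and with the Gaussian input noise $\ns$ present, reusing wholesale the estimates already established for the second gradient step in Lemma~\ref{lem:gradient-feature-2_gen}. First I would fix a neuron index $i$ and drop the subscript, and write out $\partial/\partial\bw\,\riskw_\Ddist(\g^{(1)};\sigmansi^{(2)})$ via Lemma~\ref{lem:graident_gen}. By Lemma~\ref{lem:small_output_gen}, $\Pr[y\g^{(1)}(\x;\nsi^{(2)})>1]\le(d+\mDict)\exp(-\Omega(\kA))$, so introducing $\Id_\x:=\Id[y\g^{(1)}(\x;\nsi^{(2)})\le 1]$ gives $\partial/\partial\bw\,\riskw_\Ddist(\g^{(1)};\sigmansi^{(2)})=-\aw_i^{(1)}T_b$ with $T_b:=\E\{\wy_y y\,\Id_\x\,\E_{\nsi^{(2)}}\Id[\langle\w^{(1)},\x\rangle+\bw^{(1)}+\nsi^{(2)}\in(0,1)]\}$. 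Defining $T_{b1}$ by dropping $\Id_\x$, and using $\wy_y\le 1/(2\pmin)$ together with the hypothesis $\kA=\Omega(\log^2(\mDict\nw d/(\delta\pp\pmin)))$ (so that $\exp(-\Omega(\kA))$ beats every polynomial in $d,\mDict$ and the factor $1/\pmin$), one gets $|T_b-T_{b1}|\le\frac{1}{2\pmin}(d+\mDict)\exp(-\Omega(\kA))=\exp(-\Omega(\kA))$.

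Next I would bound $T_{b1}$. Conditioning on $\hr_\A$ and setting $\iota:=\langle\w^{(1)},\ns/\sigmax\rangle$, write $T_{b1}=\E_{\hr_\A}\{\wy_y y\,\Pr_{\hr_{-\A},\ns,\nsi^{(2)}}[\langle\hr,\wsp\rangle+\iota+\bw^{(1)}+\nsi^{(2)}\in(0,1)]\}$, and introduce $I_b:=\Pr_{\nsi^{(2)}}[\langle\hr,\wsp\rangle+\iota+\bw^{(1)}+\nsi^{(2)}\in(0,1)]$ and $I'_b:=\Pr_{\nsi^{(2)}}[\langle\hr_{-\A},\wsp_{-\A}\rangle+\iota+\bw^{(1)}+\nsi^{(2)}\in(0,1)]$. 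Exactly as in the proof of Lemma~\ref{lem:gradient-feature-2_gen} — using $|\langle\hr_\A,\wsp_\A\rangle|=O(\eta^{(1)}|\aw_i^{(0)}|\kA(\pp+\epsdev)/\sigmax^2)$, that $|\iota|$, $|\langle\hr,\wsp\rangle|$, $|\langle\hr_{-\A},\wsp_{-\A}\rangle|$ are all $O(\eta^{(1)}\sigmax/\pp)<O(1/\kA)$ by Lemma~\ref{lem:small_output_gen}, and the anti-concentration/tail properties of the Gaussian $\nsi^{(2)}$ — one obtains $|\E_{\hr_{-\A},\ns}(I_b-I'_b)|\le O(\epsdevs)$. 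Then $|T_{b1}-\E_{\hr_\A,\hr_{-\A},\ns}\{\wy_y y\,I'_b\}|\le\E_{\hr_\A}\{|\wy_y y|\,|\E_{\hr_{-\A},\ns}(I_b-I'_b)|\}\le O(\epsdevs)\,\E_{\hr_\A}|\wy_y y|=O(\epsdevs)$ by Lemma~\ref{lem:label_gen}. Finally the leading term vanishes: $I'_b$ does not depend on $\hr_\A$, so $\E_{\hr_\A,\hr_{-\A},\ns}\{\wy_y y\,I'_b\}=\E_{\hr_\A}\{\wy_y y\}\cdot\E_{\hr_{-\A},\ns}[I'_b]=0$ since $\E_{\hr_\A}\{\wy_y y\}=0$ (Lemma~\ref{lem:label_gen}). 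Combining the pieces, $|T_{b1}|\le O(\epsdevs)$, hence $|T_b|\le\exp(-\Omega(\kA))+O(\epsdevs)$, which is the claim.

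The only genuinely nonroutine point is the bookkeeping around the class weights $\wy_y$: the reweighting is precisely what keeps $\E_{\hr_\A}\{\wy_y y\}=0$ in the imbalanced setting (this cancellation is what kills the main term), and one must verify that every $\exp(-\Omega(\kA))$ tail still dominates after multiplication by $1/\pmin$ and by $d,\mDict$ — both of which are already delivered by the theorem's hypotheses on $\kA$ and by Lemma~\ref{lem:label_gen}. Beyond that, no new estimate is needed: the $|I_b-I'_b|\le O(\epsdevs)$ bound and the smallness of $\iota$, $\langle\hr,\wsp\rangle$ are inherited verbatim from Lemma~\ref{lem:gradient-feature-2_gen} and Lemma~\ref{lem:small_output_gen}, so the proof is a short specialization of those results.
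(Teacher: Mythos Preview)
Your proposal is correct and follows essentially the same argument as the paper: split off the indicator $\Id_\x$ via Lemma~\ref{lem:small_output_gen} to pass from $T_b$ to $T_{b1}$, then compare $I_b$ to $I'_b$ exactly as in Lemma~\ref{lem:gradient-feature-2_gen} to pick up the $O(\epsdevs)$ error, and kill the main term using $\E_{\hr_\A}\{\wy_y y\}=0$ from Lemma~\ref{lem:label_gen}. Your explicit bookkeeping of the $(d+\mDict)$ and $1/\pmin$ factors (absorbed into $\exp(-\Omega(\kA))$ by the assumption on $\kA$) is slightly more careful than the paper's presentation but is the same reasoning.
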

\begin{proof}[Proof of Lemma~\ref{lem:gradient-feature-b-2_gen}]
Consider one neuron index $i$ and omit the subscript $i$ in the parameters.
By Lemma~\ref{lem:small_output_gen}, $\Pr[y \g^{(1)}(\x; \nsi^{(2)}) > 1] \le \exp(-\Omega(\kA))$. Let $\Id_\x = \Id[y \g^{(1)}(\x; \nsi^{(2)}) \le 1]$.
\begin{align}
    & \quad \frac{\partial}{\partial \bw}  \riskw_\Ddist(\g^{(1)}; \sigmansi^{(2)})  
    \\ 
    & = - \aw^{(1)} \underbrace{  \E_{(\x,y) \sim \Ddist} \left\{ 
    \wy_y y \Id_\x \E_{\nsi^{(2)}} \Id[\langle \w^{(1)}, \x \rangle + \bw^{(1)} + \nsi^{(2)} \in (0,1)]  \right \}  }_{:= T_b}. 
\end{align} 
Let $T_{b1} := \E_{(\x,y) \sim \Ddist, \nsi^{(2)}} \left\{ \wy_y y \Id[\langle \w^{(1)}, \x \rangle + \bw^{(1)} + \nsi^{(2)} \in (0,1)] \right \}$. We have
\begin{align}
    & \quad \left|T_b -  T_{b1}\right|
    \\
    & =  \left|\E_{(\x,y) \sim \Ddist, \nsi^{(2)}} \left\{ \wy_y y  (1-\Id_\x) \Id[\langle \w^{(1)}, \x \rangle + \bw^{(1)} + \nsi^{(2)} \in (0,1)] \right \} \right|
    \\
    & \le   \exp(-\Omega(\kA)).
\end{align}
So it is sufficient to bound $T_{b1}$. For simplicity, we use $\wsp$ as a shorthand for $\wsp^{(1)}_i$.
\begin{align}
    T_{b1} & = \E_{(\x,y) \sim \Ddist, \nsi^{(2)}} \left\{  \wy_y y  \Id \left[ \langle \hr , \wsp \rangle + \iota + \bw^{(1)} + \nsi^{(2)}  \in (0,1) \right]  \right \}
    \\
    & = \E_{\hr_A} \E_{\hr_{-A}, \nsi^{(2)}} \left\{  \wy_y  y  \Id \left[ \langle \hr , \wsp \rangle + \iota  + \bw^{(1)} + \nsi^{(2)}  \in (0,1) \right]  \right \}
    \\
    & = \E_{\hr_A} \left\{  \wy_y y     \Pr_{\hr_{-A}, \nsi^{(2)}} \left[ \langle \hr, \wsp \rangle + \iota + \bw^{(1)} + \nsi^{(2)}  \in (0,1) \right]  \right \}.
\end{align} 
Let 
\begin{align}
    I_b  & := \Pr_{\nsi^{(2)}} \left[ \langle \hr, \wsp \rangle + \iota  + \bw^{(1)} + \nsi^{(2)} \in (0,1) \right],
    \\
    I'_b & := \Pr_{\nsi^{(2)}} \left[ \langle \hr_{-\A}, \wsp_{-A} \rangle  + \iota + \bw^{(1)} + \nsi^{(2)} \in (0,1) \right].
\end{align} 
Similar as in Lemma~\ref{lem:gradient-feature-2-full}, we have $ | I_b - I'_b| \le \epsdevs$. Then by Lemma~\ref{lem:label_gen},
\begin{align}
     & \quad \left| T_{b1} - \E_{\hr_\A, \hr_{-\A}}  \left\{ \wy_y y   I'_b  \right\} \right| 
     \\
     & =   \left|  \E_{\hr_\A, \hr_{-\A }}  \left\{ \wy_y y  (I_b -  I'_b)  \right\} \right| 
     \\
     & =   O(\epsdevs)  \E_{\hr_\A, \hr_{-\A}}  \left|  \wy_y y   \right| 
     \\
     & \le  O(\epsdevs).
\end{align}
Furthermore, 
\begin{align}
     \E_{\hr_\A, \hr_{-\A}}  \left\{\wy_y y  I'_b  \right\}  
     = \E_{\hr_\A}  \left\{ \wy_y y   \right\}   \E_{ \hr_{-\A} }[I'_b]
     = 0.
\end{align} 
Therefore, 
$
    \left| T_{b1}  \right| 
    \le O( \epsdevs)
$
and the statement follows. 
\end{proof}

\begin{lemma} \label{lem:gradient-feature-a-2_gen}
Under the same assumptions as in Lemma~\ref{lem:gradient-feature-2_gen},
\begin{align}
    \frac{\partial}{\partial \aw_i}  \riskb_\Ddist(\g^{(1)}; \sigmansi^{(2)}) = -  T_a
\end{align}
where $|T_a| =  O\left(\frac{\eta^{(1)}\sigmax }{\pp \pmin}\right) +  \exp(-\Omega(\kA))     \mathrm{poly}\left( \frac{d\mDict}{\pmin} \right) $.
\end{lemma}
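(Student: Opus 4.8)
The plan is to adapt the proof of Lemma~\ref{lem:gradient-feature-a-2} to the class-weighted loss $\riskw_\Ddist$ used in the first two steps, to the incoherent dictionary, and to the Gaussian input noise $\ns$. Fix a neuron index $i$ and suppress it. By Lemma~\ref{lem:small_output_gen}, with probability $1-\exp(-\Omega(\kA))$ over $(\x,y)$ we have $y\g^{(1)}(\x;\nsi^{(2)})<1$; write $\Id_\x$ for the indicator of this event. Using the gradient formula of Lemma~\ref{lem:graident_gen},
\[
  \frac{\partial}{\partial\aw}\riskw_\Ddist(\g^{(1)};\sigmansi^{(2)}) = -\underbrace{\E_{(\x,y)\sim\Ddist,\nsi^{(2)}}\left\{\wy_y\,y\,\Id_\x\,\act\!\left(\langle\w^{(1)},\x\rangle+\bw^{(1)}+\nsi^{(2)}\right)\right\}}_{=:T_a}.
\]
Since $\act\in[0,1]$ and $|\wy_y|\le 1/(2\pmin)$, dropping $\Id_\x$ changes $T_a$ by at most $\tfrac{1}{2\pmin}\E|1-\Id_\x|\le\tfrac{1}{\pmin}\exp(-\Omega(\kA))$, so it suffices to bound $T_{a1}:=\E\{\wy_y\,y\,\act(\langle\w^{(1)},\x\rangle+\bw^{(1)}+\nsi^{(2)})\}$.

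For $T_{a1}$ I would run the same symmetrization as in the $\sigmans=0$ case. Let $\hr'_\A$ be an independent copy of $\hr_\A$, set $\x'=\Dict\hr'+\ns/\sigmax$ (reusing the \emph{same} noise vector $\ns$) with label $y'$, and note that conditioning on $y=+1$ or on $y'=-1$ affects only the coordinates in $\A$, while the laws of $(\hr_{-\A},\ns,\nsi^{(2)})$ coincide. Because $\wy_v\Pr[y=v]=1/2$, the identity $\E_{\hr_\A}\{\wy_y y\,f(\hr_\A)\}=\tfrac12(\E[f\mid y{=}{+}1]-\E[f\mid y{=}{-}1])$ from the proof of Lemma~\ref{lem:label_gen} applies, and together with the $1$-Lipschitzness of $\act$ and the cancellation $\x-\x'=\Dict(\hr_\A-\hr'_\A,\mathbf 0)$ (which kills $\ns$) it gives
\[
  |T_{a1}|\le\tfrac12\,\E\!\left[\,\big|\langle\w^{(1)},\x\rangle\big|+\big|\langle\w^{(1)},\x'\rangle\big|\ \Big|\ y{=}{+}1,\,y'{=}{-}1\right].
\]
Bounding each conditional expectation of the nonnegative integrand by $1/\pmin$ times its unconditional expectation reduces the task to estimating $\E_{(\x,y)\sim\Ddist}|\langle\w^{(1)},\x\rangle|$, at the cost of a single factor $1/\pmin$; in the balanced case of Lemma~\ref{lem:gradient-feature-a-2} this factor is $O(1)$, which is exactly why $\pmin$ now appears.

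Finally, I would estimate $\E_{(\x,y)}|\langle\w^{(1)},\x\rangle|$ by splitting on the typical event of Lemma~\ref{lem:small_output_gen}, on which (probability $1-\exp(-\Omega(\kA))$) the proof of that lemma gives $|\langle\w^{(1)},\x\rangle|=O(\eta^{(1)}\sigmax/\pp)$. On the complement I would use the crude bound $|\langle\w^{(1)},\x\rangle|\le\|\wsp^{(1)}\|_1\,\|\hr\|_\infty+\|\w^{(1)}\|_2\,\|\ns\|_2/\sigmax$, where the first summand is deterministically $\mathrm{poly}(d,\mDict)$ by the bounds on $\wsp^{(1)}$ and $\wns$ in Lemma~\ref{lem:first_step_gen} and $\|\hr\|_\infty\le 1/\sigmax$, while the expectation of the second summand over the rare event is controlled by Cauchy--Schwarz, $\E[\|\ns\|_2\,\Id[\text{atypical}]]\le(\E\|\ns\|_2^2)^{1/2}\,(\Pr[\text{atypical}])^{1/2}=\sqrt d\,\sigmans\,\exp(-\Omega(\kA))$. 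Collecting terms gives $\E_{(\x,y)}|\langle\w^{(1)},\x\rangle|=O(\eta^{(1)}\sigmax/\pp)+\exp(-\Omega(\kA))\,\mathrm{poly}(d\mDict)$, hence $|T_a|=O\!\left(\eta^{(1)}\sigmax/(\pp\pmin)\right)+\exp(-\Omega(\kA))\,\mathrm{poly}(d\mDict/\pmin)$, as claimed. The only genuinely new point over the balanced, noiseless case is handling the Gaussian-noise contribution on the exponentially rare event without spoiling the bound — the coupling is arranged so that $\ns$ cancels in the main term and survives only in a tail that a second-moment estimate kills — together with keeping the class-weight bookkeeping tight enough to produce exactly one power of $1/\pmin$; I expect this (otherwise routine) bookkeeping to be the main obstacle.
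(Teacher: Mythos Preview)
Your proposal is correct and mirrors the paper's proof: drop $\Id_\x$ at cost $\exp(-\Omega(\kA))/\pmin$, symmetrize over an independent copy of $\hr_\A$, use Lipschitzness, then split on the typical event of Lemma~\ref{lem:small_output_gen}. The only minor difference is that the paper exploits the cancellation $\x-\x'=\Dict(\hr-\hr')$ to reduce to $\E_\hr\bigl[\wy_y\,|\langle\w^{(1)},\Dict\hr\rangle|\bigr]$ (so $\ns$ never re-enters and your tail Cauchy--Schwarz step is unnecessary), and recovers the $1/\pmin$ factor directly from $\wy_y\le 1/(2\pmin)$ rather than via $\E[\,\cdot\mid y{=}v]\le\E[\,\cdot\,]/\pmin$; both routes yield the stated bound.
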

\begin{proof}[Proof of Lemma~\ref{lem:gradient-feature-a-2_gen}]
Consider one neuron index $i$ and omit the subscript $i$ in the parameters.
By Lemma~\ref{lem:small_output_gen}, $\Pr[y \g^{(1)}(\x; \nsi^{(2)}) > 1] \le \exp(-\Theta(\kA))$. Let $\Id_\x = \Id[y \g^{(1)}(\x; \nsi^{(2)}) \le 1]$.
\begin{align}
    & \quad \frac{\partial}{\partial \aw}  \riskw_\Ddist(\g^{(1)}; \sigmansi^{(2)})   
    \\
    & = - \underbrace{ \E_{(\x,y) \sim \Ddist} \left\{  \wy_y y \Id_\x \E_{\nsi^{(2)}}\act(\langle \w^{(1)}, \x \rangle  + \bw^{(1)} + \nsi^{(2)}) \right\}  }_{:= T_a}.
\end{align} 
Let $T_{a1} := \E_{(\x,y) \sim \Ddist} \left\{ \wy_y  y   \E_{\nsi^{(2)}}\act(\langle \w^{(1)}, \x \rangle  + \bw^{(1)} + \nsi^{(2)}) \right\}$. We have
\begin{align}
    & \quad \left|T_a -  T_{a1}\right|
    \\
    & =  \left| \E_{(\x,y) \sim \Ddist} \left\{ \wy_y  y (1-\Id_\x) \E_{\nsi^{(2)}}\act(\langle \w^{(1)}, \x \rangle  + \bw^{(1)} + \nsi^{(2)}) \right\} \right|
    \\
    & \le  \exp(-\Omega(\kA)).
\end{align}
So it is sufficient to bound $T_{a1}$. For simplicity, we use $\wsp$ as a shorthand for $\wsp^{(1)}_i$.

Let $\hr'_\A$ be an independent copy of $\hr_\A$, $\hr'$ be the vector obtained by replacing in $\hr$ the entries $\hr_\A$ with $\hr'_\A$, and let $x' = \Dict \hr' + \ns$ and its label is $y'$. Then
\begin{align}
    |T_{a1}| & := \left| \E_{\hr_\A} \left\{ \wy_y y  \E_{\hr_{-\A}, \ns, \nsi^{(2)}} \act(\langle \w^{(1)}, \x \rangle + \bw^{(1)} + \nsi^{(2)}) \right \} \right |
    \\
    & \le \frac{1}{2} \Bigg| \E_{\hr_\A} \left\{  \E_{\hr_{-\A}, \ns, \nsi^{(2)}} \act(\langle \w^{(1)}, \x \rangle + \bw^{(1)} + \nsi^{(1)}) | y = 1\right \} 
    \\
    & \quad - \E_{\hr_\A} \left\{  \E_{\hr_{-\A}, \ns, \nsi^{(2)}} \act(\langle \w^{(1)}, \x \rangle + \bw^{(1)} + \nsi^{(2)}) | y = -1 \right \} \Bigg |
    \\
    & \le \frac{1}{2} \Bigg| \E_{\hr_\A} \left\{  \E_{\hr_{-\A}, \ns, \nsi^{(2)}} \act(\langle \w^{(1)}, \x \rangle + \bw^{(1)} + \nsi^{(2)}) | y = 1\right \} 
    \\
    & \quad - \E_{\hr'_\A} \left\{  \E_{\hr_{-\A}, \ns, \nsi^{(2)}} \act(\langle \w^{(1)}, \x' \rangle + \bw^{(1)} + \nsi^{(2)}) | y' = -1 \right \} \Bigg |
    \\
    & \le \frac{1}{2}  \E_{\hr_\A, \hr'_\A} \left\{ \E_{\hr_{-\A} } \left| \langle \w^{(1)}, \x \rangle  - \langle \w^{(1)}, \x' \rangle  \right | | y=1, y'=-1 \right\}
    \\
    & \le \frac{1}{2}   \E_{\hr_{-\A} } \left( \E_{\hr_\A} \left\{ \left| \langle \w^{(1)}, \Dict \hr \rangle\right| | y=1 \right\} + \E_{\hr'_\A} \left\{ \left| \langle \w^{(1)}, \Dict \hr' \rangle  \right | | y'=-1 \right\} \right) 
    \\
    & \le  \E_{\hr_{-\A}, \hr_\A}  \left| \wy_y \langle \w^{(1)}, \Dict \hr \rangle\right|  
    \\
    & =  \E_{\hr} \left | \wy_y \langle \w^{(1)}, \Dict \hr \rangle    \right |
    \\
    & = O(\eta^{(1)}\sigmax/\pp) +  \exp(-\Omega(\kA)) \times \frac{\sqrt{d\mDict}}{\sigmax}   \times \|\w^{(1)}\|_\infty
    \\
    & =  O\left(\frac{\eta^{(1)}\sigmax }{\pp \pmin}\right) +  \exp(-\Omega(\kA))     \textrm{poly}(d\mDict/\pmin)
\end{align}
where the fourth step follows from that $\act$ is 1-Lipschitz, and the second to the last line from Lemma~\ref{lem:small_output_gen} and that $\left|  \langle \w^{(1)}, \Dict \hr \rangle\right| \le \|\w^{(1)}\|_\infty \sqrt{d \|\Dict\hr\|_2^2}$. 
\end{proof}

With the above lemmas about the gradients, we are now ready to show that at the end of the second step, we get a good set of features for accurate prediction.  
\begin{lemma}  \label{lem:feature_gen}
Set 
\begin{align}
    & \eta^{(1)} = \frac{\pp^2 \pmin \sigmax}{\kA \nw^3}, \lambdaa^{(1)} = 0, \lambdaw^{(1)} = 1/(2\eta^{(1)}), \sigmansi^{(1)} = 1/\kA^{3/2},
    \\
    & \eta^{(2)} = 1,  \lambdaa^{(2)} = \lambdaw^{(2)} = 1/(2\eta^{(2)}), \sigmansi^{(2)} = 1/\kA^{3/2}.
\end{align}     
Fix $\delta \in (0, O(1/\kA^3))$. If $\mDict \inco/\sqrt{d} \le 1/16$, $ \sigmans \sigmax = O(1)$, $ \sigmans^2  d /\sigmax^2 = O(1)$,   $ \kA = \Omega\left(\log^2 \left(\frac{\mDict\nw d}{\delta\pp \pmin}\right)\right)$,  and $\nw \ge \max\{\Omega(\kA^4), \mDict, d\}$,
then with probability at least $1- \delta$ over the initialization,  there exist $\Tilde{\aw}_i$'s such that $\Tilde{\g}(\x) := \sum_{i=1}^{2\nw} \Tilde{\aw}_i \act(\langle \w_i^{(2)}, \x \rangle + \bw_i^{(2)})$ satisfies $\risk_\Ddist(\Tilde{\g}) \le \exp(-\Omega(\kA))$. Furthermore, $\|\Tilde{\aw}\|_0 = O(\nw/\kA)$, $\|\Tilde{\aw}\|_\infty = O(\kA^5/\nw )$, and $\|\Tilde{\aw}\|^2_2 = O(\kA^9/ \nw)$. Finally, $\|\aw^{(2)}\|_\infty = O\left(\frac{1}{\kA \nw^2} \right)$, $\|\w^{(2)}_i\|_2 = O(\sigmax/\kA)$, and $|\bw^{(2)}_i| = O(1/\kA^2)$ for all $i \in [2\nw]$. 
\end{lemma}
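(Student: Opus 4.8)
The plan is to follow the proof of Lemma~\ref{lem:feature-full} essentially line by line, carrying along the two new sources of error present in the general model: the incoherence cross-terms $\cDict_{j\ell}$ of $\Dict$ and the Gaussian feature noise $\ns$. By Lemma~\ref{lem:approx_gen} there is a target network $\g^*(\x)=\sum_{\ell=1}^{3(\kA+1)}\aw^*_\ell\,\act(\langle\w^*_\ell,\x\rangle+\bw^*_\ell)$ with $\w^*_\ell=\sigmax\sum_{j\in\A}\Dict_j/(8\kA)$, $|\aw^*_\ell|\le 64\kA$, $1/(64\kA)\le|\bw^*_\ell|\le 1/4$, and (using $\mDict\inco/\sqrt d\le 1/16$, $\sigmans\sigmax=O(1)$, $\sigmans^2 d/\sigmax^2=O(1)$ to also kill the noise term in that lemma) $\Pr_{(\x,y)\sim\Ddist}[\,y\g^*(\x)\le 1\,]\le\exp(-\Omega(\kA))$; call $(\x,y)$ \emph{good} when $y\g^*(\x)\ge 1$. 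Fixing $\ell$, I would show that with probability $\Omega(1/\kA^2)$ a given hidden neuron of $\g^{(2)}$ can be rescaled to approximate the $\ell$-th neuron of $\g^*$ on every good sample; the lemma then follows by a counting argument over $\nw$ neurons.

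Concretely, I would condition on the event of Lemma~\ref{lem:probe_all_gen} that all $\w^{(0)}_i\in\mathcal{G}_\w(\delta)$, $\bw^{(0)}_i\in\mathcal{G}_\bw(\delta)$, $\aw^{(0)}_i\in\mathcal{G}_\aw(\delta)$. On this event, Lemma~\ref{lem:first_step_gen}, Lemma~\ref{lem:gradient-feature-2_gen}, Lemma~\ref{lem:gradient-feature-b-2_gen} together with $\lambdaw^{(2)}=1/(2\eta^{(2)})$ give $\w^{(2)}_i=\aw^{(1)}_i\big(\sum_{j=1}^\mDict\Dict_j T_j+\gradns\big)$ and $\bw^{(2)}_i=\bw^{(1)}_i+\aw^{(1)}_i T_b$, with $T_j=\beta\pp/\sigmax\pm\epsilon_{\w1}$ for $j\in\A$, $|T_j|\le\epsilon_{\w2}$ for $j\notin\A$, $|T_b|\le\epsilon_b$, $\gradns$ bounded by the step-2 noise estimate of Lemma~\ref{lem:gradient-feature-2_gen}, and $\beta=\Omega(1)$ depending only on $(\w^{(0)}_i,\bw^{(0)}_i)$. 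Exactly as in Lemma~\ref{lem:feature-full}, on the $\Omega(1)$-probability sub-event $|\bw^{(0)}_i|\in[1/(2\kA^2),2/\kA^2]$, $\sgn(\bw^{(0)}_i)=\sgn(\bw^*_\ell)$ the ratio $\frac{8\kA|\bw^*_\ell|\beta\pp}{|\bw^{(0)}_i|\sigmax^2}$ lies in $[\Omega(\kA^2\pp/\sigmax^2),O(\kA^3\pp/\sigmax^2)]$ and depends only on $(\w^{(0)}_i,\bw^{(0)}_i)$, so with probability $\Omega(\epsilon_\aw)>\delta$ over $\aw^{(0)}_i$, $\epsilon_\aw=\Theta(1/\kA^2)$, we get $\big|\frac{8\kA|\bw^*_\ell|\beta\pp}{|\bw^{(0)}_i|\sigmax^2}\aw^{(0)}_i-1\big|\le\epsilon_\aw$ and $|\aw^{(0)}_i|=O(\sigmax^2/(\kA^2\pp))$. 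Since $\nw\ge\max\{\Omega(\kA^4),\mDict,d\}$, a Chernoff bound yields, with probability $\ge 1-O(\delta)$ over the initialization, disjoint sets $I_\ell\subseteq[\nw]$ of size $\ncopy=\Theta(\epsilon_\aw\nw)=\Theta(\nw/\kA^2)$ whose neurons satisfy all the above for each $\ell\in[3(\kA+1)]$ simultaneously.

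Then, for $i_\ell\in I_\ell$ and good $\x$, I would bound $\big|\langle\w^{(2)}_{i_\ell},\x\rangle-\frac{\aw^{(0)}_{i_\ell}\beta\pp}{\sigmax}\sum_{j\in\A}\langle\Dict_j,\x\rangle\big|$. Relative to the $\inco=0,\sigmans=0$ proof this acquires three extra pieces: the step-2 noise direction $\gradns$ through $|\aw^{(1)}_{i_\ell}|\,|\langle\gradns,\x\rangle|$; the incoherence cross-terms $\cDict_{j\ell}$ inside $\langle\sum_\ell\wsp^{(2)}_{i_\ell,\ell}\Dict_\ell+\wns_{i_\ell},\ \sum_j\hr_j\Dict_j+\ns/\sigmax\rangle$; and the carried-over noise-weight $\wns_{i_\ell}$ from step~1 (Lemma~\ref{lem:first_step_gen}). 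Plugging in $\eta^{(1)}=\pp^2\pmin\sigmax/(\kA\nw^3)$, $\sigmansi^{(1)}=\sigmansi^{(2)}=1/\kA^{3/2}$ and using $\inco\le O(\sqrt d/\mDict)$, $\sigmans\le O(\min\{1/\sigmax,\sigmax/\sqrt d\})$, $\nw\ge\max\{\mDict,d\}$ — the same balancing already done in Claim~\ref{cla:wx} of Lemma~\ref{lem:small_output_gen}, now pushed through the second step — all of these come out $O(1/\nw)$ on good samples, and likewise $|\bw^{(2)}_{i_\ell}-\bw^{(0)}_{i_\ell}|=O(1/\nw)$. Setting $\Tilde{\aw}_{i_\ell}=\frac{2\aw^*_\ell|\bw^*_\ell|}{|\bw^{(0)}_{i_\ell}|\ncopy}$ on $I_\ell$ and $\Tilde{\aw}_i=0$ elsewhere, the telescoping/$\act$-Lipschitz estimate of Lemma~\ref{lem:feature-full} (using positive-homogeneity of $\act$ on $[0,1]$, valid since $|\langle\w^*_\ell,\x\rangle+\bw^*_\ell|\le 1$ and $|\bw^{(0)}_{i_\ell}|/|\bw^*_\ell|\le 1$) gives $|\Tilde{\g}(\x)-2\g^*(\x)|=O(\kA^4/\nw+\kA^2\epsilon_\aw)\le 1$ for every good $\x$, hence $y\Tilde{\g}(\x)\ge 1$ there, so $\risk_\Ddist(\Tilde{\g})\le\Pr[(\x,y)\text{ not good}]\le\exp(-\Omega(\kA))$. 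The norm bounds are read off the construction: $\|\Tilde{\aw}\|_0=3(\kA+1)\ncopy=O(\nw/\kA)$, $\|\Tilde{\aw}\|_\infty=O(\kA|\aw^*_\ell|/\ncopy)=O(\kA^5/\nw)$, $\|\Tilde{\aw}\|_2^2=\|\Tilde{\aw}\|_0\|\Tilde{\aw}\|_\infty^2=O(\kA^9/\nw)$; and $\|\aw^{(2)}\|_\infty$, $\|\w^{(2)}_i\|_2$, $|\bw^{(2)}_i|$ follow from Lemma~\ref{lem:gradient-feature-a-2_gen}, Lemma~\ref{lem:first_step_gen}, and the displayed expressions for $\w^{(2)}_i,\bw^{(2)}_i$, exactly as in the $\inco=0$ case. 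The main obstacle is the third paragraph: showing that the $\gradns$-, incoherence-, and $\wns$-contributions stay $O(1/\nw)$ \emph{uniformly over all good $\x$ and over all $\Theta(\nw/\kA^2)$ selected neurons at once}, and that this survives the final $\act$-Lipschitz comparison against $\g^*$. This requires the good-sample per-coordinate noise bounds $|\ns_j|,|\langle\ns,\Dict_\ell\rangle|=O(\sigmans\sqrt{\log\kA})$ of Lemma~\ref{lem:small_output_gen} and a careful matching of the dimension factors $\sqrt d$ (appearing in $\langle\wns,\ns/\sigmax\rangle$ and $\langle\w^{(2)},\Dict\hr\rangle$) against the $\exp(-\Omega(\kA))$-smallness of the per-coordinate noise-gradient estimates and the new $\pmin$ factor now sitting inside $\eta^{(1)}$ — essentially the balancing of Claim~\ref{cla:wx} propagated one step further. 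Everything else is a routine transcription of the proof of Lemma~\ref{lem:feature-full}.
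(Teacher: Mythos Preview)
Your proposal is correct and mirrors the paper's proof essentially step for step: the paper likewise invokes Lemma~\ref{lem:approx_gen}, conditions on $\mathcal{G}_\w,\mathcal{G}_\bw,\mathcal{G}_\aw$, writes $\w^{(2)}_i=\aw^{(1)}_i(\sum_j\Dict_jT_j+\gradns)$ and $\bw^{(2)}_i=\bw^{(1)}_i+\aw^{(1)}_iT_b$ via Lemmas~\ref{lem:first_step_gen}--\ref{lem:gradient-feature-b-2_gen}, selects neurons with the same $\bw^{(0)},\aw^{(0)}$ conditions, and then proves three short claims (Claims~\ref{cla:Ax}, \ref{cla:wx_diff}, \ref{cla:b_diff}) handling exactly the incoherence cross-terms and the $\langle\gradns,\x\rangle$ piece you identify, before the same $\act$-Lipschitz telescoping against $2\g^*$. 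One small correction: there is no ``carried-over noise-weight $\wns_{i_\ell}$ from step~1'' in $\w^{(2)}_i$, because $\lambdaw^{(2)}=1/(2\eta^{(2)})$ wipes out $\w^{(1)}_i$ entirely; the only noise direction in $\w^{(2)}_i$ is the step-2 term $\aw^{(1)}_i\gradns$, so your list of ``three extra pieces'' is really two.
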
 
\begin{proof}[Proof of Lemma~\ref{lem:feature_gen}]
By Lemma~\ref{lem:approx_gen}, there exists a network $\g^*(\x) = \sum_{\ell = 1}^{3(k+1)} \aw^*_\ell \act(\langle \w^*_\ell, \x \rangle + \bw^*_\ell)$ satisfying 
\[
\Pr_{(\x,y) \sim \Ddist}[y\g^*(x) \le 1]  \le \exp(-\Omega(\kA)).
\]
Furthermore, the number of neurons $n = 3(\kA+1)$, $|\aw_i^*| \le 64 \kA, 1/(64\kA) \le |\bw_i^*| \le 1/4$, $\w^*_i = \sigmax \sum_{j \in \A} \Dict_j/(8\kA)$, and $|\langle \w^*_i, \x\rangle + \bw_i^*| \le 1$ for any $i \in [n]$ and $(\x, y) \sim \Ddist$. 
Now we fix an $\ell$, and show that with high probability there is a neuron in $\g^{(2)}$ that can approximate the $\ell$-th neuron in $\g^*$.

With probability $ \ge 1- \exp(-\Omega(\max\{2\pn(\mDict-\kA), \kA\}))$, among all $j \not\in \A$, we have that at most $2\pn(\mDict-\kA) + \kA$ of $\hr_j$ are $(1-\pn)/\sigmax$, while the others are $-\pn/\sigmax$. 
With probability $\ge 1 - (d+\mDict) \exp(-\Omega(\kA))$ over $\ns$, for any $j$, $|\ns_j| \le O(\sigmans \sqrt{\log(\kA)})$ and $|\langle\ns, \mDict_\ell\rangle| \le O(\sigmans \sqrt{\log(\kA)})$. Below we consider data points with $\hr$ and $\ns$ satisfying these.

By Lemma~\ref{lem:probe_all_gen}, with probability $1-2\delta$ over $\w^{(0)}_i$'s, they are all in $\mathcal{G}_\w(\delta)$; with probability $1- \delta$ over $\aw^{(0)}_i$'s, they are all in $\mathcal{G}_\aw(\delta)$; with probability $1- \delta$ over $\bw^{(0)}_i$'s, they are all in $\mathcal{G}_\bw(\delta)$. Under these events, by Lemma~\ref{lem:first_step_gen}, Lemma~\ref{lem:gradient-feature-2_gen} and~\ref{lem:gradient-feature-b-2_gen}, 
for any neuron $i \in [2\nw]$, we have
\begin{align}
    \w^{(2)}_i & = \aw^{(1)}_i \left( \sum_{j=1}^{\mDict} \Dict_j T_j + \gradns\right), \label{eq:condition1_gen}
    \\
    \bw^{(2)}_i & = \bw^{(1)}_i + \aw^{(1)}_i T_b. \label{eq:condition2_gen}
\end{align}
where
\begin{itemize}
    \item if $j \in A$, then $\left| T_j -  \beta \pp/\sigmax  \right| 
     \le \epsilon_{\w1} :=   O(\epsdevs /\sigmax + \eta^{(1)}/ \sigmansi^{(2)} + \eta^{(1)} |\aw_i^{(0)}| \epsdev/(\sigmax\sigmansi^{(2)}))$, where $\beta \in [\Omega(1), 1]$ and depends only on $\w_i^{(0)}, \bw_i^{(0)}$; 
    \item if $j \not\in \A$, then $|T_j| \le \epsilon_{\w2} :=   \frac{1}{\sigmax} \exp(-\Omega(\kA)) + O( \sigmahr^2 \sigmax \epsdevs)$;
    \item $| \gradns_j  | 
    \le  \epsilon_{\gradns} := O\left( \frac{\eta^{(1)}\sigmans}{\pp \sigmansi^{(2)}} \right) +  \exp(-\Omega(\kA))$. 
    \item $|T_b| \le \epsilon_{b} :=   \exp(-\Omega(\kA)) + O(\epsdevs)$.
\end{itemize}
Given the initialization, with probability $\Omega(1)$ over $\bw^{(0)}_i$, we have 
\begin{align} 
   |\bw^{(0)}_i| \in \left[\frac{1}{2\kA^2}, \frac{2}{\kA^2} \right], \sgn(\bw^{(0)}_i) = \sgn(\bw^*_\ell). \label{eq:condition3_gen}
\end{align}
Finally, since $\frac{8 \kA|\bw^*_\ell| \beta \pp}{|\bw^{(0)}_i| \sigmax^2}  \in [\Omega(\kA^2\pp/\sigmax^2), O(\kA^3\pp/\sigmax^2)]$ and depends only on $\w_i^{(0)}, \bw^{(0)}_i$, we have that for $\epsilon_\aw = \Theta(1/ \kA^2)$, with probability $\Omega(\epsilon_\aw) > \delta$ over $\aw^{(0)}_i$, 
\begin{align}
    \left| \frac{8 \kA|\bw^*_\ell| \beta \pp}{|\bw^{(0)}_i| \sigmax^2}  \aw_i^{(0)}  - 1 \right| \le \epsilon_\aw, \quad |\aw_{i}^{(0)}| = O\left(\frac{\sigmax^2 }{\kA^2\pp}\right). \label{eq:condition4_gen}
\end{align}

Let $\ncopy = \epsilon_\aw \nw/4$.
For the given value of $\nw$, by (\ref{eq:condition1_gen})-(\ref{eq:condition4_gen}) we have with probability $\ge 1 - 5\delta$ over the initialization, for each $\ell$ there is a different set of neurons $I_\ell \subseteq [\nw]$ with $|I_\ell| = \ncopy$ and such that for each $i_\ell \in I_\ell$, 
\begin{align}
    &  |\bw^{(0)}_{i_\ell}| \in \left[\frac{1}{2\kA^2}, \frac{2}{\kA^2} \right], 
    \quad \sgn(\bw_{i_\ell}^{(0)}) = \sgn(\bw^*_\ell),  
    \\
    & \left| \frac{8\kA |\bw^*_\ell| \beta\pp }{|b_{i_\ell}^{(0)}| \sigmax^2}  \aw_{i_\ell}^{(0)}  - 1 \right| \le \epsilon_\aw, \quad |\aw_{i_\ell}^{(0)}| = O\left(\frac{\sigmax^2 }{\kA^2\pp}\right).
\end{align} 

Now, construct $\Tilde{\aw}$ such that $\Tilde{\aw}_{i_\ell} =  \frac{2\aw^*_\ell |\bw^*_\ell|}{|b_{i_\ell}^{(0)}| \ncopy}$ for each $\ell$ and each $i_\ell \in I_\ell$, and $\Tilde{\aw}_{i} = 0$ elsewhere. To show that it gives accurate predictions, we first consider bounding some error terms.

For the given values of parameters, we have 
\begin{align} 
    \epsdevs & = O\left(\frac{\pp}{\nw^2}\right), \label{eq:error_terms1_gen}
    \\
    \epsilon_{\w1} & = O\left(\frac{\kA\pp}{\nw^2 \sigmax} + \frac{\pp \epsdev}{\kA\nw^2}\right),
    \\
    \epsilon_{\w2} & =   O\left(\frac{\pp}{\nw^2 \sigmax} \right),
    \\
    \epsilon_{\gradns} & = O\left(\frac{\pp\kA}{\nw^3} \right),
    \\
    \epsilon_{b} & = O\left(\frac{\pp}{\nw^2}\right). \label{eq:error_terms2_gen}
\end{align}
We also have the following useful claims.
\begin{claim} \label{cla:Ax}
$\sum_{\ell\in \A}  \left| \langle   \Dict_\ell ,  \x \rangle     \right| \le O\left(\frac{\kA  }{\sigmax} \right)$. 
\end{claim}
\begin{proof}[Proof of Claim~\ref{cla:Ax}]
\begin{align}
    & \quad \sum_{\ell\in \A}  \left| \langle   \Dict_\ell ,  \x \rangle     \right|
    \\
    & \le  \sum_{\ell\in \A}  \left( \left|    \hr_j   \right| + \left|  \sum_{  j \neq \ell}    \Dict_\ell^\top \Dict_j \hr_j   \right| + \left|       \Dict_\ell^\top \ns/\sigmax   \right| \right)
    \\
    & \le O\left(\frac{\kA   }{\sigmax} \right) 
    + O\left( \kA\mDict   \frac{ \inco}{\sqrt{d}\sigmax} \right)
    + O\left( \kA   \frac{ \sigmans \sqrt{\log(\kA)}}{ \sigmax} \right)
    \\
    & \le O\left(\frac{\kA  }{\sigmax} \right).
\end{align}
\end{proof}
\begin{claim} \label{cla:wx_diff}
\begin{align}
     & \quad \left| \langle \w^{(2)}_{i_\ell}, \x \rangle -   \frac{\aw_{i_\ell}^{(0)} \beta\pp}{\sigmax} \sum_{j \in \A}\langle \Dict_j , \x \rangle \right|     \le O\left(\frac{1}{\nw }\right).
\end{align}
\end{claim}
\begin{proof}[Proof of Claim~\ref{cla:wx_diff}] 
\begin{align}   
    \left| \langle \w_{i_\ell}^{(2)}, \x \rangle \right| 
    & = \left| \langle \sum_{\ell=1}^\mDict \aw_{i_\ell}^{(1)} T_\ell \Dict_\ell + \wns,  \x \rangle \right| 
    \\
    & \le \left| \langle \sum_{\ell\in \A} \aw_{i_\ell}^{(1)} T_\ell \Dict_\ell  ,  \x \rangle \right|   +  \left| \langle \sum_{\ell \not\in \A} \aw_{i_\ell}^{(1)} T_\ell \Dict_\ell ,  \x \rangle \right|    +  \left| \langle \wns,  \x \rangle \right|.   
\end{align}
Then
\begin{align}
 & \quad \left| \langle \w^{(2)}_{i_\ell}, \x \rangle -   \frac{\aw_{i_\ell}^{(0)} \beta\pp}{\sigmax} \sum_{j \in \A}\langle \Dict_j , \x \rangle \right| 
 \\
 & \le \left| \langle \w^{(2)}_{i_\ell}, \x \rangle -   \frac{\aw_{i_\ell}^{(1)} \beta\pp}{\sigmax} \sum_{j \in \A} \langle\Dict_j , \x \rangle \right| 
 + \left| \frac{\aw_{i_\ell}^{(1)} \beta\pp}{\sigmax} \sum_{j \in \A}  \langle \Dict_j , \x \rangle  -   \frac{\aw_{i_\ell}^{(0)} \beta\pp}{\sigmax} \sum_{j \in \A} \langle \Dict_j , \x \rangle \right|
 \\
 & \le \left| \langle \w^{(2)}_{i_\ell}, \x \rangle -   \frac{\aw_{i_\ell}^{(1)} \beta\pp}{\sigmax} \sum_{j \in \A} \langle\Dict_j , \x \rangle \right| 
 + \left|\aw_{i_\ell}^{(1)} - \aw_{i_\ell}^{(0)}\right| \left| \frac{  \beta\pp}{\sigmax} \sum_{j \in \A}  \langle \Dict_j , \x \rangle   \right|.
\end{align}  
The first term is
\begin{align}
 & \quad \left| \langle \w^{(2)}_{i_\ell}, \x \rangle -   \frac{\aw_{i_\ell}^{(1)} \beta\pp}{\sigmax} \sum_{j \in \A} \langle\Dict_j , \x \rangle \right| 
 \\
 & \le \left|\aw_{i_\ell}^{(1)} \right| \left( \left| \langle \sum_{\ell\in \A} \left (T_\ell - \frac{  \beta\pp}{\sigmax} \right) \Dict_\ell ,  \x \rangle     \right| +  \left| \langle \sum_{\ell \not\in \A}   T_\ell \Dict_\ell ,  \x \rangle \right|    +  \left| \langle \gradns,  \x \rangle \right| \right).
\end{align}
By Claim~\ref{cla:Ax},
\begin{align}
     \left| \langle \sum_{\ell\in \A} \left (T_\ell - \frac{  \beta\pp}{\sigmax} \right) \Dict_\ell ,  \x \rangle     \right| 
    & \le  \sum_{\ell\in \A} \left | T_\ell - \frac{  \beta\pp}{\sigmax} \right| \left|  \langle\Dict_\ell ,  \x \rangle     \right|
    \\
    & \le \epsilon_{w1} \sum_{\ell\in \A}   \left|  \langle \Dict_\ell ,  \x \rangle     \right| 
    \\
    & \le O\left(\frac{\kA \epsilon_{\w1} }{\sigmax} \right).
\end{align} 
\begin{align}
    \left| \langle \sum_{\ell \not\in \A}   T_\ell \Dict_\ell ,  \x \rangle \right| 
    & \le \left|  \sum_{\ell \not\in \A}  T_\ell   \hr_j   \right| 
    + \left|  \sum_{\ell \not\in \A, j \neq \ell}  T_\ell   \Dict_\ell^\top \Dict_j \hr_j   \right| 
    + \left|   \sum_{\ell \not\in \A}  T_\ell  \Dict_\ell^\top \ns/\sigmax   \right|
    \\
    & \le O\left(\frac{\mDict \epsilon_{\w2} }{\sigmax} \right) 
    + O\left( \mDict^2 \epsilon_{\w2} \frac{ \inco}{\sqrt{d}\sigmax} \right)
    + O\left( \mDict \epsilon_{\w2} \frac{ \sigmans \sqrt{\log(\kA)}}{ \sigmax} \right)
    \\
    & \le O\left(\frac{\mDict \epsilon_{\w2} }{\sigmax} \right).
\end{align}
\begin{align}
    \left| \langle \gradns,  \x \rangle \right| 
    & \le \left| \langle \gradns,  \sum_{\ell \in [\mDict]} \Dict_\ell \hr_\ell \rangle \right| + \left| \langle \gradns,  \ns/\sigmax \rangle \right|
    \\ 
    & \le \sum_{\ell \in [\mDict]} \left|\hr_\ell \langle \gradns,   \Dict_\ell  \rangle \right| + \left| \langle \gradns,  \ns/\sigmax \rangle \right|
    \\
    & \le O\left( \frac{\pn \mDict + \kA}{\sigmax} \right)\epsilon_{\gradns} \sqrt{d} + d \epsilon_{\gradns} \frac{O(\sigmans \sqrt{\log(\kA)})}{\sigmax}
    \\
    & \le O\left(\frac{ \epsilon_{\gradns} \sqrt{d}}{\sigmax}\right) \left(  \pn \mDict   + \sqrt{d} \epsilon_{\gradns}   \sqrt{\log(\kA)} \right)
    \\
    & \le O\left(\frac{ \epsilon_{\gradns} \sqrt{d}}{\sigmax} \right) \left(  \pn \mDict   + \sigmax \sqrt{\log(\kA)} \right) 
    \\
    & \le O\left(\frac{ \pn \mDict  \epsilon_{\gradns} \sqrt{d}}{\sigmax}\right) . 
\end{align}
Then by (\ref{eq:error_terms1_gen})-(\ref{eq:error_terms2_gen}), 
\begin{align}
    \epsilon_\hr := \left(  \frac{\kA \epsilon_{\w1} }{\sigmax} + \frac{\mDict \epsilon_{\w2} }{\sigmax} + \frac{ \pn \mDict  \epsilon_{\gradns} \sqrt{d}}{\sigmax}\right)  = O\left(\frac{\kA^2\pp}{\nw^2 \sigmax^2} + \frac{\pp \epsdev}{\nw^2 \sigmax} + \frac{\pp}{\nw \sigmax^2} + \frac{\pp \kA}{\nw^{3/2} \sigmax} \right).
\end{align} 
     
We have $ \left| \aw_{i_\ell}^{(1)} - \aw_{i_\ell}^{(0)}\right| = O(\eta^{(1)} \sigmaw \sqrt{\log(\mDict\nw/\delta)})$. So the first term is bounded by  
\begin{align}
     & \quad \left| \langle \w^{(2)}_{i_\ell}, \x \rangle -   \frac{\aw_{i_\ell}^{(0)} \beta\pp}{\sigmax} \sum_{j \in \A}\langle \Dict_j , \x \rangle \right|  \le \left| \aw_{i_\ell}^{(1)}  \right| \epsilon_\hr
     \\
     & \le O\left( \frac{\sigmax^2}{\kA^2 \pp} + \eta^{(1)} \sigmaw \sqrt{\log(\mDict\nw/\delta)}\right) \left( \frac{\kA^2\pp}{\nw^2 \sigmax^2} + \frac{\pp \epsdev}{\nw^2 \sigmax} + \frac{\pp}{\nw \sigmax^2} + \frac{\pp \kA}{\nw^{3/2} \sigmax}\right)   \le O\left(\frac{1}{\nw }\right).
\end{align}
By Claim~\ref{cla:Ax}, the second term is bounded by
\begin{align}
    \left|\aw_{i_\ell}^{(1)} - \aw_{i_\ell}^{(0)}\right| \left| \frac{  \beta\pp}{\sigmax} \sum_{j \in \A}  \langle \Dict_j , \x \rangle    \right|
    \le O\left(\frac{\kA  \eta^{(1)} \sigmaw \sqrt{\log(\mDict\nw/\delta)} \pp}{\sigmax^2} \right) 
    \le O\left(\frac{  \pp^3}{\nw^3} \right).
\end{align}
Combining the bounds on the two terms leads to the claim. 
\end{proof} 
\begin{claim}\label{cla:b_diff}
\begin{align}
    |\bw^{(2)}_{i_\ell} - \bw^{(0)}_{i_\ell}|   \le O\left(\frac{1}{ \kA^2 \nw }\right).
\end{align}
\end{claim}
\begin{proof}[Proof of Claim~\ref{cla:b_diff}]
By Lemma~\ref{lem:first_step_gen} and \ref{lem:gradient-feature-b-2_gen}:
\begin{align}
    |\bw^{(2)}_{i_\ell} - \bw^{(0)}_{i_\ell}| 
    & \le |\bw^{(2)}_{i_\ell} - \bw^{(1)}_{i_\ell}|  + |\bw^{(1)}_{i_\ell} - \bw^{(0)}_{i_\ell}| 
    \\
    & \le O\left( \eta^{(1)} |\aw_{i_\ell}^{(0)}| \epsdev  + |\aw_{i_\ell}^{(1)}| \left(  \exp(-\Omega(\kA)) + \epsdevs \right) \right) 
    \\
    & \le O\left( \frac{\pp}{\kA\nw^2}  +  \frac{1}{\kA^2 \nw} \right)   \le O\left(\frac{1}{\kA^2 \nw }\right).
\end{align}
\end{proof}

We are now ready to show $\tilde{\g}$ is close to $2\g^*$. 
\begin{align}
    & \quad \left| \Tilde{\g}(\x) - 2\g^*(\x)\right| 
    \\
    & = \left| \sum_{\ell = 1}^{3(k+1)} \sum_{i_\ell \in I_\ell } \Tilde{\aw}_{i_\ell} \act\left(\langle \w_{i_\ell}^{(2)}, \x \rangle + \bw_{i_\ell}^{(2)}\right) - \sum_{\ell = 1}^{3(k+1)} 2\aw^*_\ell \act\left(\langle \w^*_\ell, \x \rangle + \bw^*_\ell \right)\right| 
    \\
    & = \left| \sum_{\ell = 1}^{3(k+1)} \sum_{i_\ell \in I_\ell } \frac{2\aw^*_\ell |\bw^*_\ell|}{|b_{i_\ell}^{(0)}| \ncopy} \act\left(\langle \w_{i_\ell}^{(2)}, \x \rangle + \bw_{i_\ell}^{(2)} \right) - \sum_{\ell = 1}^{3(k+1)} \sum_{i_\ell \in I_\ell } \frac{2\aw^*_\ell |\bw^*_\ell|}{|b_{i_\ell}^{(0)}| \ncopy} \act\left( \frac{|b_{i_\ell}^{(0)}| }{ |\bw^*_\ell|} \langle \w^*_\ell, \x \rangle + b_{i_\ell}^{(0)} \right)\right| 
    \\
    & \le \left| \sum_{\ell = 1}^{3(k+1)} \sum_{i_\ell \in I_\ell } \frac{1}{\ncopy}\left( \frac{2\aw^*_\ell |\bw^*_\ell|}{|b_{i_\ell}^{(0)}| } \act\left(\langle \w_{i_\ell}^{(2)}, \x \rangle + \bw_{i_\ell}^{(2)} \right) -    \frac{2\aw^*_\ell |\bw^*_\ell|}{|b_{i_\ell}^{(0)}|} \act\left( \frac{\aw_{i_\ell}^{(0)} \beta\pp }{\sigmax} \sum_{j \in \A} \langle \Dict_j, \x \rangle + \bw_{i_\ell}^{(0)} \right) \right) \right| 
    \\
    & \quad + \left| \sum_{\ell = 1}^{3(k+1)} \sum_{i_\ell \in I_\ell } \frac{1}{\ncopy}\left(   \frac{2\aw^*_\ell |\bw^*_\ell|}{|b_{i_\ell}^{(0)}|} \act\left(  \frac{\aw_{i_\ell}^{(0)} \beta\pp }{\sigmax } \sum_{j \in \A} \langle \Dict_j, \x \rangle + \bw_{i_\ell}^{(0)} \right) -   \frac{2\aw^*_\ell |\bw^*_\ell|}{|b_{i_\ell}^{(0)}| } \act\left( \frac{|b_{i_\ell}^{(0)}| }{ |\bw^*_\ell|} \langle \w^*_\ell, \x \rangle + b_{i_\ell}^{(0)} \right) \right) \right|.  
\end{align}
Here the second equation follows from that $\act$ is positive-homogeneous in $[0,1]$, $|\langle \w^*_\ell, \x \rangle + \bw^*_\ell| \le 1$, $ |\bw^{(0)}_{i_\ell}|/|\bw^*_{\ell}|\le 1$.

By Claim~\ref{cla:wx_diff} and~\ref{cla:b_diff}, the first term is bounded by:
\begin{align}
& \quad 3(\kA + 1) \max_{\ell} \frac{2 \aw^*_\ell |\bw_\ell^*|}{|\bw^{(0)}_{i_\ell}|} \left( \left| \langle \w^{(2)}_{i_\ell}, \x \rangle -   \frac{\aw_{i_\ell}^{(0)} \beta\pp}{\sigmax} \sum_{j \in \A}\langle \Dict_j , \x \rangle \right| +  |\bw^{(2)}_{i_\ell} - \bw^{(0)}_{i_\ell}|  \right) 
\\
& \le 
3(\kA + 1) \max_{\ell} \frac{2 \aw^*_\ell |\bw_\ell^*|}{|\bw^{(0)}_{i_\ell}|} O\left(\frac{1}{\nw}\right) 
\\
& \le O\left( \frac{\kA^4}{\nw}  \right) .
\end{align}
By Claim~\ref{cla:Ax}, the second term is bounded by:
\begin{align}
& \quad 3(\kA+ 1) \max_{\ell} \frac{2 \aw^*_\ell |\bw_\ell^*|}{|\bw^{(0)}_{i_\ell}|}  \left|  \frac{\aw_{i_\ell}^{(0)} \beta\pp }{\sigmax } \sum_{j \in \A} \langle \Dict_j, \x \rangle -  \frac{|b_{i_\ell}^{(0)}| }{ |\bw^*_\ell|} \langle \w^*_\ell, \x \rangle \right|
\\
& \le 3(\kA+ 1) \max_{\ell} \frac{2 \aw^*_\ell |\bw_\ell^*|}{|\bw^{(0)}_{i_\ell}|}  \left|  \frac{\aw_{i_\ell}^{(0)} \beta\pp }{\sigmax } \sum_{j \in \A} \langle \Dict_j, \x \rangle -  \frac{|b_{i_\ell}^{(0)}| \sigmax}{ 8 \kA |\bw^*_\ell|}\sum_{j \in \A} \langle \Dict_j, \x \rangle \right|
\\
& \le 3(\kA+ 1) \max_{\ell} \frac{2 \aw^*_\ell |\bw_\ell^*|}{|\bw^{(0)}_{i_\ell}|} \frac{\sigmax |\bw^{(0)}_{i_\ell}|}{8\kA|\bw^*_{\ell}|} \left| \frac{8\kA \aw_{i_\ell }  \beta \pp |\bw^*_{\ell}| }{\sigmax^2 |\bw^{(0)}_{i_\ell}|} - 1\right| \left| \sum_{j \in \A} \langle \Dict_j, \x \rangle \right|
\\
& \le 3(\kA+ 1) \max_{\ell} O(\aw^*_\ell \epsilon_\aw) 
\\
& \le  O\left(  \kA^2 \epsilon_\aw\right).
\end{align}
Then
\begin{align} 
\left| \Tilde{\g}(\x) - 2\g^*(\x)\right| 
    & = O\left( \frac{\kA^4}{\nw} + \kA^2 \epsilon_\aw\right)  \le 1.
\end{align} 
This guarantees $y \Tilde{\g}(\x)\ge 1$. Changing the scaling of $\delta$ leads to the statement. 

Finally, the bounds on $ \tilde{\aw}$ follow from the above calculation. The bound on $\|\aw^{(2)}\|_2$ follows from Lemma~\ref{lem:gradient-feature-a-2_gen}, and those on $\|\w^{(2)}_i\|_2$ and $\|\bw^{(2)}_i\|_2$ follow from (\ref{eq:condition1_gen})(\ref{eq:condition2_gen}) and the bounds on $\aw^{(1)}_i$ and $\bw^{(1)}_i$ in Lemma~\ref{lem:first_step_gen}.
\end{proof}

\subsection{Classifier Learning Stage and Main Theorem}

Once we have a good set of features in Lemma~\ref{lem:feature_gen}, we can follow exactly the same argument as in Section~\ref{app:classifier} and \ref{app:main_theorem_proof} for the simplified setting, and arrive at the main theorem for the general setting:
\begin{theorem}[Restatement of Theorem~\ref{thm:main_gen}]
Set 
\begin{align}
    & \eta^{(1)} = \frac{\pp^2 \pmin \sigmax }{ \kA\nw^3}, \lambdaa^{(1)} =  0, \lambdaw^{(1)} = 1/(2\eta^{(1)}), \sigmansi^{(1)} = 1/\kA^{3/2},
    \\
    & \eta^{(2)} = 1,  \lambdaa^{(2)} = \lambdaw^{(2)} = 1/(2\eta^{(2)}), \sigmansi^{(2)} = 1/\kA^{3/2},
    \\
    & \eta^{(t)} = \eta = \frac{\kA^2}{T \nw^{1/3}},  \lambdaa^{(t)}  = \lambdaw^{(t)} = \lambda  \le \frac{\kA^3}{\sigmax \nw^{1/3}}, \sigmansi^{(t)} = 0, \mathrm{~for~} 2 < t \le T.
\end{align}      
For any $\delta \in (0, O(1/\kA^3))$, if $ \inco \le O(\sqrt{d}/\mDict)$, $ \sigmans \le  O(\min\{1/\sigmax, \sigmax/\sqrt{d}\})$,   $\kA = \Omega\left(\log^2 \left(\frac{\mDict\nw d}{\delta\pp\pmin}\right)\right)$, $\nw \ge \max\{\Omega(\kA^4), \mDict, d \}$, 
then we have for any $\Ddist \in \Dfamily_\Xi$, with probability at least $1- \delta$, there exists $t \in [T]$ such that
\begin{align}
    \Pr[\sgn(\g^{(t)}(\x)) \neq y] \le \riskb_\Ddist(\g^{(t)}) = O\left(\frac{\kA^8}{\nw^{2/3}} + \frac{\kA^3 T}{\nw^2} + \frac{\kA^2 \nw^{2/3}}{T}   \right).
\end{align}
Consequently, for any $\epsilon \in (0,1)$, if $T = \nw^{4/3}$, and $\nw \ge \max\{\Omega(\kA^{12}/\epsilon^{3/2}), \mDict \}$, then 
\begin{align}
    \Pr[\sgn(\g^{(t)}(\x)) \neq y] \le \riskb_\Ddist(\g^{(t)}) \le \epsilon.
\end{align}
\end{theorem}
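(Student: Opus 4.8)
The plan is to derive Theorem~\ref{thm:main_gen} from Lemma~\ref{lem:feature_gen} by running the online convex optimization argument of Daniely--Malach (Theorem~\ref{thm:online_gradient_descent}) on the ``classifier learning'' phase $t>2$, exactly mirroring the proof in Sections~\ref{app:classifier}--\ref{app:main_theorem_proof}. First I would invoke Lemma~\ref{lem:feature_gen}: under the stated conditions on $\inco,\sigmans,\kA,\nw$, with probability at least $1-\delta$ over the initialization the first two aggressive gradient steps produce a ``good'' hidden layer, i.e.\ there is a sparse $\tilde{\aw}$ with $\|\tilde{\aw}\|_0 = O(\nw/\kA)$, $\|\tilde{\aw}\|_\infty = O(\kA^5/\nw)$, $\|\tilde{\aw}\|_2^2 = O(\kA^9/\nw)$ such that $\tilde{\g}(\x) = \sum_i \tilde{\aw}_i \act(\langle \w_i^{(2)},\x\rangle + \bw_i^{(2)})$ has $\risk_\Ddist(\tilde{\g}) \le \exp(-\Omega(\kA))$, together with the norm bounds $\|\aw^{(2)}\|_\infty = O(1/(\kA\nw^2))$, $\|\w_i^{(2)}\|_2 = O(\sigmax/\kA)$, $|\bw_i^{(2)}| = O(1/\kA^2)$. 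I condition on this event throughout.

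Next I would note that for $t>2$ the algorithm uses the small step size $\eta^{(t)}=\eta=\kA^2/(T\nw^{1/3})$ and $\lambda\le \kA^3/(\sigmax\nw^{1/3})$, so Lemma~\ref{lem:small_change} --- whose proof only uses the quantitative bounds from Lemma~\ref{lem:feature-full}/\ref{lem:feature_gen}, hence goes through verbatim in the general setting --- gives $\|\aw^{(t)}_i\|\le \eta t + O(1/(\kA\nw^2))$ and $\|\w^{(t)}_i-\w^{(2)}_i\|_2,\,|\bw^{(t)}_i-\bw^{(2)}_i| \le O(t\eta\lambda\sigmax/\kA) + \eta^2 t^2 + O(t/(\kA\nw^2))$. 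Plugging this into Lemma~\ref{lem:loss_change} bounds the change in the loss of the fixed-second-layer predictor $\g^{(t)}_{\tilde{\aw}}$ relative to $\g^{(2)}_{\tilde{\aw}}=\tilde{\g}$ by $\|\tilde{\aw}\|_2\sqrt{\|\tilde{\aw}\|_0}$ times the same drift expression. I then define $f_t(\aw)$ to be the $\ell_2$-regularized hinge risk of the network with the \emph{actual} drifting first layer $\{\w^{(t)}_i,\bw^{(t)}_i\}$ but variable second layer $\aw$; each $f_t$ is convex in $\aw$, and the algorithm's update of the second layer is exactly $\aw^{(t+1)} = \aw^{(t)} - \eta\,\nabla f_t(\aw^{(t)})$ since the partial derivative in $\aw$ does not see the simultaneous first-layer step. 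Applying Theorem~\ref{thm:online_gradient_descent} with comparator $\theta^*=\tilde{\aw}$, bounding $\frac1T\sum_t f_t(\tilde{\aw}) \le \risk_\Ddist(\tilde{\g}) + \lambda\|\tilde{\aw}\|_2^2 + (\text{drift from Lemma~\ref{lem:loss_change}})$, using $\|\nabla f_t(\aw^{(t)})\|_2 = O(\sqrt{\nw}+\lambda\|\aw^{(t)}\|_2)$ and $\|\theta_1\|_2 = \|\aw^{(2)}\|_2$, yields $\frac1T\sum_{t=3}^T \tilde{\riskb}_\Ddist(\g^{(t)}) = O(\kA^9/\nw + \kA^4\eta^2 T^2 + \kA^3 T/\nw^2 + \kA^9/(\eta T\nw) + \eta\nw + \exp(-\Omega(\kA)))$.

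Substituting $\eta = \kA^2/(T\nw^{1/3})$ collapses the right-hand side to $O(\kA^8/\nw^{2/3} + \kA^3 T/\nw^2 + \kA^2\nw^{2/3}/T)$, the $\exp(-\Omega(\kA))$ term being absorbed since $\kA = \Omega(\log^2(\mDict\nw d/(\delta\pp\pmin)))$. As this is an average over $t\in\{3,\dots,T\}$, some $t$ in that range has $\riskb_\Ddist(\g^{(t)})$ bounded by the same quantity, and $\Pr[\sgn(\g^{(t)}(\x))\neq y]$ is at most the hinge loss, giving the first display. For the second display, take $T=\nw^{4/3}$: the two $T$-dependent terms become $O(\kA^3\nw^{-2/3})$ and $O(\kA^2\nw^{-2/3})$, so the whole bound is $O(\kA^8/\nw^{2/3})$, which is $\le\epsilon$ once $\nw \ge \Omega(\kA^{12}/\epsilon^{3/2})$ (and $\nw\ge\mDict$, as needed to apply Lemma~\ref{lem:feature_gen}).

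The conceptually hard work is entirely inside Lemma~\ref{lem:feature_gen} --- the emergence and signal-to-noise improvement of the effective features over the first two gradient steps, where the incoherence $\inco$ and the data noise $\sigmans$ each introduce extra error terms to control; once that lemma is granted, the present argument is routine. So the only genuine care required here is (i) checking that Lemmas~\ref{lem:small_change}--\ref{lem:loss_change} and the online-optimization reduction are insensitive to the generalizations, which holds because they depend only on the norm bounds in Lemma~\ref{lem:feature_gen} that coincide with the $\inco=\sigmans=0$ case, and (ii) verifying that replacing the noiseless ``$\risk_\Ddist(\tilde{\g})=0$'' by ``$\risk_\Ddist(\tilde{\g})\le\exp(-\Omega(\kA))$'' does not affect the final rate, and that the switch from the weighted loss (steps $1$--$2$) to the unweighted loss (for $t>2$) is consistent. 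I expect the parameter bookkeeping in (ii) to be the most error-prone --- though not deep --- part of the write-up.
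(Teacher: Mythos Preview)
Your proposal is correct and matches the paper's approach exactly: the paper's proof of Theorem~\ref{thm:main_gen} simply says that once Lemma~\ref{lem:feature_gen} is established, one follows verbatim the argument of Sections~\ref{app:classifier}--\ref{app:main_theorem_proof} (i.e., Lemmas~\ref{lem:small_change}--\ref{lem:loss_change} plus Theorem~\ref{thm:online_gradient_descent}). Your additional bookkeeping remarks about the $\exp(-\Omega(\kA))$ term and the weighted-to-unweighted loss switch are appropriate and do not deviate from the paper's route.
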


\newpage

\end{document}